\newcounter{mylistcntr}
\newcommand\numberthis{\addtocounter{equation}{1}\tag{\theequation}}
\newcommand{\x}{\mathbf{x}}
\def\sign{\texttt{sign}}
\def\ORAT{\texttt{ORAT}}
\def\1{\cellcolor{green!30}}
\def\2{\cellcolor{green!30}}
\def\3{\cellcolor{yellow!30}}
\def\4{\cellcolor{yellow!30}}
\def\5{\cellcolor{yellow!30}}
\def\6{\cellcolor{orange!40}}
\def\7{\cellcolor{orange!40}}
\def\8{\cellcolor{orange!40}}
\def\9{\cellcolor{orange!40}}
\def\g{\cellcolor{orange!40}}
\def\gg{\cellcolor{blue!30}}
\def\ggg{\cellcolor{blue!30}}
\definecolor{color_best}{RGB}{22, 138, 173}
\newcommand\mybox[2][]{\tikz[overlay]\node[fill=blue!20,inner sep=2pt, anchor=text, rectangle, rounded corners=1mm,#1] {#2};\phantom{#2}}
\let\Ginclude@graphics\@org@Ginclude@graphics 
\title[ORAT]{Outlier Robust Adversarial Training}
 \author{\Name{Shu Hu} \Email{hu968@purdue.edu}\\
 \addr Department of Computer and Information Technology, Purdue School of Engineering and Technology\\
 Indiana University–Purdue University Indianapolis\\
 Indianapolis, IN, 46202. USA
 \AND
 \Name{Zhenhuan Yang} \Email{zhenhuan.yang@hotmail.com}\\
\addr Etsy, Inc, Brooklyn, New York, USA
\AND
\Name{Xin Wang} \Email{xwang56@albany.edu}\\
\addr Department of Epidemiology and Biostatistics,
School of Public Health\\
University at Albany, State University of New York\\
Albany, NY 12222, USA
\AND
\Name{Yiming Ying} \Email{yying@albany.edu}\\
\addr Department of Mathematics and Statistics\\
University at Albany, State University of New York\\
Albany, NY 12222, USA
\AND
\Name{Siwei Lyu} \Email{siweilyu@buffalo.edu}\\
\addr Department of Computer Science and Engineering\\
University at Buffalo, State University of New York\\
Buffalo, NY 14260-2500, USA
}
\begin{document}

\maketitle

\begin{abstract}
Supervised learning models are challenged by the intrinsic complexities of training data such as outliers and minority subpopulations and intentional attacks at inference time with adversarial samples. While traditional robust learning methods and the recent adversarial training approaches are designed to handle each of the two challenges, to date, no work has been done to develop models that are robust with regard to the low-quality training data and the potential adversarial attack at inference time simultaneously. It is for this reason that we introduce \underline{O}utlier \underline{R}obust \underline{A}dversarial \underline{T}raining (\ORAT) in this work. \ORAT~is based on a bi-level optimization formulation of adversarial training with a robust rank-based loss function. Theoretically, we show that the learning objective of \ORAT~satisfies the $\mathcal{H}$-consistency \citep{awasthi2021calibration} in binary classification, which establishes it as a proper surrogate to adversarial 0/1 loss. Furthermore, we analyze its generalization ability and provide uniform convergence rates in high probability. \ORAT~can be optimized with a simple algorithm. Experimental evaluations on three benchmark datasets demonstrate the effectiveness and robustness of \ORAT~in handling outliers and adversarial attacks. 
Our code is available at \url{https://github.com/discovershu/ORAT}.
\end{abstract}
\begin{keywords}
Robustness; Adversarial Training
\end{keywords}

%%%%%%%%% BODY TEXT
\vspace{-5mm}
\section{Introduction}
\label{sec:intro}
% \vspace{-2mm}

% \begin{wrapfigure}{r}{.39\textwidth}
% \vspace{-10mm}
% \includegraphics[width=.4\textwidth]{}
% \caption{\small \em  Outliers in the original training set of MNIST and CIFAR-10. Text above each image represents its original label.}
% \label{fig:illustrative-example}
% ~\vspace{-2em}
% \end{wrapfigure}

%Machine Learning is instrumental for the recent advances in AI and big data analysis. 
%Machine learning algorithms have found applications in many fields of science and engineering. 
In supervised learning, we obtain a parametric model $f_\theta(\x)$ to predict the label (discrete or continuous) $y$ from an input $\x$. The optimal value of the parameter $\theta$ is obtained with a set of labeled training data  $\{(\x_i,y_i)\}_{i=1}^n$, by minimizing a loss function. However, supervised learning algorithms are challenged by two types of data degradation. First, the quality of training data is affected by erroneous samples due to mistakes in data collection or labeling (often known as the {\em outliers}) or isolated sub-populations of actual samples \citep{sukhbaatar2015training,pu2022learning, guo2022robust}. In addition, at inference time, an input data point $\x$ could be intentionally modified to create an adversarial sample that misleads $f_\theta(\x)$ to make a wrong prediction \citep{ goodfellow2014explaining,hu2021tkml}. 

To date, the resilience of supervised learning models with regard to unintentional non-ideal training data or intentional adversarial attacks has been studied separately in machine learning. The former is the topic of robust learning methods \citep{ wang2018iterative, hu2020learning, zhai2021doro, hu2022sum}, and the latter is addressed with adversarial training (AT) \citep{madry2018towards, wang2019improving, zhang2021geometry}. However, in practice, the two issues can occur in tandem. This has been noticed in \cite{sanyal2021benign} and \cite{zhu2021understanding}. They have demonstrated that the outlier problem (i.e., label noise) exists in AT and found the model performance degrades with the increase in noise level because outliers hurt the quality of training data. Only \cite{zhu2021understanding} provide a heuristic algorithm based on a label correction strategy to correct the noise label. However, their approach may introduce more extra noisy labels due to the imperfect classifier and cannot handle outliers with true labels that do not belong to the existing label list.

To reduce the influence of the outliers in AT, one may think of adapting the self-learning \citep{han2019deep} approach (i.e., a robust learning algorithm) to remove examples that are most likely outliers (e.g., examples with larger loss) from data before training and then conducting AT on the cleaner set. However, there are two drawbacks to this simple scheme. First, it is not an end-to-end approach, which will increase the training cost. Second, it is hard to remove outliers precisely, and an imperfect strategy may drop examples of clean data points. 
% \comment{In this case, the constructed clean set is smaller than the original training set.} 
This may hurt the final model performance in the AT phase.  An alternative solution is that combine robust learning and AT by using robust losses (e.g., Huber loss \citep{hastie2009elements}, symmetric cross entropy loss \citep{wang2019symmetric}, etc.) in AT. However, most existing robust losses cannot eliminate the influence of outliers. In addition, constructing the theoretical guarantee is a challenge for using a robust loss in AT, especially for $\mathcal{H}$-calibration and $\mathcal{H}$-consistency properties \citep{awasthi2021calibration,steinwart2007compare} of the designed adversarial loss with respect to the adversarial 0/1 loss in classification.  

\begin{figure*}[t]
% \vspace{-\intextsep}
\centering
\includegraphics[trim=1 1 1 1, clip,keepaspectratio, width=1\textwidth]{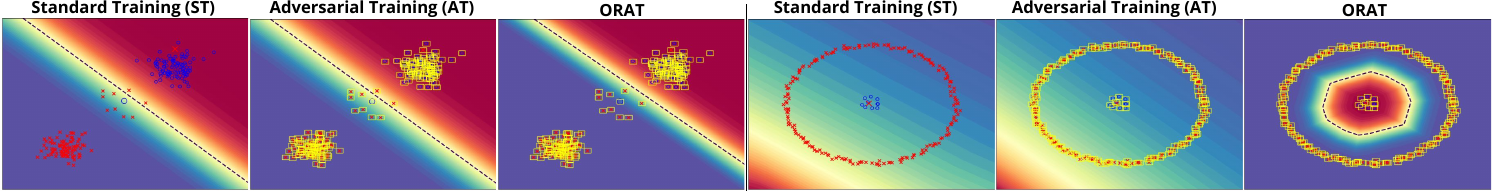}
\vspace{-1.7em}
\caption{\small \it Illustrative examples of standard training (ST), adversarial training (AT), and \ORAT~for binary classification on a balanced but multi-modal synthetic dataset (left panel) with two outliers and an imbalanced synthetic dataset (right panel) with one outlier. outliers in the blue and red classes are shown in {\color{red!100} $\times$} and {\color{blue!100} $\circ$}, respectively. The yellow squares around data samples represent the samples are perturbed within a $l_\infty$ ball. The dash lines are the decision boundaries. In the right figures, using ST and AT without data transfer techniques, there are no decision boundaries since both strategies cannot achieve a classifier that separates the two classes. 
% The figure is better viewed in color. More details can be found in Appendix.
}
\vspace{-1.7em}
\label{fig:interpretation}
\end{figure*}

% \shu{To be more specific, in adversarial training, most of works regard MNIST \cite{lecun1998gradient}, CIFAR-10 or CIFAR-100  \cite{krizhevsky2009learning} are clean and conduct experiments on them. However, as shown in Figure \ref{fig:interpretation} (Top-left), outliers from both data-level mistakes (images do not belong to any pre-defined label) and label-level mistakes (wrong label for images) exist in these datasets. 
% In addition, we use FashionMNIST \cite{xiao2017fashion} as outliers  mixed with the training set of MNIST as a new dataset for standard training (ST) and AT. Then evaluate the test accuracy for natural, FGSM-attacked, and PGD attacked MNIST with respect to different outlier ratios, see Figure \ref{fig:interpretation} (Top-right). For ST, adversarial data has higher impact on performance than outliers.  For AT, the test performance will be quickly decreased with the increase of the outlier ratio.
% % For AT, it is clear that outliers have a significant impact on AT. The test performance will be quickly decreased with the increase of the outlier ratio.
% Therefore, to design a robust model, we have to consider outliers and adversarial attacks simultaneously.       
% Only \cite{zhu2021understanding} provide a heuristic algorithm based on a label correction strategy to correct the noise label that can dilute but does not exclude the influences of the label noise in AT. However, this label correction approach has no effect on outliers from data-level.}
%To the best of our knowledge, there has been no previous work that can address this joint problem.
  
In this work, we introduce \underline{O}utlier \underline{R}obust \underline{A}dversarial \underline{T}raining (\ORAT) to combine robust learning and adversarial training. Specifically, we develop an effective adversarial training algorithm for a rank-based learning objective that can exclude the influence of outliers from the training procedure. 
Figure \ref{fig:interpretation} shows several illustrative examples. The learning objective in \ORAT~lends itself to an efficient numerical algorithm based on gradient descent methods after reformulation that removes the explicit ranking operation.
% \xin{how about we use this logic: NP-hard $\rightarrow$ surrogate losses $\rightarrow$ H calibration }
% \xin{Since optimizing the adversarial loss is a NP-hard problem for most hypothesis sets \cite{awasthi2021calibration}, designing adversarial robust algorithms needs appropriate surrogate losses.}
% Since optimizing the adversarial 0/1 loss is an NP-hard problem for most hypothesis sets, designing adversarial robust algorithms needs appropriate surrogate losses. A significant learning property of the surrogate loss is its $\mathcal{H}$-consistency, which requires the optimal minimizers of the surrogate loss is near to or exact the optimal minimizers of the adversarial 0/1 loss considered on the restricted hypothesis set $\mathcal{H}$. We show that the \ORAT~loss satisfies the $\mathcal{H}$-calibration (a sufficient condition for $\mathcal{H}$-consistency) and $\mathcal{H}$-consistency properties for classification under some moderate conditions  \cite{awasthi2021calibration,steinwart2007compare}. 
% We further provide a quantitative error bound of the generalization gap of the \ORAT~algorithm based on the adversarial Rademacher complexity, which quantifies the gap between the training and testing performance of \ORAT.
{To verify whether the optimal minimizers of the \ORAT~loss are close to or exactly the optimal minimizers of the adversarial 0/1 loss, we show that the \ORAT~loss satisfies the $\mathcal{H}$-calibration and consistency properties for classification under some moderate conditions,  
which establishes it as a proper surrogate to adversarial 0/1 loss. The notion of consistency has been studied in \cite{awasthi2021calibration}. However, the \ORAT~loss is the first adversarial surrogate loss proven to satisfy $\mathcal{H}$-consistency and evaluated on real-world datasets. It encourages the applicability of the $\mathcal{H}$-consistent adversarial surrogate loss in real tasks.} 
We further provide a quantitative error bound of the generalization gap between the training and testing performance of \ORAT.
Experimental evaluations on three benchmark datasets demonstrate the effectiveness and robustness of \ORAT~in handling outliers and adversarial attacks. 
Our contributions  can be summarized as follows:
% The main contributions of our work can be summarized as follows:
% \setlength{\parindent}{0em}
% \setdefaultleftmargin{0em}{2em}{}{}{}{}
\begin{compactitem}
\item We present outlier robust adversarial training (\ORAT), which can handle outliers in adversarial training jointly.
% which is the first method that handles both non-ideal training data and adversarial attacks to date.
\item We show \ORAT~loss satisfies $\mathcal{H}$-calibration and $\mathcal{H}$-consistency. To the best of our knowledge, the \ORAT~loss is the first  $\mathcal{H}$-consistent adversarial surrogate loss that is evaluated on real-world datasets. 
\item We provide a detailed theoretical analysis on the generalization error of training with \ORAT~loss. 
\end{compactitem}

\vspace{-2mm}
\section{Background}
% \vspace{-2mm}
%A more detailed discussion of related works can be found in Appendix \ref{sec:related_works}.

\textbf{Robust Learning}.
% \vspace{-2mm}
Training accurate machine learning models in the presence of outliers is of great practical importance. To combat outliers, the traditional methods are designed based on label correction \citep{wang2018iterative}, loss correction \citep{han2020training}, and refined training strategies \citep{yu2019does}. However, they require an extra clean dataset or potentially expensive detection process to estimate the outliers. Recent works \cite{hu2020learning, hu2023rank} proposed the average of ranked range (AoRR) loss, which can eliminate the influence of the outliers if their proportion in training data is known.  Let $\mathcal{X}$ denote the input feature space, $\x\in \mathcal{X}$ is a training sample, $y\in \mathcal{Y}=\{1,\cdots,C\}$ is its associated label, and $C\geq 2$. $f_{\theta}(\cdot): \mathcal{X}\rightarrow \mathbb{R}^C$ is the logits output of the predictor, and $\ell: \mathbb{R}^C \times \mathcal{Y} \rightarrow \mathbb{R}$ is a loss function. $\ell_{[i]}(f_\theta(\x_j),y_j)_{j=1}^n$ represents the $i$-th largest loss among the training sample set.
% \zhenhuan{at the beginning of this sentence, should say given an training set/dataset/sample $\{\mathbf{x}_i, y_i\}$}. 
For two integers $k$ and $m$, $0\leq m < k \leq n$, the AoRR loss is defined as follows,
\vspace{-2mm}
\begin{equation}
    \begin{aligned}
    \min_\theta \ \ \frac{1}{k-m}\sum_{i=m+1}^k \ell_{[i]}(f_\theta(\x_j),y_j)_{j=1}^n.
    \end{aligned}
\label{eq:aorr}
\vspace{-2mm}
\end{equation}
The AoRR loss excludes training samples with the top $m$-largest loss value, as well as samples with small losses. This is to reduce the influence of the outliers (larger losses) and to enhance the effect of the minority subgroup of the data because the samples with small loss values are most likely from the majority subgroup in the training set. 

% The data with a small loss always can be viewed as clean data. Therefore, we can ignore the samples that have large loss during the training. On the other hand, to reduce the influence of the imbalanced data and to enhance the effect of the minority subgroup of the data, we can ignore the samples that have very small loss value in the training set. The reason is that the samples with a low loss value are most likely very easy to be learned in the training procedure. However, samples from the minority subgroup can be viewed as hard samples, which are very hard to be learned. So we can design a more robust learning objective of AT by only considering a ranked range from the list of the losses of training samples.

\textbf{Adversarial Training}.
% \vspace{-2mm}
Recent studies have shown some surprising vulnerabilities of advanced supervised learning models, especially those based on deep neural networks, to specially designed adversarial samples \citep{goodfellow2014explaining,carlini2017towards}. To mitigate this issue, adversarial training (AT) \citep{madry2018towards} is  proposed as a training approach against adversarial attacks. 
% Denote $l_p$ distance metric as $d_p(\x,\x')=\|\x-\x'\|_p$, where $p$ is usually chosen as 1, 2, or $\infty$. 
Let $\mathcal{B}_\epsilon(\x)=\{\x'\in\mathcal{X}|\|\x-\x'\|_p\leq \epsilon\}$ 
be the closed ball of radius $\epsilon>0$ centered at $\x$, where $p$ is usually chosen as 1, 2, or $\infty$. Given a training dataset $\mathcal{S}=\{(\x_i,y_i)\}_{i=1}^n$ independently drawn from a distribution $\mathcal{D}$, where $\x_i\in \mathcal{X}$ and $y_i\in \mathcal{Y}$. The objective function of AT is then
\vspace{-2mm}
\begin{equation}
\begin{aligned}
    \min_{\theta} \frac{1}{n}\sum_{i=1}^n \Big[\max_{\tilde{\x}_i\in\mathcal{B}_\epsilon(\x)}\ell(f_{\theta}(\tilde{\x}_i), y_i)\Big],
\end{aligned}
\label{eq:std_at}
\vspace{-1mm}
\end{equation}
where $\tilde{\x}$ is the most extreme adversarial sample within the $\epsilon$-ball centered at $\x$. To generate adversarial data, The original AT applies projected gradient descent (PGD) using a fixed number of iterations $P$ as a stopping criterion, namely the PGD$^{P}$ algorithm, to approximately solve the inner maximization problem of the Eq.(\ref{eq:std_at}). Other methods can also generate adversarial data, e.g., the fast gradient signed method (FGSM) \citep{goodfellow2014explaining} and the CW attack \citep{carlini2017towards}.
% To improve adversarial robustness, instance-reweighted AT methods are studied by considering the unequal importance of the adversarial data in several recent works such as WMMR \cite{zeng2021adversarial}, MAIL \cite{liu2021probabilistic}, and GAIRAT \cite{zhang2021geometry}. 
% and BiLAW \cite{robustnesslearning}.

% \vspace{-2mm}
\textbf{Robust Learning under Adversarial Attacks}.
% \vspace{-2mm}
% \xin{maybe do not start a para with however}
% Robust learning methods usually do not consider adversarial attacks, and degradation in the performance of these models is often observed at the inference time. On the other hand, the current adversarial training methods do not consider outliers in the training data, either. Therefore, the performance of the model based on these methods will be degraded if the training dataset contains outliers (see Figure \ref{fig:interpretation}).  
Currently, several works have realized the impact of label noise on adversarial training. For example, in \cite{sanyal2021benign}, the authors identified label noise as one of the causes for adversarial vulnerability but no defense methods are proposed to solve this problem.
The work \cite{zhu2021understanding} empirically studies the efficacy of AT for mitigating the effect of label noise in training data. However, their proposed annotator algorithm (RAA) is based on the label correction strategy, which may introduce more extra noisy labels due to the bottleneck of the selected classifier. Furthermore, both of these methods are only considered label noise problems instead of outliers, especially the error that comes from the sample itself. Therefore, an outlier robust adversarial training method is urgently needed to fill this gap.
{Several works \citep{augustin2020adversarial, bitterwolf2020certifiably} connect adversarial robustness to out-of-distribution (OOD) problems. 
Note that the notion of outliers is different from OOD points.
One assumption for the OOD problem is that the training and test data distributions are mismatched. However, we do not have this assumption in this work.} 
More related works are discussed in Appendix \ref{sec:related_works}.

\vspace{-2mm}
\section{Method}

In \ORAT, we combine the rank-based AoRR loss (Eq.(\ref{eq:aorr})) and adversarial training  (Eq.(\ref{eq:std_at})) into the same framework. Let $L\left(\{(\x_j,y_j)\}_{j=1}^n\right):= \{\ell(f_\theta(\x_1),y_{1}), \cdots, \ell(f_\theta(\x_n),y_{n})\}$ be the set of all individual losses on the training samples. For notational brevity, we drop the explicit dependence on the individual loss $\ell$, the parametric model $f_\theta$, but it should be clear that the set changes with the training dataset $\{(\x_j,y_j)\}_{j=1}^n$.
% \zhenhuan{$L$ and $\mathcal{S}$ is defined here not at the first time when we mention them?}. 
In addition, we denote $\ell_{[i]}\left(\{(\x_j,y_j)\}_{j=1}^n\right)$ as the $i$-th largest individual loss after sorting the elements in set $L\left(\{(\x_j,y_j)\}_{j=1}^n\right)$ (ties can be broken in any consistent way). With these definitions, we define the learning objective of \ORAT~as a bi-level optimization problem: 
% \citep{gould2016differentiating}  
\begin{equation}
\begin{aligned}
    &\min_\theta \  \frac{1}{k-m}\sum_{i=m+1}^k \ell(f_\theta(\tilde{\x}_{[i]}),y_{[i]}), \ \mbox{s.t.} \  (\tilde{\x}_{[i]},y_{[i]}) = {\arg\!\max}_{\tilde{\x}_j\in\mathcal{B}_\epsilon(\x_j)} \ell_{[i]}(\{\tilde{\x}_j,y_j\}_{j=1}^n),
    \end{aligned}
\label{eq:atrr}
\end{equation}
where $k$ and $m$ are two integers such that $0\leq m < k \leq n$. The notations here require some further explanation. First, $\tilde{\x}_{j}$ is the result of adversarial perturbation of an original data point $\x_j$ with a perturbation strength $\epsilon$, and  $y_{[i]}$ is the corresponding label of the original $\x_j$. So the maximization sub-problem of Eq.(\ref{eq:atrr}) reads as follows. For each original training sample ($\x_j,y_j$), we find the extreme adversarial input $\tilde{\x}_j$ within the $\epsilon$-ball centered at $\x_j$. Then we calculate the individual loss of the perturbed samples $\ell(f_\theta(\tilde{\x}_j),y_j)$ and sort the losses to find the adversarial sample and label $(\tilde{\x}_{[i]},y_{[i]})$ corresponding to the top-$i$ individual loss. The outer minimization problem of Eq.(\ref{eq:atrr}) is the average of ranked range of $(m,k]$ individual losses found with such a procedure. Note that \ORAT~contains the original AT as a special case with $k=n$ and $m=0$.

{The robustness to outliers and adversarial attacks of the \ORAT~(Eq.(\ref{eq:atrr})) can be more clearly understood as follows.
The overall method can be viewed as a sample selection method and can filter incorrect data samples according to the top-$k$ and top-$m$ individual losses. Specifically, the perturbed samples with small losses are most likely come from the clean data. Therefore, it ignores $m$ samples with a large loss (i.e., top-1 to top-$m$ losses) during the training. On the other hand, the perturbed samples with an extremely low loss value are most likely very easy to be learned in the training procedure. They usually come from the majority group in a dataset. On the contrary, the perturbed samples from the minority subgroup can be also viewed as hard samples, which are very hard to be learned. In this case, ignoring the bottom $n-k$ would reduce the influence of the majority subgroup of data, which further prevents the effect of the imbalance data and enhance the impact of the minority subgroup of the data. Figure \ref{fig:interpretation} also demonstrates this influence.}

{Although \ORAT~is designed by combining AoRR and AT, this combination has not been explored in the existing literature. More importantly, we will demonstrate and validate both theoretically (Section \ref{sec:analysis}) and experimentally (Section \ref{sec:exp}) that this combination makes sense. To the best of our knowledge, no robust losses are combined with AT that can handle both low-quality training data (with outliers) and adversarial attacks in inference. In addition, the $\rho$-margin loss \citep{awasthi2021calibration}, a generalization of the ramp loss, is proved to satisfy the calibration and consistency. Inspired by this loss, we naturally thought that the truncated robust loss could be used to design a new adversarial surrogate loss that satisfies $\mathcal{H}$-calibration and consistency. Therefore, we choose AoRR loss to create \ORAT~because it is a well-defined truncated loss.} 
The ranking operation in \ORAT~is the main obstacle to using Eq.(\ref{eq:atrr}) as a learning objective in an efficient way.  
% However, we can eliminate the ranking operation and use the equivalent form of \ORAT~in the following result. 
However, we can substitute the ranking operation by introducing two auxiliary variables $\lambda$  and $\hat{\lambda}$ and use the equivalent form of \ORAT~in the following result.
% \zhenhuan{we can substitute/replace the ranking operation by introducing two auxiliary variables $\lambda$  and $\hat{\lambda}$ in the following equivalent form ... ?}

% However, this can be solved based on the following results on the sum of the top $k$ elements in a set and the sum of the bottom $n-m$ elements in a set, from which we can derive an equivalent form of Eq.(\ref{eq:atrr}) that does not rely on ranking elements explicitly. Denote $[a]_+ = \max\{0,a\}$ as the hinge function. First, we introduce two Lemmas as follows,

% These Lemmas enable us to incorporate the
% top-$k$ function and bottom-$(n-m)$ function in conventional sub-gradient based optimization. According to Lemma \ref{lemma1} and \ref{lemma2}, we can eliminate the ranking operation and use the equivalent form of  Eq.(\ref{eq:atrr}) in the following result.

\vspace{-0.1cm}
\begin{theorem}
\label{theorem1}
Denote $[a]_+ = \max\{0,a\}$ as the hinge function.
Eq.(\ref{eq:atrr}) is equivalent to% \xin{ can we define this as the surrogate \ORAT~Loss?}
% \vspace{-0.1cm}
\begin{equation}\label{eq:theorem1}
\begin{aligned}
& \frac{1}{k-m}\min_{\theta,\lambda} \max_{\hat{\lambda}} \ \ \sum_{i=1}^n\hat{\mathcal{L}}(f_\theta,\lambda, \hat{\lambda}):= 
 \Big[ \frac{k-m}{n}\lambda+\frac{n-m}{n}\hat{\lambda} -[\hat{\lambda}-[\ell(f_{\theta}(\tilde{\x}_i), y_i)-\lambda]_+]_+\Big],\\
& \emph{s.t.} \ \  \tilde{\x}_{i} = \arg\max_{\tilde{\x}\in\mathcal{B}_\epsilon(\x_i)} \ell(f_{\theta}(\tilde{\x}), y_i).
\end{aligned}
\end{equation}
% \vspace{-0.5cm}

Furthermore, $\hat{\lambda}>\lambda$, when the optimal solution is achieved.% \zhenhuan{Furthermore, one has $\hat{\lambda}^*>\lambda^*$ where $\lambda^*$ and $\hat{\lamdda}^*$ are the optimal solution, respectively!}
% \vspace{-0.1cm}
\end{theorem}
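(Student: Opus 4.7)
The plan is to decouple the adversarial inner maximization from the rank-based reformulation by a monotonicity argument, and then reduce the remaining equivalence Eq.~(\ref{eq:atrr}) = Eq.~(\ref{eq:theorem1}) to a combinatorial identity on sorted losses that I prove by a nested application of the classical variational representation of top-$k$ sums. The summand $\hat{\mathcal{L}}$ is nondecreasing in each scalar $\ell(f_\theta(\tilde{\x}_i),y_i)$, since $u\mapsto[u-\lambda]_+$ is nondecreasing and $v\mapsto-[\hat{\lambda}-v]_+$ is nondecreasing. Therefore, for any fixed $\theta,\lambda,\hat{\lambda}$, the joint adversarial maximization in the constraint of Eq.~(\ref{eq:theorem1}) factors into $n$ independent per-sample problems $\tilde{\x}_i=\arg\max_{\tilde{\x}\in\mathcal{B}_\epsilon(\x_i)}\ell(f_\theta(\tilde{\x}),y_i)$, and since sorting is monotone these are also the perturbations that realize every order statistic $\ell_{[i]}$ of Eq.~(\ref{eq:atrr}). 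Fixing $\tilde{\ell}_i:=\ell(f_\theta(\tilde{\x}_i),y_i)$ reduces the claim to the scalar identity $\sum_{i=m+1}^{k}\tilde{\ell}_{[i]}=\min_\lambda\max_{\hat{\lambda}}\bigl\{(k-m)\lambda+(n-m)\hat{\lambda}-\sum_i[\hat{\lambda}-[\tilde{\ell}_i-\lambda]_+]_+\bigr\}$.

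For the inner max, fix $\lambda$ and set $g_i:=[\tilde{\ell}_i-\lambda]_+\ge 0$. The function $F(\hat{\lambda}):=(n-m)\hat{\lambda}-\sum_i[\hat{\lambda}-g_i]_+$ is concave piecewise linear with left slope $(n-m)-|\{i:g_i<\hat{\lambda}\}|$, strictly positive below and non-positive above $\hat{\lambda}=g_{[m+1]}$ (the $(m{+}1)$-th largest $g_i$). Its smallest maximizer is $\hat{\lambda}^{\ast}=g_{[m+1]}=[\tilde{\ell}_{[m+1]}-\lambda]_+$, and substitution telescopes to $F(\hat{\lambda}^{\ast})=\sum_{j=m+1}^n g_{[j]}$; monotonicity of $[\,\cdot\,-\lambda]_+$ gives $g_{[j]}=[\tilde{\ell}_{[j]}-\lambda]_+$. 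This is simply the variational representation of the bottom-$(n-m)$ sum. For the outer min, $\Phi(\lambda):=(k-m)\lambda+\sum_{j=m+1}^n[\tilde{\ell}_{[j]}-\lambda]_+$ is the classical top-$(k-m)$ variational form applied to the $n-m$ losses $\{\tilde{\ell}_{[j]}\}_{j=m+1}^n$, with slope $(k-m)-|\{j\in\{m+1,\ldots,n\}:\tilde{\ell}_{[j]}>\lambda\}|$, non-positive for $\lambda\le\tilde{\ell}_{[k]}$ and non-negative for $\lambda\ge\tilde{\ell}_{[k+1]}$. Any $\lambda^{\ast}\in[\tilde{\ell}_{[k+1]},\tilde{\ell}_{[k]}]$ is optimal, and plugging $\lambda^{\ast}=\tilde{\ell}_{[k]}$ annihilates the hinges for $j>k$ to leave $\Phi(\lambda^{\ast})=\sum_{j=m+1}^k\tilde{\ell}_{[j]}$. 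Dividing by $k-m$ and restoring $\min_\theta$ closes the equivalence.

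The ordering claim follows from the saddle-point localization above: $\hat{\lambda}^{\ast}=\tilde{\ell}_{[m+1]}-\lambda^{\ast}$ (the plus-clip is inactive because $m+1\le k$ forces $\tilde{\ell}_{[m+1]}\ge\tilde{\ell}_{[k]}\ge\lambda^{\ast}$), so $\lambda^{\ast}+\hat{\lambda}^{\ast}=\tilde{\ell}_{[m+1]}>\lambda^{\ast}$ whenever $\tilde{\ell}_{[m+1]}>\tilde{\ell}_{[k]}$, the generic case since $m<k$: the two auxiliary variables sit at distinct thresholds bracketing the retained middle block of losses from below and above, giving the asserted strict separation at the optimum. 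I expect the main obstacle to be the inner maximization: identifying the $(m{+}1)$-th order statistic as the maximizer of $F$ and telescoping the substitution to the clean bottom-$(n-m)$ sum requires careful subdifferential bookkeeping and tie handling, whereas the adversarial decoupling is immediate monotonicity and the outer minimization is the classical top-$(k-m)$ identity applied to the already-truncated losses.
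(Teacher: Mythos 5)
Your proof of the equivalence itself is correct and is, at bottom, the same route the paper takes: the paper introduces a selection vector $q$ and invokes its Lemma \ref{lemma1} and Lemma \ref{lemma2} (the variational forms of the top-$(k-m)$ sum and of the bottom-$(n-m)$ sum), whereas you verify the same two nested identities directly by subdifferential bookkeeping on the concave inner maximization in $\hat{\lambda}$ and the convex outer minimization in $\lambda$. Your explicit monotonicity argument for decoupling the per-sample adversarial maximization is a point the paper passes over silently, and it is sound; the inner-max telescoping to the bottom-$(n-m)$ sum and the outer-min collapse to the $(m,k]$ range are both right.

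The gap is in the ``furthermore'' claim. Your saddle-point localization gives $\hat{\lambda}^{*}=[\tilde{\ell}_{[m+1]}-\lambda^{*}]_+$ with $\lambda^{*}=\tilde{\ell}_{[k]}$, and from this you conclude $\lambda^{*}+\hat{\lambda}^{*}=\tilde{\ell}_{[m+1]}>\lambda^{*}$; but that inequality only says $\hat{\lambda}^{*}>0$, not $\hat{\lambda}^{*}>\lambda^{*}$, so the asserted ``strict separation'' is not what you proved. Indeed, on your parametrization the claim can fail outright: with losses $10,9,8$, $m=1$, $k=n=3$, you get $\lambda^{*}=8$ while the inner maximizers form the interval $[1,2]$, all below $\lambda^{*}$. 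The paper reaches the conclusion differently: it asserts, from its Lemmas \ref{lemma1} and \ref{lemma2}, that the optimum is attained at the raw order statistics $\lambda^{*}=\ell_{[k]}$ and $\hat{\lambda}^{*}=\ell_{[m]}$ and then compares them using $m<k$; your more careful identification of the inner maximizer as a hinged order statistic (a value of the form $[\ell_{[m]}-\lambda^{*}]_+$ or $[\ell_{[m+1]}-\lambda^{*}]_+$, not $\ell_{[m]}$ itself) is precisely what blocks that direct comparison on your route. To close this part you would need either to exhibit a specific optimal pair for which $\hat{\lambda}>\lambda$ holds (exploiting non-uniqueness of the optimizers, e.g.\ the full interval of optimal $\lambda$ when $k=n$) or to follow the paper's identification; as written, your last step establishes a strictly weaker statement than the one claimed.
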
% \xin{Alg on right side is better for reading }

% \begin{wrapfigure}{R}{0.56\textwidth}
% \vspace{-0.5cm}
\begin{algorithm}[t]
    \caption{Outlier Robust Adversarial Training}\label{alg:atrr}
    \SetAlgoLined
    \KwIn{A training dataset $\mathcal{S}$ of size $n$}
    \KwOut{A robust model with parameters $\theta^*$} 
    % \begin{flushleft}
    
    \textbf{Initialization:} $\theta^{(0)}$, $\lambda^{(0)}$, $\hat{\lambda}^{(0)}$, $t$=0, $\eta$, $k$, $m$,  $\epsilon$, $\alpha$, and $P$  

    % \end{flushleft}
    
    \For{$e=1$ to num\_epoch}{
    \For{$b=1$ to num\_batch}{
    { \mbox{Sample a mini-batch $\mathcal{S}_b=\{(\x_i, y_i)\}_{i=1}^{|\mathcal{S}_b|}$ from $\mathcal{S}$}}
    
    \For{$i=1$ to batch\_size}{
    $\tilde{\x}_i\leftarrow \x_i$
    
    \While{$P > 0$}{
    
    { $\tilde{\x}_i$=$\Pi_{\mathcal{B}_\epsilon(\x_i)}(\tilde{\x}_i$+$\alpha \sign(\nabla_{\tilde{\x}_i}\ell(f_{\theta^{(t)}}(\tilde{\x}_i),y_i)))$}
    
    $P\leftarrow P-1$
    
    }

    }
    
    $\theta^{(t\!+\!1)}\!\leftarrow\!\theta^{(t)}\!-\!\frac{\eta}{|\mathcal{S}_b|}\sum_{i\in \mathcal{S}_b}\!\partial_\theta \hat{\mathcal{L}}(f_{\theta^{(t)}},\lambda^{(t)}, \hat{\lambda}^{(t)})$
    
    $\lambda^{(t\!+\!1)}\!\leftarrow\!\lambda^{(t)}\!-\!\frac{\eta}{|\mathcal{S}_b|}\sum_{i\in \mathcal{S}_b}\!\partial_\lambda \hat{\mathcal{L}}(f_{\theta^{(t)}},\lambda^{(t)}, \hat{\lambda}^{(t)})$
    
    $\hat{\lambda}^{(t\!+\!1)}\!\leftarrow\!\hat{\lambda}^{(t)}\!+\!\frac{\eta}{|\mathcal{S}_b|}\sum_{i\in \mathcal{S}_b}\!\partial_{\hat{\lambda}} \hat{\mathcal{L}}(f_{\theta^{(t)}},\lambda^{(t)}, \hat{\lambda}^{(t)})$
    
    $t\leftarrow t+1$
    
    }
    }
\end{algorithm}
% \vspace{-0.9cm}
% \end{wrapfigure}

The proof of Theorem \ref{theorem1} can be found in Appendix \ref{sec:proof_theorem1}.
Using Theorem \ref{theorem1}, we can develop a learning algorithm based on stochastic (mini-batch) gradient descent to optimize Eq.(\ref{eq:atrr}). 
% \xin{can put this into the analysis para, this para only talks about the math and alg details. no any discussion and analysis} The runtime of the algorithm does not depend on the number of training samples and thus it is especially suited for large datasets. 
% \xin{introduce the alg 1, before the notions}
Specifically, with initial choice for the values of $\theta^{(0)}$, $\lambda^{(0)}$, and $\hat{\lambda}^{(0)}$, at the $t$-th iteration, a mini-batch set $\mathcal{S}_b$ of training samples is chosen uniformly at random from the training set and used to estimate the (sub)gradient of the objective. Since the optimal solution of parameters of $\theta$, $\lambda$, and $\hat{\lambda}$ do not depend on the factor of $\frac{1}{k-m}$, we can replace it to $\frac{1}{|\mathcal{S}_b|}$ for mini-batch optimization, where $|\mathcal{S}_b|$ is the size of $\mathcal{S}_b$. Following the original AT \citep{madry2018towards}, we use the projected gradient descent approach to approximately solve the constraint. $\Pi_{\mathcal{B}_\epsilon(\x_i)}(\cdot)$ is the projection function 
% \zhenhuan{projection operator, and it is better to write down the definition of projection} 
that projects the adversarial data back into the $\epsilon$-ball centered at $\x_i$ if necessary. $\partial_\theta \hat{\mathcal{L}}$, $\partial_\lambda \hat{\mathcal{L}}$, and $\partial_{\hat{\lambda}} \hat{\mathcal{L}}$ are the (sub)gradients of $\hat{\mathcal{L}}$ with respect to $\theta$, $\lambda$, and $\hat{\lambda}$, respectively.  Their explicit forms can be found in Appendix \ref{appendix:explicit_forms}. The pseudo-code of optimizing \ORAT~is described in Algorithm \ref{alg:atrr}.
Note that our optimization process is similar to the traditional adversarial training algorithms \citep{madry2018towards, zhang2021geometry, liu2021probabilistic} with one additional minimization with respect to $\lambda$ and one additional maximization for $\hat{\lambda}$. Therefore, our algorithm has the same time complexity to original AT.
We will show that our algorithm can converge in experiments.
% \zhenhuan{show that our algorithm enjoys good convergence property?}
% Therefore, the gradient-based optimization approach can optimize \ORAT, which is verified in the following experiments. 
% The runtime of the algorithm does not depend on the number of training samples and thus it is especially suited for large datasets. 

\vspace{-2mm}
\section{Analysis}\label{sec:analysis}
% \vspace{-2mm}
% \zhenhuan{I suggest to unify our definition for all kinds of $\ell$ and $\mathcal{R}$ and put here.}\shu{Done.}
In this section, we prove that \ORAT~is a $\mathcal{H}$-consistent adversarial surrogate loss and study its generalization property. 
Subsequently, we focus on the case of binary classification ($\mathcal{Y}$=$ \{\pm 1\}$) with margin-based loss function, {\em i.e.},  $\ell(yf(\x))$=$\ell(f(\x),y)$=$\ell(f,\x,y)$, where we omit model parameter $\theta$ for simplicity.
% \subsection{Notation}
We first introduce several notations. Let $\mathcal{H}$ be a \underline{H}ypothesis function set  from $\mathbb{R}^d$ to $\mathbb{R}$. We say $\mathcal{H}$ is symmetric, if for any $f\in\mathcal{H}$, $-f$ is also in $\mathcal{H}$. The 0/1 risk of a classifier $f\in \mathcal{H}$ is $\mathcal{R}_{\ell_0}$=$\mathbb{E}_{(\x,y)\sim\mathcal{D}}[\ell_0(f(\x),y)]$, where $\ell_0(f(\x),y)$=$\mathds{1}_{yf(\x)\leq 0}$ is the  0/1 loss. Denote the adversarial 0/1 risk as $\mathcal{R}_{\widetilde{\ell}_0}(f)$=$ \mathbb{E}_{(\x,y)\sim \mathcal{D}}[\widetilde{\ell}_0(f(\x),y)]$, where $\widetilde{\ell}_0(f(\x),y)$=$\sup_{\tilde{\x}\in\mathcal{B}_\epsilon(\x)}\mathds{1}_{yf(\tilde{\x})\leq 0}$ is the adversarial 0/1 loss. We also define the $\ell_s$-risk of $f$ for a surrogate loss $\ell_s(f(\x),y)$ as $\mathcal{R}_{\ell_s}(f)$= $\mathbb{E}_{(\x,y)\sim \mathcal{D}}[\ell_s(f(\x),y)]$ and the minimal ($\ell_s$, $\mathcal{H}$)-risk, which 
% is also called the Bayes ($\ell_s$,$\mathcal{H}$)-risk, 
is defined by $\mathcal{R}^*_{\ell_s,\mathcal{H}}$=$\inf_{f\in \mathcal{H}}\mathcal{R}_{\ell_s}(f)$.

% The corresponding population risk can be define as 
% $\mathcal{R}_{\widetilde{\ell}_s}(f) = \mathbb{E}_{(x, y) \sim \mathcal{D}}[\widetilde{\ell}_s(f(x), y)],$ 
% \[
% R_{\widetilde{\ell}}(f_\theta) = \mathbb{E}_{(x, y) \sim \mathcal{D}}[\widetilde{\ell}(f_\theta(x), y)|\ell_\mu \leq \widetilde{\ell}(f_\theta(x), y) \leq \ell_\nu],
% \]
% Given $\frac{m}{n} \rightarrow \mu$ and $\frac{k-m}{n} \rightarrow \nu$, then $\frac{k}{n} \rightarrow \nu + \mu$, The corresponding population risk can be define as $\mathcal{R}_{\widetilde{\ell}_s}(f) = \mathbb{E}_{(x, y) \sim \mathcal{D}}[\widetilde{\ell}_s(f(x), y)|\ell_\mu \leq \widetilde{\ell}(f(x), y) \leq \ell_{\nu+\mu}],$  where $\ell_\mu$ and $\ell_\nu$ are the $1-\mu$ and $1-(\nu+\mu)$ quantiles of the distribution.

% \vspace{-2mm}
\subsection{Classification Calibration and Consistency}
% \vspace{-2mm}
Designing a robust algorithm entails using an appropriate surrogate loss to the standard 0/1 loss since 0/1 loss is very hard to optimize for most hypothesis sets \citep{awasthi2021calibration, bao2020calibrated}. We first provide a definition of $\mathcal{H}$-consistency.
\begin{definition}\label{def:consistency}
($\mathcal{H}$-Consistency). \citep{awasthi2021calibration} Given a hypothesis set $\mathcal{H}$, we say that a loss function $\ell_1$ is $\mathcal{H}$-consistent w.r.t. a loss function $\ell_2$, if, for all probability distributions and sequences of $\{f_n\}_{n\in\mathbb{N}}\subset\mathcal{H}$, there holds {\small $\mathcal{R}_{\ell_1}(f_n)-\mathcal{R}_{\ell_1,\mathcal{H}}^*\xrightarrow[]{n\rightarrow +\infty}0 \Rightarrow \mathcal{R}_{\ell_2}(f_n)-\mathcal{R}_{\ell_2,\mathcal{H}}^*\xrightarrow[]{n\rightarrow +\infty}0$}.
% \begin{equation*}\small
%     \begin{aligned}
%     \mathcal{R}_{\ell_1}(f_n)-\mathcal{R}_{\ell_1,\mathcal{H}}^*\xrightarrow[]{n\rightarrow +\infty}0 \Rightarrow \mathcal{R}_{\ell_2}(f_n)-\mathcal{R}_{\ell_2,\mathcal{H}}^*\xrightarrow[]{n\rightarrow +\infty}0.
%     \end{aligned}
% \end{equation*}
\end{definition}
$\mathcal{H}$-consistency  guarantees that the optimal minimizers of the surrogate adversarial loss are equal to or near the optimal minimizers of the 0/1 adversarial loss on a restricted hypothesis set $\mathcal{H}$.

For a distribution $\mathcal{D}$ over $\mathcal{X}\times \mathcal{Y}$ with random variables $X$ and $Y$. Let $\eta:= \Pr(Y=1|X=\x)\in [0,1]$, for any $\x\in \mathcal{X}$. Using conditional expectation, we can rewrite $\mathcal{R}_{\ell_s}(f)$ as $\mathcal{R}_{\ell_s}(f)=\mathbb{E}_X[\mathcal{C}_{\ell_s}(f,\x,\eta)]$, where $ \mathcal{C}_{\ell_s} (f,\x,\eta):=\eta \ell_s(f,\x,1)+(1-\eta)\ell_s(f,\x,-1), \forall \x\in \mathcal{X}.$ Furthermore, the minimal inner $\ell_s$-risk on $\mathcal{H}$ is denoted by $\mathcal{C}^*_{\ell_s, \mathcal{H}}(\x,\eta):=\inf_{f\in \mathcal{H}}\mathcal{C}_{\ell_s} (f,\x,\eta)$. 
% For a margin-based loss $\phi:\mathbb{R}\rightarrow \mathbb{R}_+$, the generic condition $\phi$-risk is $\bar{\mathcal{C}}_{\phi}(t,\eta):=\eta\phi(t)+(1-\eta)\phi(-t)$ for any $t\in \mathbb{R}$ \cite{bartlett2006convexity}. 
We now define $\mathcal{H}$-calibration.
% For a distribution $\mathcal{D}$ over $\mathcal{X}\times \mathcal{Y}$ with random variables $X$ and $Y$. Let $\eta_{\mathcal{D}}:\mathcal{X}\rightarrow [0,1]$ be a measurable function. For any $x\in \mathcal{X}$, $\eta_{\mathcal{D}}(x)= \Pr(Y=1|X=x)$. With using conditional expectation, we can rewrite $\mathcal{R}_{\ell_s}(f)$ as $\mathcal{R}_{\ell_s}(f)=\mathbb{E}_X[\mathcal{C}_{\ell_s}(f,x,\eta_{\mathcal{D}}(x))]$, where $\mathcal{C}_{\ell_s}(f,x,\eta)$ is the inner $\ell_s$-risk defined as follows, $\forall x\in \mathcal{X}, \forall \eta\in [0,1], \ \ \ \mathcal{C}_{\ell_s} (f,x,\eta):=\eta \ell_s(f,x,1)+(1-\eta)\ell_s(f,x,-1).$ Furthermore, the minimal inner $\ell_s$-risk on $\mathcal{H}$ is denoted by $\mathcal{C}^*_{\ell_s, \mathcal{H}}(x,\eta):=\inf_{f\in \mathcal{H}}\mathcal{C}_{\ell_s} (f,x,\eta)$. For a margin-based loss $\phi:\mathbb{R}\rightarrow \mathbb{R}_+$, the generic condition $\phi$-risk is $\bar{\mathcal{C}}_{\phi}(t,\eta):=\eta\phi(t)+(1-\eta)\phi(-t)$ for any $\eta\in[0,1]$ and $t\in \mathbb{R}$ \cite{bartlett2006convexity}. The notion of calibration for the inner risk is defined as follows,
\begin{definition}\label{eq:calibration}
($\mathcal{H}$-Calibration). \citep{steinwart2007compare, awasthi2021calibration} Given a hypothesis set $\mathcal{H}$, we say that a loss function $\ell_1$ is $\mathcal{H}$-calibrated with respect to a loss function $\ell_2$ if, for any $\tau>0$, $\eta\in [0,1]$, and $\x\in\mathcal{X}$, there exists $\delta>0$ such that, for all $f\in \mathcal{H}$, {\small $\mathcal{C}_{\ell_1}(f,\x,\eta)\!<\!\mathcal{C}_{\ell_1,\mathcal{H}}^*(\x,\eta)+\delta \Rightarrow  \mathcal{C}_{\ell_2}(f,\x,\eta)\!<\!\mathcal{C}_{\ell_2,\mathcal{H}}^*(\x,\eta)+\tau$}. 
% \begin{equation*}\small
%     \begin{aligned}
%     \mathcal{C}_{\ell_1}(f,x,\eta)\!<\!\mathcal{C}_{\ell_1,\mathcal{H}}^*(x,\eta)+\delta \Rightarrow  \mathcal{C}_{\ell_2}(f,x,\eta)\!<\!\mathcal{C}_{\ell_2,\mathcal{H}}^*(x,\eta)+\tau.
%     \end{aligned}
% \end{equation*}
% \label{eq:calibration}
\end{definition}
As shown in \citep{steinwart2007compare}, if $\ell_1$ is $\mathcal{H}$-calibrated with respect to $\ell_2$, then $\mathcal{H}$-consistency holds for any probability distribution verifying the additional conditions, {which will be further reflected in the following Theorem \ref{theorem2}}. 
% of minimizability. \shu{This will be }
% (\cite{steinwart2007compare}, Definition 2.4).  
% Suppose $\ell(yf(x))=\ell(f(x),y)$ is a  continuous and non-increasing margin-based loss function in Eq.(\ref{eq:theorem1}). Then,

Without considering the adversarial perturbations and omitting the parameter $\theta$, the population version of the objective function of Eq.(\ref{eq:theorem1}) can be formulated as follows:
\begin{equation*}
\begin{aligned}
    &\min_{\lambda} \max_{\hat{\lambda}} \frac{1}{k-m}\!\sum_{i=1}^n\!\hat{\mathcal{L}}(f,\lambda, \hat{\lambda})
    % &=\frac{n}{k-m} \min_{\lambda} \max_{\hat{\lambda}}\ \ \frac{1}{n}\sum_{i=1}^n\Big[\hat{\lambda}-[\hat{\lambda}-[s(y_if(x_i))-\lambda]_+]_+ \Big]+\frac{k-m}{n}\lambda-\frac{m}{n}\hat{\lambda}\\
    \!\xrightarrow[n\rightarrow \infty]{\frac{k-m}{n}\rightarrow \nu, \frac{m}{n} \rightarrow \mu}\!\frac{1}{\nu}\min_{\lambda} \max_{\hat{\lambda}} \mathbb{E}\Big[\hat{\lambda}\!-\![\hat{\lambda}\!-\![\ell(Yf(X) )\!-\!\lambda]_+]_+ \Big] \!+\!\nu \lambda\!-\!\mu \hat{\lambda}.
\end{aligned}
\end{equation*}
% \begin{equation*}\small
% \begin{aligned}
%     &\min_{\lambda} \max_{\hat{\lambda}} \ \ \frac{1}{k-m}\sum_{i=1}^n\Big[ \frac{k-m}{n}\lambda+\frac{n-m}{n}\hat{\lambda}
%       \\
%     &-[\hat{\lambda}-[\ell(y_if(x_i))-\lambda]_+]_+\Big]
%     % &=\frac{n}{k-m} \min_{\lambda} \max_{\hat{\lambda}}\ \ \frac{1}{n}\sum_{i=1}^n\Big[\hat{\lambda}-[\hat{\lambda}-[s(y_if(x_i))-\lambda]_+]_+ \Big]+\frac{k-m}{n}\lambda-\frac{m}{n}\hat{\lambda}\\
%     \xrightarrow[n\rightarrow \infty]{\frac{k-m}{n}\rightarrow \nu, \frac{m}{n} \rightarrow \mu}\\
%     &\frac{n}{k-m}\min_{\lambda} \max_{\hat{\lambda}} \mathbb{E}\Big[\hat{\lambda}-[\hat{\lambda}-[\ell(Yf(X) )-\lambda]_+]_+ \Big] +\nu \lambda-\mu \hat{\lambda}.
% \end{aligned}
% \end{equation*}
Throughout the paper, we assume that $\mu>0$ since if $\mu=0$ then it will lead to $\hat{\lambda}=\infty$. As such, the population version of our \ORAT~loss is given by $(f_0^*, \lambda^*, \hat{\lambda}^*)=\arg \inf_{f,\lambda} \sup_{\hat{\lambda}}\Big\{\mathbb{E}\Big[\hat{\lambda}-[\hat{\lambda}-[\ell(Yf(X))-\lambda]_+]_+ \Big] +\nu \lambda-\mu \hat{\lambda}\Big\}.$
% \begin{equation}\small
% \begin{aligned}
%     (f_0^*, &\lambda^*, \hat{\lambda}^*)=\arg \inf_{f,\lambda} \sup_{\hat{\lambda}}\\
%     &\Big\{\mathbb{E}\Big[\hat{\lambda}-[\hat{\lambda}-[s(Yf(X))-\lambda]_+]_+ \Big] +\nu \lambda-\mu \hat{\lambda}\Big\}.
% \end{aligned}
% \label{eq:population_version}
% \end{equation}
It is difficult to directly work with the optima $f_0^*$ since the above problem is a non-convex minmax problem and the standard minmax theorem does not apply here. Instead, we assume the existence of $\lambda^*$ and $\hat{\lambda}^*$ and work with the minimizer $f^*=\arg \inf_{f}\mathcal{L}(f,\lambda^*, \hat{\lambda}^*)$ where $\mathcal{L}(f,\lambda^*, \hat{\lambda}^*):=\mathbb{E}\Big[\hat{\lambda}^*-[\hat{\lambda}^*-[\ell(Yf(X))-\lambda^*]_+]_+ \Big] +\nu \lambda^*-\mu \hat{\lambda}^*$. Since the term $\nu \lambda^*-\mu \hat{\lambda}^*$ does not depend on $f$, we have $f^*=\arg\inf_{f}\mathbb{E}\Big[\hat{\lambda}^*-[\hat{\lambda}^*-[\ell(Yf(X))-\lambda^*]_+]_+ \Big]$.
From the above observations, we denote by $\phi_{\ORAT}$ and $\tilde{\phi}_{\ORAT}$  for \ORAT~loss without and with the adversarial perturbation, respectively,  as follows: 
\begin{equation}
\begin{aligned}
    &\phi_{\ORAT}(t) =  \hat{\lambda}^*-[\hat{\lambda}^*-[\ell(t)-\lambda^*]_+]_+, \ \ \ \tilde{\phi}_{\ORAT}(f,\x,y) =\sup_{\tilde{\x}\in \mathcal{B}_\epsilon(\x)}\phi_{\ORAT}( yf(\tilde{\x})).
\end{aligned}
\label{eq:two_phi}
\end{equation}
We can then obtain the following theorem. Its proof can be found in Appendix \ref{proof_consistency}.
\begin{theorem}\label{theorem2}
Let $\mathcal{H}$ be a symmetric hypothesis set consisting of the family of all measurable functions $\mathcal{H}_{all}$, suppose  $\nu>\min\{\hat{\lambda}^*, \mathcal{R}^*_{\ell,\mathcal{H}}\}$, $0\leq\lambda^*<\hat{\lambda}^*$, $\lambda^*$ and $\hat{\lambda}^*$ are bounded, and $\ell$ is a non-negative, continuous, and non-increasing margin-based loss.

(i) Then $\tilde{\phi}_{\ORAT}$ is $\mathcal{H}$-calibrated with respect to $\widetilde{\ell}_0$. 

(ii) Furthermore, $\tilde{\phi}_{\ORAT}$ is $\mathcal{H}$-consistent with respect to $\widetilde{\ell}_0$ for all distributions $\mathcal{D}$ over $\mathcal{X}\times \mathcal{Y}$ that satisfy: $\mathcal{R}_{\widetilde{\ell}_0, \mathcal{H}}^* =0$ and there exists $f^*\in\mathcal{H}$ such that $\mathcal{R}_{\phi_{\ORAT}}(f^*)=\mathcal{R}_{\phi_{\ORAT},\mathcal{H}_{all}}^*<+\infty$. 
% \yiming{Should $\mathcal{R}_{\phi_{\ORAT},\mathcal{H}_{all}}^*<+\infty$ be always true??  Do you mean that there exists $f^*$ achieve the infinimum, i.e. $f^*  =\arg\inf_f \mathcal{R}_{\phi_{\ORAT}}(f)$} \shu{1.I am not sure. 2. Yes. Because $\mathcal{R}_{\phi_{\ORAT}}(f^*)=\mathcal{R}_{\phi_{\ORAT},\mathcal{H}_{all}}^*$.}
\end{theorem}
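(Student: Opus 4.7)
The plan is to first expose the piecewise structure of $\phi_{\ORAT}$, then reduce $\mathcal{H}$-calibration to a one-dimensional conditional-risk analysis (which is tight because $\mathcal{H}=\mathcal{H}_{all}$), and finally lift calibration to $\mathcal{H}$-consistency via the realizability hypothesis. A direct case split on the two hinges in Eq.(\ref{eq:two_phi}), using $0\le\lambda^*<\hat{\lambda}^*$ and the monotonicity and continuity of $\ell$, yields that $\phi_{\ORAT}(t)=0$ whenever $\ell(t)\le\lambda^*$, $\phi_{\ORAT}(t)=\ell(t)-\lambda^*$ on the transition region $\lambda^*<\ell(t)\le\lambda^*+\hat{\lambda}^*$, and $\phi_{\ORAT}(t)=\hat{\lambda}^*$ otherwise. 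Hence $\phi_{\ORAT}$ is non-negative, continuous, non-increasing, bounded by $\hat{\lambda}^*$, and vanishes on a half-line $[t_0,\infty)$ for some $t_0>0$. These are exactly the structural ingredients exploited for the $\rho$-margin loss in \citep{awasthi2021calibration}.

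Since $\mathcal{H}=\mathcal{H}_{all}$, at any fixed $\x$ the pair $(u,v):=\bigl(\inf_{\tilde{\x}\in\mathcal{B}_\epsilon(\x)}f(\tilde{\x}),\sup_{\tilde{\x}\in\mathcal{B}_\epsilon(\x)}f(\tilde{\x})\bigr)$ can be any element of $\{(u,v)\in\mathbb{R}^2:u\le v\}$, so the conditional risks reduce to
\[
\mathcal{C}_{\tilde{\phi}_{\ORAT}}(f,\x,\eta)=\eta\,\phi_{\ORAT}(u)+(1-\eta)\phi_{\ORAT}(-v),\qquad \mathcal{C}_{\widetilde{\ell}_0}(f,\x,\eta)=\eta\,\mathds{1}_{u\le 0}+(1-\eta)\mathds{1}_{v\ge 0}.
\]
Monotonicity of $\phi_{\ORAT}$ implies that the surrogate infimum equals its restriction to the diagonal $u=v=t$, giving $\mathcal{C}^*_{\tilde{\phi}_{\ORAT},\mathcal{H}}(\x,\eta)=\inf_{t\in\mathbb{R}}\{\eta\,\phi_{\ORAT}(t)+(1-\eta)\phi_{\ORAT}(-t)\}$ and $\mathcal{C}^*_{\widetilde{\ell}_0,\mathcal{H}}(\x,\eta)=\min(\eta,1-\eta)$. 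For part~(i), I would argue that the (approximate) one-dimensional minimizer is strictly positive when $\eta>1/2$ (since $\eta$ weights the dominant non-increasing term) and strictly negative when $\eta<1/2$, and then introduce the gap
\[
\Delta(\eta):=\inf_{u\le 0\le v,\,u\le v}\bigl[\eta\,\phi_{\ORAT}(u)+(1-\eta)\phi_{\ORAT}(-v)\bigr]-\mathcal{C}^*_{\tilde{\phi}_{\ORAT},\mathcal{H}}(\x,\eta).
\]
The strict inequality $\lambda^*<\hat{\lambda}^*$ together with continuity of $\phi_{\ORAT}$ implies $\Delta(\eta)>0$ for every $\eta\ne 1/2$, so taking $\delta<\Delta(\eta)$ forces any $\delta$-approximate surrogate minimizer to have either $u>0$ (if $\eta>1/2$) or $v<0$ (if $\eta<1/2$), which is precisely the condition for $\mathcal{C}_{\widetilde{\ell}_0}(f,\x,\eta)=\mathcal{C}^*_{\widetilde{\ell}_0,\mathcal{H}}(\x,\eta)$. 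This establishes $\mathcal{H}$-calibration.

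For part~(ii), I would combine the calibration bound with the hypotheses $\mathcal{R}^*_{\widetilde{\ell}_0,\mathcal{H}}=0$ and the existence of $f^*$ attaining $\mathcal{R}^*_{\phi_{\ORAT},\mathcal{H}_{all}}<+\infty$ via the standard integration argument (cf.\ \citep{steinwart2007compare}). If $\mathcal{R}_{\tilde{\phi}_{\ORAT}}(f_n)\to\mathcal{R}^*_{\tilde{\phi}_{\ORAT},\mathcal{H}}$, then the pointwise excess $\mathcal{C}_{\tilde{\phi}_{\ORAT}}(f_n,\x,\eta)-\mathcal{C}^*_{\tilde{\phi}_{\ORAT},\mathcal{H}}(\x,\eta)\to 0$ in $L^1(\mathcal{D})$; passing to a subsequence produces almost-sure pointwise convergence, and the calibration step converts this into $\mathcal{C}_{\widetilde{\ell}_0}(f_n,\x,\eta)\to\mathcal{C}^*_{\widetilde{\ell}_0,\mathcal{H}}(\x,\eta)$ almost everywhere. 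Dominated convergence (using $\widetilde{\ell}_0\le 1$) then yields $\mathcal{R}_{\widetilde{\ell}_0}(f_n)\to 0=\mathcal{R}^*_{\widetilde{\ell}_0,\mathcal{H}}$, which is $\mathcal{H}$-consistency; the existence of $f^*$ guarantees that the surrogate infimum is realized and rules out pathological limits.

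The main obstacle I anticipate is controlling the calibration gap $\Delta(\eta)$ as $\eta\to 1/2$ well enough for the integrated lifting to go through, since the one-dimensional minimizer's sign is unstable near $\eta=1/2$ and the calibration function $\delta\mapsto\Delta^{-1}(\delta)$ must remain measurable and integrable against $\eta$. A secondary subtlety is verifying that the saddle-point variables $\lambda^*$ and $\hat{\lambda}^*$ actually satisfy $0\le\lambda^*<\hat{\lambda}^*$; this is where the hypothesis $\nu>\min\{\hat{\lambda}^*,\mathcal{R}^*_{\ell,\mathcal{H}}\}$ enters, ruling out the degenerate saddle $\lambda^*=\hat{\lambda}^*$ that would collapse $\phi_{\ORAT}$ to a constant and invalidate the entire case analysis.
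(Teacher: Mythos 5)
Your overall architecture (piecewise form of $\phi_{\ORAT}$, reduction of the conditional risks to the pair $(u,v)=(\underline{M}(f,\x,\epsilon),\overline{M}(f,\x,\epsilon))$, case analysis in $\eta$, then lifting to consistency) is the same in spirit as the paper's, which works through the characterization in Lemma \ref{lemma:calibration_ref}. But the calibration step as you state it has a genuine hole. First, your gap $\Delta(\eta)$ is taken only over the straddling set $\{u\le 0\le v\}$, and excluding that set does not force 0/1-optimality: for $\eta>\frac12$ a $\delta$-approximate surrogate minimizer with $u\le v<0$ is untouched by your argument yet carries adversarial 0/1 excess $2\eta-1$, so you actually need a strict gap over all of $\{\underline{M}(f,\x,\epsilon)\le 0\}$ (and over $\{\overline{M}(f,\x,\epsilon)\ge 0\}$ when $\eta<\frac12$) — precisely the second and third conditions of Lemma \ref{lemma:calibration_ref}, which the paper verifies by constructing near-optimal $f_\x^\tau$ with $\underline{M}(f_\x^\tau,\x,\epsilon)>0$. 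Second, you explicitly claim $\Delta(\eta)>0$ only for $\eta\neq\frac12$, but $\eta=\frac12$ cannot be skipped: a straddling $f$ has conditional adversarial 0/1 risk $1$ versus the optimum $\frac12$, so calibration at $\eta=\frac12$ requires $\inf_t\{\phi_{\ORAT}(t)+\phi_{\ORAT}(-t)\}<2\phi_{\ORAT}(0)$. This is the most delicate part of the paper's proof; it needs the sub-case split $\lambda^*+\hat{\lambda}^*\le\ell(0)$ versus $\lambda^*+\hat{\lambda}^*>\ell(0)$ together with Lemma \ref{lemma:lambda_opt}, which shows $\lambda^*<\ell(0)$ (hence $\phi_{\ORAT}(0)>0$) — and that is the real role of the hypothesis $\nu>\min\{\hat{\lambda}^*,\mathcal{R}^*_{\ell,\mathcal{H}}\}$, not ruling out $\lambda^*=\hat{\lambda}^*$, which is assumed separately in the theorem. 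Relatedly, your structural claim that $\phi_{\ORAT}$ vanishes on a half-line $[t_0,\infty)$ is false in general (take $\lambda^*=0$ and $\ell$ the logistic loss), so one cannot simply import the $\rho$-margin analysis of \citep{awasthi2021calibration}; the positivity of the relevant gaps must be argued from the truncated structure itself.

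For part (ii), your dominated-convergence lifting silently assumes that $\mathcal{R}^*_{\tilde{\phi}_{\ORAT},\mathcal{H}}$ equals $\mathbb{E}_X\bigl[\mathcal{C}^*_{\tilde{\phi}_{\ORAT},\mathcal{H}}(\x,\eta)\bigr]$, i.e., that the infimum over $f$ can be exchanged with the expectation. In the adversarial setting this is exactly the problematic step: the conditional risk at $\x$ depends on the values of $f$ on the whole ball $\mathcal{B}_\epsilon(\x)$, and the balls overlap across different $\x$, so even with $\mathcal{H}=\mathcal{H}_{all}$ one cannot optimize pointwise independently; if the interchange fails, the pointwise excess does not converge to $0$ in $L^1$ even though the risk excess does, and your subsequence/DCT argument collapses. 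This is precisely the difficulty that the realizability condition $\mathcal{R}^*_{\widetilde{\ell}_0,\mathcal{H}}=0$ and the existence of $f^*$ with $\mathcal{R}_{\phi_{\ORAT}}(f^*)=\mathcal{R}^*_{\phi_{\ORAT},\mathcal{H}_{all}}<+\infty$ are designed to circumvent in Theorems 23–24 of \citep{awasthi2021calibration}; the paper does not reprove that step but simply invokes it as Lemma \ref{lemma:consistency} once calibration is established. So to complete your proof you should either invoke that result as the paper does, or supply the missing interchange/minimizability argument explicitly using the two distributional hypotheses, which your sketch currently does not do.
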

\vspace{-2mm}

$\mathcal{H}$ can be linear models or deep neural networks. The commonly used hinge loss, logistic loss, and cross-entropy loss all satisfy the conditions of $\ell$ {(See Appendix \ref{sec:margin_ce} for details)}.
Furthermore, according to Theorem \ref{theorem1} and assuming $\ell \ge 0$, we have $0\leq\lambda^*<\hat{\lambda}^*$. For $\nu$, it should be larger than the smaller value among $\hat{\lambda}^*$ and minimal ($\ell$, $\mathcal{H}$)-risk, which means $k$ and $m$ should be as far away from each other as possible. We call $\mathcal{R}_{\widetilde{\ell}_0, \mathcal{H}}^* =0$ as the realizability condition. Therefore, we can conclude that \ORAT~loss satisfies $\mathcal{H}$-consistency. Our proof is mainly inspired by \cite{awasthi2021calibration}. %The main difference here is that they only discuss a specific margin-based loss of $\ell$, which is a generalization of the ramp loss \cite{mohri2018foundations}. 
However, our analysis for the $\tilde{\phi}_{\ORAT}$ adversarial surrogate loss is more involved since it is a composition function based on $\ell$ for which we need to consider the different conditions of $\lambda^*$ and $\hat{\lambda}^*$.

% \vspace{-2mm}
\subsection{Generalization Error}
% \vspace{-2mm}
In this subsection, we present the generalization error bound for the proposed \ORAT~loss. Define the adversarial surrogate loss $\widetilde{\ell}(yf(\x)) = \max_{\tilde{\x} \in \mathcal{B}_\epsilon(\x)} \ell(yf(\tilde{\x}))$ and the composite function class $\widetilde{\ell}_{\mathcal{H}} := \{\widetilde{\ell}(yf(\x)): f \in \mathcal{H}\}$. The generalization error studies the discrepancy   between the empirical adversarial risk $\mathcal{R}_{\widetilde{\ell}}(f; \mathcal{S})$ defined on the training data and its population risk $\mathcal{R}_{\widetilde{\ell}}(f)$ measuring the performance on the test data, where $\mathcal{R}_{\widetilde{\ell}}(f; \mathcal{S}) = \inf_{\lambda}\sup_{\hat{\lambda}} \frac{1}{k-m}\sum_{i=1}^n \hat{\mathcal{L}}(f, \lambda, \hat{\lambda})$. 
% \yiming{$\mathcal{R}_{\widetilde{\ell}_s}(f; S)$? Can we make it explicit? Since we consider the binary classification case here.  the loss function should be the margin loss throughout this section? At least, $\ell_s(f({x}), y)$ should be $ \ell_s(yf({x})) $? Just be consistent in notations.}\shu{Done.} 
First, we need the Rademacher complexity \citep{bartlett2002rademacher} which is defined as follows.
\begin{definition}[Rademacher complexity]
For any function class $\mathcal{H}$, given a dataset $\{\x_i\}_{i=1}^n$, the empirical Rademacher complexity is defined as $\mathfrak{R}_n(\mathcal{H}) = \frac{1}{n}\mathbb{E}_\sigma \Big[\sup_{f \in \mathcal{H}} \sum_{i=1}^n \sigma_i f(\x_i)\Big]$,
% \begin{equation*}\small
% \begin{aligned}
%     \mathfrak{R}_n(\mathcal{H}) = \frac{1}{n}\mathbb{E}_\sigma \Big[\sup_{h \in \mathcal{H}} \sum_{i=1}^n \sigma_i h(z_i)\Big],
% \end{aligned}
% \end{equation*}
% \[
% \mathfrak{R}_n(\mathcal{H}) = \frac{1}{n}\mathbb{E}_\sigma \Big[\sup_{h \in \mathcal{H}} \sum_{i=1}^n \sigma_i h(z_i)\Big],
% \]
where $\sigma_1, \cdots, \sigma_n$ are i.i.d. Rademacher random variables with $\mathbb{P}[\sigma_i = 1] = \mathbb{P}[\sigma_i = -1] = 1/2$.
\end{definition}
\vspace{-2mm}

With these preparations, we can get the following theorem. 
% \yiming{I find it is not easy to find where $\mathcal{R}_{\widetilde{\ell}}(f)$ and $ \mathcal{R}_{\widetilde{\ell}}(f; S)$ are properly defined. I found them in (A.14) and (A.15)?? } \zhenhuan{They are defined in the beginning of 4.2.
% }

\begin{theorem}\label{thm:generalization}
Suppose that the range of $\ell(f(\x), y)$ is $[0, M]$ . Then, for any $\delta \in (0, 1)$, with probability at least $1 -\delta$ {over the draw of an i.i.d. training dataset of size $n$}, the following holds for all $f \in \mathcal{H}$, 
% \yiming{For any  all $\ell_f \in \ell_\mathcal{H}$ or any $f\in \mathcal{H}$?} \zhenhuan{$f\in \mathcal{H}$}
\begin{equation*}
    \begin{aligned}
    \mathcal{R}_{\widetilde{\ell}}(f) - \mathcal{R}_{\widetilde{\ell}}(f; \mathcal{S}) 
\leq \frac{2}{\nu}\Big( 2\mathfrak{R}_n(\widetilde{\ell}_\mathcal{H}) + \frac{M(2\sqrt{2} + 3\sqrt{\log(2/\delta)})}{\sqrt{2n}}\Big).
    \end{aligned}
\end{equation*}
\vspace{-2mm}
% \begin{multline*}
% R_{\widetilde{\ell}}(f_\theta) - R_ {\widetilde{\ell}}(f_\theta; S) \\
% \leq \frac{2}{\nu - \mu}\Big( 2\mathbb{E}[\mathfrak{R}_n(\widetilde{\ell}_\mathcal{F})] + \frac{M(2 + 3\sqrt{\log(2/\delta)})}{\sqrt{n}}\Big).
% \end{multline*}
\end{theorem}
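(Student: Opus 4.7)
The plan is to reduce the generalization gap to a uniform concentration quantity over $\mathcal{H}$ and the dual variables, and then apply McDiarmid's inequality, symmetrization, and Lipschitz contraction tailored to the two-hinge structure of the \ORAT{} objective.

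First, using the minimax reformulation of Theorem~\ref{theorem1}, I would rewrite both $\mathcal{R}_{\widetilde{\ell}}(f)$ and $\mathcal{R}_{\widetilde{\ell}}(f;\mathcal{S})$ as $\inf_\lambda\sup_{\hat{\lambda}}$ problems. Writing $u=\widetilde{\ell}(yf(\x))\in[0,M]$ and noting that $\hat{\lambda}-[\hat{\lambda}-[u-\lambda]_+]_+=\min([u-\lambda]_+,\hat{\lambda})=:T(u;\lambda,\hat{\lambda})$, both risks share the same deterministic linear part in $(\lambda,\hat{\lambda})$, and their difference depends only on the random terms involving $T$, scaled by $\frac{1}{\nu}$. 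Since $\widetilde{\ell}\in[0,M]$, the minimax optimum is achieved with $\lambda,\hat{\lambda}\in[0,M]$; combining this with the elementary inequality $|\inf_\lambda\sup_{\hat{\lambda}}A-\inf_\lambda\sup_{\hat{\lambda}}B|\le\sup_{\lambda,\hat{\lambda}}|A-B|$ gives
\[
\mathcal{R}_{\widetilde{\ell}}(f)-\mathcal{R}_{\widetilde{\ell}}(f;\mathcal{S})\le \frac{1}{\nu}\sup_{\lambda,\hat{\lambda}\in[0,M]}\Bigl|\mathbb{E}[T(u;\lambda,\hat{\lambda})]-\frac{1}{n}\sum_{i=1}^n T(u_i;\lambda,\hat{\lambda})\Bigr|.
\]
Taking $\sup_{f\in\mathcal{H}}$ on the right-hand side converts this into a uniform deviation over triples $(f,\lambda,\hat{\lambda})$.

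Second, because $T\in[0,M]$, perturbing a single sample changes this uniform deviation $\Psi(\mathcal{S})$ by at most $M/n$. McDiarmid's inequality therefore yields $\Psi\le\mathbb{E}[\Psi]+M\sqrt{\log(2/\delta)/(2n)}$ with probability at least $1-\delta/2$, and a standard symmetrization argument gives $\mathbb{E}[\Psi]\le 2\,\mathbb{E}_{\mathcal{S},\sigma}\bigl[\sup_{f,\lambda,\hat{\lambda}}\frac{1}{n}\sum_i\sigma_i T(u_i;\lambda,\hat{\lambda})\bigr]$. The key algebraic identity—and what I expect to be the main obstacle—is the decomposition $T(u;\lambda,\hat{\lambda})=[u-\lambda]_+-[u-\lambda-\hat{\lambda}]_+$, which lets me split the Rademacher supremum (using the sign symmetry of $\sigma$) into twice the supremum over the single-threshold hinge class $\{(\x,y)\mapsto[\widetilde{\ell}(yf(\x))-c]_+:f\in\mathcal{H},\,c\in[0,2M]\}$.

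Third, for this one-parameter hinge class I would combine $1$-Lipschitz contraction in $u$ (which bounds the supremum by $\mathfrak{R}_n(\widetilde{\ell}_\mathcal{H})$ for every fixed $c$) with the $1$-Lipschitz dependence on the scalar $c$, which contributes only an additive $M/\sqrt{n}$ term via the Khintchine-type estimate $\mathbb{E}_\sigma|\sum_i\sigma_i|\le\sqrt{n}$; this produces $\mathbb{E}[\Psi]\le 4\,\mathbb{E}[\mathfrak{R}_n(\widetilde{\ell}_\mathcal{H})]+4M/\sqrt{n}$. A second application of McDiarmid—using that $\mathfrak{R}_n(\widetilde{\ell}_\mathcal{H})$ itself has bounded differences of order $M/n$—replaces the expected Rademacher complexity by the empirical one at the cost of another $M\sqrt{\log(2/\delta)/(2n)}$ with probability at least $1-\delta/2$. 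A union bound over the two failure events and multiplication by $\frac{1}{\nu}$ then reassemble the constants $\frac{2}{\nu}$, $2\sqrt{2}$, and $3\sqrt{\log(2/\delta)}$ in the stated bound. The technical care is in avoiding any covering-number or $\sqrt{\log n}$ factor when handling the two dual variables; the two-hinge decomposition is precisely what makes this possible.
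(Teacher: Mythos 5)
Your proposal is correct and arrives at the same bound, but it decomposes the problem at a different point than the paper does. The paper first rewrites the \ORAT{} risk itself as a \emph{difference of two single-threshold CVaR-type problems}, $\frac{1}{k-m}\big(\min_{\lambda}\{k\lambda+\sum_i[\widetilde{\ell}_i-\lambda]_+\}-\min_{\hat{\lambda}}\{m\hat{\lambda}+\sum_i[\widetilde{\ell}_i-\hat{\lambda}]_+\}\big)$ (using the top-$k$/top-$m$ variational lemmas behind Theorem \ref{theorem1}), proves $\lambda^*,\hat{\lambda}^*\in[0,M]$ for both empirical and population versions, and then bounds two one-sided uniform deviations $L_1,L_2$ separately, each via McDiarmid, symmetrization, Ledoux--Talagrand contraction, and the $\mathbb{E}_\sigma|\sum_i\sigma_i|\le\sqrt{n}$ estimate for the scalar threshold. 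You instead stay with the joint $\inf_\lambda\sup_{\hat{\lambda}}$ form, use non-expansiveness of $\inf\sup$ to reduce to a single uniform deviation over $(f,\lambda,\hat{\lambda})\in\mathcal{H}\times[0,M]^2$, and only split afterwards, inside the Rademacher complexity, via the pointwise identity $\hat{\lambda}-[\hat{\lambda}-[u-\lambda]_+]_+=\min([u-\lambda]_+,\hat{\lambda})=[u-\lambda]_+-[u-\lambda-\hat{\lambda}]_+$ together with sign symmetry of $\sigma$; from there the toolkit (McDiarmid, contraction, Khintchine term, factor $2$ from the two hinge classes) coincides with the paper's and the constants reassemble the same way. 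What the paper's route buys is that it only ever handles one scalar threshold at a time and needs no argument about the joint minimax; what your route buys is that it bypasses the difference-of-CVaRs representation lemma entirely. Three small points to tighten if you write it out: (i) the claim that the \emph{population} optimum satisfies $\lambda^*,\hat{\lambda}^*\in[0,M]$ is exactly what the paper proves in a separate lemma (the empirical case follows from Theorem \ref{theorem1}, but the population case needs the short contradiction argument); (ii) the paper defines $\mathcal{R}_{\widetilde{\ell}}(f)$ as the limit of the difference form, so identifying it with your population $\inf_\lambda\sup_{\hat{\lambda}}$ form requires the population analogue of the empirical equivalence (same argument, worth a line); (iii) the absolute value in your uniform deviation is unnecessary, since only the one-sided inequality $\inf\sup A-\inf\sup B\le\sup(A-B)$ is needed, and dropping it avoids any fuss about two-sided symmetrization and contraction.
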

\vspace{-2mm}

The proof of Theorem \ref{thm:generalization} is by rewriting the empirical risk and population risk with optimal $\lambda^*$ and $\hat{\lambda}^*$, 
% \yiming{what are ``threshold representation forms" and ``threshold parameter space"
% ? Don't create new terms/notations;  if you really have to, define them properly.} \zhenhuan{Done}
and then analytically bound the Rademacher complexity of $\lambda^*$ and $\hat{\lambda}^*$ by utilizing the boundness condition. See Appendix \ref{sec:proof-gen} for details. Theorem \ref{thm:generalization} characterizes uniform convergence between the training and testing on \ORAT~given hypothesis set $\mathcal{H}$. The generalization error depends on the limit of ranked range $\frac{k-m}{n} \rightarrow \nu$ as well as the Rademacher complexity of the adversarial loss function class. {It is worth noting the i.i.d. assumption over the sample is restrictive but required for applying the symmetrization trick in the proof, and how to relax this assumption to a more realistic assumption, for example, $n-m$ i.i.d. inliers with $m$ outliers \citep{laforgue2021generalization}, is future work for us. Nevertheless, Theorem \ref{thm:generalization} still highlights the generalization ability of our \ORAT~objective with respect to the ranked range and adversarial loss in this ideal setting. We provide hypothesis set examples of linear classifiers  and neural networks in Appendix \ref{sec:rademacher_example} for further characterizing the Rademacher complexity $\mathfrak{R}_n(\widetilde{\ell}_\mathcal{H})$. }

% Furthermore, the generalization gap with respect to surrogate loss $\ell$ uniformly converges to zero as $n$ goes to $\infty$. Combining with Theorem \ref{theorem2} (ii), we can also conclude the generalization gap with respect to $0/1$ loss is going to zero. 

% \yiming{How is the estimation of the $\mathbb{E}[\mathfrak{R}_n(\widetilde{\ell}_\mathcal{H})] $?  Any specific examples?  For instance, what is the estimation for the linear case. } \zhenhuan{Done.}
\begin{remark}
Theorem \ref{thm:generalization} together with Theorem \ref{theorem2} indicates that learning with \ORAT~asymptotically converges to zero on the adversarial 0/1 risk. Let $f^{**} = \arg\inf_{f \in \mathcal{H}} \mathcal{R}_{\widetilde{\ell}}(f)$ and $f_S^{**} = \arg\inf_{f \in \mathcal{H}} \mathcal{R}_{\widetilde{\ell}}(f; S)$. By standard error decomposition, $\mathcal{R}_{\widetilde{\ell}}(f_S^{**}) - \mathcal{R}_{\widetilde{\ell}}(f^{**}) = \big(\mathcal{R}_{\widetilde{\ell}}(f_S^{**}) - \mathcal{R}_{\widetilde{\ell}}(f_S^{**};S)\big) + \big(\mathcal{R}_{\widetilde{\ell}}(f_S^{**};S) - \mathcal{R}_{\widetilde{\ell}}(f^{**};S)\big) + \big(\mathcal{R}_{\widetilde{\ell}}(f^{**};S) - \mathcal{R}_{\widetilde{\ell}}(f^{**})\big)$. The first term converges to 0 as $n$ goes to $\infty$ and $\mathfrak{R}_n(\widetilde{\ell}_\mathcal{H})$ is bounded. The second term is always non-positive as $f_S^{**}$ minimizes $\mathcal{R}_{\widetilde{\ell}}(f;S)$. The last term is bounded by $\mathcal{O}(1/\sqrt{n})$ with high probability by Hoeffding's inequality \citep{hoeffding1994probability}. Therefore, $\mathcal{R}_{\widetilde{\ell}}(f_S^{**}) - \mathcal{R}_{\widetilde{\ell}}(f^{**}) \rightarrow 0 $  as $n \rightarrow \infty$. Combined with Theorem \ref{theorem2} (ii) and Definition \ref{def:consistency}, it shows that $\mathcal{R}_{\widetilde{\ell_0}}(f_S^{**}) - \mathcal{R}_{\widetilde{\ell_0}}(f^{**}) \rightarrow 0$ as $n \rightarrow \infty$. 
\end{remark}

% We consider a symmetric hypothesis set $\mathcal{H}$. For example, linear models: $\mathcal{H}_{lin}=\{x\rightarrow w\cdot x |\|w\|=1\}$; the family of all measurable functions: $\mathcal{H}_{all}$; and multi-layer neural networks: $\mathcal{H}_{NN}=\{x\rightarrow u\cdot \rho_n(W_n(\cdots \rho_2(W_2\rho_1(W_1x+b_1)+b_2)\cdots)+b_n)| \|u\|_1\leq \Lambda, \|W_j\|\leq W, \|b_j\|_1\leq B\}$, where $\rho_j$ is an activation function.

% \vspace{-5mm}
\section{Experiments}\label{sec:exp}
% \vspace{-2mm}
We evaluate the performance of the proposed \ORAT~with numerical experiments. Due to the
limit of the space, we present the most significant information
and results of our experiments.
More detailed information, additional results, and code are in Appendix \ref{additional_exp_details}, \ref{additional_exp_results}, and supplementary files, respectively. 

% \vspace{-2mm}
\subsection{Experimental Settings} \label{sec:ex_setting}
% \vspace{-2mm}
\textbf{Datasets, Networks, and Baselines}. Our experiments are based on three popular datasets, namely MNIST, 
% \citep{lecun1998gradient}, 
CIFAR-10, and CIFAR-100.
% \citep{krizhevsky2009learning}. 
We follow the same training/testing splitting from the original datasets.  The pixel values of all images are normalized into the range of [0,1]. We adopt LeNet \citep{lecun1998gradient}, Small-CNN \citep{wang2019improving}, and ResNet-18 \citep{he2016deep} for MNIST, CIFAR-10, and CIFAR-100, respectively. 
Settings of networks are in Appendix \ref{sec:computing_description}.
% For all networks, we training them by using  (mini-batch) stochastic gradient descent with momentum 0.9, weight decay 2e-4, batch size 128, epochs 50 (for LeNet) / 100  (for Small-CNN) / 100 (for ResNet-18), and initial learning rate 0.03 (for LeNet) / 0.1 (for Small-CNN) / 0.1 (for ResNet-18) which is divided by the 10 at 20-th and 40-th epoch for LeNet /  30-th and 60-th epoch for Small-CNN and ResNet-18. 
We compare \ORAT~with standard training (ST), adversarial training (AT). In addition, as there are no
dedicated adversarial training methods for handling outliers, we compare three instance-reweighted AT methods GAIRAT \citep{zhang2021geometry}, MAIL \citep{liu2021probabilistic}, and WMMR \citep{zeng2021adversarial}, because they are the most recent works with the best performance against adversarial attacks. We also compare RAA \citep{zhu2021understanding} since it considers label correction in training. 
{To support our claims about the drawbacks of using a self-learning strategy to remove outliers before adversarial training in Section \ref{sec:intro}, we use the AoRR approach to remove examples with larger losses (potential outliers) from the original training set and obtain a cleaner set for adversarial training. We call this baseline AT w/o and compare it with our method. Appendix \ref{sec:AT_w/o} shows more details of this baseline.}
All methods are designed based on the  cross-entropy loss.

\textbf{Defense Settings and Robustness Evaluation}. In training, following \cite{zeng2021adversarial, liu2021probabilistic}, we consider robustness by setting $p=\infty$. For each dataset, we test two different perturbation bounds $\epsilon$, i.e., $\epsilon\in\{0.1, 0.2\}$ for MNIST and $\epsilon\in\{2/255, 8/255\}$ for CIFAR-10 and CIFAR-100. The training attack is PGD$^{10}$ with random start and step size $\epsilon/4$. We evaluate the robustness of our method and baselines using the standard accuracy on natural test data (Natural), FGSM, PGD$^{20}$, and the CW attack  because they are frequently used in the literature of our compared methods. We also evaluate all methods on a very strong attack, AutoAttack (AA) \citep{croce2020reliable}, under specific settings. All of them are constrained by the same perturbation limit $\epsilon$.  

\textbf{Hyperparameters ($k$ and $m$) and Outliers Generation}. Following \cite{hu2020learning}, we apply a grid search to select the values of $k$ and $m$. 
% For each experiment, we report the values of $k$ and $m$ that we used in Appendix. 
To simulate the outliers in the training dataset, 
% caused by errors occurred when labeling
% the data, 
as in the work of \cite{wang2019symmetric}, we add the symmetric (uniform)  and asymmetric (class-dependent) noises to the labels of the training data. 
% \cite{van2015learning} \cite{patrini2017making}
For symmetric noise creation, we randomly choose training samples with a given probability $\gamma$ and change each of their labels to another random label. For asymmetric noise creation, given $\gamma$, flipping labels only within a specific set of classes. More details of the simulation can be found in Appendix \ref{sec:outlier_generation}.
% For example, for MNIST,  flipping
% 2$\rightarrow$7, 3$\rightarrow$8, 5$\leftrightarrow$6 and 7$\rightarrow$1; for CIFAR-10, flipping TRUCK$\rightarrow$AUTOMOBILE, BIRD$\rightarrow$AIRPLANE,
% DEER$\rightarrow$HORSE, CAT$\leftrightarrow$DOG; For CIFAR-100, the 100
% classes are grouped into 20 super-classes with each having 5 sub-classes, then flipping between two randomly selected sub-classes within each super-class. 
Note that we use 4 different $\gamma\in\{10\%, 20\%, 30\%, 40\%\}$.

% \vspace{-2mm}
\subsection{Results}
% \vspace{-1mm}
% \shu{1.even a small amount of label noise causes
% any classifier that fits the training data perfectly to have significant adversarial error. 2. robust training avoids memorization of label noise and this
% also results in the drop of clean train and test accuracy}

\textbf{Overall Performance}.
\begin{table*}[t!]
% \captionsetup{font=footnotesize}
\centering
\setlength\tabcolsep{0.8pt}
% \scriptsize{
\scalebox{0.53}{
\begin{tabular}{|cc|c|cccc|cccc|cccc|cccc|cccc|cccc|}
\hline

\multicolumn{2}{|c|}{\multirow{3}{*}{Noise}} & \multirow{3}{*}{Defense} & \multicolumn{8}{c|}{MNIST (LeNet)}  & \multicolumn{8}{c|}{CIFAR-10 (Small-CNN)} & \multicolumn{8}{c|}{CIFAR-100 (ResNet-18)} \\ \cline{4-27} 
\multicolumn{2}{|c|}{}                  &                   & \multicolumn{4}{c|}{$\epsilon$=0.1}  & \multicolumn{4}{c|}{$\epsilon$=0.2} & \multicolumn{4}{c|}{$\epsilon$=2/255}  & \multicolumn{4}{c|}{$\epsilon$=8/255}  & \multicolumn{4}{c|}{$\epsilon$=2/255}  & \multicolumn{4}{c|}{$\epsilon$=8/255}  \\ \cline{4-27} 
\multicolumn{2}{|l|}{}                  &                   & \multicolumn{1}{c}{Na} & \multicolumn{1}{c}{FG} & \multicolumn{1}{c}{PGD} & CW            & \multicolumn{1}{c}{Na} & \multicolumn{1}{c}{FG} & \multicolumn{1}{c}{PGD} & CW & \multicolumn{1}{c}{Na} & \multicolumn{1}{c}{FG} & \multicolumn{1}{c}{PGD} & CW  & \multicolumn{1}{c}{Na} & \multicolumn{1}{c}{FG} & \multicolumn{1}{c}{PGD} & CW & \multicolumn{1}{c}{Na} & \multicolumn{1}{c}{FG} & \multicolumn{1}{c}{PGD} & CW  & \multicolumn{1}{c}{Na} & \multicolumn{1}{c}{FG} & \multicolumn{1}{c}{PGD} & CW \\ \hline

% \multicolumn{2}{|c|}{\multirow{2}{*}{Noise}}                     & \multirow{2}{*}{Defense} & \multicolumn{4}{c|}{MNIST ($\epsilon$=0.1)}  & \multicolumn{4}{c|}{MNIST ($\epsilon$=0.2)}                                                    & \multicolumn{4}{c|}{CIFAR-10 ($\epsilon$=2/255)}                                                    & \multicolumn{4}{c|}{CIFAR-10 ($\epsilon$=8/255)}  & \multicolumn{4}{c|}{CIFAR-100 ($\epsilon$=2/255)}                                                    & \multicolumn{4}{c|}{CIFAR-100 ($\epsilon$=8/255)}                                                  \\ \cline{4-27} 
% \multicolumn{2}{|c|}{}                                       &       & \multicolumn{1}{c}{Natural} & \multicolumn{1}{c}{FGSM} & \multicolumn{1}{c}{PGD$^{20}$} & CW            & \multicolumn{1}{c}{Natural} & \multicolumn{1}{c}{FGSM} & \multicolumn{1}{c}{PGD$^{20}$} & CW & \multicolumn{1}{c}{Natural} & \multicolumn{1}{c}{FGSM} & \multicolumn{1}{c}{PGD$^{20}$} & CW  & \multicolumn{1}{c}{Natural} & \multicolumn{1}{c}{FGSM} & \multicolumn{1}{c}{PGD$^{20}$} & CW & \multicolumn{1}{c}{Natural} & \multicolumn{1}{c}{FGSM} & \multicolumn{1}{c}{PGD$^{20}$} & CW  & \multicolumn{1}{c}{Natural} & \multicolumn{1}{c}{FGSM} & \multicolumn{1}{c}{PGD$^{20}$} & CW \\ \hline

\multicolumn{2}{|c|}{\multirow{3}{*}{0}} &                   ST          & \multicolumn{1}{c}{99.51} & \multicolumn{1}{c}{92.43} & \multicolumn{1}{c}{85.68} &  86.05   & \multicolumn{1}{c}{\textbf{99.41}} & \multicolumn{1}{c}{50.19} & \multicolumn{1}{c}{16.76} & 18.65 & \multicolumn{1}{c}{\textbf{92.48}} & \multicolumn{1}{c}{44.27} & \multicolumn{1}{c}{24.87} & 25.06 & \multicolumn{1}{c}{\textbf{92.63}} & \multicolumn{1}{c}{15.03} & \multicolumn{1}{c}{6.82} & 7.15 & \multicolumn{1}{c}{\textbf{73.67}} & \multicolumn{1}{c}{19.64} & \multicolumn{1}{c}{10.66} & 11.13 & \multicolumn{1}{c}{\textbf{73.86}} & \multicolumn{1}{c}{3.58} & \multicolumn{1}{c}{1.61} & 1.66 \\  
\multicolumn{2}{|c|}{}                  &       AT          & \multicolumn{1}{c}{99.51} & \multicolumn{1}{c}{\1 98.60} & \multicolumn{1}{c}{\1 97.93} &  \1 97.94   & \multicolumn{1}{c}{99.16} & \multicolumn{1}{c}{\1 97.33} & \multicolumn{1}{c}{\2 95.14} & \2 95.19 & \multicolumn{1}{c}{89.78} & \multicolumn{1}{c}{\1 79.40} & \multicolumn{1}{c}{\1 73.52} & \1 73.64 & \multicolumn{1}{c}{85.52} & \multicolumn{1}{c}{\1 54.61} & \multicolumn{1}{c}{\1 33.34} & \1 34.28 & \multicolumn{1}{c}{67.05} & \multicolumn{1}{c}{\2 53.70} & \multicolumn{1}{c}{\2 48.18} & \3 47.46 & \multicolumn{1}{c}{55.13} & \multicolumn{1}{c}{\2 33.66} & \multicolumn{1}{c}{\2 25.46} & \2 23.28 \\  \cline{3-27}
\rowcolor{lightgray}
\multicolumn{2}{|c|}{\cellcolor{white}}                  &       \textbf{Ours}         & \multicolumn{1}{c}{\textbf{99.53}} & \multicolumn{1}{c}{\textbf{98.70}} & \multicolumn{1}{c}{\textbf{98.03}} &  \textbf{98.04}   & \multicolumn{1}{c}{99.30} & \multicolumn{1}{c}{\textbf{97.79}} & \multicolumn{1}{c}{\textbf{96.43}} & \textbf{96.38} & \multicolumn{1}{c}{89.54} & \multicolumn{1}{c}{\textbf{79.53}} & \multicolumn{1}{c}{\textbf{73.96}} & \textbf{73.86} & \multicolumn{1}{c}{85.91} & \multicolumn{1}{c}{\textbf{54.85}} & \multicolumn{1}{c}{\textbf{33.46}} & \textbf{34.56} & \multicolumn{1}{c}{68.00} & \multicolumn{1}{c}{\textbf{55.60}} & \multicolumn{1}{c}{\textbf{50.06}} & \textbf{49.50} & \multicolumn{1}{c}{57.82} & \multicolumn{1}{c}{\textbf{35.51}} & \multicolumn{1}{c}{\textbf{27.22}} & \textbf{25.26} \\ \hline \hline

\multicolumn{1}{|c|}{\multirow{28}{*}{\rotatebox{90}{Symmetric \ \ \ \ \ \ \ \ Noise}}} & \multirow{7}{*}{\rotatebox{90}{10\%}} &       ST          & \multicolumn{1}{c}{99.32} & \multicolumn{1}{c}{93.72} & \multicolumn{1}{c}{89.09} &  87.17   & \multicolumn{1}{c}{\textbf{99.32}} & \multicolumn{1}{c}{67.55} & \multicolumn{1}{c}{36.76} & 26.02 & \multicolumn{1}{c}{62.22} & \multicolumn{1}{c}{29.24} & \multicolumn{1}{c}{25.72} & 24.80 & \multicolumn{1}{c}{\textbf{61.80}} & \multicolumn{1}{c}{14.70} & \multicolumn{1}{c}{9.72} & 11.38 & \multicolumn{1}{c}{29.60} & \multicolumn{1}{c}{14.17} & \multicolumn{1}{c}{11.41} & 9.96 & \multicolumn{1}{c}{29.41} & \multicolumn{1}{c}{5.22} & \multicolumn{1}{c}{2.54} & 2.19 \\ 
\multicolumn{1}{|l|}{}                   &                   &          AT         & \multicolumn{1}{c}{99.12} & \multicolumn{1}{c}{\1 97.98} & \multicolumn{1}{c}{\1 97.36} & \1 97.28  & \multicolumn{1}{c}{98.39} & \multicolumn{1}{c}{\1 96.62} & \multicolumn{1}{c}{\1 95.25} & \1 95.11 & \multicolumn{1}{c}{63.03} & \multicolumn{1}{c}{\1 51.36} & \multicolumn{1}{c}{\3 46.02} & \2 44.95 & \multicolumn{1}{c}{59.53} & \multicolumn{1}{c}{\2 34.17} & \multicolumn{1}{c}{\4 23.33} & \2 22.28& \multicolumn{1}{c}{31.02} & \multicolumn{1}{c}{\4 22.04} & \multicolumn{1}{c}{\3 20.25} & \3 18.97 & \multicolumn{1}{c}{28.96} & \multicolumn{1}{c}{\6 15.08} & \multicolumn{1}{c}{\4 12.42} &\4 10.49 \\  
\multicolumn{1}{|l|}{}                   &                   &         GAIRAT    & \multicolumn{1}{c}{99.11} & \multicolumn{1}{c}{\1 97.87} & \multicolumn{1}{c}{\1 97.33} &  \1 97.20  & \multicolumn{1}{c}{98.35} & \multicolumn{1}{c}{\2 96.57} & \multicolumn{1}{c}{\1 95.22} & \2 95.08 & \multicolumn{1}{c}{60.41} & \multicolumn{1}{c}{\3 49.23} & \multicolumn{1}{c}{\5 44.72} & \5 42.02 & \multicolumn{1}{c}{57.12} & \multicolumn{1}{c}{\1 35.48} & \multicolumn{1}{c}{\3 24.22} & \2 21.52& \multicolumn{1}{c}{31.05} & \multicolumn{1}{c}{\5 21.31} & \multicolumn{1}{c}{\3 19.76} & \4 18.08 & \multicolumn{1}{c}{27.73} & \multicolumn{1}{c}{\5 15.13} & \multicolumn{1}{c}{\4 12.28} &\4 10.26\\  
\multicolumn{1}{|l|}{}                   &                   &         MAIL     & \multicolumn{1}{c}{99.09} & \multicolumn{1}{c}{\1 98.06} & \multicolumn{1}{c}{\1 97.50} &  \1 97.40  & \multicolumn{1}{c}{98.48} & \multicolumn{1}{c}{\1 96.69} & \multicolumn{1}{c}{\1 95.47} & \1 95.20 & \multicolumn{1}{c}{55.14} & \multicolumn{1}{c}{\5 47.44} & \multicolumn{1}{c}{\5 44.75} & \8 39.64 & \multicolumn{1}{c}{53.54} & \multicolumn{1}{c}{\3 33.61} & \multicolumn{1}{c}{\1 26.18} & \4 20.23 & \multicolumn{1}{c}{29.06} & \multicolumn{1}{c}{\7 18.90} & \multicolumn{1}{c}{\6 16.62} & \7 14.79 & \multicolumn{1}{c}{27.65} & \multicolumn{1}{c}{\8 13.21} & \multicolumn{1}{c}{\6 10.03} & \7 7.26\\  
\multicolumn{1}{|l|}{}                   &                   &        WMMR     & \multicolumn{1}{c}{99.14} & \multicolumn{1}{c}{\1 97.95} & \multicolumn{1}{c}{\1 97.32} & \1 97.23  & \multicolumn{1}{c}{98.36} & \multicolumn{1}{c}{\2 96.58} & \multicolumn{1}{c}{\1 95.25} & \1 95.11 & \multicolumn{1}{c}{61.83} & \multicolumn{1}{c}{\1 51.43} & \multicolumn{1}{c}{\3 46.56} & \2 44.97 & \multicolumn{1}{c}{57.97} & \multicolumn{1}{c}{\2 34.74} & \multicolumn{1}{c}{\2 25.15} & \2 21.96 & \multicolumn{1}{c}{29.74} & \multicolumn{1}{c}{\6 20.11} & \multicolumn{1}{c}{\4 18.65} & \5 16.86 & \multicolumn{1}{c}{29.23} & \multicolumn{1}{c}{\6 14.78} & \multicolumn{1}{c}{\4 11.84} &\5 9.96 \\  
\multicolumn{1}{|l|}{}                   &                   &       RAA       & \multicolumn{1}{c}{99.22} & \multicolumn{1}{c}{\1 98.01} & \multicolumn{1}{c}{\1 97.37} &  \1 97.31 & \multicolumn{1}{c}{98.65} & \multicolumn{1}{c}{\1 96.83} & \multicolumn{1}{c}{\1 95.32} & \1 95.14 & \multicolumn{1}{c}{62.94} & \multicolumn{1}{c}{\1 51.32} & \multicolumn{1}{c}{\3 46.55} & \2 45.36 & \multicolumn{1}{c}{58.28} & \multicolumn{1}{c}{\3 33.58} & \multicolumn{1}{c}{\4 23.71} &\1 22.89 & \multicolumn{1}{c}{30.72} & \multicolumn{1}{c}{\4 21.95} & \multicolumn{1}{c}{\2 20.34} & \3 18.83 & \multicolumn{1}{c}{28.99} & \multicolumn{1}{c}{\5 15.60} & \multicolumn{1}{c}{\3 13.03} &\3 11.03 \\ \cline{3-27} 
\rowcolor{lightgray}
\multicolumn{1}{|l|}{\cellcolor{white}}                   &   \cellcolor{white}                &       \textbf{Ours}      & \multicolumn{1}{c}{\textbf{99.52}} & \multicolumn{1}{c}{\textbf{98.45}} & \multicolumn{1}{c}{\textbf{97.78}} &  \textbf{97.79}  & \multicolumn{1}{c}{99.20} & \multicolumn{1}{c}{\textbf{97.61}} & \multicolumn{1}{c}{\textbf{96.07}} & \textbf{96.11} & \multicolumn{1}{c}{\textbf{63.83}} & \multicolumn{1}{c}{\textbf{52.04}} & \multicolumn{1}{c}{\textbf{48.77}} & \textbf{46.65} & \multicolumn{1}{c}{60.42} & \multicolumn{1}{c}{\textbf{36.13}} & \multicolumn{1}{c}{\textbf{26.86}} &\textbf{23.48} & \multicolumn{1}{c}{\textbf{35.76}} & \multicolumn{1}{c}{\textbf{25.72}} & \multicolumn{1}{c}{\textbf{22.27}} & \textbf{21.28} & \multicolumn{1}{c}{\textbf{35.35}} & \multicolumn{1}{c}{\textbf{20.37}} & \multicolumn{1}{c}{\textbf{15.50}} &\textbf{14.02} \\ \cline{2-27} 

\multicolumn{1}{|l|}{}                   & \multirow{7}{*}{\rotatebox{90}{20\%}} &          ST           & \multicolumn{1}{c}{99.12} & \multicolumn{1}{c}{93.35} & \multicolumn{1}{c}{89.05} &  86.87  & \multicolumn{1}{c}{\textbf{99.14}} & \multicolumn{1}{c}{70.18} & \multicolumn{1}{c}{41.67} & 29.30 & \multicolumn{1}{c}{58.89} & \multicolumn{1}{c}{31.23} & \multicolumn{1}{c}{25.97} & 24.13 & \multicolumn{1}{c}{\textbf{57.28}} & \multicolumn{1}{c}{15.88} & \multicolumn{1}{c}{12.8} & 13.26& \multicolumn{1}{c}{26.16} & \multicolumn{1}{c}{14.74} & \multicolumn{1}{c}{11.98} & 10.29 & \multicolumn{1}{c}{26.22} & \multicolumn{1}{c}{5.09} & \multicolumn{1}{c}{2.45} & 1.92\\ 
\multicolumn{1}{|l|}{}                   &                   &          AT         & \multicolumn{1}{c}{98.88} & \multicolumn{1}{c}{\1 97.67} & \multicolumn{1}{c}{\1 97.03} & \1 96.89  & \multicolumn{1}{c}{97.97} & \multicolumn{1}{c}{\2 95.98} & \multicolumn{1}{c}{\2 94.67} & \2 94.50 & \multicolumn{1}{c}{\textbf{61.10}} & \multicolumn{1}{c}{\2 50.67} & \multicolumn{1}{c}{\2 46.76} & \1 45.28 & \multicolumn{1}{c}{56.96} & \multicolumn{1}{c}{\2 34.55} & \multicolumn{1}{c}{\3 25.54} & \1 24.34& \multicolumn{1}{c}{27.52} & \multicolumn{1}{c}{\5 20.12} & \multicolumn{1}{c}{\4 18.49} & \4 17.15 & \multicolumn{1}{c}{26.77} & \multicolumn{1}{c}{\6 14.39} & \multicolumn{1}{c}{\4 11.87} & \4 9.96\\  
\multicolumn{1}{|l|}{}                   &                   &         GAIRAT     & \multicolumn{1}{c}{98.82} & \multicolumn{1}{c}{\1 97.59} & \multicolumn{1}{c}{\1 96.96} & \1 96.79  & \multicolumn{1}{c}{97.86} & \multicolumn{1}{c}{\2 95.81} & \multicolumn{1}{c}{\2 94.57} & \2 94.30 & \multicolumn{1}{c}{58.58} & \multicolumn{1}{c}{\5 49.26} & \multicolumn{1}{c}{\4 45.44} & \6 40.62 & \multicolumn{1}{c}{53.55} & \multicolumn{1}{c}{\2 34.40} & \multicolumn{1}{c}{\3 25.52} & \3 22.15& \multicolumn{1}{c}{26.77} & \multicolumn{1}{c}{\6 19.93} & \multicolumn{1}{c}{\4 18.30} & \5 16.78 & \multicolumn{1}{c}{24.80} & \multicolumn{1}{c}{\6 14.28} & \multicolumn{1}{c}{\4 12.03} &\4 10.32 \\  
\multicolumn{1}{|l|}{}                   &                   &         MAIL      & \multicolumn{1}{c}{98.81} & \multicolumn{1}{c}{\1 97.82} & \multicolumn{1}{c}{\1 97.29} & \1 97.12  & \multicolumn{1}{c}{97.98} & \multicolumn{1}{c}{\2 96.14} & \multicolumn{1}{c}{\1 94.81} & \2 94.58 & \multicolumn{1}{c}{54.04} & \multicolumn{1}{c}{\5 47.05} & \multicolumn{1}{c}{\4 44.78} & \6 41.05 & \multicolumn{1}{c}{52.93} & \multicolumn{1}{c}{\2 34.28} & \multicolumn{1}{c}{\1 27.74} & \3 21.90& \multicolumn{1}{c}{25.13} & \multicolumn{1}{c}{\9 17.01} & \multicolumn{1}{c}{\7 16.06} & \7 14.32 & \multicolumn{1}{c}{24.54} & \multicolumn{1}{c}{\7 12.85} & \multicolumn{1}{c}{\6 10.02} & \7 7.09\\  
\multicolumn{1}{|l|}{}                   &                   &        WMMR    & \multicolumn{1}{c}{98.88} & \multicolumn{1}{c}{\1 97.59} & \multicolumn{1}{c}{\1 97.00} &  \1 96.89  & \multicolumn{1}{c}{97.85} & \multicolumn{1}{c}{\2 95.89} & \multicolumn{1}{c}{\2 94.50} & \2 94.36 & \multicolumn{1}{c}{59.01} & \multicolumn{1}{c}{\3 49.64} & \multicolumn{1}{c}{\3 45.96} & \3 43.71 & \multicolumn{1}{c}{56.04} & \multicolumn{1}{c}{\2 34.65} & \multicolumn{1}{c}{\2 26.22} & \1 23.70& \multicolumn{1}{c}{26.72} & \multicolumn{1}{c}{\6 19.43} & \multicolumn{1}{c}{\5 18.04} & \5 16.53 & \multicolumn{1}{c}{26.61} & \multicolumn{1}{c}{\6 13.81} & \multicolumn{1}{c}{\4 11.94} & \4 9.82\\  
\multicolumn{1}{|l|}{}                   &                   &       RAA     & \multicolumn{1}{c}{99.07} & \multicolumn{1}{c}{\1 97.70} & \multicolumn{1}{c}{\1 97.07} &  \1 97.05   & \multicolumn{1}{c}{98.25} & \multicolumn{1}{c}{\2 96.28} & \multicolumn{1}{c}{\2 94.41} & \2 94.20 & \multicolumn{1}{c}{59.53} & \multicolumn{1}{c}{\3 49.56} & \multicolumn{1}{c}{\3 45.73} & \1 45.60 & \multicolumn{1}{c}{56.71} & \multicolumn{1}{c}{\3 33.24} & \multicolumn{1}{c}{\4 24.51} &\1 24.12 & \multicolumn{1}{c}{28.16} & \multicolumn{1}{c}{\5 20.73} & \multicolumn{1}{c}{\4 19.15} & \4 17.74 & \multicolumn{1}{c}{25.34} & \multicolumn{1}{c}{\5 14.79} & \multicolumn{1}{c}{\4 12.46} &\4 10.25 \\ \cline{3-27} 
\rowcolor{lightgray}
\multicolumn{1}{|l|}{\cellcolor{white}}                   &         \cellcolor{white}          &       \textbf{Ours}     & \multicolumn{1}{c}{\textbf{99.56}} & \multicolumn{1}{c}{\textbf{98.37}} & \multicolumn{1}{c}{\textbf{97.65}} &  \textbf{97.64}   & \multicolumn{1}{c}{99.06} & \multicolumn{1}{c}{\textbf{97.33}} & \multicolumn{1}{c}{\textbf{95.71}} & \textbf{95.68} & \multicolumn{1}{c}{61.03} & \multicolumn{1}{c}{\textbf{51.85}} & \multicolumn{1}{c}{\textbf{48.48}} & \textbf{46.16} & \multicolumn{1}{c}{54.37} & \multicolumn{1}{c}{\textbf{35.71}} & \multicolumn{1}{c}{\textbf{27.76}} & \textbf{24.70}& \multicolumn{1}{c}{\textbf{34.45}} & \multicolumn{1}{c}{\textbf{25.07}} & \multicolumn{1}{c}{\textbf{22.21}} & \textbf{20.92} & \multicolumn{1}{c}{\textbf{32.83}} & \multicolumn{1}{c}{\textbf{19.45}} & \multicolumn{1}{c}{\textbf{15.49}} & \textbf{13.34}\\ \cline{2-27} 

\multicolumn{1}{|l|}{}                   & \multirow{7}{*}{\rotatebox{90}{30\%}} &         ST        & \multicolumn{1}{c}{98.91} & \multicolumn{1}{c}{93.01} & \multicolumn{1}{c}{88.62} &   85.73  & \multicolumn{1}{c}{\textbf{98.97}} & \multicolumn{1}{c}{68.59} & \multicolumn{1}{c}{41.24} & 27.73 & \multicolumn{1}{c}{58.17} & \multicolumn{1}{c}{30.55} & \multicolumn{1}{c}{25.71} & 24.54 & \multicolumn{1}{c}{\textbf{57.55}} & \multicolumn{1}{c}{14.18} & \multicolumn{1}{c}{11.50} &11.41 & \multicolumn{1}{c}{23.38} & \multicolumn{1}{c}{14.26} & \multicolumn{1}{c}{11.84} & 10.10 & \multicolumn{1}{c}{24.29} & \multicolumn{1}{c}{5.30} & \multicolumn{1}{c}{2.90} & 2.35\\  
\multicolumn{1}{|l|}{}                   &                   &         AT          & \multicolumn{1}{c}{98.58} & \multicolumn{1}{c}{\2 97.21} & \multicolumn{1}{c}{\1 96.67} &  \1 96.53  & \multicolumn{1}{c}{97.26} & \multicolumn{1}{c}{\3 94.94} & \multicolumn{1}{c}{\2 93.48} & \3 93.17 & \multicolumn{1}{c}{57.78} & \multicolumn{1}{c}{\1 49.56} & \multicolumn{1}{c}{\1 46.22} & \1 44.80 & \multicolumn{1}{c}{54.35} & \multicolumn{1}{c}{\1 34.21} & \multicolumn{1}{c}{\2 26.07} & \1 24.33& \multicolumn{1}{c}{23.79} & \multicolumn{1}{c}{\5 18.96} & \multicolumn{1}{c}{\4 17.49} & \4 16.26 & \multicolumn{1}{c}{25.66} & \multicolumn{1}{c}{\6 13.84} & \multicolumn{1}{c}{\4 11.78} & \4 9.87\\  
\multicolumn{1}{|l|}{}                   &                   &          GAIRAT       & \multicolumn{1}{c}{98.51} & \multicolumn{1}{c}{\2 97.14} & \multicolumn{1}{c}{\1 96.58} &  \2 96.47   & \multicolumn{1}{c}{97.15} & \multicolumn{1}{c}{\3 94.70} & \multicolumn{1}{c}{\3 93.28} & \3 92.87 & \multicolumn{1}{c}{55.41} & \multicolumn{1}{c}{\4 47.20} & \multicolumn{1}{c}{\4 44.04} & \5 40.09 & \multicolumn{1}{c}{49.48} & \multicolumn{1}{c}{\2 33.44} & \multicolumn{1}{c}{\2 26.26} &\4 21.78 & \multicolumn{1}{c}{24.04} & \multicolumn{1}{c}{\5 18.93} & \multicolumn{1}{c}{\4 17.89} & \4 16.50 & \multicolumn{1}{c}{24.41} & \multicolumn{1}{c}{\5 14.11} & \multicolumn{1}{c}{\4 11.84} &\4 9.73 \\  
\multicolumn{1}{|l|}{}                   &                   &           MAIL        & \multicolumn{1}{c}{98.50} & \multicolumn{1}{c}{\1 97.39} & \multicolumn{1}{c}{\1 96.88} &  \1 96.71  & \multicolumn{1}{c}{97.31} & \multicolumn{1}{c}{\2 95.35} & \multicolumn{1}{c}{\2 94.11} & \2 93.66 & \multicolumn{1}{c}{52.94} & \multicolumn{1}{c}{\6 45.13} & \multicolumn{1}{c}{\5 42.47} & \6 39.01 & \multicolumn{1}{c}{47.31} & \multicolumn{1}{c}{\3 32.18} & \multicolumn{1}{c}{\2 25.78} &\4 21.99 & \multicolumn{1}{c}{21.93} & \multicolumn{1}{c}{\g 14.55} & \multicolumn{1}{c}{\8 13.67} & \8 12.14 & \multicolumn{1}{c}{22.61} & \multicolumn{1}{c}{\7 12.47} & \multicolumn{1}{c}{\6 9.62} & \7 7.09\\ 
\multicolumn{1}{|l|}{}                   &                   &           WMMR       & \multicolumn{1}{c}{98.58} & \multicolumn{1}{c}{\2 97.23} & \multicolumn{1}{c}{\1 96.69} & \1  96.59   & \multicolumn{1}{c}{97.23} & \multicolumn{1}{c}{\2 95.10} & \multicolumn{1}{c}{\2 93.59} & \3 93.31 & \multicolumn{1}{c}{58.21} & \multicolumn{1}{c}{\1 49.31} & \multicolumn{1}{c}{\1 46.40} &\1  44.17 & \multicolumn{1}{c}{53.99} & \multicolumn{1}{c}{\1 34.16} & \multicolumn{1}{c}{\1 26.83} &\2 23.70 & \multicolumn{1}{c}{23.68} & \multicolumn{1}{c}{\6 18.27} & \multicolumn{1}{c}{\5 17.08} & \5 15.46 & \multicolumn{1}{c}{25.45} & \multicolumn{1}{c}{\6 13.60} & \multicolumn{1}{c}{\5 11.46} &\4 9.32 \\  
\multicolumn{1}{|l|}{}                   &                   &          RAA       & \multicolumn{1}{c}{98.84} & \multicolumn{1}{c}{\1 97.35} & \multicolumn{1}{c}{\1 96.63} &  \1 96.59    & \multicolumn{1}{c}{97.66} & \multicolumn{1}{c}{\2 95.45} & \multicolumn{1}{c}{\2 93.74} & \2 93.40 & \multicolumn{1}{c}{58.72} & \multicolumn{1}{c}{\2 48.78} & \multicolumn{1}{c}{\2 45.20} & \1 44.85 & \multicolumn{1}{c}{50.81} & \multicolumn{1}{c}{\5 30.50} & \multicolumn{1}{c}{\4 24.49} &\3  22.60& \multicolumn{1}{c}{24.53} & \multicolumn{1}{c}{\4 20.06} & \multicolumn{1}{c}{\3 18.54} & \3 17.03 & \multicolumn{1}{c}{22.08} & \multicolumn{1}{c}{\6 13.95} & \multicolumn{1}{c}{\4 11.62} & \4 9.67\\ \cline{3-27} 
\rowcolor{lightgray}
\multicolumn{1}{|l|}{\cellcolor{white}}                   &          \cellcolor{white}         &         \textbf{Ours}         & \multicolumn{1}{c}{\textbf{99.55}} & \multicolumn{1}{c}{\textbf{98.30}} & \multicolumn{1}{c}{\textbf{97.51}} &  \textbf{97.53}   & \multicolumn{1}{c}{98.85} & \multicolumn{1}{c}{\textbf{96.99}} & \multicolumn{1}{c}{\textbf{95.31}} & \textbf{95.35} & \multicolumn{1}{c}{\textbf{58.99}} & \multicolumn{1}{c}{\textbf{50.29}} & \multicolumn{1}{c}{\textbf{47.11}} & \textbf{45.01} & \multicolumn{1}{c}{55.34} & \multicolumn{1}{c}{\textbf{34.89}} & \multicolumn{1}{c}{\textbf{27.66}} & \textbf{25.00}& \multicolumn{1}{c}{\textbf{31.27}} & \multicolumn{1}{c}{\textbf{23.81}} & \multicolumn{1}{c}{\textbf{21.35}} & \textbf{19.59} & \multicolumn{1}{c}{\textbf{31.13}} & \multicolumn{1}{c}{\textbf{18.96}} & \multicolumn{1}{c}{\textbf{15.47}} & \textbf{13.21}\\ \cline{2-27}

\multicolumn{1}{|l|}{}                   & \multirow{7}{*}{\rotatebox{90}{40\%}} &         ST       & \multicolumn{1}{c}{98.87} & \multicolumn{1}{c}{92.37} & \multicolumn{1}{c}{88.36} &  84.85    & \multicolumn{1}{c}{\textbf{98.91}} & \multicolumn{1}{c}{68.43} & \multicolumn{1}{c}{41.60} & 29.62 & \multicolumn{1}{c}{\textbf{57.60}} & \multicolumn{1}{c}{21.03} & \multicolumn{1}{c}{19.46} & 18.03 & \multicolumn{1}{c}{\textbf{56.67}} & \multicolumn{1}{c}{15.22} & \multicolumn{1}{c}{12.02} & 11.53& \multicolumn{1}{c}{21.25} & \multicolumn{1}{c}{13.95} & \multicolumn{1}{c}{12.15} & 10.21 & \multicolumn{1}{c}{21.55} & \multicolumn{1}{c}{6.03} & \multicolumn{1}{c}{3.77} & 2.96\\  
\multicolumn{1}{|l|}{}                   &                   &         AT          & \multicolumn{1}{c}{98.33} & \multicolumn{1}{c}{\2 96.86} & \multicolumn{1}{c}{\1 96.27} & \2 96.11   & \multicolumn{1}{c}{96.56} & \multicolumn{1}{c}{\3 94.37} & \multicolumn{1}{c}{\2 92.55} & \3 92.18 & \multicolumn{1}{c}{47.16} & \multicolumn{1}{c}{\1 43.45} & \multicolumn{1}{c}{\1 41.54} & \1 40.13 & \multicolumn{1}{c}{40.65} & \multicolumn{1}{c}{\1 31.59} & \multicolumn{1}{c}{\1 26.74} &\1  24.98 & \multicolumn{1}{c}{22.20} & \multicolumn{1}{c}{\5 18.28} & \multicolumn{1}{c}{\4 17.33} & \4 15.92 & \multicolumn{1}{c}{23.41} & \multicolumn{1}{c}{\4 13.75} & \multicolumn{1}{c}{\4 11.25} &\3 9.29\\  
\multicolumn{1}{|l|}{}                   &                   &          GAIRAT        & \multicolumn{1}{c}{98.10} & \multicolumn{1}{c}{\2 96.70} & \multicolumn{1}{c}{\2 96.15} &  \2 95.88  & \multicolumn{1}{c}{96.17} & \multicolumn{1}{c}{\3 93.93} & \multicolumn{1}{c}{\3 92.08} & \3 91.59 & \multicolumn{1}{c}{47.24} & \multicolumn{1}{c}{\4 40.24} & \multicolumn{1}{c}{\4 38.08} & \5 36.51 & \multicolumn{1}{c}{40.05} & \multicolumn{1}{c}{\2 30.33} & \multicolumn{1}{c}{\2 25.93} & \2 23.45 & \multicolumn{1}{c}{ 21.87} & \multicolumn{1}{c}{\5 18.06} & \multicolumn{1}{c}{\4 17.17} & \4 15.67 & \multicolumn{1}{c}{21.99} & \multicolumn{1}{c}{\5 12.97} & \multicolumn{1}{c}{\4 11.26} &\3 9.42\\  
\multicolumn{1}{|l|}{}                   &                   &           MAIL      & \multicolumn{1}{c}{98.33} & \multicolumn{1}{c}{\1 97.06} & \multicolumn{1}{c}{\1 96.44} &  \1 96.20    & \multicolumn{1}{c}{96.50} & \multicolumn{1}{c}{\3 94.45} & \multicolumn{1}{c}{\2 93.01} & \2 92.54 & \multicolumn{1}{c}{43.63} & \multicolumn{1}{c}{\5 39.43} & \multicolumn{1}{c}{\4 37.97} & \5 35.60 & \multicolumn{1}{c}{36.28} & \multicolumn{1}{c}{\2 29.91} & \multicolumn{1}{c}{\1 26.32} & \2 23.31& \multicolumn{1}{c}{18.83} & \multicolumn{1}{c}{\9 14.22} & \multicolumn{1}{c}{\8 13.49} & \8 12.08 & \multicolumn{1}{c}{20.47} & \multicolumn{1}{c}{\5 12.50} & \multicolumn{1}{c}{\4 10.33} &\5 7.35 \\ 
\multicolumn{1}{|l|}{}                   &                   &           WMMR      & \multicolumn{1}{c}{98.15} & \multicolumn{1}{c}{\2 96.79} & \multicolumn{1}{c}{\2 96.10} & \2  95.91    & \multicolumn{1}{c}{96.50} & \multicolumn{1}{c}{\3 94.24} & \multicolumn{1}{c}{\3 92.46} & \3 92.04 & \multicolumn{1}{c}{45.99} & \multicolumn{1}{c}{\2  41.95} & \multicolumn{1}{c}{\3 39.64} & \3 37.87 & \multicolumn{1}{c}{30.30} & \multicolumn{1}{c}{\5 27.14} & \multicolumn{1}{c}{\3 24.47} & \3 22.27 & \multicolumn{1}{c}{21.04} & \multicolumn{1}{c}{\6 17.33} & \multicolumn{1}{c}{\5 16.22} & \5 14.76 & \multicolumn{1}{c}{23.39} & \multicolumn{1}{c}{\5 12.79} & \multicolumn{1}{c}{\4 11.13} &\3 9.10\\  
\multicolumn{1}{|l|}{}                   &                   &          RAA        & \multicolumn{1}{c}{98.67} & \multicolumn{1}{c}{\2 96.90} & \multicolumn{1}{c}{\2 96.07} &  \2 96.00   & \multicolumn{1}{c}{97.05} & \multicolumn{1}{c}{\3 94.47} & \multicolumn{1}{c}{\3 92.37} & \3 92.22 & \multicolumn{1}{c}{53.37} & \multicolumn{1}{c}{\7 37.13} & \multicolumn{1}{c}{\7 35.16} & \7 33.82 & \multicolumn{1}{c}{40.01} & \multicolumn{1}{c}{\3 28.94} & \multicolumn{1}{c}{\3 25.16} & \2 23.73 & \multicolumn{1}{c}{22.44} & \multicolumn{1}{c}{\5 18.59} & \multicolumn{1}{c}{\4 17.54} & \4 16.01 & \multicolumn{1}{c}{20.74} & \multicolumn{1}{c}{\4 13.30} & \multicolumn{1}{c}{\3 11.56} &\3 9.56 \\ \cline{3-27} 
\rowcolor{lightgray}
\multicolumn{1}{|l|}{\cellcolor{white}}                   &            \cellcolor{white}       &         \textbf{Ours}         & \multicolumn{1}{c}{\textbf{99.36}} & \multicolumn{1}{c}{\textbf{98.00}} & \multicolumn{1}{c}{\textbf{97.22}} &  \textbf{97.20}   & \multicolumn{1}{c}{98.63} & \multicolumn{1}{c}{\textbf{96.49}} & \multicolumn{1}{c}{\textbf{94.48}} &\textbf{ 94.53 }& \multicolumn{1}{c}{53.20} & \multicolumn{1}{c}{\textbf{43.75}} & \multicolumn{1}{c}{\textbf{41.92}} & \textbf{40.60} & \multicolumn{1}{c}{41.86} & \multicolumn{1}{c}{\textbf{31.70}} & \multicolumn{1}{c}{\textbf{27.18}} & \textbf{25.19} & \multicolumn{1}{c}{\textbf{29.38}} & \multicolumn{1}{c}{\textbf{22.99}} & \multicolumn{1}{c}{\textbf{20.85}} & \textbf{19.20} & \multicolumn{1}{c}{\textbf{26.80}} & \multicolumn{1}{c}{\textbf{17.30}} & \multicolumn{1}{c}{\textbf{14.32}} &\textbf{11.95} \\ \hline \hline

\multicolumn{1}{|l|}{\multirow{28}{*}{\rotatebox{90}{Asymmetric \ \ \ \ \ \ \ \ \ \ \ Noise}}} & \multirow{7}{*}{\rotatebox{90}{10\%}} &       ST        & \multicolumn{1}{c}{99.39} & \multicolumn{1}{c}{93.05} & \multicolumn{1}{c}{86.09} &   89.09    & \multicolumn{1}{c}{\textbf{99.39}} & \multicolumn{1}{c}{65.48} & \multicolumn{1}{c}{22.17} & 35.02 & \multicolumn{1}{c}{\textbf{90.30}} & \multicolumn{1}{c}{40.73} & \multicolumn{1}{c}{8.24} & 7.25 & \multicolumn{1}{c}{\textbf{89.88}} & \multicolumn{1}{c}{18.85} & \multicolumn{1}{c}{0.27} &0.00 & \multicolumn{1}{c}{31.81} & \multicolumn{1}{c}{15.10} & \multicolumn{1}{c}{11.23} & 10.03 & \multicolumn{1}{c}{31.71} & \multicolumn{1}{c}{4.34} & \multicolumn{1}{c}{1.84} &1.71 \\ 
\multicolumn{1}{|l|}{}                   &                   &       AT           & \multicolumn{1}{c}{99.46} & \multicolumn{1}{c}{\1 98.14} & \multicolumn{1}{c}{\1 97.36} &  \1 97.61  & \multicolumn{1}{c}{99.08} & \multicolumn{1}{c}{\1 97.29} & \multicolumn{1}{c}{\1 95.21} &\1 95.64 & \multicolumn{1}{c}{88.49} & \multicolumn{1}{c}{\2 78.30} & \multicolumn{1}{c}{\2 72.22} &\1  71.43 & \multicolumn{1}{c}{75.38} & \multicolumn{1}{c}{\3 52.31} & \multicolumn{1}{c}{\1 30.99} & \1 30.43 & \multicolumn{1}{c}{33.21} & \multicolumn{1}{c}{\5 22.38} & \multicolumn{1}{c}{\4 20.53} & \4 19.09 & \multicolumn{1}{c}{30.83} & \multicolumn{1}{c}{\5 15.80} & \multicolumn{1}{c}{\3 12.49} &\4 10.55\\ 
\multicolumn{1}{|l|}{}                   &                   &        GAIRAT       & \multicolumn{1}{c}{99.44} & \multicolumn{1}{c}{\1 98.13} & \multicolumn{1}{c}{\1 97.34} &  \1 97.50      & \multicolumn{1}{c}{99.03} & \multicolumn{1}{c}{\1 97.20} & \multicolumn{1}{c}{\1 95.14} &\1  95.45 & \multicolumn{1}{c}{85.13} & \multicolumn{1}{c}{\3 75.19} & \multicolumn{1}{c}{\5 68.89} & \g 64.18 & \multicolumn{1}{c}{71.29} & \multicolumn{1}{c}{\2 52.77} & \multicolumn{1}{c}{\1 31.62} & \ggg 19.26 & \multicolumn{1}{c}{33.03} & \multicolumn{1}{c}{\5 22.73} & \multicolumn{1}{c}{\4 20.63} & \4 19.38 & \multicolumn{1}{c}{29.88} & \multicolumn{1}{c}{\5 15.58} & \multicolumn{1}{c}{\3 12.49} &\4 10.57\\  
\multicolumn{1}{|l|}{}                   &                   &         MAIL       & \multicolumn{1}{c}{99.50} & \multicolumn{1}{c}{\1 98.20} & \multicolumn{1}{c}{\1 97.38} & \1   97.73    & \multicolumn{1}{c}{99.08} & \multicolumn{1}{c}{\1 97.52} & \multicolumn{1}{c}{\1 95.79} & \1 96.35 & \multicolumn{1}{c}{77.93} & \multicolumn{1}{c}{\g 69.74} & \multicolumn{1}{c}{\g 65.63} & \ggg 54.36 & \multicolumn{1}{c}{67.39} & \multicolumn{1}{c}{\3 49.66} & \multicolumn{1}{c}{\1 31.62} &\ggg 18.65 & \multicolumn{1}{c}{31.45} & \multicolumn{1}{c}{\6 21.36} & \multicolumn{1}{c}{\6 17.92} & \7 16.43 & \multicolumn{1}{c}{29.66} & \multicolumn{1}{c}{\7 13.99} & \multicolumn{1}{c}{\6 10.18} &\7 7.64\\  
\multicolumn{1}{|l|}{}                   &                   &         WMMR       & \multicolumn{1}{c}{99.45} & \multicolumn{1}{c}{\1 98.11} & \multicolumn{1}{c}{\1 97.38} &   \1 97.59    & \multicolumn{1}{c}{99.08} & \multicolumn{1}{c}{\1 97.28} & \multicolumn{1}{c}{\2 95.30} & \1 95.67 & \multicolumn{1}{c}{86.68} & \multicolumn{1}{c}{\3 76.44} & \multicolumn{1}{c}{\3 70.23} & \3 68.39 & \multicolumn{1}{c}{72.85} & \multicolumn{1}{c}{\2 53.77} & \multicolumn{1}{c}{\1 32.57} & \3 28.74 & \multicolumn{1}{c}{32.52} & \multicolumn{1}{c}{\5 22.08} & \multicolumn{1}{c}{\5 19.63} & \5 17.81 & \multicolumn{1}{c}{30.89} & \multicolumn{1}{c}{\6 14.86} & \multicolumn{1}{c}{\4 11.91} &\4 10.01\\  
\multicolumn{1}{|l|}{}                   &                   &        RAA         & \multicolumn{1}{c}{99.42} & \multicolumn{1}{c}{\1 98.11} & \multicolumn{1}{c}{\1 97.34} &  \1 97.51   & \multicolumn{1}{c}{99.08} & \multicolumn{1}{c}{\1 97.24} & \multicolumn{1}{c}{\1 95.17} & \1 95.45 & \multicolumn{1}{c}{88.75} & \multicolumn{1}{c}{\1 78.20} & \multicolumn{1}{c}{\1 72.33} & \1 69.98 & \multicolumn{1}{c}{72.85} & \multicolumn{1}{c}{\1 53.98} & \multicolumn{1}{c}{\1 32.02} & \1 31.69 & \multicolumn{1}{c}{33.01} & \multicolumn{1}{c}{\4 23.07} & \multicolumn{1}{c}{\3 21.07} & \3 19.69 & \multicolumn{1}{c}{30.35} & \multicolumn{1}{c}{\5 16.09} & \multicolumn{1}{c}{\3 12.79} &\4 10.70 \\ \cline{3-27}  
\rowcolor{lightgray}
\multicolumn{1}{|l|}{\cellcolor{white}}                   &          \cellcolor{white}         &         \textbf{Ours}        & \multicolumn{1}{c}{\textbf{99.50}} & \multicolumn{1}{c}{\textbf{98.58}} & \multicolumn{1}{c}{\textbf{97.81}} &   \textbf{98.04}   & \multicolumn{1}{c}{99.18} & \multicolumn{1}{c}{\textbf{97.68}} & \multicolumn{1}{c}{\textbf{96.11}} & \textbf{96.36} & \multicolumn{1}{c}{{88.77}} & \multicolumn{1}{c}{\textbf{78.75}} & \multicolumn{1}{c}{\textbf{72.72}} & \textbf{71.97} & \multicolumn{1}{c}{75.42} & \multicolumn{1}{c}{\textbf{54.62}} & \multicolumn{1}{c}{\textbf{32.70}} & \textbf{32.17} & \multicolumn{1}{c}{\textbf{37.09}} & \multicolumn{1}{c}{\textbf{27.07}} & \multicolumn{1}{c}{\textbf{23.65}} & \textbf{22.59} & \multicolumn{1}{c}{\textbf{35.72}} & \multicolumn{1}{c}{\textbf{20.51}} & \multicolumn{1}{c}{\textbf{15.46}} & \textbf{13.83}\\ \cline{2-27} 

\multicolumn{1}{|l|}{}                   & \multirow{7}{*}{\rotatebox{90}{20\%}} &        ST          & \multicolumn{1}{c}{99.19} & \multicolumn{1}{c}{92.64} & \multicolumn{1}{c}{86.00} &   88.63   & \multicolumn{1}{c}{\textbf{99.23}} & \multicolumn{1}{c}{64.76} & \multicolumn{1}{c}{20.49} & 32.18 & \multicolumn{1}{c}{\textbf{87.69}} & \multicolumn{1}{c}{37.40} & \multicolumn{1}{c}{8.08} & 7.52 & \multicolumn{1}{c}{\textbf{87.16}} & \multicolumn{1}{c}{15.95} & \multicolumn{1}{c}{0.06} &0.00 & \multicolumn{1}{c}{30.46} & \multicolumn{1}{c}{14.79} & \multicolumn{1}{c}{11.57} & 10.38 & \multicolumn{1}{c}{30.49} & \multicolumn{1}{c}{4.92} & \multicolumn{1}{c}{2.14} & 2.03\\ 
\multicolumn{1}{|l|}{}                   &                   &       AT           & \multicolumn{1}{c}{99.44} & \multicolumn{1}{c}{\1 98.12} & \multicolumn{1}{c}{\1 97.29} &  \1 97.59   & \multicolumn{1}{c}{99.07} & \multicolumn{1}{c}{\1 97.11} & \multicolumn{1}{c}{\1 95.20} & \1 95.84 & \multicolumn{1}{c}{87.51} & \multicolumn{1}{c}{\1 77.08} & \multicolumn{1}{c}{\2 71.56} & \1 71.38 & \multicolumn{1}{c}{73.38} & \multicolumn{1}{c}{\3 52.04} & \multicolumn{1}{c}{\1 30.84} &\1 30.80 & \multicolumn{1}{c}{31.67} & \multicolumn{1}{c}{\5 21.15} & \multicolumn{1}{c}{\4 19.44} &\4  18.19 & \multicolumn{1}{c}{29.96} & \multicolumn{1}{c}{\5 15.47} & \multicolumn{1}{c}{\3 12.46} & \4 10.51\\ 
\multicolumn{1}{|l|}{}                   &                   &        GAIRAT          & \multicolumn{1}{c}{99.41} & \multicolumn{1}{c}{\1 98.10} & \multicolumn{1}{c}{\1 97.20} &  \1 97.54   & \multicolumn{1}{c}{99.01} & \multicolumn{1}{c}{\2 96.99} & \multicolumn{1}{c}{\1 95.16} & \1 95.66 & \multicolumn{1}{c}{74.61} & \multicolumn{1}{c}{\ggg 64.33} & \multicolumn{1}{c}{\ggg 62.22} &\ggg  59.39 & \multicolumn{1}{c}{70.03} & \multicolumn{1}{c}{\1 52.72} & \multicolumn{1}{c}{\1 31.36} & \ggg 17.68& \multicolumn{1}{c}{31.64} & \multicolumn{1}{c}{\5 21.64} & \multicolumn{1}{c}{\4 19.80} & \4 18.42 & \multicolumn{1}{c}{29.15} & \multicolumn{1}{c}{\5 15.45} & \multicolumn{1}{c}{\3 12.58} & \3 10.70\\  
\multicolumn{1}{|l|}{}                   &                   &         MAIL       & \multicolumn{1}{c}{99.46} & \multicolumn{1}{c}{\1 98.19} & \multicolumn{1}{c}{\1 97.38} &   \1 97.78    & \multicolumn{1}{c}{99.17} & \multicolumn{1}{c}{\1 97.38} & \multicolumn{1}{c}{\1 95.67} & \1 96.26 & \multicolumn{1}{c}{74.72} & \multicolumn{1}{c}{\ggg 66.63} & \multicolumn{1}{c}{\g 62.76} & \ggg 50.04 & \multicolumn{1}{c}{66.52} & \multicolumn{1}{c}{\3 51.00} & \multicolumn{1}{c}{\1 30.50} & \3 29.09& \multicolumn{1}{c}{30.94} & \multicolumn{1}{c}{\6 20.38} & \multicolumn{1}{c}{\6 16.98} & \6 15.60 & \multicolumn{1}{c}{28.48} & \multicolumn{1}{c}{\7 13.32} & \multicolumn{1}{c}{\6 9.89} & \7 7.18\\  
\multicolumn{1}{|l|}{}                   &                   &         WMMR       & \multicolumn{1}{c}{99.44} & \multicolumn{1}{c}{\1 98.11} & \multicolumn{1}{c}{\1 97.19} & \1 97.60    & \multicolumn{1}{c}{99.06} & \multicolumn{1}{c}{\1 97.07} & \multicolumn{1}{c}{\1 95.22} & \1 95.83 & \multicolumn{1}{c}{70.41} & \multicolumn{1}{c}{\ggg 60.90} & \multicolumn{1}{c}{\ggg 56.48} & \ggg 51.44 & \multicolumn{1}{c}{54.06} & \multicolumn{1}{c}{\ggg 41.00} & \multicolumn{1}{c}{\g 25.25} &\g 24.82 & \multicolumn{1}{c}{31.90} & \multicolumn{1}{c}{\5 20.82} & \multicolumn{1}{c}{\4 18.96} & \5 17.31 & \multicolumn{1}{c}{30.61} & \multicolumn{1}{c}{\6 14.28} & \multicolumn{1}{c}{\4 11.75} & \4 9.81\\  
\multicolumn{1}{|l|}{}                   &                   &        RAA         & \multicolumn{1}{c}{99.40} & \multicolumn{1}{c}{\1 98.14} & \multicolumn{1}{c}{\1 97.32} &  \1 97.60    & \multicolumn{1}{c}{99.13} & \multicolumn{1}{c}{\1 97.05} & \multicolumn{1}{c}{\2 95.05} & \2 95.44 & \multicolumn{1}{c}{87.58} & \multicolumn{1}{c}{\1 77.84} & \multicolumn{1}{c}{\1 72.10} & \3 70.30 & \multicolumn{1}{c}{71.61} & \multicolumn{1}{c}{\1 53.56} & \multicolumn{1}{c}{\1 31.99} & \1 31.27& \multicolumn{1}{c}{31.91} & \multicolumn{1}{c}{\5 21.75} & \multicolumn{1}{c}{\3 20.08} & \3 18.75 & \multicolumn{1}{c}{30.13} & \multicolumn{1}{c}{\4 15.90} & \multicolumn{1}{c}{\3 12.83} &\3 10.72\\ \cline{3-27}  
\rowcolor{lightgray}
\multicolumn{1}{|l|}{\cellcolor{white}}                   &       \cellcolor{white}            &         \textbf{Ours}        & \multicolumn{1}{c}{\textbf{99.51}} & \multicolumn{1}{c}{\textbf{98.60}} & \multicolumn{1}{c}{\textbf{97.87}} &  \textbf{98.16}    & \multicolumn{1}{c}{99.12} & \multicolumn{1}{c}{\textbf{97.56}} & \multicolumn{1}{c}{\textbf{96.11}} & \textbf{96.55} & \multicolumn{1}{c}{88.00} & \multicolumn{1}{c}{\textbf{77.94}} & \multicolumn{1}{c}{\textbf{72.26}} & \textbf{73.03} & \multicolumn{1}{c}{73.90} & \multicolumn{1}{c}{\textbf{54.43}} & \multicolumn{1}{c}{\textbf{32.64}} & \textbf{32.71}& \multicolumn{1}{c}{\textbf{36.05}} & \multicolumn{1}{c}{\textbf{25.76}} & \multicolumn{1}{c}{\textbf{22.83}} & \textbf{21.47} & \multicolumn{1}{c}{\textbf{34.11}} & \multicolumn{1}{c}{\textbf{19.55}} & \multicolumn{1}{c}{\textbf{15.12}} & \textbf{13.70}\\ \cline{2-27}

\multicolumn{1}{|l|}{}                   & \multirow{7}{*}{\rotatebox{90}{30\%}} &       ST         & \multicolumn{1}{c}{99.07} & \multicolumn{1}{c}{91.12} & \multicolumn{1}{c}{83.58} &   85.97   & \multicolumn{1}{c}{99.12} & \multicolumn{1}{c}{59.61} & \multicolumn{1}{c}{20.58} & 29.17 & \multicolumn{1}{c}{\textbf{84.75}} & \multicolumn{1}{c}{33.23} & \multicolumn{1}{c}{8.60} & 8.05 & \multicolumn{1}{c}{\textbf{85.14}} & \multicolumn{1}{c}{14.29} & \multicolumn{1}{c}{0.20} &0.03 & \multicolumn{1}{c}{28.19} & \multicolumn{1}{c}{14.94} & \multicolumn{1}{c}{11.91} & 10.76 & \multicolumn{1}{c}{28.80} & \multicolumn{1}{c}{3.97} & \multicolumn{1}{c}{1.68} &1.41 \\ 
\multicolumn{1}{|l|}{}                   &                   &       AT           & \multicolumn{1}{c}{99.41} & \multicolumn{1}{c}{\1 98.07} & \multicolumn{1}{c}{\1 97.14} &  \1 97.55   & \multicolumn{1}{c}{98.95} & \multicolumn{1}{c}{\1 96.90} & \multicolumn{1}{c}{\1 95.12} & \1 95.80 & \multicolumn{1}{c}{86.53} & \multicolumn{1}{c}{\2 76.10} & \multicolumn{1}{c}{\2 70.56} & \2 71.59 & \multicolumn{1}{c}{71.53} & \multicolumn{1}{c}{\2 51.01} & \multicolumn{1}{c}{\2 30.78} &\1 30.86 & \multicolumn{1}{c}{30.19} & \multicolumn{1}{c}{\4 20.77} & \multicolumn{1}{c}{\3 18.72} & \3 17.58 & \multicolumn{1}{c}{28.42} & \multicolumn{1}{c}{\5 14.57} & \multicolumn{1}{c}{\3 11.93} &\4 10.15 \\ 
\multicolumn{1}{|l|}{}                   &                   &        GAIRAT        & \multicolumn{1}{c}{99.37} & \multicolumn{1}{c}{\1 98.06} & \multicolumn{1}{c}{\1 97.27} &   \1  97.65   & \multicolumn{1}{c}{99.02} & \multicolumn{1}{c}{\1 97.01} & \multicolumn{1}{c}{\2 94.90} & \1 95.65 & \multicolumn{1}{c}{73.92} & \multicolumn{1}{c}{\ggg 63.94} & \multicolumn{1}{c}{\g 61.20} & \ggg 58.65 & \multicolumn{1}{c}{63.39} & \multicolumn{1}{c}{\3 49.31} & \multicolumn{1}{c}{\2 31.06} &\ggg 15.88 & \multicolumn{1}{c}{29.99} & \multicolumn{1}{c}{\4 20.64} & \multicolumn{1}{c}{\3 18.69} & \3 17.27 & \multicolumn{1}{c}{27.33} & \multicolumn{1}{c}{\5 15.00} & \multicolumn{1}{c}{\4 11.64} & \4 9.94\\  
\multicolumn{1}{|l|}{}                   &                   &         MAIL        & \multicolumn{1}{c}{99.44} & \multicolumn{1}{c}{\1 98.12} & \multicolumn{1}{c}{\1 97.41} & \1  97.56    & \multicolumn{1}{c}{99.13} & \multicolumn{1}{c}{\1 97.33} & \multicolumn{1}{c}{\1 95.48} & \1 96.11 & \multicolumn{1}{c}{68.58} & \multicolumn{1}{c}{\ggg 60.40} & \multicolumn{1}{c}{\ggg 56.75} & \ggg 44.55 & \multicolumn{1}{c}{63.48} & \multicolumn{1}{c}{\g 46.45} & \multicolumn{1}{c}{\3 29.04} & \ggg 17.50 & \multicolumn{1}{c}{29.28} & \multicolumn{1}{c}{\6 19.12} & \multicolumn{1}{c}{\5 16.09} & \6 14.71 & \multicolumn{1}{c}{27.23} & \multicolumn{1}{c}{\7 12.99} & \multicolumn{1}{c}{\6 9.43} &\7 6.89\\  
\multicolumn{1}{|l|}{}                   &                   &         WMMR       & \multicolumn{1}{c}{99.40} & \multicolumn{1}{c}{\1 98.07} & \multicolumn{1}{c}{\1 97.15} &  \1  97.63    & \multicolumn{1}{c}{99.05} & \multicolumn{1}{c}{\1 97.03} & \multicolumn{1}{c}{\1 94.98} & \1 95.75 & \multicolumn{1}{c}{69.49} & \multicolumn{1}{c}{\ggg 60.25} & \multicolumn{1}{c}{\ggg 56.11} & \ggg 50.76 & \multicolumn{1}{c}{53.57} & \multicolumn{1}{c}{\ggg 40.10} & \multicolumn{1}{c}{\g 23.97} &\ggg 21.27  & \multicolumn{1}{c}{30.20} & \multicolumn{1}{c}{\5 19.46} & \multicolumn{1}{c}{\4 17.70} & \4 16.17 & \multicolumn{1}{c}{29.02} & \multicolumn{1}{c}{\6 13.98} & \multicolumn{1}{c}{\4 11.27} &\5 9.20\\  
\multicolumn{1}{|l|}{}                   &                   &        RAA       & \multicolumn{1}{c}{99.40} & \multicolumn{1}{c}{\1 98.07} & \multicolumn{1}{c}{\1 97.20} &  \1  97.53     & \multicolumn{1}{c}{99.04} & \multicolumn{1}{c}{\1 97.00} & \multicolumn{1}{c}{\1 95.11} & \2 95.62 & \multicolumn{1}{c}{80.75} & \multicolumn{1}{c}{\3 72.27} & \multicolumn{1}{c}{\3 66.87} & \9 66.11 & \multicolumn{1}{c}{64.93} & \multicolumn{1}{c}{\3 48.97} & \multicolumn{1}{c}{\3 28.68} & \3 29.17& \multicolumn{1}{c}{30.58} & \multicolumn{1}{c}{\4 21.08} & \multicolumn{1}{c}{\2 19.19} & \3 17.98 & \multicolumn{1}{c}{29.08} & \multicolumn{1}{c}{\5 15.18} & \multicolumn{1}{c}{\3 12.28} &\4 10.38\\ \cline{3-27}  
\rowcolor{lightgray}
\multicolumn{1}{|l|}{\cellcolor{white}}                   &        \cellcolor{white}           &         \textbf{Ours}       & \multicolumn{1}{c}{\textbf{99.47}} & \multicolumn{1}{c}{\textbf{98.51}} & \multicolumn{1}{c}{\textbf{97.82}} &  \textbf{98.12}     & \multicolumn{1}{c}{\textbf{99.24}} & \multicolumn{1}{c}{\textbf{97.45}} & \multicolumn{1}{c}{\textbf{95.98}} & \textbf{96.63} & \multicolumn{1}{c}{86.76} & \multicolumn{1}{c}{\textbf{76.29}} & \multicolumn{1}{c}{\textbf{71.15}} & \textbf{71.60} & \multicolumn{1}{c}{71.52} & \multicolumn{1}{c}{\textbf{52.90}} & \multicolumn{1}{c}{\textbf{32.34}} & \textbf{32.58} & \multicolumn{1}{c}{\textbf{34.58}} & \multicolumn{1}{c}{\textbf{24.18}} & \multicolumn{1}{c}{\textbf{21.05}} & \textbf{20.11} & \multicolumn{1}{c}{\textbf{33.80}} & \multicolumn{1}{c}{\textbf{19.35}} & \multicolumn{1}{c}{\textbf{14.84}} &\textbf{13.70}\\ \cline{2-27} 

\multicolumn{1}{|l|}{}                   & \multirow{7}{*}{\rotatebox{90}{40\%}} &        ST       & \multicolumn{1}{c}{97.43} & \multicolumn{1}{c}{81.71} & \multicolumn{1}{c}{73.74} &    75.11    & \multicolumn{1}{c}{97.58} & \multicolumn{1}{c}{50.69} & \multicolumn{1}{c}{16.99} & 22.82 & \multicolumn{1}{c}{\textbf{79.83}} & \multicolumn{1}{c}{28.98} & \multicolumn{1}{c}{6.84} & 6.34 & \multicolumn{1}{c}{\textbf{78.37}} & \multicolumn{1}{c}{12.44} & \multicolumn{1}{c}{0.02} & 0.00& \multicolumn{1}{c}{26.99} & \multicolumn{1}{c}{12.97} & \multicolumn{1}{c}{10.29} &9.56  & \multicolumn{1}{c}{27.11} & \multicolumn{1}{c}{4.32} & \multicolumn{1}{c}{1.91} & 1.67\\ 
\multicolumn{1}{|l|}{}                   &                   &          AT      & \multicolumn{1}{c}{99.33} & \multicolumn{1}{c}{\1 97.81} & \multicolumn{1}{c}{\1 96.90} &    \1 97.22   & \multicolumn{1}{c}{98.89} & \multicolumn{1}{c}{\1 96.82} & \multicolumn{1}{c}{\1 94.63} &\1  95.46 & \multicolumn{1}{c}{85.14} & \multicolumn{1}{c}{\1 74.66} & \multicolumn{1}{c}{\2 69.57} & \2 69.44 & \multicolumn{1}{c}{66.68} & \multicolumn{1}{c}{\1 50.25} & \multicolumn{1}{c}{\1 30.76} & \3 30.40& \multicolumn{1}{c}{28.18} & \multicolumn{1}{c}{\4 20.20} & \multicolumn{1}{c}{\3 18.52} & \3 17.13 & \multicolumn{1}{c}{27.09} & \multicolumn{1}{c}{\5 13.90} & \multicolumn{1}{c}{\3 11.32} & \3 9.85\\ 
\multicolumn{1}{|l|}{}                   &                   &          GAIRAT     & \multicolumn{1}{c}{99.26} & \multicolumn{1}{c}{\1 97.70} & \multicolumn{1}{c}{\1 96.78} &  \1  97.03     & \multicolumn{1}{c}{98.85} & \multicolumn{1}{c}{\1 96.65} & \multicolumn{1}{c}{\2 94.36} & \1 95.24 & \multicolumn{1}{c}{70.19} & \multicolumn{1}{c}{\ggg 60.32} & \multicolumn{1}{c}{\ggg 55.63} & \ggg 49.99 & \multicolumn{1}{c}{62.43} & \multicolumn{1}{c}{\1 48.98} & \multicolumn{1}{c}{\1 30.85} &\ggg 15.61 & \multicolumn{1}{c}{28.42} & \multicolumn{1}{c}{\4 19.67} & \multicolumn{1}{c}{\3 17.88} & \3 16.75 & \multicolumn{1}{c}{25.70} & \multicolumn{1}{c}{\4 14.49} & \multicolumn{1}{c}{\3 11.55} & \3 9.85\\ 
\multicolumn{1}{|l|}{}                   &                   &         MAIL      & \multicolumn{1}{c}{99.24} & \multicolumn{1}{c}{\1 97.74} & \multicolumn{1}{c}{\1 96.87} & \2 96.84      & \multicolumn{1}{c}{98.79} & \multicolumn{1}{c}{\1 96.94} & \multicolumn{1}{c}{\1 94.93} & \1 95.26 & \multicolumn{1}{c}{64.62} & \multicolumn{1}{c}{\ggg 57.14} & \multicolumn{1}{c}{\ggg 53.99} & \gg 42.62 & \multicolumn{1}{c}{53.75} & \multicolumn{1}{c}{\g 41.72} & \multicolumn{1}{c}{\g 26.41} & \ggg 15.29& \multicolumn{1}{c}{27.68} & \multicolumn{1}{c}{\6 17.73} & \multicolumn{1}{c}{\5 15.77} & \6 13.91 & \multicolumn{1}{c}{25.82} & \multicolumn{1}{c}{\7 11.76} & \multicolumn{1}{c}{\6 9.00} &\6 6.92 \\  
\multicolumn{1}{|l|}{}                   &                   &        WMMR      & \multicolumn{1}{c}{99.35} & \multicolumn{1}{c}{\1 97.84} & \multicolumn{1}{c}{\1 96.89} &  \1 97.22      & \multicolumn{1}{c}{98.91} & \multicolumn{1}{c}{\1 96.70} & \multicolumn{1}{c}{\1 94.58} & \1 95.30 & \multicolumn{1}{c}{67.12} & \multicolumn{1}{c}{\ggg 57.91} & \multicolumn{1}{c}{\ggg 54.26} & \ggg 49.42 & \multicolumn{1}{c}{47.34} & \multicolumn{1}{c}{\ggg 36.84} & \multicolumn{1}{c}{\g 21.89} & \ggg 20.17& \multicolumn{1}{c}{28.65} & \multicolumn{1}{c}{\5 19.15} & \multicolumn{1}{c}{\4 17.54} & \4 15.94 & \multicolumn{1}{c}{27.70} & \multicolumn{1}{c}{\5 13.69} & \multicolumn{1}{c}{\3 11.33} &\4 9.67\\ 
\multicolumn{1}{|l|}{}                   &                   &        RAA     & \multicolumn{1}{c}{99.40} & \multicolumn{1}{c}{\1 98.02} & \multicolumn{1}{c}{\1 97.14} &  \1 97.23      & \multicolumn{1}{c}{98.96} & \multicolumn{1}{c}{\1 96.91} & \multicolumn{1}{c}{\1 95.02} & \1 95.29 & \multicolumn{1}{c}{72.65} & \multicolumn{1}{c}{\g 65.66} & \multicolumn{1}{c}{\g 61.10} & \ggg 59.58 & \multicolumn{1}{c}{53.59} & \multicolumn{1}{c}{\g 43.28} & \multicolumn{1}{c}{\g 26.02} & \g 25.86 & \multicolumn{1}{c}{28.25} & \multicolumn{1}{c}{\4 19.97} & \multicolumn{1}{c}{\3 18.58} & \3 17.36 & \multicolumn{1}{c}{27.68} & \multicolumn{1}{c}{\5 14.42} & \multicolumn{1}{c}{\3 11.70} &\3 10.21\\ \cline{3-27} 
\rowcolor{lightgray}
\multicolumn{1}{|l|}{\cellcolor{white}}                   &       \cellcolor{white}            &        \textbf{Ours}       & \multicolumn{1}{c}{\textbf{99.41}} & \multicolumn{1}{c}{\textbf{98.38}} & \multicolumn{1}{c}{\textbf{97.68}} &   \textbf{97.97 }    & \multicolumn{1}{c}{\textbf{99.09}} & \multicolumn{1}{c}{\textbf{97.25}} & \multicolumn{1}{c}{\textbf{95.62}} & \textbf{96.19} & \multicolumn{1}{c}{85.10} & \multicolumn{1}{c}{\textbf{75.35}} & \multicolumn{1}{c}{\textbf{69.89}} & \textbf{69.72} & \multicolumn{1}{c}{66.75} & \multicolumn{1}{c}{\textbf{50.70}} & \multicolumn{1}{c}{\textbf{31.52}} &\textbf{32.47} & \multicolumn{1}{c}{\textbf{33.65}} & \multicolumn{1}{c}{\textbf{23.25}} & \multicolumn{1}{c}{\textbf{20.76}} & \textbf{19.46} & \multicolumn{1}{c}{\textbf{31.69}} & \multicolumn{1}{c}{\textbf{18.47}} & \multicolumn{1}{c}{\textbf{14.16}} & \textbf{12.81}\\ \hline

\end{tabular}
% \begin{tablenotes}\scriptsize
% \item[*] The performance gap between current method and \ORAT: {\1 blue text:} (0,2\%]; {\3 purple text:} (2\%,5\%]; {\6 orange text:} (5\%,10\%]; {\gg green text:} (10\%,12\%].
% \end{tablenotes}
}
\vspace{-0.6em}
\caption{\small \it Testing accuracy (\%) of seven methods on MNIST, CIFAR-10, and CIFAR-100 with different levels of symmetric and asymmetric noisy. The best results are shown in bold. We color the performance of all adversarial training methods on three different attacks. The performance gap between current method and \ORAT~are shown in \mybox[fill=green!30]{green}: $\leq$2\%; \mybox[fill=yellow!30]{yellow}: (2\%,5\%]; \mybox[fill=orange!40]{orange}: (5\%,10\%]; \mybox[fill=blue!30]{blue}: $>$10\%. According to results, \ORAT~outperforms all adversarial training methods on all attack settings. `Na' represents Natural. `FG' represents FGSM.}
\label{tab:general_performance_1}
\vspace{-1.5em}
% \vspace{-2.5em}
\end{table*}
We report the overall results (accuracy on the testing sets) in Table \ref{tab:general_performance_1}. 
First, when there is no noise in the datasets, \ORAT~outperforms the AT method in three different attack settings on all datasets by a significant margin (0, 2.14\%]. This is probably because the original datasets may contain outliers, as verified in \cite{sanyal2021benign}.

% \vspace{-0.5mm}
Second, under the symmetric noise setting, \ORAT~outperforms all compared methods in all attack scenarios in general. 
% It is also worthwhile to observe the performance gap between noise and noise-free settings is small on MNIST in all defense methods. This is because MNIST is a relatively smaller dataset and LeNet achieves almost perfect performance. 
We observe the performance gap between noise and noise-free settings is small on MNIST because it is a relatively smaller dataset and LeNet achieves almost perfect performance.
However, on a larger dataset such as CIFAR-10 and using a more complex neural network, the outliers have a stronger impact.
% on the model trained by different adversarial training strategies. 
The instance-reweighted AT methods such as GAIRAT, MAIL, and MAIL are inferior to AT and \ORAT~in 20\%, 30\%, 40\% settings under $\epsilon=2/255$, because they allocate the loss of the most outliers with higher weight during the training. 
% These methods do not consider the outliers problem in real-world training scenarios.
% Therefore, the performance should be expected lower than ours. 
For CIFAR-100, which is larger and more challenging than CIFAR-10 and the training network ResNet is also larger than Small-CNN. In this case, we find that outliers have a large influence on robust training. Compared to noise-free settings, the performance of AT is reduced by half under the noise settings. However, \ORAT~still outperforms all baselines in all settings. The performance gap between the baselines and \ORAT~is more than 2\% and even can be achieved near 10\% (in 30\% symmetric noise, MAIL, FGSM, and $\epsilon$=2/255 settings). 
Furthermore, we find RAA is more robust than other compared methods but vulnerable to adversarial samples than \ORAT. As we discussed before, their label correction approach may correct the wrong labels but inevitably introduces more extra noisy labels during training.

% \begin{wrapfigure}{R}{0.45\textwidth}
%     \centering
% % 		\begin{minipage}{0.58\linewidth}
% % 		\centering
%             \vspace{-0.8cm}
%             \setlength\tabcolsep{3.5pt}
%             \footnotesize{
%             % \tiny{
%             \scalebox{0.78}{
%             \begin{tabular}{|c|c|c|c|}
%                 \hline
%                 \diagbox{Defense}{Data} & \makecell{MNIST\\($\epsilon=0.1$)} & \makecell{CIFAR-10\\($\epsilon=2/255$)} & \makecell{CIFAR-100\\($\epsilon=2/255$)} \\ \hline
%                 ST & 84.13 & 23.54 & 10.22 \\ 
%                 AT & {\1 96.73} & {\1 44.56}  & \3 17.11 \\ 
%                 GAIRAT & {\1 96.78} & \3 40.82 & \3 16.54 \\ 
%                 MAIL & {\1 97.01} & {\6 40.39} & \6 13.67 \\ 
%                 WMMR & {\1 96.88} & {\3 43.52} & \3 15.69 \\ 
%                 RAA & {\1 96.98} & {\1 45.07} & \3 17.70 \\ 
%                 \rowcolor{lightgray}
%                 \textbf{Ours} & \textbf{97.67} & \textbf{45.59} & \textbf{19.74} \\ \hline
%                 \end{tabular}
%             }
%             }
%             \vspace{-0.2cm}
%             \captionof{table}{\small \em Testing accuracy (\%) on AutoAttack.} 
%             \vspace{-0.5cm}
%             % }
%             \label{tab:aa_performance}
% 	    %\end{minipage}
% \end{wrapfigure}

% \vspace{-2mm}

Third, for asymmetric noise settings, we can get similar observations. Note that \ORAT~can even outperform the baselines over 10\% on the CIFAR-10 dataset.  All adversarial methods outperform ST in all attack settings since ST does not have a mechanism to handle adversarial samples. In general, increasing the noise percentage, we can find there is a decreasing trend in the performance among all methods. On the other hand, increasing the value of $\epsilon$ will also decrease all model performance. Even a small amount of label noise causes classifiers to have significant adversarial errors.

\begin{figure*}[t]
    \centering
    \begin{minipage}{0.49\linewidth}
		\centering
        % \setlength\tabcolsep{1.5pt}
        % \setlength\tabcolsep{5pt}
            % \footnotesize{
            \scriptsize{
            \scalebox{0.65}{
            \begin{tabular}{|c|cc|cc|cc|}
            \hline
            \multirow{2}{*}{\diagbox{Defense}{Data}} & \multicolumn{2}{c|}{MNIST}    & \multicolumn{2}{c|}{CIFAR-10}    & \multicolumn{2}{c|}{CIFAR-100}    \\ \cline{2-7} 
                              & \multicolumn{1}{c|}{$\epsilon\!\!=\!\!0.1$} & $\epsilon\!\!=\!\!0.2$ & \multicolumn{1}{c|}{$\epsilon\!\!=\!\!2/255$} & $\epsilon\!\!=\!\!8/255$ & \multicolumn{1}{c|}{$\epsilon\!\!=\!\!2/255$} & $\epsilon\!\!=\!\!8/255$ \\ \hline
                             ST & \multicolumn{1}{c|}{84.13} & 23.78 & \multicolumn{1}{c|}{23.54} & 10.41 & \multicolumn{1}{c|}{10.22} & 1.72 \\ 
                             AT & \multicolumn{1}{c|}{\1 96.73} & \1 93.23 & \multicolumn{1}{c|}{\1 44.56} & \1 23.06 & \multicolumn{1}{c|}{\3 17.11} & \3 9.89 \\ 
                             GAIRAT & \multicolumn{1}{c|}{\1 96.78} & \1 93.57 & \multicolumn{1}{c|}{\3 40.82 } & \3 20.57 & \multicolumn{1}{c|}{\3 16.54} & \3 9.97 \\ 
                             MAIL & \multicolumn{1}{c|}{\1 97.01} & \1 93.80 & \multicolumn{1}{c|}{\6 40.39} & \3 18.95 & \multicolumn{1}{c|}{\6 13.67} & \3 7.29 \\ 
                             WMMR & \multicolumn{1}{c|}{\1 96.88} & \1 93.60 & \multicolumn{1}{c|}{\3 43.52} & \3 20.68  & \multicolumn{1}{c|}{\3 15.69} & \3 8.73 \\ 
                             RAA & \multicolumn{1}{c|}{\1 96.98} & \1 93.62 & \multicolumn{1}{c|}{\1 45.07} & \1 22.29 & \multicolumn{1}{c|}{\3 17.70} & \3 9.78 \\ \hline
                             \rowcolor{lightgray}
                             \textbf{Ours} & \multicolumn{1}{c|}{\textbf{97.67}} & \textbf{95.05} & \multicolumn{1}{c|}{\textbf{45.59}} & \textbf{23.35} & \multicolumn{1}{c|}{\textbf{19.74}} & \textbf{12.22} \\ \hline
            \end{tabular}
            }
            }
            \vspace{-0.3cm}
            \captionof{table}{\small \em Testing accuracy (\%) on AutoAttack.} 
            % \vspace{-0.7cm}
            % }
            \label{tab:aa_performance}

        \setlength\tabcolsep{5pt}
            % \footnotesize{
            \scriptsize{
            \scalebox{0.64}{
            \begin{tabular}{|c|c|cccc|cccc|}
\hline
\multirow{2}{*}{Noise} & \multirow{2}{*}{Defense} & \multicolumn{4}{c|}{MNIST ($\epsilon=0.1$)}                                                    & \multicolumn{4}{c|}{CIFAR-100 ($\epsilon=2/255$)}                                                    \\ \cline{3-10} 
                  &                   & \multicolumn{1}{c}{Na} & \multicolumn{1}{c}{FG} & \multicolumn{1}{c}{PGD} & CW & \multicolumn{1}{c}{Na} & \multicolumn{1}{c}{FG} & \multicolumn{1}{c}{PGD} & CW  \\ \hline \hline
\multirow{2}{*}{10\%} 
% &        AT           & \multicolumn{1}{c}{\besttt 99.12} & \multicolumn{1}{c}{\best 97.98} & \multicolumn{1}{c}{\best 97.36} & \best 97.28 & \multicolumn{1}{c}{\besttt 31.02} & \multicolumn{1}{c}{\besttt 22.04} & \multicolumn{1}{c}{\besttt 20.25} & \besttt 18.97 \\ \cline{2-10} 
                  &         \makecell{AT w/o}           & \multicolumn{1}{c}{ 98.91} & \multicolumn{1}{c}{\1 98.08} & \multicolumn{1}{c}{\1 97.61} & \1 97.55 & \multicolumn{1}{c}{ 29.81} & \multicolumn{1}{c}{\3 21.09} & \multicolumn{1}{c}{\3 19.59} & \3 18.23 \\ \cline{2-10} 
                %   \rowcolor{lightgray}
                  &        \cellcolor{lightgray}\textbf{Ours}           & \multicolumn{1}{c}{\cellcolor{lightgray}\textbf{99.52}} & \multicolumn{1}{c}{\cellcolor{lightgray}\textbf{98.45}} & \multicolumn{1}{c}{\cellcolor{lightgray}\textbf{97.78}} & \cellcolor{lightgray}\textbf{97.79} & \multicolumn{1}{c}{\cellcolor{lightgray}\textbf{35.76}} & \multicolumn{1}{c}{\cellcolor{lightgray}\textbf{25.72}} & \multicolumn{1}{c}{\cellcolor{lightgray}\textbf{22.27}} & \cellcolor{lightgray}\textbf{21.28} \\ \hline
\multirow{2}{*}{30\%} 
% &        AT             & \multicolumn{1}{c}{\besttt 98.58} & \multicolumn{1}{c}{\besttt 97.21} & \multicolumn{1}{c}{\besttt 96.67} & \besttt 96.53 & \multicolumn{1}{c}{\best 23.79} & \multicolumn{1}{c}{\best 18.96} & \multicolumn{1}{c}{\best 17.49} & \best 16.26 \\ \cline{2-10} 
                  &          \makecell{AT w/o}           & \multicolumn{1}{c}{ 97.82} & \multicolumn{1}{c}{\1 96.97} & \multicolumn{1}{c}{\1 96.47} & \1 96.35 & \multicolumn{1}{c}{ 24.18} & \multicolumn{1}{c}{\3 19.17} & \multicolumn{1}{c}{\3 17.31} & \3 16.71 \\ \cline{2-10} 
                %   \rowcolor{lightgray}
                 &       \cellcolor{lightgray}  \textbf{Ours}          & \multicolumn{1}{c}{\cellcolor{lightgray}\textbf{99.55}} & \multicolumn{1}{c}{\cellcolor{lightgray}\textbf{98.30}} & \multicolumn{1}{c}{\cellcolor{lightgray}\textbf{97.51}} & \cellcolor{lightgray}\textbf{97.53} & \multicolumn{1}{c}{\cellcolor{lightgray}\textbf{31.27}} & \multicolumn{1}{c}{\cellcolor{lightgray}\textbf{23.81}} & \multicolumn{1}{c}{\cellcolor{lightgray}\textbf{21.35}} & \cellcolor{lightgray}\textbf{19.59} \\ \hline
\end{tabular}
            }
            }
            \vspace{-0.3cm}
            \captionof{table}{\small \em Testing accuracy (\%) of AT w/o and \ORAT~on symmetric noisy data.
            % $\epsilon$ is the perturbation bound. The performances are obtained from natural test data, FGSM, PGD, and CW attacks. The first, second and third best results are shown in bold, dark cyan and light cyan, respectively. 
            } 
            % \vspace{-0.5cm}
            % }
            \label{tab:partial_ATw/o}
        
        % \vspace{-1.7em}
        % \caption{\small \em  The tendency curves of training adversarial loss and test accuracy on three datasets. 
        % % The sharp drops in the curves correspond to decreases in training learning rate.
        % }
        % \vspace{-2.1em}
        % \label{fig:loss_acc_1}
	    \end{minipage}
    \hfill
    \begin{minipage}{0.5\linewidth}
    \centering
	    \setlength\tabcolsep{5pt}
            % \footnotesize{
            \scriptsize{
            \scalebox{0.57}{
            \begin{tabular}{|c|c|cccc|cccc|}
\hline
\multirow{2}{*}{Noise} & \multirow{2}{*}{Defense} & \multicolumn{4}{c|}{MNIST ($\epsilon=0.1$)}                                                    & \multicolumn{4}{c|}{CIFAR-100 ($\epsilon=2/255$)}                                                    \\ \cline{3-10} 
                  &                   & \multicolumn{1}{c}{Na} & \multicolumn{1}{c}{FG} & \multicolumn{1}{c}{PGD} & CW & \multicolumn{1}{c}{Na} & \multicolumn{1}{c}{FG} & \multicolumn{1}{c}{PGD} & CW \\ \hline\hline
\multirow{7}{*}{\rotatebox{90}{ \small \makecell{40\%  Symmetric Noise \ \ \ \ \  }} } &         ST          & \multicolumn{1}{c}{\makecell{98.38\\(0.18)}} & \multicolumn{1}{c}{\makecell{77.47\\(5.79)}} & \multicolumn{1}{c}{\makecell{59.31\\(11.13)}} & \makecell{51.47\\(12.79)} & \multicolumn{1}{c}{\makecell{21.53\\(0.71)}} & \multicolumn{1}{c}{\makecell{13.95\\(0.16)}} & \multicolumn{1}{c}{\makecell{12.12\\(0.28)}} & \makecell{10.12\\(0.28)} \\ \cline{2-10} 
                  &         AT          & \multicolumn{1}{c}{\makecell{96.60\\(0.87)}} & \multicolumn{1}{c}{\3\makecell{95.11\\(1.17)}} & \multicolumn{1}{c}{\3\makecell{94.36\\(1.31)}} & \3\makecell{94.16\\(1.37)} & \multicolumn{1}{c}{\makecell{21.82\\(0.30)}} & \multicolumn{1}{c}{\6\makecell{18.14\\(0.27)}} & \multicolumn{1}{c}{\3\makecell{17.02\\(0.29)}} & \3\makecell{15.62\\(0.32)} \\ \cline{2-10} 
                  &          GAIRAT	         & \multicolumn{1}{c}{\makecell{98.04\\(0.11)}} & \multicolumn{1}{c}{\1\makecell{96.60\\(0.08)}} & \multicolumn{1}{c}{\1\makecell{95.98\\(0.13)}} & \1\makecell{95.79\\(0.12)} & \multicolumn{1}{c}{\makecell{21.58\\(0.30)}} & \multicolumn{1}{c}{\6\makecell{17.94\\(0.36)}} & \multicolumn{1}{c}{\6\makecell{16.88\\(0.36)}} & \3\makecell{15.50\\(0.33)} \\ \cline{2-10} 
                  &         MAIL          & \multicolumn{1}{c}{\makecell{98.06\\(0.17)}} & \multicolumn{1}{c}{\1\makecell{96.88\\(0.16)}} & \multicolumn{1}{c}{\1\makecell{96.31\\(0.12)}} & \1\makecell{96.11\\(0.10)} & \multicolumn{1}{c}{\makecell{17.52\\(0.74)}} & \multicolumn{1}{c}{\6\makecell{14.31\\(0.43)}} & \multicolumn{1}{c}{\6\makecell{13.46\\(0.40)}} & \6\makecell{12.09\\(0.37)} \\ \cline{2-10} 
                  &         WMMR          & \multicolumn{1}{c}{\makecell{98.07\\(0.14)}} & \multicolumn{1}{c}{\1\makecell{96.69\\(0.10)}} & \multicolumn{1}{c}{\1\makecell{96.04\\(0.08)}} & \1\makecell{95.85\\(0.08)} & \multicolumn{1}{c}{\makecell{20.64\\(0.29)}} & \multicolumn{1}{c}{\6\makecell{17.14\\(0.33)}} & \multicolumn{1}{c}{\6\makecell{16.10\\(0.30)}} & \6\makecell{14.76\\(0.27)} \\ \cline{2-10} 
                  &         RAA          & \multicolumn{1}{c}{\makecell{98.58\\(0.11)}} & \multicolumn{1}{c}{\1\makecell{96.82\\(0.08)}} & \multicolumn{1}{c}{\1\makecell{95.99\\(0.10)}} & \1\makecell{95.91\\(0.09)} & \multicolumn{1}{c}{\makecell{22.21\\(0.32)}} & \multicolumn{1}{c}{\6\makecell{18.27\\(0.32)}} & \multicolumn{1}{c}{\3\makecell{17.22\\(0.30)}} & \3\makecell{15.85\\(0.33)} \\ \cline{2-10} 
                 &          \cellcolor{lightgray} \textbf{Ours}         & \multicolumn{1}{c}{\cellcolor{lightgray} \textbf{\makecell{99.34\\(0.03)}}} & \multicolumn{1}{c}{\cellcolor{lightgray} \textbf{\makecell{98.02\\(0.07)}}} & \multicolumn{1}{c}{\cellcolor{lightgray} \textbf{\makecell{97.19\\(0.08)}}} & \cellcolor{lightgray} \textbf{\makecell{97.18\\(0.07)}} & \multicolumn{1}{c}{\cellcolor{lightgray} \textbf{\makecell{30.12\\(0.47)	}}} & \multicolumn{1}{c}{\cellcolor{lightgray} \textbf{\makecell{23.62\\(0.42)}}} & \multicolumn{1}{c}{\cellcolor{lightgray} \textbf{\makecell{21.48\\(0.49)}}} & \cellcolor{lightgray} \textbf{\makecell{20.02\\(0.41)}} \\ \hline
\end{tabular}
            }
            }
        \vspace{-0.1cm}
            \captionof{table}{\small \em Mean and standard deviation (in parentheses) of testing accuracy (\%) across 10 random runs. 
            } 
            % \vspace{-0.4cm}
            % }
            \label{tab:Stability}
        % \vspace{-0.6em}
        % \caption{\small \em  Effect of $k$ and $m$ on the test accuracy of \ORAT~on three datasets.}
        % \vspace{-2.1em}
        % \label{fig:k_m}
        
	\end{minipage}

    	%\qquad
% \vspace{-0.4cm}	
\end{figure*}

We also report testing accuracy for AA on all datasets with 20\% symmetric noise in Table \ref{tab:aa_performance}. We can find our method outperforms all compared methods even if using a strong attack strategy. {In addition, we compare AT w/o with our method and report performance in Table \ref{tab:partial_ATw/o}. Comparing Table \ref{tab:general_performance_1} and Table \ref{tab:partial_ATw/o} under the same setting, it is evident that simply removing outliers from training data cannot significantly improve AT performance, and our method still achieves the best performance.
The stability evaluation results of each method with 10 random runs are shown in Table \ref{tab:Stability}, where we use 40\% symmetric noisy data as an example. Comparing Table \ref{tab:general_performance_1} and Table \ref{tab:Stability} under the same setting, it is clear that the performance gap becomes larger when we report scores by using mean and standard deviation, and our method shows a stable and stronger ability in handling outliers and adversarial attacks.}

% \vspace{-2mm}
\begin{figure*}[t]
    \centering
    \begin{minipage}{0.5\linewidth}
		\centering
        \begin{subfigure}[t]{0.49\linewidth}
            \includegraphics[width=\linewidth]{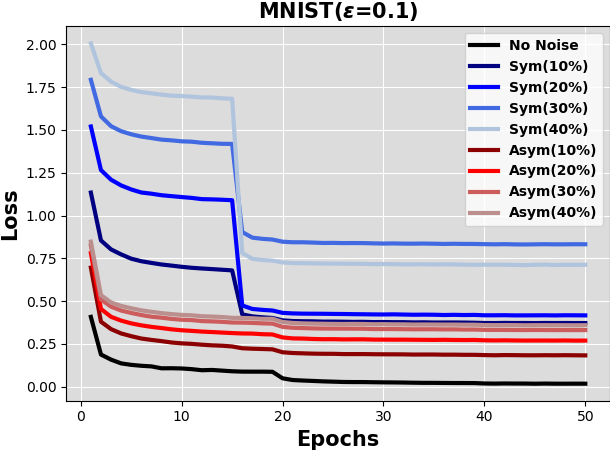}
            % \caption{}
        \end{subfigure}%
        \hfill%
        \begin{subfigure}[t]{0.49\linewidth}
            \includegraphics[width=\linewidth]{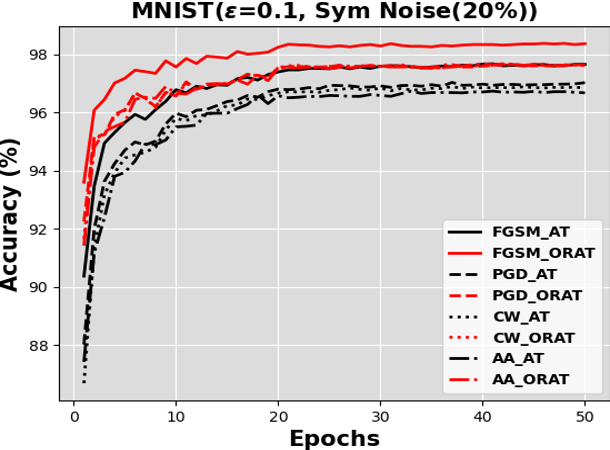}
            % \caption{}
        \end{subfigure}%
        
        \begin{subfigure}[t]{0.49\linewidth}
            \includegraphics[width=\linewidth]{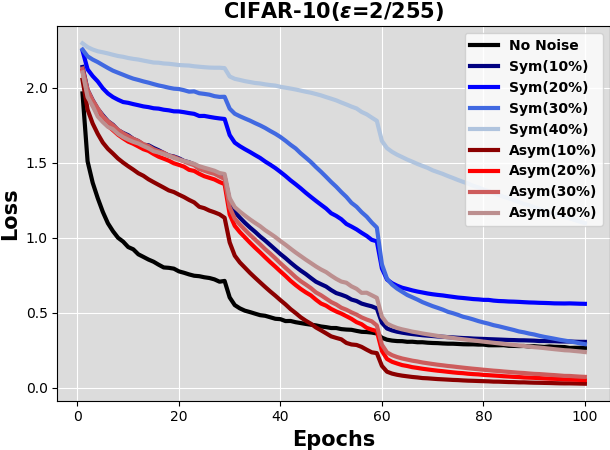}
            % \caption{}
        \end{subfigure}
        \hfill%
        \begin{subfigure}[t]{0.49\linewidth}
            \includegraphics[width=\linewidth]{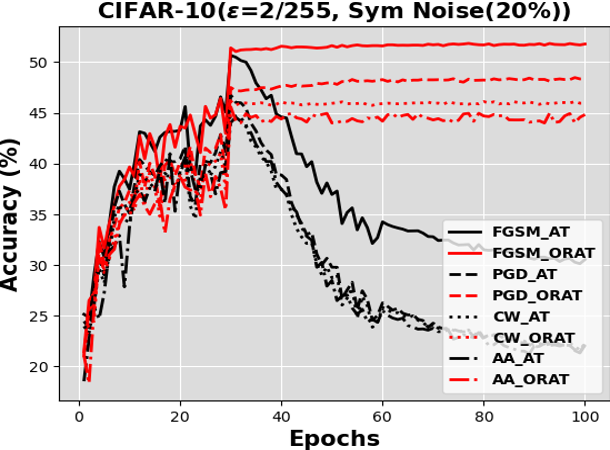}
            % \caption{}
        \end{subfigure}
        
        \begin{subfigure}[t]{0.49\linewidth}
            \includegraphics[width=\linewidth]{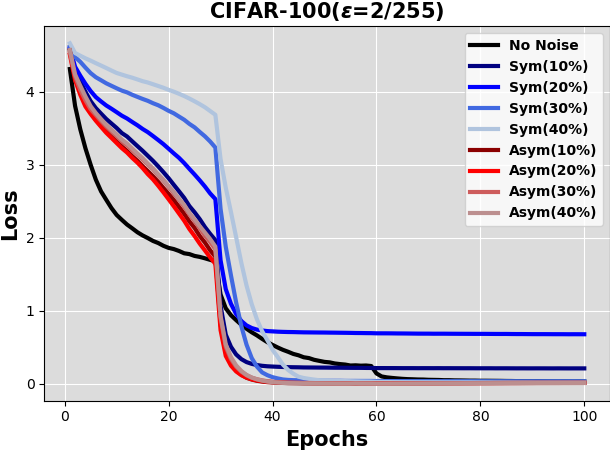}
            % \caption{}
        \end{subfigure}
        \hfill%
        \begin{subfigure}[t]{0.49\linewidth}
            \includegraphics[width=\linewidth]{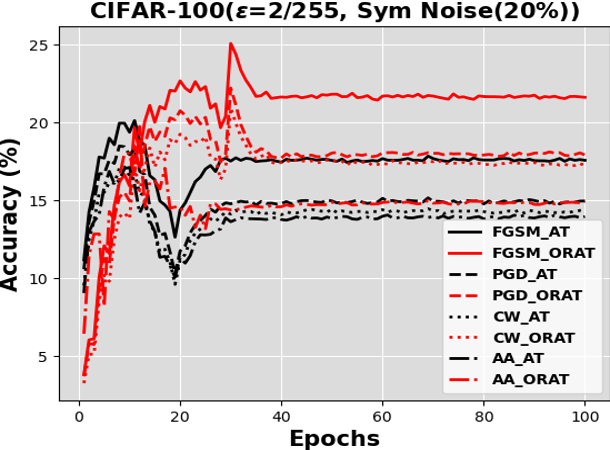}
            % \caption{}
        \end{subfigure}
        \vspace{-1.7em}
        \caption{\small \em  The tendency curves of training adversarial loss and test accuracy on three datasets. 
        % The sharp drops in the curves correspond to decreases in training learning rate.
        }
        % \vspace{-2.1em}
        \vspace{-1.5em}
        \label{fig:loss_acc_1}
	    \end{minipage}
    \hfill
    \begin{minipage}{0.49\linewidth}
	    \centering
        \begin{subfigure}[t]{0.49\linewidth}
            \includegraphics[width=\linewidth]{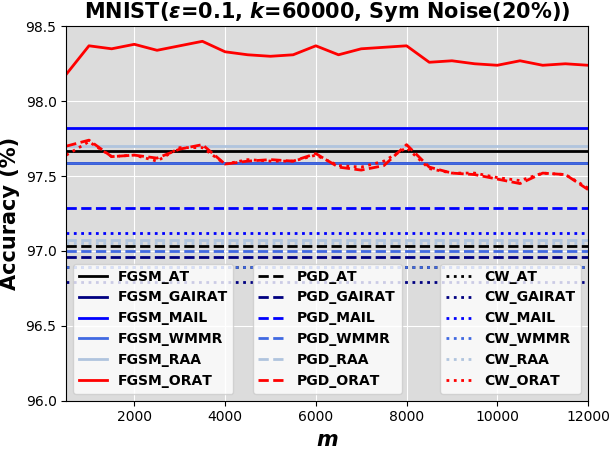}
            % \caption{}
        \end{subfigure}%
        \hfill%
        \begin{subfigure}[t]{0.49\linewidth}
            \includegraphics[width=\linewidth]{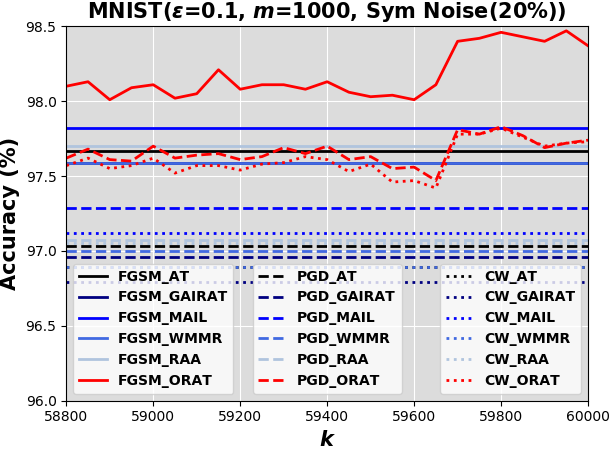}
            % \caption{}
        \end{subfigure}%
        
        \begin{subfigure}[t]{0.49\linewidth}
            \includegraphics[width=\linewidth]{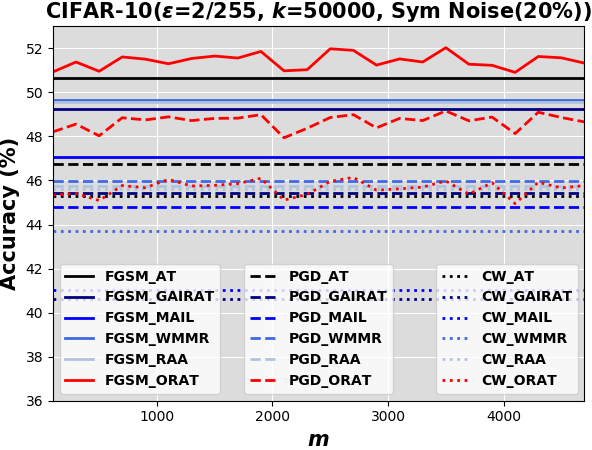}
            % \caption{}
        \end{subfigure}%
        \hfill%
        \begin{subfigure}[t]{0.49\linewidth}
            \includegraphics[width=\linewidth]{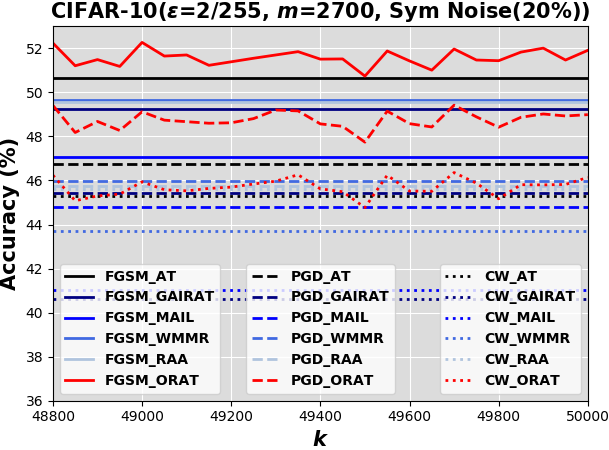}
            % \caption{}
        \end{subfigure}%
        
        \begin{subfigure}[t]{0.49\linewidth}
            \includegraphics[width=\linewidth]{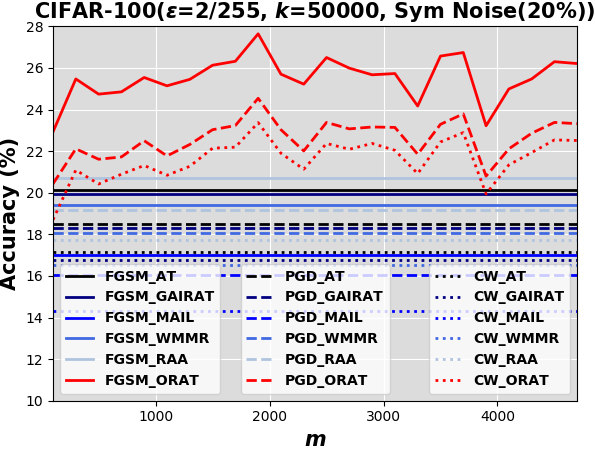}
            % \caption{}
        \end{subfigure}%
        \hfill%
        \begin{subfigure}[t]{0.49\linewidth}
            \includegraphics[width=\linewidth]{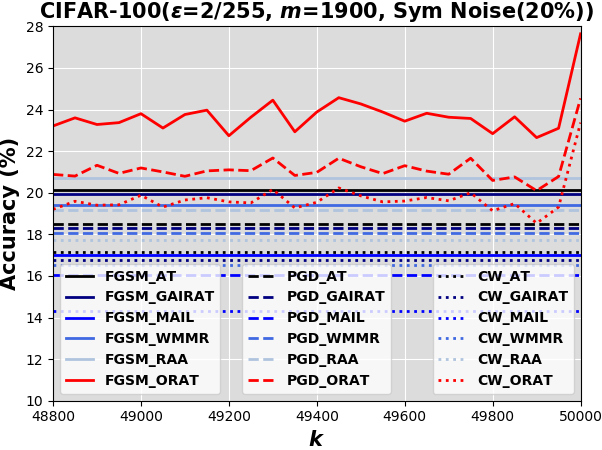}
            % \caption{}
        \end{subfigure}%
        \vspace{-0.6em}
        \caption{\small \em  Effect of $k$ and $m$ on the test accuracy of \ORAT~on three datasets.}
        % \vspace{-2.1em}
        \vspace{-1.5em}
        \label{fig:k_m}
	\end{minipage}

    	%\qquad
	
\end{figure*}

% \begin{figure}[t]
% \begin{subfigure}[t]{0.49\linewidth}
%     \includegraphics[width=\linewidth]{figs/MNIST_ATRR_loss_eps_0.1.png}
%     % \caption{}
% \end{subfigure}%
% \hfill%
% \begin{subfigure}[t]{0.49\linewidth}
%     \includegraphics[width=\linewidth]{}
%     % \caption{}
% \end{subfigure}%

% \begin{subfigure}[t]{0.49\linewidth}
%     \includegraphics[width=\linewidth]{figs/CIFAR-10_ATRR_loss_eps_0.0078.png}
%     % \caption{}
% \end{subfigure}
% \hfill%
% \begin{subfigure}[t]{0.49\linewidth}
%     \includegraphics[width=\linewidth]{}
%     % \caption{}
% \end{subfigure}

% \begin{subfigure}[t]{0.49\linewidth}
%     \includegraphics[width=\linewidth]{figs/CIFAR-100_ATRR_loss_eps_0.0078_lr02.png}
%     % \caption{}
% \end{subfigure}
% \hfill%
% \begin{subfigure}[t]{0.49\linewidth}
%     \includegraphics[width=\linewidth]{}
%     % \caption{}
% \end{subfigure}
% \vspace{-0.4em}
% \caption{\em  The tendency curves of training adversarial loss and test accuracy on three datasets. The sharp drops in the curves correspond to decreases in training learning rate.}
% \vspace{-2em}
% \label{fig:loss_acc_1}
% \end{figure}

\textbf{Convergence and Robustness Tendency}. We show the tendency curves of the training loss on all datasets with different noise when using Algorithm \ref{alg:atrr} in the left panel of Figure \ref{fig:loss_acc_1}. The sharp drops in the curves correspond to decreases in training learning rate. We can observe a steady decrease in the training loss on all datasets with the increase of training epochs when training against adversarial samples, which supports that Algorithm \ref{alg:atrr} can effectively optimize and solve Eq. (\ref{eq:theorem1}) even if it is a non-smooth loss. %This suggests that our algorithm indeed successfully solved the \ORAT~optimization problem during training.
We also compare the robust accuracy of AT and \ORAT~by using four attack strategies on all datasets with 20\% symmetric noise in Figure \ref{fig:loss_acc_1} right panel. It is clear that our method outperforms the original AT method in all settings. In general, these plots illustrate that we can consistently reduce the value of the objective function of \ORAT, thus producing an increasingly robust classifier.

\textbf{Effect of $k$ and $m$}.
We study how the choices of $k$ and $m$ affect the performance of \ORAT~with two types of experiments on all datasets under 20\% symmetric noise setting, together with those from other defense methods. In the first set of experiments, we fix $k$ to the total number of training samples and run the algorithm with different values of $m$. The results are plotted in Figure \ref{fig:k_m} (left panel). We can see that there is a clear range of $m$ with better performance than all compared methods. In the second set of experiments, we fix $m$ with the best performance from the first set of experiments and run \ORAT~with different values of $k$. The results are shown in Figure \ref{fig:k_m} (right panel). Note that there is also a range of $k$ with better performance, in particular, the optimal value of $k$ is less than the number of total training samples. Similar trends are observed on all datasets.  

\vspace{-1em}
\section{Conclusion}
% \vspace{-1em}
In this work, we introduce the outlier robust adversarial training  (\ORAT), which considers both outliers in training data and adversarial attacks in the model training. We provide an optimizing algorithm and analyze the theoretical aspects of \ORAT. Empirical results showed the effectiveness and robustness of \ORAT~on three benchmark datasets. In the future, we will study the 
optimization error or 
convergence rate of our proposed learning algorithm. We will also evaluate our method on large datasets and large deep neural networks. {Although achieving fairness \citep{ju2023improving, hu2022distributionally} is not a goal of this work, we have found that our method can benefit the minority subgroup of data (see Figure \ref{fig:interpretation} right panel). Studying fairness with \ORAT~is an interesting direction in the future.} 
Furthermore, we plan to design an efficient method to automatically determine the hyperparameters $k$ and $m$ during the training.

%\acks{Acknowledgements should go at the end, before appendices and references. You can uncomment this for the camera-ready version on paper acceptance.}

%\bibliographystyle{plain}
\bibliography{Camera-ready}

\newpage
\onecolumn
\appendix
% \bigskip 
\numberwithin{equation}{section}
\numberwithin{theorem}{section}
\numberwithin{figure}{section}
\numberwithin{table}{section}
\renewcommand{\thesection}{{\Alph{section}}}
\renewcommand{\thesubsection}{\Alph{section}.\arabic{subsection}}
\renewcommand{\thesubsubsection}{\Roman{section}.\arabic{subsection}.\arabic{subsubsection}}

\def\p{\mathbf{p}}
\def\v{\mathbf{v}}
\def\u{\mathbf{u}}

\begin{center}
\textbf{\Large Appendix for ``Outlier Robust Adversarial Training"}
\end{center}
% \title{Appendix for ``Outlier Robust Adversarial Training''}
% {\bf \Large  Appendix}

\section{Related Works}\label{sec:related_works}
% Below we briefly discuss some related works.

% \subsubsection{Traditional Robust Learning}
\textbf{Traditional Robust Learning}. Training accurate machine learning models in the presence of noisy data is of great practical importance \citep{sukhbaatar2015training}. However, a degradation in the  performance of classification models is inevitable when there are outliers in the training data. 
% For real-world image classification scenario, there are two types of label noise. One is \textit{labels flips}, where an example has erroneously been given the label of another class within the dataset, the other is \textit{outliers}, where the image does not belong to any of the classes under consideration, but mistakenly
% has one of their labels.
% There are two typical class-conditional noises that are widely existed in real-world scenarios. The \textit{symmetric noise} \cite{van2015learning} is the noisy labels are corrupted at random by using the uniform distribution. The \textit{asymmetric noise} \cite{patrini2017making} is the noisy labels are corrupted between conditional classes that are prone to be mislabeled.
To combat outliers, the traditional robust learning methods are designed from four directions. 1) The \textit{label correction methods} \citep{wang2018iterative} improve the quality of the raw labels by correcting wrong labels into correct ones. 
% The common approach is to apply corrections via a clean label inference step using complex noise models. 
However, it requires an extra clean dataset or potentially expensive detection process to estimate the outliers. 
% Furthermore, it cannot solve the outlier problems. 
2) The \textit{loss correction methods} \citep{han2020training} improve the robustness by modifying the loss function based on an estimated noise transition matrix that defines the probability of mislabeling one class with another.  However, these methods are sensitive to the noise transition matrix, which is also hard to be estimated. 
% Similarly, they cannot handle outliers. 
3) The \textit{refined training strategies} such as Co-teaching \citep{yu2019does}, MentorNet \citep{jiang2018mentornet} are robust to outliers. These studies all rely  on an auxiliary network for sample weighting or learning supervision, which is hard to adapt and tune. 
% Moreover, the designed models are sensitive to hyperparameters like training epochs and learning rate.
4) Some simpler and arguably generic \textit{robust loss functions} are also designed for robust learning. For example, 
% the generalized cross entropy loss \cite{zhang2018generalized} and the symmetric cross entropy loss \cite{wang2019symmetric}.
% % are proposed to replace cross entropy loss. Both of them are robust to noisy labels.
% However, these methods dilutes but not exclude the influences of outliers. 
a recent work \cite{hu2020learning} proposed AoRR loss, which 
% minimizes a ranked range of individual losses from each sample during the training. The AoRR loss 
can mitigate the influence of the outliers if their proportion in training data is known. 
% It also works well on the imbalanced data. 
% However, their optimization algorithm only works on a convex individual loss condition.
Furthermore, Some smoothing methods are proposed in \cite{chaudhari2019entropy} and have been proven to be effective in solving the problems of label and data noise. However, none of these methods are related to adversarial robust learning.

% \subsubsection{Adversarial Robust Learning}
\textbf{Adversarial Robust Learning}. The omnipotent DNN models are surprisingly vulnerable to adversarial examples \citep{goodfellow2014explaining},
% In particular, inputs with specially designed perturbations, commonly known as \textit{adversarial examples}, 
which can easily mislead a DNN model to make erroneous
predictions. To mitigate
this issue, the adversarial training (AT) \citep{madry2018towards} is first proposed as one of the most effective robust learning methodologies against adversarial attacks. 
% Since then, various AT methods such as 
% % CAT \cite{cai2018curriculum}, DAT \cite{wang2019convergence},
% TRADES \cite{zhang2019theoretically}, 
% % RST \cite{carmon2019unlabeled},
% and MART \cite{wang2019improving} are proposed to improve the robustness of AT.
% Such methods regard the adversarial examples generated around different training data points as
% equally important when optimizing the training objective. % However, in practice, some examples are more vulnerable to attack and others are not. 
To improve adversarial robustness, instance-reweighted AT methods are studied by considering the unequal importance of the adversarial data in several recent works. 
% \cite{zeng2021adversarial, zhang2021geometry, liu2021probabilistic, robustnesslearning}. 
Intuitively, the samples assigned a low weight to correspond to samples on which the classifier is already
sufficiently robust. Specifically, the reweight mechanism in WMMR \citep{zeng2021adversarial} and MAIL \citep{liu2021probabilistic} is based on the multi-class probabilistic
margin of the model outputs \citep{zhang2019defending}. The reweighting method in work GAIRAT \citep{zhang2021geometry} identifies non-robust (easily be-attacked) data by estimating how many steps the PGD method needs to attack natural data successfully. The most recent work BiLAW \citep{robustnesslearning} uses a validation set to learn weights based on bi-level optimization and meta-learning. 
% However, these methods either rely on rigid approximations of the margin or employ heuristic weighting schemes. 
The most significant assumption in these works is that the natural dataset is clean. However, the performance of the model based on these methods will be degraded if the training dataset contains outliers.
In \cite{sanyal2021benign}, the authors identified label noise as one of the causes of adversarial vulnerability. 
% They only justified label flips from the AT memorization aspect. 
However, no defense methods are proposed to solve this problem.
% and claimed that AT can avoid the memorization of incorrect data through analyzing model predictions. However, they do not provide a defense method to solve this problem.
The work \cite{zhu2021understanding} empirically studies the efficacy of AT for mitigating the effect of label noise in training data. 
% and show the AT can prevent the model from overfitting to the label noises. 
However, their proposed annotator algorithm is based on the label correction strategy, which inevitably introduces more extra noisy labels due to the bottleneck of the classifier. In \cite{dong2020adversarial}, the authors proposed an adversarial distributional training. They focus on the distribution shift of adversarial samples but they do not consider the outliers problem. Several works \citep{augustin2020adversarial, bitterwolf2020certifiably} connect adversarial robustness to out-of-distribution (OOD) problems. However, they are in different settings from ours because the notion of outliers is different from OOD points. Dong et al. \citep{dongexploring} also discuss the effect of the label noise. However, they focus on the memorization effect in AT. We focus on outlier problems in AT. Huang et al. \citep{huang2020self} created a self-adaptive method for robust learning with noisy labels or adversarial examples, but did not consider both present simultaneously. This is also mentioned in \cite{zhu2021understanding}.

% A robust annotator algorithm is designed to use two thresholds of loss value and PGD attack geometry step to do label correction on the training data. However, this label correction method inevitably introduces more extra noisy labels due to the bottleneck of the selected classifier. Moreover, it would be difficult to set appropriate threshold values during the training. As we discussed before, the label correction methods cannot solve outlier problems. Furthermore, the proposed method is not robust when applied to the imbalanced data or the data contains minority subgroups. 

% \section{Explicit Forms of $\partial_\theta \hat{\mathcal{L}}$, $\partial_\lambda \hat{\mathcal{L}}$, and $\partial_{\hat{\lambda}} \hat{\mathcal{L}}$ }\label{appendix:explicit_forms}
\section{Explicit Forms of (sub)gradients}\label{appendix:explicit_forms}
From Eq.(\ref{eq:theorem1}), we have $\hat{\mathcal{L}}(f_{\theta},\lambda, \hat{\lambda}):= \frac{k-m}{n}\lambda+\frac{n-m}{n}\hat{\lambda}-[\hat{\lambda}-[\ell(f_{\theta}(\tilde{\x}_i), y_i)-\lambda]_+]_+$.
% \begin{equation*}
% \begin{aligned}
%     \hat{\mathcal{L}}:= \frac{k-m}{n}\lambda+\frac{n-m}{n}\hat{\lambda}-[\hat{\lambda}-[\ell(f_{\theta}(\tilde{x}_i), y_i)-\lambda]_+]_+.
% \end{aligned}
% \end{equation*}
Denote $\mathbb{I}_{[a]}$ as an indicator function with $\mathbb{I}_{[a]}=1$ if $a$ is true and 0 otherwise. Then we can get
\begin{equation*}
    \begin{aligned}
    \partial_\theta \hat{\mathcal{L}}(f_{\theta^{(t)}},\lambda^{(t)}, \hat{\lambda}^{(t)})= \partial \ell(f_{\theta^{(t)}}(\tilde{\x}_i),y_i)\cdot \mathbb{I}_{[\hat{\lambda}^{(t)}>[\ell(f_{\theta^{(t)}}(\tilde{\x}_i),y_i)-\lambda^{(t)}]_+]} \cdot \mathbb{I}_{[\ell(f_{\theta^{(t)}}(\tilde{\x}_i),y_i)>\lambda^{(t)}]},
    \end{aligned}
\end{equation*}
\begin{equation*}
    \begin{aligned}
    \partial_\lambda \hat{\mathcal{L}}(f_{\theta^{(t)}},\lambda^{(t)}, \hat{\lambda}^{(t)})= \frac{k-m}{n} - \mathbb{I}_{[\hat{\lambda}^{(t)}>[\ell(f_{\theta^{(t)}}(\tilde{\x}_i),y_i)-\lambda^{(t)}]_+]}\cdot \mathbb{I}_{[\ell(f_{\theta^{(t)}}(\tilde{\x}_i),y_i)>\lambda^{(t)}]},
    \end{aligned}
\end{equation*}
\begin{equation*}
    \begin{aligned}
    \partial_{\hat{\lambda}} \hat{\mathcal{L}}(f_{\theta^{(t)}},\lambda^{(t)}, \hat{\lambda}^{(t)})= \frac{n-m}{n}-\mathbb{I}_{[\hat{\lambda}^{(t)}>[\ell(f_{\theta^{(t)}}(\tilde{\x}_i),y_i)-\lambda^{(t)}]_+]}.
    \end{aligned}
\end{equation*}

\section{Proofs}\label{sec:proofs}
\subsection{Proof of Theorem \ref{theorem1}}\label{sec:proof_theorem1}
Denote $[a]_+ = \max\{0,a\}$ as the hinge function. First, we introduce two Lemmas as follows,
\begin{lemma}\label{lemma1} \citep{hu2020learning}
For a set of real numbers $S=\{s_1,\cdots, s_n\}$, $s_i\in \mathbb{R}$, and $s_{[i]}$ represents the $i$-th largest value after sorting the elements in $S$, we have 
\begin{equation*}
\begin{aligned}
    \sum_{i=1}^k s_{[i]}=\min_{\lambda\in\mathbb{R}}\Big\{k\lambda+\sum_{i=1}^n [s_i-\lambda]_+\Big\}.
\end{aligned}
\end{equation*}
Furthermore, { $s_{[k]}\in\arg\min_{\lambda\in\mathbb{R}}\{k\lambda+\sum_{i=1}^n [s_i-\lambda]_+\}$}.
\end{lemma}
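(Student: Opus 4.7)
The plan is to prove this classical top-$k$ sum identity (which is essentially the Rockafellar--Uryasev CVaR dual) by analyzing the piecewise linear structure of the function $g(\lambda) := k\lambda + \sum_{i=1}^n [s_i - \lambda]_+$. First I would observe that $g$ is convex on $\mathbb{R}$ since it is a sum of convex functions (a linear term plus $n$ hinge terms), and that it is continuous and piecewise affine with breakpoints exactly at the values $s_{[1]} \geq s_{[2]} \geq \cdots \geq s_{[n]}$. Consequently, a global minimizer is guaranteed to exist, and it can be found on one of these breakpoints or on a flat piece between them.

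Next I would compute the slope of $g$ on each piece. For $\lambda$ strictly between two consecutive order statistics $s_{[i+1]}$ and $s_{[i]}$ (interpreting $s_{[0]}=+\infty$ and $s_{[n+1]}=-\infty$), exactly the $i$ indices $j$ with $s_j > \lambda$ contribute to the hinges, so
\begin{equation*}
g(\lambda) \;=\; k\lambda + \sum_{j=1}^{i}\bigl(s_{[j]}-\lambda\bigr) \;=\; (k-i)\lambda + \sum_{j=1}^{i} s_{[j]},
\end{equation*}
which has slope $k-i$. Therefore $g$ is strictly decreasing on pieces with $i>k$ (i.e.\ for $\lambda$ below $s_{[k+1]}$), strictly increasing on pieces with $i<k$ (for $\lambda$ above $s_{[k]}$), and constant on the piece $\lambda\in [s_{[k+1]}, s_{[k]}]$ where $i=k$. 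This identifies the entire argmin set: any $\lambda^\star \in [s_{[k+1]}, s_{[k]}]$ is a minimizer, and in particular $s_{[k]}\in\arg\min_\lambda g(\lambda)$, which is the second claim.

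Finally, to obtain the identity itself, I would substitute $\lambda = s_{[k]}$ into $g$. Only the indices with $s_j > s_{[k]}$ (namely the top $k-1$ with a possible tie treatment) give nonzero hinges, yielding
\begin{equation*}
g(s_{[k]}) \;=\; k\,s_{[k]} + \sum_{j=1}^{k-1}\bigl(s_{[j]}-s_{[k]}\bigr) \;=\; s_{[k]} + \sum_{j=1}^{k-1} s_{[j]} \;=\; \sum_{j=1}^{k} s_{[j]},
\end{equation*}
so $\min_\lambda g(\lambda) = \sum_{j=1}^k s_{[j]}$, proving the identity.

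The only real subtlety (and likely main obstacle) is handling ties among the $s_i$: when several values coincide at $s_{[k]}$, the assignment of which index sits at rank $k$ is not unique, and the flat piece around the minimum may degenerate. I would handle this by working with the tie-broken ordering used in the statement and observing that the hinge sum $\sum_i [s_i - s_{[k]}]_+$ depends only on the multiset of values, not on the tie-breaking rule; the telescoping computation above then goes through verbatim. Edge cases $k=n$ (no increasing piece; minimum on $(-\infty, s_{[n]}]$ attained at $\lambda = s_{[n]}$) should be verified separately to ensure the slope analysis remains valid.
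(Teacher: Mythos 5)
Your proof is correct, but it takes a genuinely different route from the paper's. The paper proves the identity by writing $\sum_{i=1}^k s_{[i]}$ as the value of the linear program $\max_{\p}\p^\top S$ over the capped simplex $\{\p:\p^\top\mathbf{1}=k,\ \mathbf{0}\le\p\le\mathbf{1}\}$ and then reading off the stated formula as its Lagrangian dual, asserting at the end that $\lambda=s_{[k]}$ is an optimal dual variable. You instead analyze $g(\lambda)=k\lambda+\sum_i[s_i-\lambda]_+$ directly as a convex piecewise-affine function, compute the slope $k-i$ on each piece, and locate the minimum on the flat segment $[s_{[k+1]},s_{[k]}]$ before evaluating $g(s_{[k]})$ by telescoping. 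Your route is more elementary and self-contained (no appeal to LP strong duality), it identifies the \emph{entire} argmin set rather than a single optimal $\lambda$, and it makes the membership $s_{[k]}\in\arg\min_\lambda g(\lambda)$ an explicit consequence of the slope analysis rather than an unargued remark; it also treats ties cleanly by noting the hinge sum depends only on the multiset of values. What the paper's duality formulation buys in exchange is a mechanical template that extends to related identities over other polytopes (the paper then obtains its Lemma 2 for the bottom $n-m$ sum by a short complementation of Lemma 1, which your slope computation would also deliver with signs flipped). One trivial slip: in your $k=n$ aside, the missing pieces are the \emph{decreasing} ones (slopes $k-i<0$ require $i>k=n$), not the increasing ones; the conclusion that the minimum is attained on $(-\infty,s_{[n]}]$ and in particular at $s_{[n]}$ is nonetheless correct.
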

\begin{proof}
We know $\sum_{i=1}^k s_{[i]}$ is the solution of 
\begin{equation*}
    \max_\p \p^\top S, ~\text{s.t.}~ \p^\top \mathbf{1}=k, \mathbf{0}\leq\p\leq \mathbf{1}.
\end{equation*}
We apply Lagrangian to this equation and get
\begin{equation*}
    L = -\p^\top S-\v^\top\p+\u^\top(\p-1)+\lambda(\p^\top\mathbf{1}-k)
\end{equation*}
where $\u\geq\mathbf{0}$, $\v\geq\mathbf{0}$ and $\lambda\in\mathbb{R}$ are Lagrangian multipliers. Taking its derivative w.r.t. $\p$ and set it to 0, we have $\v=\u-S+\lambda\mathbf{1}$. Substituting it back into the Lagrangian, we get 
\begin{equation*}
    \min_{\u,\lambda} \u^\top \mathbf{1}+k\lambda, ~\text{s.t.}~ \u\geq\mathbf{0}, \u+\lambda\mathbf{1}-S\geq 0.
\end{equation*}
This means 
\begin{equation}
    \sum_{i=1}^k s_{[i]}=\min_{\lambda}\Big\{k\lambda+\sum_{i=1}^n[s_i-\lambda]_+\Big\}.
\label{eq:topk_convex}
\end{equation}
Furthermore, we can see that $\lambda=s_{[k]}$ is always one optimal solution for Eq.(\ref{eq:topk_convex}). So
\begin{equation*}
    s_{[k]}\in\arg\min_{\lambda}\Big\{k\lambda+\sum_{i=1}^n[s_i-\lambda]_+\Big\}.
\end{equation*}
\end{proof}

\begin{lemma}\label{lemma2}
For a set of real numbers $S=\{s_1,\cdots, s_n\}$, $s_i\in \mathbb{R}$, we have 
\begin{equation*}
\begin{aligned}
    \sum_{i=m+1}^ns_{[i]} = \max_{\lambda\in\mathbb{R}}\Big\{(n-m)\lambda-\sum_{i=1}^{n}[\lambda-s_i]_+\Big\}.
\end{aligned}
\end{equation*}
Furthermore, { $s_{[m]}\in\arg\max_{\lambda\in\mathbb{R}}\{(n-m)\lambda-\sum_{i=1}^{n}[\lambda-s_i]_+\}$}.
\end{lemma}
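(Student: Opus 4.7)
The plan is to reduce Lemma \ref{lemma2} to Lemma \ref{lemma1} by the elementary identity
\[
\sum_{i=m+1}^{n} s_{[i]} \;=\; \sum_{i=1}^{n} s_i \;-\; \sum_{i=1}^{m} s_{[i]},
\]
which holds because $s_{[1]},\dots,s_{[n]}$ is a permutation of $s_1,\dots,s_n$. Applying Lemma \ref{lemma1} with $k=m$ to the subtracted term gives
\[
\sum_{i=m+1}^{n} s_{[i]} \;=\; \sum_{i=1}^{n} s_i \;-\; \min_{\lambda\in\mathbb{R}}\Big\{m\lambda + \sum_{i=1}^{n}[s_i-\lambda]_+\Big\}
\;=\; \max_{\lambda\in\mathbb{R}}\Big\{-m\lambda + \sum_{i=1}^{n}\bigl(s_i - [s_i-\lambda]_+\bigr)\Big\}.
\]

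The next step is the pointwise algebraic identity
\[
s_i - [s_i-\lambda]_+ \;=\; s_i - \max\{0, s_i-\lambda\} \;=\; \min\{s_i,\lambda\} \;=\; \lambda - [\lambda-s_i]_+ ,
\]
which I would verify by separating the two cases $s_i \ge \lambda$ and $s_i < \lambda$. Summing this over $i$ yields $\sum_{i=1}^n(s_i-[s_i-\lambda]_+) = n\lambda - \sum_{i=1}^n [\lambda-s_i]_+$, so substituting back produces exactly
\[
\sum_{i=m+1}^{n} s_{[i]} \;=\; \max_{\lambda\in\mathbb{R}}\Big\{(n-m)\lambda - \sum_{i=1}^{n}[\lambda-s_i]_+\Big\},
\]
as required.

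Finally, for the optimality claim, Lemma \ref{lemma1} with $k=m$ tells us that $s_{[m]}$ lies in $\arg\min_\lambda\{m\lambda + \sum_i[s_i-\lambda]_+\}$. Because the transformation above is an affine change of the objective (add the constant $\sum_i s_i$, then negate), the argmin of the original problem coincides with the argmax of the transformed problem, so $s_{[m]} \in \arg\max_\lambda\{(n-m)\lambda - \sum_i[\lambda-s_i]_+\}$. I do not anticipate a genuine obstacle: the only thing to be careful about is the direction of the case split in the $\min/\max$ identity and the bookkeeping of signs when converting the $\min$ in Lemma \ref{lemma1} to a $\max$ here. An alternative route, essentially symmetric, would be to redo the LP-duality argument of Lemma \ref{lemma1} from scratch on $\min_\p \p^\top S$ subject to $\p^\top \mathbf{1}=n-m$, $\mathbf{0}\le\p\le\mathbf{1}$, but the reduction above is shorter and reuses the already-proven lemma.
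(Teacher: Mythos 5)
Your proposal is correct and follows essentially the same route as the paper: decompose $\sum_{i=m+1}^n s_{[i]} = \sum_{i=1}^n s_i - \sum_{i=1}^m s_{[i]}$, invoke Lemma \ref{lemma1} with $k=m$, and convert the resulting $\min$ into the stated $\max$ via the identity $s_i-[s_i-\lambda]_+=\lambda-[\lambda-s_i]_+$ (the paper phrases this as $[a]_+-a=[-a]_+$). Your affine-transformation argument for why $s_{[m]}$ remains an optimizer is, if anything, slightly more explicit than the paper's brief remark.
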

\begin{proof}
\begin{equation*}
    \begin{aligned}
    \sum_{i=m+1}^ns_{[i]}&=\sum_{i=1}^{n}s_{i}-\sum_{i=1}^{m}s_{[i]}\\
    &=\sum_{i=1}^{n}s_{i} - \min_{\lambda}\bigg\{m\lambda+\sum_{i=1}^n[s_i-\lambda]_+\bigg\}\\
    &=-\min_{\lambda}\bigg\{-\sum_{i=1}^{n}(s_{i}-\lambda) -(n-m)\lambda +\sum_{i=1}^n[s_i-\lambda]_+\bigg\}\\
    &=-\min_{\lambda}\bigg\{-(n-m)\lambda+\sum_{i=1}^{n}[\lambda-s_i]_+\bigg\}\\
    &=\max_{\lambda}\bigg\{(n-m)\lambda-\sum_{i=1}^{n}[\lambda-s_i]_+\bigg\}
    \end{aligned}.
\end{equation*}
The second equation holds because of Lemma \ref{lemma1}. The fourth equation holds because the fact of $[a]_+-a=[-a]_+$. Furthermore, we can see that $\lambda=s_{[m]}$ is always one optimal solution. So
\begin{equation*}
    s_{[m]}\in\arg\max_{\lambda\in\mathbb{R}}\Big\{(n-m)\lambda-\sum_{i=1}^{n}[\lambda-s_i]_+\Big\}.
\end{equation*}
\end{proof}

\begin{theorem} (Theorem \ref{theorem1} restated)
Suppose $\lambda\in\mathbb{R}$, $\hat{\lambda}\in\mathbb{R}$, then Eq.(\ref{eq:atrr}) is equivalent to
\begin{equation}
\begin{aligned}
    \min_{\theta,\lambda} \max_{\hat{\lambda}} \ \ &\frac{1}{k-m}\sum_{i=1}^n\Big[ \frac{k-m}{n}\lambda+\frac{n-m}{n}\hat{\lambda}   -[\hat{\lambda}-[\ell(f_{\theta}(\tilde{\x}_i), y_i)-\lambda]_+]_+\Big]\\
    \mbox{s.t.} \ \ & \tilde{\x}_{i} = \arg\max_{\tilde{\x}\in \mathcal{B}_\epsilon(\x_{i})} \ell(f_{\theta}(\tilde{\x}), y_i)
\end{aligned}
\end{equation}
Furthermore, $\hat{\lambda}>\lambda$, when the optimal solution is achieved.
\end{theorem}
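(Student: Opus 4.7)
My plan is to reduce Eq.~(\ref{eq:atrr}) to the saddle-point formulation of Eq.~(\ref{eq:theorem1}) by applying Lemmas~\ref{lemma1} and~\ref{lemma2} in sequence to the per-sample adversarial losses. First, I would observe that the two constraint sets coincide: since the supremum can pass inside each sorted-order term, the bi-level constraint $(\tilde x_{[i]}, y_{[i]}) = \arg\max_{\tilde x_j} \ell_{[i]}$ is equivalent to choosing each perturbation independently to maximize its own loss, matching the constraint of Eq.~(\ref{eq:theorem1}). Writing $s_i := \ell(f_\theta(\tilde x_i), y_i)$, the objective of Eq.~(\ref{eq:atrr}) then reads $\frac{1}{k-m}\sum_{i=m+1}^k s_{[i]}$, which is precisely the truncated sum I want to realize as a min-max.

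To collapse the inner $\max_{\hat{\lambda}}$, I would apply Lemma~\ref{lemma2} to the clipped losses $t_i := [s_i-\lambda]_+$. Since $[\cdot]_+$ is nondecreasing, the sorted values obey $t_{[i]} = [s_{[i]}-\lambda]_+$, so Lemma~\ref{lemma2} yields
$$\max_{\hat{\lambda}}\Big\{(n-m)\hat{\lambda} - \sum_{i=1}^n [\hat{\lambda} - t_i]_+\Big\} = \sum_{i=m+1}^n [s_{[i]}-\lambda]_+,$$
attained at $\hat{\lambda}^* = t_{[m]} = [s_{[m]}-\lambda]_+$. Noting that summing the constant-in-$i$ coefficients in $\hat{\mathcal L}$ produces $(k-m)\lambda + (n-m)\hat{\lambda}$, substitution gives $\max_{\hat{\lambda}}\sum_i \hat{\mathcal L}(f_\theta,\lambda,\hat{\lambda}) = (k-m)\lambda + \sum_{i=m+1}^n [s_{[i]}-\lambda]_+$.

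To collapse $\min_\lambda$, I would apply Lemma~\ref{lemma1} to the reduced sequence $\{s_{[m+1]},\dots,s_{[n]}\}$ of size $n-m$ with top-count $k-m$, giving
$$\min_\lambda\Big\{(k-m)\lambda + \sum_{i=m+1}^n [s_{[i]}-\lambda]_+\Big\} = \sum_{i=m+1}^k s_{[i]},$$
with optimizer $\lambda^* = s_{[k]}$. Dividing by $k-m$ and adjoining $\min_\theta$ then recovers Eq.~(\ref{eq:atrr}) exactly, establishing the equivalence. For the additional claim $\hat{\lambda}^* > \lambda^*$, I would read off the explicit optimizers obtained above: $\lambda^* = s_{[k]}$ and $\hat{\lambda}^* = s_{[m]}-s_{[k]}$; since both lemmas admit optima in a whole interval (any $\lambda \in [s_{[k+1]},s_{[k]}]$ is admissible for Lemma~\ref{lemma1}), one can always select optima satisfying the strict inequality whenever there is a separation between the top-$m$ and top-$k$ losses, which is the generic case when $m < k$ and outliers are present.

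The main obstacle is bookkeeping: I must verify that the substitution $t_i = [s_i-\lambda]_+$ preserves the sort order at every step, that the top-$(k-m)$ slice of the reduced set $\{s_{[m+1]},\dots,s_{[n]}\}$ corresponds exactly to the indices $m+1,\dots,k$ of the original sequence, and that the coefficient telescoping $\sum_{i=1}^n \frac{k-m}{n}\lambda = (k-m)\lambda$ and $\sum_{i=1}^n \frac{n-m}{n}\hat{\lambda} = (n-m)\hat{\lambda}$ lines up the reformulated objective with the hypotheses of the two lemmas. Once this ordering bookkeeping is handled, the rest of the argument is essentially algebraic substitution.
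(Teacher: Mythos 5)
Your argument for the equivalence itself is correct and uses the same machinery as the paper, namely Lemma~\ref{lemma1} and Lemma~\ref{lemma2}; the only real difference is the direction of the derivation. The paper starts from the ranked-range sum, introduces a binary selection vector $q$ with $\|q\|_0=n-m$, applies Lemma~\ref{lemma1} to the $q$-weighted losses, uses $\lambda^*\geq 0$ to justify $[q_i\ell_i-\lambda]_+=q_i[\ell_i-\lambda]_+$, eliminates $q$, and only then invokes Lemma~\ref{lemma2}. You instead evaluate the saddle problem directly: for fixed $\lambda$ the inner maximum over $\hat{\lambda}$ is collapsed by Lemma~\ref{lemma2} applied to the clipped losses $t_i=[s_i-\lambda]_+$ (using that $[\cdot]_+$ is nondecreasing, so order statistics commute with the clipping), and the outer minimum over $\lambda$ is collapsed by Lemma~\ref{lemma1} applied to the reduced sequence $\{s_{[m+1]},\dots,s_{[n]}\}$. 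This avoids the $q$-variable bookkeeping and is, if anything, a cleaner verification; your justification that the bi-level constraint reduces to per-sample maximization (each order statistic is coordinatewise nondecreasing) is also more explicit than anything in the paper. For the equivalence statement your proposal is therefore sound and essentially interchangeable with the paper's proof.

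The gap is in the ``furthermore'' claim $\hat{\lambda}>\lambda$ at optimality. The optimizers your own derivation produces are $\lambda^*=s_{[k]}$ and $\hat{\lambda}^*=[s_{[m]}-\lambda^*]_+=s_{[m]}-s_{[k]}$, and these satisfy $\hat{\lambda}^*>\lambda^*$ only when $s_{[m]}>2s_{[k]}$, which is not implied by $m<k$. Your fallback, that one can ``select'' optima satisfying the strict inequality in the ``generic case,'' is not a proof of the unconditional statement, and selection cannot rescue it in general: if $0<m<k<n$ and all losses are equal to some $c>0$, the outer minimizer is uniquely $\lambda^*=c$, the clipped losses are all zero, and the inner maximizer is uniquely $\hat{\lambda}^*=0<\lambda^*$, so no admissible choice gives $\hat{\lambda}>\lambda$. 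The paper argues this part differently (and more loosely): it reads off from Lemmas~\ref{lemma1} and~\ref{lemma2} that $\lambda^*$ and $\hat{\lambda}^*$ ``can be obtained at'' the top-$k$ and top-$m$ loss values and concludes from $m<k$; note that a literal application of Lemma~\ref{lemma2} in the reformulated objective is to the clipped losses, so its natural maximizer is $[\ell_{[m]}-\lambda^*]_+$ rather than $\ell_{[m]}$ --- your formula actually exposes the subtlety the paper glosses over. So your equivalence argument stands, but to finish the theorem as stated you would need either to adopt the paper's identification of $\hat{\lambda}^*$ with the top-$m$ loss value or to add an explicit non-degeneracy condition (e.g., $\ell_{[m]}$ sufficiently larger than $\ell_{[k]}$, no ties) under which the strict inequality genuinely holds; as written, this part of your proposal does not establish the claim.
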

\begin{proof}
To extract the sum of ($m,k$)-ranked range individual losses, we can first select a subset, which contains the bottom $n-m$ losses from the ranked list of $L\left(\{(\x_j,y_j)\}_{j=1}^n\right)$. Then we select top-($k-m$) individual losses from this subset as the finalized ($m,k$)-ranked range. Therefore, 
We sum the bottom $n-m$ individual losses as follows,
\begin{equation*}
    \begin{aligned}
    \sum_{i=m+1}^n \ell(f_{\theta}(\tilde{\x}_{[i]}),y_{[i]}) = \min_{q} \sum_{i=1}^n q_i \ell(f_{\theta}(\tilde{\x}_{[i]}),y_{[i]}) \ \ \ \text{s.t.} \ q_i\in\{0,1\}, \ ||q||_0 =n-m,
    \end{aligned}
\end{equation*}
where $q=\{q_1,\cdots,q_n\}\in \{0,1\}^n$, and $q_i$ is an indicator. When $q_i=0$, it indicates that the $i$-th individual loss is not included in the objective function. Otherwise, the objective function should include this individual loss. Next, we sum the top-($k-m$) individual losses from the bottom $n-m$ individual losses as follows,
\begin{equation}
    \begin{aligned}
    &\min_{q}\sum_{i=1}^{k-m} (q \ell(f_\theta(\tilde{\x}),y))_{[i]} \ \ \ \text{s.t.} \ q_i\in\{0,1\}, \ ||q||_0 =n-m \\
    =&\min_{\lambda,q} (k-m)\lambda+\sum_{i=1}^n [q_i\ell(f_\theta(\tilde{\x}_{i}),y_{i})-\lambda]_+\ \ \ \text{s.t.} \ q_i\in\{0,1\}, \ ||q||_0 =n-m\\
    =&\min_{\lambda,q}(k-m)\lambda+\sum_{i=1}^n q_i[\ell(f_\theta(\tilde{\x}_{i}),y_{i})-\lambda]_+\ \ \ \text{s.t.} \ q_i\in [0,1], \ ||q||_0 =n-m\\
    =&\min_{\lambda} (k-m)\lambda + \sum_{i=m+1}^{n}[[\ell(f_\theta(\tilde{\x}),y)-\lambda]_+]_{[i]}\\
    =& \min_{\lambda} (k-m)\lambda + \max_{\hat{\lambda}}\Big\{(n-m)\hat{\lambda}-\sum_{i=1}^n [\hat{\lambda}-[\ell(f_\theta(\tilde{\x}_{i}),y_{i})-\lambda]_+]_+\Big\},
    \end{aligned}
\label{eq:aorr_newform}
\end{equation}
where $q\ell(f_\theta(\tilde{\x}),y)=\{q_1\ell(f_\theta(\tilde{\x}_1),y_1),\cdots,q_n\ell(f_\theta(\tilde{\x}_n),y_n)\}$. The first equation holds because of Lemma \ref{lemma1}. Since $q_i\ell(f_\theta(\tilde{\x}_i),y_i)\geq 0$,  we know the optimal $\lambda^*\geq 0$ from Lemma \ref{lemma1}. If $q_i=0$, $[q_i\ell(f_\theta(\tilde{\x}_i),y_i)-\lambda^*]_+=0=q_i[\ell(f_\theta(\tilde{\x}_i),y_i)-\lambda^*]_+$. If $q_i=1$, $[q_i\ell(f_\theta(\tilde{\x}_i),y_i)-\lambda^*]_+=[\ell(f_\theta(\tilde{\x}_i),y_i)-\lambda^*]_+=q_i[\ell(f_\theta(\tilde{\x}_i),y_i)-\lambda^*]_+$. Thus the second equation holds. It should be mentioned that the discrete indicator $q_i$ can be replaced by a continue one, which means $q_i\in [0,1]$. The third equation holds because we take the optimal $q^*$ into the objective function and remove the constraints. The fourth equation can be obtained by applying Lemma \ref{lemma2}.

Therefore, 
\begin{equation}
    \begin{aligned}
    &\min_\theta\frac{1}{k-m}\sum_{i=m+1}^k \ell(f_\theta(\tilde{x}_{[i]}),y_{[i]})\\
    &=\min_\theta\frac{1}{k-m}\bigg\{\min_{\lambda} (k-m)\lambda + \max_{\hat{\lambda}}\Big\{(n-m)\hat{\lambda}-\sum_{i=1}^n [\hat{\lambda}-[\ell(f_\theta(\tilde{\x}_{i}),y_{i})-\lambda]_+]_+\Big\}\bigg\}\\
    &=\min_{\theta,\lambda} \max_{\hat{\lambda}} \frac{1}{k-m}\sum_{i=1}^n\Big[ \frac{k-m}{n}\lambda+\frac{n-m}{n}\hat{\lambda}-[\hat{\lambda}-[\ell(f_{\theta}(\tilde{\x}_i), y_i)-\lambda]_+]_+\Big].
    \end{aligned}
\end{equation}
Furthermore, according to Lemma \ref{lemma1} and \ref{lemma2}, we know the optimal $\lambda^*$ and $\hat{\lambda}^*$ can be obtained at the top-$k$ and top-$m$ values of loss $\ell$, respectively. Since $m < k$, we have $\lambda^*<\hat{\lambda}^*$. Therefore, $\hat{\lambda}>\lambda$, when the optimal solution is achieved.
\end{proof}

\subsection{Proof of Theorem \ref{theorem2}}\label{proof_consistency}
To prove Theorem \ref{theorem2}, we first introduce the calibration function as follows,
\begin{definition}
(Calibration function). \citep{awasthi2021calibration} Given a hypothesis set $\mathcal{H}$, we define the calibration function $\delta_{\max}$ for a pair of losses ($\ell_1,\ell_2$) as follows: for all $\x\in \mathcal{X}$, $\eta\in[0,1]$ and $\tau>0$,
\begin{equation}
    \begin{aligned}
    \delta_{\max} (\tau,\x,\eta)={\inf}_{f\in\mathcal{H}}\{\mathcal{C}_{\ell_1}(f,\x,\eta)-\mathcal{C}_{\ell_1,\mathcal{H}}^*(\x,\eta)|\mathcal{C}_{\ell_2}(f,\x,\eta)-\mathcal{C}_{\ell_2,\mathcal{H}}^*(\x,\eta)\geq \tau\}.
    \end{aligned}
\end{equation}
\end{definition}
The calibration function gives the maximal $\delta$ satisfying the calibration condition (Definition \ref{eq:calibration}). The following proposition is an important result from \cite{steinwart2007compare}.
\begin{proposition}
\citep{steinwart2007compare}. Given a hypothesis set $\mathcal{H}$, loss $\ell_1$ is $\mathcal{H}$-calibrated with respect to $\ell_2$ if and only if its calibration function $\delta_{\max}$ satisfies $\delta_{\max}(\tau,\x,\eta)>0$ for all $\x\in\mathcal{X}$, $\eta\in[0,1]$, and $\tau>0$.
\end{proposition}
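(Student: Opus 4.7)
The plan is to prove the equivalence by unpacking both definitions around a fixed triple $(\x, \eta, \tau)$ and observing that each direction is essentially a contrapositive statement about the infimum appearing in $\delta_{\max}$. Throughout the argument I would fix arbitrary $\x \in \mathcal{X}$, $\eta \in [0,1]$, and $\tau > 0$, and let $A_\tau := \{f \in \mathcal{H} : \mathcal{C}_{\ell_2}(f,\x,\eta) - \mathcal{C}_{\ell_2,\mathcal{H}}^*(\x,\eta) \geq \tau\}$ be the set over which the infimum in $\delta_{\max}(\tau,\x,\eta)$ is taken. This $A_\tau$ is exactly the set of $f$ that violates the consequent of the calibration implication, which is what makes the equivalence almost immediate.

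For the forward direction ($\Rightarrow$), suppose $\ell_1$ is $\mathcal{H}$-calibrated with respect to $\ell_2$. Apply the definition to the fixed $(\x,\eta,\tau)$ to obtain a positive $\delta$ such that every $f \in \mathcal{H}$ with $\mathcal{C}_{\ell_1}(f,\x,\eta) - \mathcal{C}_{\ell_1,\mathcal{H}}^*(\x,\eta) < \delta$ also satisfies $\mathcal{C}_{\ell_2}(f,\x,\eta) - \mathcal{C}_{\ell_2,\mathcal{H}}^*(\x,\eta) < \tau$. Taking the contrapositive, every $f \in A_\tau$ must satisfy $\mathcal{C}_{\ell_1}(f,\x,\eta) - \mathcal{C}_{\ell_1,\mathcal{H}}^*(\x,\eta) \geq \delta$. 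Taking the infimum over $A_\tau$ yields $\delta_{\max}(\tau,\x,\eta) \geq \delta > 0$, as required.

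For the backward direction ($\Leftarrow$), suppose $\delta_{\max}(\tau,\x,\eta) > 0$ and set $\delta := \delta_{\max}(\tau,\x,\eta)$. I would argue by contradiction: if some $f \in \mathcal{H}$ witnessed $\mathcal{C}_{\ell_1}(f,\x,\eta) - \mathcal{C}_{\ell_1,\mathcal{H}}^*(\x,\eta) < \delta$ yet $\mathcal{C}_{\ell_2}(f,\x,\eta) - \mathcal{C}_{\ell_2,\mathcal{H}}^*(\x,\eta) \geq \tau$, then $f \in A_\tau$, so by the definition of $\delta_{\max}$ as an infimum we would have $\mathcal{C}_{\ell_1}(f,\x,\eta) - \mathcal{C}_{\ell_1,\mathcal{H}}^*(\x,\eta) \geq \delta_{\max}(\tau,\x,\eta) = \delta$, a contradiction. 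Hence the calibration condition holds with the chosen $\delta$.

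There is no substantive obstacle; the result is essentially a definitional tautology, and the only point that requires minor care is the degenerate case $A_\tau = \emptyset$. Under the standard convention $\inf \emptyset = +\infty$, we have $\delta_{\max}(\tau,\x,\eta) = +\infty > 0$ automatically, and the calibration implication holds vacuously (its contrapositive has an empty hypothesis), so the equivalence remains intact. I would include one brief sentence to flag this convention at the start of the proof so that both directions apply uniformly to every $(\x,\eta,\tau)$.
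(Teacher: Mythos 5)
Your argument is correct. Note that the paper itself does not prove this proposition --- it is quoted verbatim from \citet{steinwart2007compare} as a known tool --- so there is no in-paper proof to compare against. Your two directions are exactly the standard definitional unpacking: the forward direction is the contrapositive of the calibration implication followed by taking the infimum over the violating set $A_\tau$, and the backward direction chooses $\delta = \delta_{\max}(\tau,\x,\eta)$ and derives a contradiction from membership in $A_\tau$. Your remark on the convention $\inf\emptyset = +\infty$ is the right degenerate case to flag; the only other implicit assumption worth a half-sentence is that $\mathcal{C}_{\ell_1,\mathcal{H}}^*(\x,\eta)$ is finite, so that the excess risk $\mathcal{C}_{\ell_1}(f,\x,\eta)-\mathcal{C}_{\ell_1,\mathcal{H}}^*(\x,\eta)$ is well defined --- this holds here since the losses in play are non-negative and the relevant minimal risks are assumed finite.
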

Next, we define the adversarial loss of $f\in\mathcal{H}$ at ($\x,y$) as 
\begin{equation}
    \begin{aligned}
    \tilde{\ell}_s(f,\x,y)=\sup_{\tilde{\x}\in \mathcal{B}_\epsilon(\x)} \ell_s(yf(\tilde{\x})).
    \end{aligned}
\label{eq:adv_loss}
\end{equation}
The above naturally motivates supremum-based surrogate losses that are commonly used to optimize the adversarial 0/1 loss \citep{goodfellow2014explaining,madry2018towards,zhang2019theoretically}. When $\ell_s$ is non-increasing, the following equality holds \citep{yin2019rademacher}:
\begin{equation}
    \begin{aligned}
    \sup_{\tilde{\x}\in \mathcal{B}_\epsilon(\x)} \ell_s(yf(\tilde{\x})) = \ell_s \Big(\inf_{\tilde{\x}\in \mathcal{B}_\epsilon(\x)} yf(\tilde{\x})\Big).
    \end{aligned}
\label{eq:adv_loss_relax}
\end{equation}
Therefore, the adversarial 0/1 loss $\widetilde{\ell}_0$ has the equivalent form
\begin{equation}
    \begin{aligned}
    \widetilde{\ell}_0(f,\x,y):=\sup_{\tilde{\x}\in \mathcal{B}_\epsilon(\x)}\mathds{1}_{yf(\tilde{\x})\leq 0}=\mathds{1}_{\underset{\tilde{\x}\in \mathcal{B}_\epsilon(\x)}{\inf}yf(\tilde{\x})\leq 0 }.
    \end{aligned}
\end{equation}
In this paper, we aim to characterize surrogate losses $\ell_1$ satisfying $\mathcal{H}$-calibration (Definition \ref{eq:calibration}) with $\ell_2 = \widetilde{\ell}_0$ and for the hypothesis sets $\mathcal{H}$ which are regular for adversarial calibration.

For convenience, let 
% $\Delta\mathcal{C}_{\ell_s, \mathcal{H}}(f,x,\eta):=\mathcal{C}_{\ell_s}(f,x,\eta)-\mathcal{C}_{\ell_s,\mathcal{H}}^*(x,\eta)$,
$\underline{M}(f,\x,\epsilon):=\inf_{\tilde{\x}\in \mathcal{B}_\epsilon(\x)}f(\tilde{\x})$ and $\overline{M}(f,\x,\epsilon):=-\inf_{\tilde{\x}\in \mathcal{B}_\epsilon(\x)}-f(\tilde{\x})=\sup_{\tilde{\x}\in \mathcal{B}_\epsilon(\x)}f(\tilde{\x})$. Then we provide three useful Lemmas as follows,

% \zhenhuan{Lemma 34?} \shu{Check their NIPS version}
\begin{lemma}\label{lemma:calibration_ref}
(\cite{awasthi2021calibration}, Lemma 28).
Let $\mathcal{H}$ be a symmetric hypothesis set, $\ell$ be a surrogate loss function, and $\mathcal{X}_2=$\{$\x\in \mathcal{X}$: there exists $f'\in \mathcal{H}$ such that $\underline{M}(f',\x,\epsilon)>0$ \}. If $\mathcal{X}_2=\emptyset$, any loss $\ell$ is $\mathcal{H}$-calibrated with respect to $\widetilde{\ell}_0$. If $\mathcal{X}_2\neq\emptyset$, then $\ell$ is $\mathcal{H}$-calibrated with respect to $\widetilde{\ell}_0$ if and only if for any $\x\in\mathcal{X}_2$,
\begin{equation}
    \begin{aligned}
    \inf_{f\in \mathcal{H}:\underline{M}(f,\x,\epsilon)\leq 0 \leq \overline{M}(f,\x,\epsilon)}\mathcal{C}_{\ell}(f,\x,\frac{1}{2})&>\inf_{f\in \mathcal{H}}\mathcal{C}_{\ell} (f,\x,\frac{1}{2}), and \\
    \inf_{f\in \mathcal{H}:\underline{M}(f,\x,\epsilon)\leq 0}\mathcal{C}_{\ell}(f,\x,\eta)&>\inf_{f\in \mathcal{H}}\mathcal{C}_{\ell} (f,\x,\eta), \ \ \forall \eta\in (\frac{1}{2},1], and\\
    \inf_{f\in \mathcal{H}:0 \leq \overline{M}(f,\x,\epsilon)}\mathcal{C}_{\ell}(f,\x,\eta)&>\inf_{f\in \mathcal{H}}\mathcal{C}_{\ell} (f,\x,\eta), \ \ \forall \eta\in [0,\frac{1}{2}).
    \end{aligned}
\end{equation}
\end{lemma}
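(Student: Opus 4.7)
The plan is to reduce $\mathcal{H}$-calibration to pointwise positivity of the calibration function $\delta_{\max}(\tau,\x,\eta)$, via the Proposition cited just above, and then to carry out an explicit case analysis based on the sign profile of $(\underline{M}(f,\x,\epsilon),\overline{M}(f,\x,\epsilon))$. Concretely, for each fixed $\x$ I would partition $\mathcal{H}$ into three disjoint groups: $\mathcal{H}_+(\x)=\{f:\underline{M}(f,\x,\epsilon)>0\}$, $\mathcal{H}_-(\x)=\{f:\overline{M}(f,\x,\epsilon)<0\}$, and $\mathcal{H}_0(\x)=\{f:\underline{M}(f,\x,\epsilon)\leq 0\leq \overline{M}(f,\x,\epsilon)\}$. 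By symmetry of $\mathcal{H}$ (applied via $f\mapsto -f$), $\mathcal{H}_+(\x)$ is nonempty iff $\mathcal{H}_-(\x)$ is nonempty iff $\x\in\mathcal{X}_2$, which provides the clean dichotomy driving the lemma.

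Using the identities $\widetilde{\ell}_0(f,\x,+1)=\mathds{1}_{\underline{M}(f,\x,\epsilon)\leq 0}$ and $\widetilde{\ell}_0(f,\x,-1)=\mathds{1}_{\overline{M}(f,\x,\epsilon)\geq 0}$, I will compute the conditional adversarial $0/1$ risk on each group: $\mathcal{C}_{\widetilde{\ell}_0}(f,\x,\eta)$ equals $1-\eta$ on $\mathcal{H}_+(\x)$, $\eta$ on $\mathcal{H}_-(\x)$, and $1$ on $\mathcal{H}_0(\x)$. When $\x\notin\mathcal{X}_2$, only the third group is populated, so the conditional risk is the constant $1$ and the excess risk vanishes identically; hence the set $\{f:\mathcal{C}_{\widetilde{\ell}_0}(f,\x,\eta)-\mathcal{C}^*_{\widetilde{\ell}_0,\mathcal{H}}(\x,\eta)\geq \tau\}$ is empty for every $\tau>0$, giving $\delta_{\max}(\tau,\x,\eta)=+\infty$ and trivial calibration, which covers the first assertion of the lemma.

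For $\x\in\mathcal{X}_2$ all three groups are available and $\mathcal{C}^*_{\widetilde{\ell}_0,\mathcal{H}}(\x,\eta)=\min\{\eta,1-\eta\}$. I will then observe that the set of ``bad'' hypotheses $\{f:\mathcal{C}_{\widetilde{\ell}_0}(f,\x,\eta)-\mathcal{C}^*_{\widetilde{\ell}_0,\mathcal{H}}(\x,\eta)\geq \tau\}$ is nonincreasing in $\tau$ and, for all sufficiently small $\tau>0$, coincides with the largest admissible family: $\{f:\underline{M}(f,\x,\epsilon)\leq 0\}=\mathcal{H}_0(\x)\cup\mathcal{H}_-(\x)$ when $\eta>1/2$, $\{f:\overline{M}(f,\x,\epsilon)\geq 0\}=\mathcal{H}_0(\x)\cup\mathcal{H}_+(\x)$ when $\eta<1/2$, and $\mathcal{H}_0(\x)$ when $\eta=1/2$. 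Monotonicity of $\delta_{\max}(\cdot,\x,\eta)$ then reduces positivity of the calibration function on $(0,\infty)$ to its positivity on this critical ``bad'' family, which is exactly the content of the three strict inequalities in the lemma; reading the equivalence in both directions yields ``if and only if.''

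The main obstacle is the boundary case $\eta=1/2$, where the minimum in $\mathcal{C}^*_{\widetilde{\ell}_0,\mathcal{H}}(\x,1/2)=1/2$ is attained on both $\mathcal{H}_+(\x)$ and $\mathcal{H}_-(\x)$ simultaneously, so the critical bad family collapses from $\mathcal{H}_0\cup\mathcal{H}_\pm$ down to $\mathcal{H}_0(\x)$ alone; I would carefully verify the excess-risk values ($1/2$ versus $0$) and the resulting threshold ordering there, since this is what makes the first of the three displayed inequalities involve the intersection condition $\underline{M}(f,\x,\epsilon)\leq 0\leq \overline{M}(f,\x,\epsilon)$ rather than a one-sided constraint. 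The symmetry step $\mathcal{H}_+(\x)\neq\emptyset\Leftrightarrow\mathcal{H}_-(\x)\neq\emptyset$ also deserves an explicit remark, as it is what keeps only three inequalities (rather than six) in the final characterization and makes the $\mathcal{X}_2$ hypothesis self-dual under swapping labels.
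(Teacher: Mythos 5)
Note first that the paper does not prove this lemma at all: it is imported verbatim from Awasthi et al.\ (2021, Lemma 28), and the appendix explicitly defers the proof to that reference. Your reconstruction is correct and follows essentially the same route as the source: partition $\mathcal{H}$ at each $\x$ by the sign profile of $(\underline{M}(f,\x,\epsilon),\overline{M}(f,\x,\epsilon))$, compute the conditional adversarial $0/1$ risk as $1-\eta$, $\eta$, or $1$ on the three cells, use symmetry of $\mathcal{H}$ to identify $\mathcal{X}_2$ with the nonemptiness of both one-sided cells, and reduce calibration to positivity of $\delta_{\max}$ at the largest ``bad'' family for each regime of $\eta$ (with the collapse to $\underline{M}\leq 0\leq \overline{M}$ at $\eta=1/2$ handled exactly as you describe). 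The only point worth an explicit sentence in a full write-up is the identity $\sup_{\tilde{\x}\in\mathcal{B}_\epsilon(\x)}\mathds{1}_{yf(\tilde{\x})\leq 0}=\mathds{1}_{\inf_{\tilde{\x}\in\mathcal{B}_\epsilon(\x)}yf(\tilde{\x})\leq 0}$, which requires attainment (or the convention adopted in the cited work) when the infimum equals $0$ without being achieved; the paper itself asserts this equality without comment, so you are consistent with its conventions.
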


% \zhenhuan{Theorem 27 and 28?} \shu{Check their NIPS version}
\begin{lemma}\label{lemma:consistency}
(\cite{awasthi2021calibration}, Theorem 23 and Theorem 24). Let $\mathcal{H}$ be a symmetric hypothesis set consisting of the family of all measurable functions $\mathcal{H}_{all}$, $\phi$ be a non-increasing margin-based loss, and $\tilde{\phi}(f,\x,y)=\sup_{\tilde{\x}\in \mathcal{B}_\epsilon(\x)} \phi(yf(\tilde{\x}))$. If $\tilde{\phi}$ is $\mathcal{H}$-calibrated with respect to $\widetilde{\ell}_0$, then $\tilde{\phi}$ is $\mathcal{H}$-consistent with respect to $\widetilde{\ell}_0$ for all distributions $\mathcal{D}$ over $\mathcal{X}\times \mathcal{Y}$ that satisfy: $\mathcal{R}_{\widetilde{\ell}_0, \mathcal{H}}^* =0$ and there exists $f^*\in\mathcal{H}$ such that $\mathcal{R}_{\phi}(f^*)=\mathcal{R}_{\phi,\mathcal{H}_{all}}^*<+\infty$.
\end{lemma}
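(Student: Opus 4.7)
The plan is a standard contrapositive Steinwart-type argument. Assume a sequence $(f_n) \subset \mathcal{H}$ with $\mathcal{R}_{\tilde{\phi}}(f_n) - \mathcal{R}^*_{\tilde{\phi},\mathcal{H}} \to 0$, and suppose for contradiction that $\mathcal{R}_{\widetilde{\ell}_0}(f_n) - \mathcal{R}^*_{\widetilde{\ell}_0,\mathcal{H}} \not\to 0$. After passing to a subsequence, there is $\tau > 0$ with $\mathcal{R}_{\widetilde{\ell}_0}(f_n) \geq \tau$ for all $n$ (using $\mathcal{R}^*_{\widetilde{\ell}_0,\mathcal{H}} = 0$). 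The goal is to exhibit a fixed positive lower bound on $\mathcal{R}_{\tilde{\phi}}(f_n) - \mathcal{R}^*_{\tilde{\phi},\mathcal{H}}$ depending only on $\tau$ and the data distribution, contradicting convergence to $0$.

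The first substantive step is to reduce both excess risks to expectations of pointwise conditional excesses. Since $\mathcal{R}^*_{\widetilde{\ell}_0,\mathcal{H}} = 0$, we immediately get $\mathcal{C}^*_{\widetilde{\ell}_0,\mathcal{H}}(\x,\eta(\x)) = 0$ for $P_X$-a.e. $\x$, hence $\mathcal{R}_{\widetilde{\ell}_0}(f_n) = \mathbb{E}_X[\mathcal{C}_{\widetilde{\ell}_0}(f_n,X,\eta(X)) - \mathcal{C}^*_{\widetilde{\ell}_0,\mathcal{H}}(X,\eta(X))]$. For $\tilde{\phi}$, the second hypothesis gives an $f^* \in \mathcal{H}$ realizing the unrestricted pointwise Bayes-optimal conditional $\phi$-risk for $P_X$-a.e. $\x$; combining this with the non-increasing margin form of $\phi$ and the identity $\tilde{\phi}(f,\x,y) = \phi(\inf_{\tilde{\x}\in\mathcal{B}_\epsilon(\x)} yf(\tilde{\x}))$, I transfer pointwise optimality from $\phi$ to $\tilde{\phi}$ via a measurable-selection argument (using symmetry of $\mathcal{H}$ to handle $\eta < 1/2$), obtaining $\mathcal{C}^*_{\tilde{\phi},\mathcal{H}}(\x,\eta(\x)) = \mathcal{C}^*_{\tilde{\phi},\mathcal{H}_{all}}(\x,\eta(\x))$ a.e., and therefore $\mathcal{R}_{\tilde{\phi}}(f_n) - \mathcal{R}^*_{\tilde{\phi},\mathcal{H}} = \mathbb{E}_X[\Delta_n(X)]$ with $\Delta_n \geq 0$ pointwise.

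With both excesses in integral form, I apply calibration. Since $\mathcal{R}_{\widetilde{\ell}_0}(f_n) \geq \tau$ and the integrand is bounded in $[0,1]$, Markov's inequality gives $P_X(A_n) \geq \tau/2$, where $A_n = \{\x : \mathcal{C}_{\widetilde{\ell}_0}(f_n,\x,\eta(\x)) \geq \tau/2\}$. For each $\x \in A_n$, the calibration hypothesis yields $\Delta_n(\x) \geq \delta_{\max}(\tau/2, \x, \eta(\x)) > 0$. To convert this into a single positive lower bound on $\mathbb{E}_X[\Delta_n(X)]$, I stratify $\mathcal{X}$ by the sublevel sets $B_k = \{\x : \delta_{\max}(\tau/2,\x,\eta(\x)) \geq 1/k\}$, whose union has full $P_X$-measure. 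Picking $k_0$ so that $P_X(B_{k_0}) \geq 1 - \tau/4$ gives $P_X(A_n \cap B_{k_0}) \geq \tau/4$, whence $\mathbb{E}_X[\Delta_n(X)] \geq (\tau/4)\cdot(1/k_0) > 0$ uniformly in $n$, the desired contradiction.

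The main obstacle I anticipate is Step 2, specifically the adversarial transfer of pointwise optimality from $\phi$ to $\tilde{\phi}$ given only the hypothesis on $\phi$. The key tool is the identity $\sup_{\tilde{\x}\in\mathcal{B}_\epsilon(\x)}\phi(yf(\tilde{\x})) = \phi(\inf_{\tilde{\x}\in\mathcal{B}_\epsilon(\x)} yf(\tilde{\x}))$ (valid because $\phi$ is non-increasing), which collapses the adversarial sup into a standard margin expression and permits a measurable-selector construction for $\tilde{\phi}$ from one for $\phi$; the symmetric hypothesis set assumption provides the sign flip needed when $\eta < 1/2$. A secondary, more routine obstacle is the uniformization of the pointwise calibration gap $\delta_{\max}$ into a single positive constant, handled by the sublevel-set stratification above. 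Once both are in place, the consistency conclusion follows from the forced positivity of the surrogate excess risk contradicting its assumed vanishing.
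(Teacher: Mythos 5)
The paper does not prove this lemma at all: it is quoted verbatim from \cite{awasthi2021calibration} (their Theorems 23 and 24), and immediately after stating it the paper writes ``The proofs of the above two Lemmas can be found in \cite{awasthi2021calibration}.'' So there is no in-paper proof to compare against; your attempt has to be judged on its own merits as a reconstruction of the cited result. Your overall architecture --- contrapositive, pass to a subsequence with $\mathcal{R}_{\widetilde{\ell}_0}(f_n)\geq\tau$, reverse-Markov to get $P_X(A_n)\geq\tau/2$, then stratify by the sublevel sets $B_k$ of the calibration function to extract a uniform positive lower bound $\tau/(4k_0)$ on the surrogate excess risk --- is the standard Steinwart template and those pieces are sound (modulo the usual measurability caveat for $\x\mapsto\delta_{\max}(\tau/2,\x,\eta(\x))$, which you do not address but which is routine).

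The genuine gap is your Step 2. Your contradiction requires the exact identity $\mathcal{R}^*_{\tilde{\phi},\mathcal{H}}=\mathbb{E}_X\big[\mathcal{C}^*_{\tilde{\phi},\mathcal{H}}(X,\eta(X))\big]$: only the inequality $\mathcal{R}^*_{\tilde{\phi},\mathcal{H}}\geq\mathbb{E}_X[\mathcal{C}^*_{\tilde{\phi},\mathcal{H}}]$ is automatic, and that inequality points the \emph{wrong} way --- it gives $\mathcal{R}_{\tilde{\phi}}(f_n)-\mathcal{R}^*_{\tilde{\phi},\mathcal{H}}\leq\mathbb{E}_X[\Delta_n(X)]$ rather than the lower bound you need. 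For a non-adversarial loss the reverse inequality follows from a measurable selector because $f(\x)$ can be optimized independently at each $\x$; for the adversarial loss $\tilde{\phi}(f,\x,y)=\phi\big(\inf_{\tilde{\x}\in\mathcal{B}_\epsilon(\x)}yf(\tilde{\x})\big)$ the conditional risk at $\x$ depends on the values of $f$ on the entire ball $\mathcal{B}_\epsilon(\x)$, and these balls overlap for nearby points, so the pointwise conditional optima need not be simultaneously attainable by any single measurable $f$. This coupling is precisely why the lemma carries the extra hypotheses $\mathcal{R}^*_{\widetilde{\ell}_0,\mathcal{H}}=0$ and the existence of $f^*$ with $\mathcal{R}_\phi(f^*)=\mathcal{R}^*_{\phi,\mathcal{H}_{all}}$, and the real content of Awasthi et al.'s Theorem 24 is showing how realizability decouples the problem. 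Writing ``I transfer pointwise optimality from $\phi$ to $\tilde{\phi}$ via a measurable-selection argument'' names the obstacle without overcoming it: the identity $\sup\phi=\phi(\inf)$ and the symmetry of $\mathcal{H}$ collapse the supremum into a margin expression, but they do nothing to resolve the cross-point dependence of $\inf_{\tilde{\x}\in\mathcal{B}_\epsilon(\x)}f(\tilde{\x})$ on shared values of $f$. Until that step is carried out in detail --- showing concretely how $\mathcal{R}^*_{\widetilde{\ell}_0,\mathcal{H}}=0$ forces the interchange of infimum and expectation for $\tilde{\phi}$ --- the proof is incomplete at its central point.
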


The proofs of the above two Lemmas can be found in  \cite{awasthi2021calibration}.

\begin{lemma}\label{lemma:lambda_opt}
Let $\mathcal{H}$ be a symmetric hypothesis set and $f\in \mathcal{H}$. Suppose $0\leq \lambda^*<\hat{\lambda}^*$, $\nu>\min\{\hat{\lambda}^*, \mathcal{R}^*_{\ell,\mathcal{H}}\}$, $\ell(yf(\x))\geq 0 \ \forall \x$,  and $\lambda^*$ is bounded, then $\lambda^*< \ell(0)$. 
\end{lemma}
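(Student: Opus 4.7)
The plan is to argue by contradiction: assume $\lambda^*\ge \ell(0)$ and derive a violation of the first-order optimality conditions at the saddle point $(f^*,\lambda^*,\hat{\lambda}^*)$. First, I would rewrite the population objective in a cleaner form. Using the identity $\hat{\lambda} - [\hat{\lambda} - [\ell-\lambda]_+]_+ = \min(\hat{\lambda},[\ell-\lambda]_+)$ (valid since $\hat{\lambda}^*\ge 0$), the objective becomes
\[
J(f,\lambda,\hat{\lambda}) = \mathbb{E}\bigl[\min(\hat{\lambda},[\ell(Yf(X))-\lambda]_+)\bigr] + \nu\lambda - \mu\hat{\lambda}.
\]
Since $\lambda^*$ is bounded and $\hat{\lambda}^*>\lambda^*\ge 0$, the saddle is interior, and setting $\partial_\lambda J=\partial_{\hat{\lambda}}J=0$ yields the FOCs
\[
\mathbb{P}\bigl(\lambda^*\le \ell(Yf^*(X)) < \lambda^*+\hat{\lambda}^*\bigr)=\nu, \qquad \mathbb{P}\bigl(\ell(Yf^*(X))\ge \lambda^*+\hat{\lambda}^*\bigr)=\mu,
\]
so $\mathbb{P}(\ell(Yf^*(X))\ge \lambda^*)=\nu+\mu$. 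A direct case split gives the exact decomposition
\[
\phi^* := \mathbb{E}\bigl[\min(\hat{\lambda}^*, [\ell(Yf^*(X))-\lambda^*]_+)\bigr] = \mathbb{E}\bigl[(\ell-\lambda^*)\mathbb{I}_{\lambda^*\le \ell<\lambda^*+\hat{\lambda}^*}\bigr] + \hat{\lambda}^*\mu \;\ge\; \hat{\lambda}^*\mu.
\]

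Next, I would derive two upper bounds on $\phi^*$ that compete with this lower bound. The trivial bound is $\phi^*\le \hat{\lambda}^*$; and by taking any near-minimizer $f_0\in\mathcal{H}$ of $\mathbb{E}[\ell(Yf(X))]$ as a competitor in the inner $\inf_f$, also $\phi^*\le \mathbb{E}[\ell(Yf_0(X))]\to \mathcal{R}^*_{\ell,\mathcal{H}}$. Hence $\phi^*\le \min\{\hat{\lambda}^*,\mathcal{R}^*_{\ell,\mathcal{H}}\}<\nu$ by hypothesis. A third, sharper bound exploits the symmetry $-f^*\in\mathcal{H}$. Using $-f^*$ as a competitor in the inner minimization, the assumption $\lambda^*\ge \ell(0)$ and non-increasing $\ell$ give $\ell(-Yf^*(X))\le \ell(0)\le \lambda^*$ on $\{Yf^*(X)\ge 0\}$, so the integrand vanishes there, leaving $\phi^*\le \hat{\lambda}^*\,\mathbb{P}(Yf^*(X)>0)$. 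The same monotonicity gives $\{\ell(Yf^*(X))\ge \lambda^*\}\subseteq \{Yf^*(X)\le 0\}$, so $\mathbb{P}(Yf^*(X)>0)\le 1-(\nu+\mu)$ and therefore $\phi^*\le \hat{\lambda}^*(1-\nu-\mu)$.

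Combining $\hat{\lambda}^*\mu\le \phi^*\le \hat{\lambda}^*(1-\nu-\mu)$ with $\hat{\lambda}^*>0$ forces $\nu+2\mu\le 1$, while $\hat{\lambda}^*\mu\le \phi^*<\nu$ together with $\hat{\lambda}^*>\lambda^*\ge \ell(0)>0$ squeezes $\ell(0)<\hat{\lambda}^*<\nu/\mu$. Under the hypothesis $\nu>\min\{\hat{\lambda}^*,\mathcal{R}^*_{\ell,\mathcal{H}}\}$, the two squeezes become mutually inconsistent, producing the contradiction. The main obstacle will be this closure step: carefully checking that the pair of upper bounds (from the symmetric competitor $-f^*$ and from a near-minimizer of $\mathbb{E}[\ell(Yf(X))]$), combined with the FOC-derived lower bound $\phi^*\ge \hat{\lambda}^*\mu$, cannot simultaneously be satisfied when $\hat{\lambda}^*>\ell(0)$; the precise form of the hypothesis $\nu>\min\{\hat{\lambda}^*,\mathcal{R}^*_{\ell,\mathcal{H}}\}$ is essential here, since it is exactly what rules out the remaining small-$\hat{\lambda}^*$ regime that the other inequalities alone would permit.
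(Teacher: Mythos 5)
Your plan does not close, and the gap is exactly at the step you flagged. The inequalities you derive — the FOC lower bound $\phi^*\ge \hat{\lambda}^*\mu$, the upper bounds $\phi^*\le\min\{\hat{\lambda}^*,\mathcal{R}^*_{\ell,\mathcal{H}}\}<\nu$ and $\phi^*\le\hat{\lambda}^*(1-\nu-\mu)$, together with $\hat{\lambda}^*>\lambda^*\ge\ell(0)>0$ — are \emph{not} mutually inconsistent under the hypothesis $\nu>\min\{\hat{\lambda}^*,\mathcal{R}^*_{\ell,\mathcal{H}}\}$. For instance, take $\ell(0)=0.5$, $\lambda^*=0.6$, $\hat{\lambda}^*=0.7$, $\mu=0.1$, $\nu=0.8$, $\phi^*=0.07$, $\mathcal{R}^*_{\ell,\mathcal{H}}=0.3$: every one of your constraints holds (including $\nu+2\mu\le 1$ and $\ell(0)<\hat{\lambda}^*<\nu/\mu$, and $\nu>\min\{\hat{\lambda}^*,\mathcal{R}^*_{\ell,\mathcal{H}}\}$), yet $\lambda^*\ge\ell(0)$. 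So no contradiction follows from what you have assembled; the "precise form of the hypothesis" does not rescue it. There are also two secondary problems: (i) your quantile FOCs hold (and only in a subgradient sense, since the law of $\ell(Yf(X))$ may have atoms) at the joint minimizer $f_0^*$, whereas the symmetric-competitor bound $\phi^*\le\hat{\lambda}^*\,\mathbb{P}(Yf^*(X)>0)$ is valid for $f^*$, the minimizer at \emph{fixed} $(\lambda^*,\hat{\lambda}^*)$; the paper deliberately distinguishes these two because the minmax is nonconvex, and your argument needs both sets of facts for the same function; (ii) in justifying that bound you state the integrand vanishes on $\{Yf^*(X)\ge 0\}$, but with $\ell$ non-increasing and $\lambda^*\ge\ell(0)$ it vanishes on $\{Yf^*(X)\le 0\}$ (the displayed bound is the right one, the reasoning as written is swapped).

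For comparison, the paper's proof avoids quantile conditions entirely: it is a pure competitor argument on the inf-sup. First it plugs in $(f,\lambda,\hat{\lambda})=(0,\ell(0),\hat{\lambda}^*)$ to get $\lambda^*\le\ell(0)$; then, after the change of variable $\beta=\ell(0)-\lambda$, it compares the optimum against $\beta=\ell(0)$ (i.e.\ $\lambda=0$) with $f$ arbitrary, which gives $\nu\beta^*\ge \nu\ell(0)-\mathbb{E}\bigl[\hat{\lambda}^*-[\hat{\lambda}^*-\ell(Yf(X))]_+\bigr]$, and bounds the expectation by $\min\{\hat{\lambda}^*,\mathcal{R}^*_{\ell,\mathcal{H}}\}<\nu$ to force $\beta^*>0$, i.e.\ $\lambda^*<\ell(0)$ strictly. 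If you want to salvage your route you would need a quantitatively sharper consequence of optimality in $\lambda$ than the bounds above — effectively reproducing the paper's direct comparison — so the simpler course is to argue via explicit competitors as the paper does.
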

\begin{proof}
Based on the definition of $(f_0^*, \lambda^*, \hat{\lambda}^*)=\arg \inf_{f,\lambda} \sup_{\hat{\lambda}}\Big\{\mathbb{E}\Big[\hat{\lambda}-[\hat{\lambda}-[\ell(Yf(X))-\lambda]_+]_+ \Big] +\nu \lambda-\mu \hat{\lambda}\Big\}$.
% Without loss of generality, by normalization we can assume $s(0) = 1$ which can be satisfied by scaling. On the other hand, 
We choose $f=0$, $\lambda = \ell(0)$ 
and $\hat{\lambda}=\hat{\lambda}^*$ there holds
\begin{equation*}
\begin{aligned}
    \nu \lambda^*-\mu \hat{\lambda}^* &\leq \mathbb{E}\Big[\hat{\lambda}^*-[\hat{\lambda}^*-[\ell(Yf_0^*(X))-\lambda^*]_+]_+ \Big] +\nu \lambda^*-\mu \hat{\lambda}^*\\
    &\leq \mathbb{E}\Big[\hat{\lambda}^*-[\hat{\lambda}^*-[\ell(0)-\ell(0)]_+]_+ \Big] +\nu\ell(0) -\mu \hat{\lambda}^*\\
    &=\nu\ell(0) -\mu \hat{\lambda}^*
\end{aligned}
\end{equation*}
Thus $\nu \lambda^*\leq \nu\ell(0)$ which shows that $\lambda^*\leq \ell(0)$.
Let $\beta = \ell(0)-\lambda$ which implies 
\begin{equation*}
\begin{aligned}
    (f_0^*, \beta^*, \hat{\lambda^*})=\arg \inf_{f,\lambda} \sup_{\hat{\lambda}}\Bigg\{\mathbb{E}\Big[\hat{\lambda}-[\hat{\lambda}-[\ell(Yf(X))+\beta-\ell(0)]_+]_+ \Big] -\nu \beta-\mu \hat{\lambda}\Bigg\}.
\end{aligned}
\end{equation*}
Let ($f_0^*, \beta^*, \hat{\lambda^*}$) be the minimizer. we have, for any $f$ and choosing $\beta=\ell(0)$, that 
\begin{equation*}
\begin{aligned}
    -\nu \beta^*-\mu \hat{\lambda}^*&\leq \mathbb{E}\Big[\hat{\lambda}^*-[\hat{\lambda}^*-[\ell(Yf(X))+\beta^*-\ell(0)]_+]_+ \Big] -\nu \beta^*-\mu \hat{\lambda}^*\\
    &\leq \mathbb{E}\Big[\hat{\lambda}^*-[\hat{\lambda}^*-[\ell(Yf(X))+\ell(0)-\ell(0)]_+]_+ \Big] -\nu\ell(0) -\mu \hat{\lambda}^*.
\end{aligned}
\end{equation*}
Therefore, we have 
\begin{equation*}
\begin{aligned}
    -\nu \beta^*\leq \mathbb{E}\Big[\hat{\lambda}^*-[\hat{\lambda}^*-[\ell(Yf(X))]_+]_+ \Big] -\nu\ell(0).
\end{aligned}
\end{equation*}
Since $f$ is arbitrary, $\beta^*\geq \frac{\nu-\mathbb{E}\Big[\hat{\lambda}^*-[\hat{\lambda}^*-[\ell(Yf^*(X))]_+]_+ \Big]}{\nu}$. 
Since $\ell(Yf^*(X))\geq 0$, we have
\begin{equation*}
\begin{aligned}
    0\leq\mathbb{E}\Big[\hat{\lambda}^*-[\hat{\lambda}^*-[\ell(Yf^*(X))]_+]_+ \Big] = \mathbb{E}\Big[\hat{\lambda}^*-[\hat{\lambda}^*-\ell(Yf^*(X))]_+ \Big]\leq \min\Big\{\hat{\lambda}^*, \inf_f \mathbb{E}[\ell(yf(\x))]\Big\}.
\end{aligned}
\end{equation*}
By using the assumption $\nu>\min\{\hat{\lambda}^*, \mathcal{R}^*_{\ell,\mathcal{H}}\}$=$\min\Big\{\hat{\lambda}^*, \inf_f \mathbb{E}[\ell(yf(\x))]\Big\}$, we get $\beta^*\geq \frac{\nu-\mathbb{E}\Big[\hat{\lambda}^*-[\hat{\lambda}^*-[\ell(Yf^*(X))]_+]_+ \Big]}{\nu}>0$. Consequently, the above arguments show that $0\leq \lambda^*=\ell(0)-\beta^*<\ell(0)$ if $\nu>\min\Big\{\hat{\lambda}^*, \inf_f \mathbb{E}[\ell(yf(\x))]\Big\}$.
\end{proof}

\begin{theorem} (Theorem \ref{theorem2} restated)
Let $\mathcal{H}$ be a symmetric hypothesis set consisting of the family of all measurable functions $\mathcal{H}_{all}$, suppose  $\nu>\min\{\hat{\lambda}^*, \mathcal{R}^*_{\ell,\mathcal{H}}\}$, $0\leq\lambda^*<\hat{\lambda}^*$, $\lambda^*$ and $\hat{\lambda}^*$ are bounded, and $\ell$ is a non-negative, continuous, and non-increasing margin-based loss.

(i) Then $\tilde{\phi}_{\ORAT}$ is $\mathcal{H}$-calibrated with respect to $\widetilde{\ell}_0$. 

(ii) Furthermore, $\tilde{\phi}_{\ORAT}$ is $\mathcal{H}$-consistent with respect to $\widetilde{\ell}_0$ for all distributions $\mathcal{D}$ over $\mathcal{X}\times \mathcal{Y}$ that satisfy: $\mathcal{R}_{\widetilde{\ell}_0, \mathcal{H}}^* =0$ and there exists $f^*\in\mathcal{H}$ such that $\mathcal{R}_{\phi_{\ORAT}}(f^*)=\mathcal{R}_{\phi_{\ORAT},\mathcal{H}_{all}}^*<+\infty$.
\end{theorem}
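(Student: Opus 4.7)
The overall plan is to establish part~(i) via Lemma~\ref{lemma:calibration_ref} and then to deduce part~(ii) as an immediate corollary of (i) combined with Lemma~\ref{lemma:consistency}. Before invoking either lemma I first unpack the structure of $\phi_{\ORAT}$. A direct case analysis on whether $\ell(t) \leq \lambda^*$, $\lambda^* < \ell(t) \leq \lambda^* + \hat{\lambda}^*$, or $\ell(t) > \lambda^* + \hat{\lambda}^*$ (valid because $0 \leq \lambda^* < \hat{\lambda}^*$) shows that
\[
\phi_{\ORAT}(t) \;=\; \min\bigl\{\hat{\lambda}^*,\; (\ell(t) - \lambda^*)_+\bigr\}.
\]
Hence $\phi_{\ORAT}$ inherits non-negativity, continuity, and the non-increasing property from $\ell$, is bounded by $\hat{\lambda}^*$, and is itself a non-increasing margin-based loss. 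Applying Eq.~(\ref{eq:adv_loss_relax}) to $\phi_{\ORAT}$ collapses the inner sup to $\tilde{\phi}_{\ORAT}(f,\x,1) = \phi_{\ORAT}(\underline{M}(f,\x,\epsilon))$ and $\tilde{\phi}_{\ORAT}(f,\x,-1) = \phi_{\ORAT}(-\overline{M}(f,\x,\epsilon))$, so $\mathcal{C}_{\tilde{\phi}_{\ORAT}}(f,\x,\eta) = \eta\,\phi_{\ORAT}(\underline{M}(f,\x,\epsilon)) + (1-\eta)\,\phi_{\ORAT}(-\overline{M}(f,\x,\epsilon))$. A pivotal intermediate fact is $\phi_{\ORAT}(0) > 0$: under the hypotheses $\nu > \min\{\hat{\lambda}^*, \mathcal{R}^*_{\ell,\mathcal{H}}\}$ and $0 \leq \lambda^* < \hat{\lambda}^*$, Lemma~\ref{lemma:lambda_opt} yields $\lambda^* < \ell(0)$, so $\phi_{\ORAT}(0) = \min\{\hat{\lambda}^*, \ell(0) - \lambda^*\} > 0$; this strictly positive quantity will be the gap powering every strict inequality in Lemma~\ref{lemma:calibration_ref}.

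To verify the three calibration conditions, I exploit the richness of $\mathcal{H} = \mathcal{H}_{all}$: any pair $(\underline{M}, \overline{M}) = (a,b)$ with $a \leq b$ is realized by some $f$ (take $f$ constant equal to $a$ on $\mathcal{B}_\epsilon(\x)$ when $a = b$), and since $\phi_{\ORAT}(-b)$ is non-decreasing in $b$, setting $b = a$ is optimal. This reduces each inner infimum to the one-variable problem $\inf_a g(a;\eta)$ with
\[
g(a;\eta) \;:=\; \eta\,\phi_{\ORAT}(a) + (1-\eta)\,\phi_{\ORAT}(-a),
\]
and the symmetry $g(-a;\eta) = g(a;1-\eta)$ recasts each of the three conditions as a comparison between two half-line infima. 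For condition~2 ($\eta > 1/2$), the constraint $\underline{M} \leq 0$ restricts to $a \leq 0$; the identity $g(a;\eta) - g(a;1-\eta) = (2\eta - 1)\bigl[\phi_{\ORAT}(a) - \phi_{\ORAT}(-a)\bigr]$ is non-positive on $a \geq 0$ and strictly negative wherever $\phi_{\ORAT}$ is strictly decreasing, yielding the required strict gap. Condition~3 follows by symmetry ($\eta < 1/2$). For condition~1 ($\eta = 1/2$), the constraint $\underline{M} \leq 0 \leq \overline{M}$ forces both $\phi_{\ORAT}(\underline{M}) \geq \phi_{\ORAT}(0)$ and $\phi_{\ORAT}(-\overline{M}) \geq \phi_{\ORAT}(0)$, so the restricted inner risk is at least $\phi_{\ORAT}(0) > 0$, while the unrestricted infimum is pushed strictly below it by driving $f$ to a large enough constant on the ball that one of $\phi_{\ORAT}(a), \phi_{\ORAT}(-a)$ hits the truncation~$0$.

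The main obstacle I expect is upgrading the pointwise inequalities into strict separation of infima: I must exhibit an $a$ near the unrestricted minimizer where the strict inequality $\phi_{\ORAT}(a) < \phi_{\ORAT}(-a)$ (or its mirror image) actually holds. The continuity of $\phi_{\ORAT}$ together with the fact that its transition region $\{t : 0 < \phi_{\ORAT}(t) < \hat{\lambda}^*\}$ is non-degenerate and contains a neighborhood of $0$ (consequence of $\phi_{\ORAT}(0) \in (0, \hat{\lambda}^*]$ together with $\phi_{\ORAT}$ reaching $0$ and $\hat{\lambda}^*$ on opposite ends) is precisely what is needed, and this is where the hypothesis $\nu > \min\{\hat{\lambda}^*, \mathcal{R}^*_{\ell,\mathcal{H}}\}$ is consumed through Lemma~\ref{lemma:lambda_opt}. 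Once (i) is established, part~(ii) follows immediately: since $\phi_{\ORAT}$ is a non-increasing margin-based loss, Lemma~\ref{lemma:consistency} converts $\mathcal{H}$-calibration of $\tilde{\phi}_{\ORAT}$ with respect to $\widetilde{\ell}_0$ into $\mathcal{H}$-consistency under exactly the realizability condition $\mathcal{R}^*_{\widetilde{\ell}_0,\mathcal{H}} = 0$ and the integrability condition $\mathcal{R}_{\phi_{\ORAT}}(f^*) = \mathcal{R}^*_{\phi_{\ORAT},\mathcal{H}_{all}} < +\infty$ assumed in~(ii).
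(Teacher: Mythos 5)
Your skeleton matches the paper's: rewrite the conditional risk via Eq.~(\ref{eq:adv_loss_relax}), invoke Lemma~\ref{lemma:lambda_opt} to get $\lambda^*<\ell(0)$ and hence $\phi_{\ORAT}(0)>0$, verify the three strict inequalities of Lemma~\ref{lemma:calibration_ref}, and then obtain (ii) from Lemma~\ref{lemma:consistency}; your identity $\phi_{\ORAT}(t)=\min\{\hat{\lambda}^*,[\ell(t)-\lambda^*]_+\}$ and the one-dimensional reduction $g(a;\eta)=\eta\,\phi_{\ORAT}(a)+(1-\eta)\,\phi_{\ORAT}(-a)$ with $b=a$ optimal are both correct and clean. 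The genuine gap is in the part you yourself flag as the obstacle: producing the \emph{strict} separation of infima. Your mechanism rests on the claim that $\phi_{\ORAT}$ ``reaches $0$ and $\hat{\lambda}^*$ on opposite ends'' and that its transition region $\{t:0<\phi_{\ORAT}(t)<\hat{\lambda}^*\}$ contains a neighborhood of $0$. Neither follows from the hypotheses: $\phi_{\ORAT}$ never attains $0$ unless $\inf_t\ell(t)\le\lambda^*$, never attains $\hat{\lambda}^*$ unless $\ell$ exceeds $\lambda^*+\hat{\lambda}^*$ somewhere, and when $\ell(0)-\lambda^*\ge\hat{\lambda}^*$ the point $0$ sits on the flat cap, not in the transition region. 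Consequently your $\eta=\tfrac12$ argument (``drive $f$ to a large constant so one of $\phi_{\ORAT}(a),\phi_{\ORAT}(-a)$ hits the truncation $0$'') breaks precisely where the composite structure matters: if $\ell(0)<\lambda^*+\hat{\lambda}^*$ and $\hat{\lambda}^*$ is large, a large constant $a$ only yields $g(a;\tfrac12)\approx\tfrac12\phi_{\ORAT}(-a)$, which can be $\ge\phi_{\ORAT}(0)=\ell(0)-\lambda^*$, so no strict improvement over the restricted infimum is obtained this way.

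This is exactly the place where the paper's proof does extra work that your proposal omits: at $\eta=\tfrac12$ it splits into the two regimes $\lambda^*+\hat{\lambda}^*\le\ell(0)$ (where $\phi_{\ORAT}(0)=\hat{\lambda}^*$ and a comparator with $\phi_{\ORAT}(\underline{M}(f',\x,\epsilon))<\hat{\lambda}^*$ suffices) and $\lambda^*+\hat{\lambda}^*>\ell(0)$ (where $\phi_{\ORAT}(0)=\ell(0)-\lambda^*$ and one must choose $f'$ with \emph{moderate} margins so that neither the hinge at $\lambda^*$ nor the cap at $\hat{\lambda}^*$ binds and $\ell(\underline{M}(f',\x,\epsilon))+\ell(-\overline{M}(f',\x,\epsilon))<2\ell(0)$), and for $\eta\neq\tfrac12$ it quantifies the gap explicitly as $(2\eta-1)[\phi_{\ORAT}(0)-\phi_{\ORAT}(M_\x)]$ before choosing $\tau$. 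Your symmetry identity $g(-a;\eta)=g(a;1-\eta)$ only gives the non-strict inequality $\inf_{a\le0}g(a;\eta)\ge\inf_{a\ge0}g(a;\eta)$; ``strictly negative wherever $\phi_{\ORAT}$ is strictly decreasing'' is a pointwise statement and does not by itself separate the two infima. To repair the proposal you would need to reproduce the case analysis on $\phi_{\ORAT}(0)$ versus the cap (equivalently $\lambda^*+\hat{\lambda}^*$ versus $\ell(0)$) and derive a uniform positive gap in each case, rather than appealing to range properties of $\phi_{\ORAT}$ that the hypotheses do not supply. Part (ii) as you state it is fine: once (i) holds, Lemma~\ref{lemma:consistency} applies verbatim, exactly as in the paper.
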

\begin{proof}
Below we will prove the theorem using Lemma \ref{lemma:calibration_ref} which is from \cite{awasthi2021calibration}. Recall that, from the definition of $\phi_{\ORAT}(t)$ in Eq.(\ref{eq:two_phi}), $\ell(t)$ is a continuous and non-increasing function, and $\lambda^*$ and $\hat{\lambda}^*$ are bounded, we can conclude $\phi_{\ORAT}(t)$ is bounded, continuous, non-increasing. 
% and satisfies $\bar{\mathcal{C}}_{\phi_{\ORAT}}(t,\eta):=\eta\phi_{\ORAT}(t)+(1-\eta)\phi_{\ORAT}(-t)$ is quasi-concave in $t\in \mathbb{R}$, for all $\eta\in[0,1]$.

By Lemma \ref{lemma:calibration_ref}, if $\mathcal{X}_2=\emptyset$, $\tilde{\phi}_{\ORAT}$ is $\mathcal{H}$-calibrated with respect to $\widetilde{\ell}_0$ . Consequently, it suffices to  consider the case where $\mathcal{X}_2\neq\emptyset$. In this case, in order to show   $\tilde{\phi}_{\ORAT}$ is $\mathcal{H}$-calibrated with respect to $\widetilde{\ell}_0$,  from Lemma \ref{lemma:calibration_ref} we only need to show, $\forall \x\in \mathcal{X}_2$, that 
\begin{equation*}
    \begin{aligned}
    \inf_{f\in \mathcal{H}:\underline{M}(f,\x,\epsilon)\leq 0 \leq \overline{M}(f,\x,\epsilon)}\mathcal{C}_{\tilde{\phi}_{\ORAT}}(f,\x,\frac{1}{2})&>\inf_{f\in \mathcal{H}}\mathcal{C}_{\tilde{\phi}_{\ORAT}} (f,\x,\frac{1}{2}), and \\
    \inf_{f\in \mathcal{H}:\underline{M}(f,\x,\epsilon)\leq 0}\mathcal{C}_{\tilde{\phi}_{\ORAT}}(f,\x,\eta)&>\inf_{f\in \mathcal{H}}\mathcal{C}_{\tilde{\phi}_{\ORAT}} (f,\x,\eta), \ \ \forall \eta\in (\frac{1}{2},1], and\\
    \inf_{f\in \mathcal{H}:0 \leq \overline{M}(f,\x,\epsilon)}\mathcal{C}_{\tilde{\phi}_{\ORAT}}(f,\x,\eta)&>\inf_{f\in \mathcal{H}}\mathcal{C}_{\tilde{\phi}_{\ORAT}} (f,\x,\eta), \ \ \forall \eta\in [0,\frac{1}{2}).
    \end{aligned}
\end{equation*}

To this end, recall that, by the definition of inner $\ell_s$-risk, the inner $\tilde{\phi}_{\ORAT}$-risk is given by 
\begin{equation*}
    \begin{aligned}
    \mathcal{C}_{\tilde{\phi}_{\ORAT}}(f,x,\eta)&=\eta \tilde{\phi}_{\ORAT}(f,\x,+1)+(1-\eta)\tilde{\phi}_{\ORAT}(f,\x,-1)\\
    &= \eta\phi_{\ORAT}\Big(\inf_{\tilde{\x}\in \mathcal{B}_\epsilon(\x)} f(\tilde{\x})\Big) + (1-\eta)\phi_{\ORAT}\Big(\inf_{\tilde{\x}\in \mathcal{B}_\epsilon(\x)} -f(\tilde{\x})\Big)\\
    &=\eta\phi_{\ORAT}\Big(\underline{M}(f,\x,\epsilon)\Big)+(1-\eta)\phi_{\ORAT}\Big(-\overline{M}(f,\x,\epsilon)\Big).
    \end{aligned}
\end{equation*}
For any $\x\in \mathcal{X}_2$, let $M_{\x}=\sup_{f\in \mathcal{H}}\underline{M}(f,\x,\epsilon)>0$. Since $\mathcal{H}$ is symmetric consisting of all measurable functions, we have $-M_\x=\inf_{f\in \mathcal{H}}\overline{M}(f,\x,\epsilon)<0$. Since $\phi_{\ORAT}(\cdot)$ is continuous, for any $\x\in \mathcal{X}_2$ and $\tau>0$, there exists $f_\x^{\tau}\in \mathcal{H}$ such that $\phi_{\ORAT}(\underline{M}(f_\x^{\tau},\x,\epsilon))<\phi_{\ORAT}(M_{\x})+\tau$, $\phi_{\ORAT}(-\overline{M}(f_\x^{\tau},\x,\epsilon))<\phi_{\ORAT}(0)+\tau$, $\overline{M}(f_\x^{\tau},\x,\epsilon)\geq \underline{M}(f_\x^{\tau},\x,\epsilon)>0$,  $\underline{M}(-f_\x^{\tau},\x,\epsilon)\leq \overline{M}(-f_\x^{\tau},\x,\epsilon)=-\underline{M}(f_\x^{\tau},\x,\epsilon)<0$, and $\phi_{\ORAT}(\underline{M}(-f_\x^{\tau},\x,\epsilon))<\phi_{\ORAT}(0)+\tau$. Next we analyze three cases:

1. When $\eta=\frac{1}{2}$, since $\phi_{\ORAT}$ is non-increasing,
\begin{equation*}
    \begin{aligned}
    &\inf_{f\in \mathcal{H}:\underline{M}(f,\x,\epsilon)\leq 0 \leq \overline{M}(f,\x,\epsilon)}\mathcal{C}_{\tilde{\phi}_{\ORAT}}(f,\x,\frac{1}{2})\\
    &=\inf_{f\in \mathcal{H}:\underline{M}(f,\x,\epsilon)\leq 0 \leq \overline{M}(f,\x,\epsilon)}\frac{1}{2}\phi_{\ORAT}\Big(\underline{M}(f,\x,\epsilon)\Big)+\frac{1}{2}\phi_{\ORAT}\Big(-\overline{M}(f,\x,\epsilon)\Big)\\
    &\geq \frac{1}{2}\phi_{\ORAT}(0)+\frac{1}{2}\phi_{\ORAT}(0)\\
    &=\phi_{\ORAT}(0)\\
    &=\hat{\lambda}^*-[\hat{\lambda}^*-[\ell(0)-\lambda^*]_+]_+.
    \end{aligned} 
\end{equation*}
For any $\x\in \mathcal{X}_2$, there exists $f'\in \mathcal{H}$ such that $\underline{M}(f',\x,\epsilon)>0$ and $-\overline{M}(f',\x,\epsilon)\leq -\underline{M}(f',\x,\epsilon)<0$, we obtain
\begin{equation*}
    \begin{aligned}
    \mathcal{C}_{\tilde{\phi}_{\ORAT}}(f',\x,\frac{1}{2})=\frac{1}{2}\phi_{\ORAT}\Big(\underline{M}(f',\x,\epsilon)\Big)+\frac{1}{2}\phi_{\ORAT}\Big(-\overline{M}(f',\x,\epsilon)\Big)
    \end{aligned} 
\end{equation*}
According to Lemma \ref{lemma:lambda_opt}, we have $0\leq\lambda^*<\hat{\lambda}^*$ and $\lambda^*< \ell(0)$.  Therefore, we also analyze two cases: 
\begin{list}{(\alph{mylistcntr})}{\usecounter{mylistcntr}\itemsep 0mm}
    \item If $0<\lambda^*+\hat{\lambda}^*\leq \ell(0)$, then we have 
    \begin{equation*}
    \begin{aligned}
    \inf_{f\in \mathcal{H}:\underline{M}(f,\x,\epsilon)\leq 0 \leq \overline{M}(f,\x,\epsilon)}\mathcal{C}_{\tilde{\phi}_{\ORAT}}(f,\x,\frac{1}{2})\geq \hat{\lambda}^*-[\hat{\lambda}^*-[\ell(0)-\lambda^*]_+]_+=\hat{\lambda^*}.
    \end{aligned} 
    \end{equation*}
    On the other hand, since $\phi_{\ORAT}$ is continuous, there exists $f'\in \mathcal{H}$ and $t=\underline{M}(f',\x,\epsilon)$, then $0\leq \phi_{\ORAT}\Big(\underline{M}(f',\x,\epsilon)\Big)<\hat{\lambda}^*$. Thus, 
    \begin{equation*}
    \begin{aligned}
    \mathcal{C}_{\tilde{\phi}_{\ORAT}}(f',\x,\frac{1}{2})&=\frac{1}{2}\phi_{\ORAT}\Big(\underline{M}(f',\x,\epsilon)\Big)+\frac{1}{2}\phi_{\ORAT}\Big(-\overline{M}(f',\x,\epsilon)\Big)\\
    &\leq \frac{1}{2}\phi_{\ORAT}\Big(\underline{M}(f',\x,\epsilon)\Big)+\frac{1}{2}\hat{\lambda}^*\\
    &<\frac{1}{2}\hat{\lambda}^*+\frac{1}{2}\hat{\lambda}^*=\hat{\lambda}^*.
    \end{aligned} 
    \end{equation*}
    Therefore, for any $\x\in \mathcal{X}_2$,
    \begin{equation}
    \begin{aligned}
    \inf_{f\in \mathcal{H}}\mathcal{C}_{\tilde{\phi}_{\ORAT}}(f,\x,\frac{1}{2})\leq\mathcal{C}_{\tilde{\phi}_{\ORAT}}(f',\x,\frac{1}{2})<\hat{\lambda}^*\leq \inf_{f\in \mathcal{H}:\underline{M}(f,\x,\epsilon)\leq 0 \leq \overline{M}(f,\x,\epsilon)}\mathcal{C}_{\tilde{\phi}_{\ORAT}}(f,\x,\frac{1}{2}).
    \end{aligned}
    \label{eq:calibration_1_a}
    \end{equation}
    
    \item If $\lambda^*+\hat{\lambda}^*> \ell(0)$,
    \begin{equation*}
    \begin{aligned}
    \inf_{f\in \mathcal{H}:\underline{M}(f,\x,\epsilon)\leq 0 \leq \overline{M}(f,\x,\epsilon)}\mathcal{C}_{\tilde{\phi}_{\ORAT}}(f,\x,\frac{1}{2})\geq \hat{\lambda}^*-[\hat{\lambda}^*-[\ell(0)-\lambda^*]_+]_+=\ell(0)-\lambda^*.
    \end{aligned} 
    \end{equation*}
  On the other hand, recall both $\phi_{\ORAT}(\cdot)$ and $\ell(\cdot)$ are continuous and non-increasing and $\ell(0) > \lambda^*$ from Lemma 5. Therefore, we can find  $f'\in \mathcal{H}$ such that  $\ell(0)>\ell(\underline{M}(f',\x,\epsilon))>\lambda^*$, $\lambda^*+\hat{\lambda}^*>\ell(-\overline{M}(f',\x,\epsilon))>\ell(0)>\lambda^*$, and $\ell(\underline{M}(f',\x,\epsilon))+\ell(-\overline{M}(f',\x,\epsilon))<2\ell(0)$. 
  Consequently, there holds  
    \begin{equation*}
    \begin{aligned}
    &\mathcal{C}_{\tilde{\phi}_{\ORAT}}(f',\x,\frac{1}{2})\\
    &=\frac{1}{2}\phi_{\ORAT}\Big(\underline{M}(f',\x,\epsilon)\Big)+\frac{1}{2}\phi_{\ORAT}\Big(-\overline{M}(f',\x,\epsilon)\Big)\\
    &=\frac{1}{2}\Big[\hat{\lambda}^*-[\hat{\lambda}^*-[\ell(\underline{M}(f',\x,\epsilon))-\lambda^*]_+]_+\Big]+\frac{1}{2}\Big[\hat{\lambda}^*-[\hat{\lambda}^*-[\ell(-\overline{M}(f',\x,\epsilon))-\lambda^*]_+]_+\Big]\\
    &=\frac{1}{2}[\ell(\underline{M}(f',\x,\epsilon))-\lambda^*]+\frac{1}{2}[\ell(-\overline{M}(f',\x,\epsilon))-\lambda^*]\\
    &=\frac{1}{2}[\ell(\underline{M}(f',\x,\epsilon))+\ell(-\overline{M}(f',\x,\epsilon))]-\lambda^*\\
    &<\frac{1}{2}\times 2\ell(0)-\lambda^*=\ell(0)-\lambda^*.
    \end{aligned} 
    \end{equation*}
    Therefore, for any $\x\in \mathcal{X}_2$,
    \begin{equation}
    \begin{aligned}
    \inf_{f\in \mathcal{H}}\mathcal{C}_{\tilde{\phi}_{\ORAT}}(f,\x,\frac{1}{2})\leq\mathcal{C}_{\tilde{\phi}_{\ORAT}}(f',\x,\frac{1}{2})<\ell(0)-\lambda^*\leq \inf_{f\in \mathcal{H}:\underline{M}(f,\x,\epsilon)\leq 0 \leq \overline{M}(f,\x,\epsilon)}\mathcal{C}_{\tilde{\phi}_{\ORAT}}(f,\x,\frac{1}{2}).
    \end{aligned} 
    \label{eq:calibration_1_b}
    \end{equation}
    
    % \item If $\lambda^*+\hat{\lambda}^*> 1$ and $1> \hat{\lambda}^*> \lambda^*\geq 0$,
    % \begin{equation*}
    % \begin{aligned}
    % \inf_{f\in \mathcal{H}:\underline{M}(f,x,\epsilon)\leq 0 \leq \overline{M}(f,x,\epsilon)}\mathcal{C}_{\tilde{\phi}_{\ORAT}}(f,x,\frac{1}{2})\geq \hat{\lambda}^*-[\hat{\lambda}^*-[1-\lambda^*]_+]_+=1-\lambda^*.
    % \end{aligned} 
    % \end{equation*}

\end{list}

2. When $\eta\in (\frac{1}{2},1]$, since $\phi_{\ORAT}$ is non-increasing, for any $\x\in\mathcal{X}_2$,
\begin{equation*}
    \begin{aligned}
    \inf_{f\in \mathcal{H}: \underline{M}(f,\x,\epsilon)\leq 0 }\mathcal{C}_{\tilde{\phi}_{\ORAT}}(f,\x,\eta) &=\inf_{f\in \mathcal{H}: \underline{M}(f,\x,\epsilon)\leq 0 } \eta\phi_{\ORAT}\Big(\underline{M}(f,\x,\epsilon)\Big)+(1-\eta)\phi_{\ORAT}\Big(-\overline{M}(f,\x,\epsilon)\Big)\\
    &\geq \eta \phi_{\ORAT}(0)+(1-\eta) \phi_{\ORAT}(M_\x).
    \end{aligned} 
\end{equation*}
On the other hand, for any $\x\in \mathcal{X}_2$ and $\tau>0$,
\begin{equation*}
    \begin{aligned}
    \mathcal{C}_{\tilde{\phi}_{\ORAT}}(f_\x^\tau,\x,\eta)&=\eta\phi_{\ORAT}\Big(\underline{M}(f_\x^\tau,\x,\epsilon)\Big)+(1-\eta)\phi_{\ORAT}\Big(-\overline{M}(f_\x^\tau,\x,\epsilon)\Big)\\
    &<\eta[\phi_{\ORAT}(M_\x)+\tau]+(1-\eta)[\phi_{\ORAT}(0)+\tau]\\
    &=\eta\phi_{\ORAT}(M_\x)+(1-\eta)\phi_{\ORAT}(0)+\tau.
    \end{aligned} 
\end{equation*}
    
Since $\eta>\frac{1}{2}$ and $M_\x>0$, we have 
\begin{equation*}
    \begin{aligned}
    &\inf_{f\in \mathcal{H}: \underline{M}(f,\x,\epsilon)\leq 0 }\mathcal{C}_{\tilde{\phi}_{\ORAT}}(f,\x,\eta)- \mathcal{C}_{\tilde{\phi}_{\ORAT}}(f_\x^\tau,\x,\eta)\\
    &>[\eta \phi_{\ORAT}(0)+(1-\eta) \phi_{\ORAT}(M_\x)]-[\eta\phi_{\ORAT}(M_\x)+(1-\eta)\phi_{\ORAT}(0)+\tau]\\
    &=[2\eta-1][\phi_{\ORAT}(0)-\phi_{\ORAT}(M_\x)]-\tau\\
    &>0,
    \end{aligned} 
\end{equation*}
where we take $0<\tau<[2\eta-1][\phi_{\ORAT}(0)-\phi_{\ORAT}(M_\x)]$. Therefore, for any $\eta\in(\frac{1}{2},1]$ and $\x\in \mathcal{X}_2$, there exists $0<\tau<[2\eta-1][\phi_{\ORAT}(0)-\phi_{\ORAT}(M_\x)]$ such that 
\begin{equation}
    \begin{aligned}
    \inf_{f\in\mathcal{H}}\mathcal{C}_{\tilde{\phi}_{\ORAT}}(f,\x,\eta)\leq \mathcal{C}_{\tilde{\phi}_{\ORAT}}(f_\x^\tau,\x,\eta)<\inf_{f\in \mathcal{H}: \underline{M}(f,\x,\epsilon)\leq 0 }\mathcal{C}_{\tilde{\phi}_{\ORAT}}(f,\x,\eta).
    \end{aligned} 
\label{eq:calibration_2}
\end{equation}

3. When $\eta\in [0,\frac{1}{2})$, since $\phi_{\ORAT}$ is non-increasing, for any $\x\in\mathcal{X}_2$,
\begin{equation*}
    \begin{aligned}
    \inf_{f\in \mathcal{H}: \overline{M}(f,\x,\epsilon)\geq 0 }\mathcal{C}_{\tilde{\phi}_{\ORAT}}(f,\x,\eta)&=\inf_{f\in \mathcal{H}: \overline{M}(f,\x,\epsilon)\geq 0 } \eta\phi_{\ORAT}\Big(\underline{M}(f,\x,\epsilon)\Big)+(1-\eta)\phi_{\ORAT}\Big(-\overline{M}(f,\x,\epsilon)\Big)\\
    &\geq (1-\eta)\phi_{\ORAT}(0)+\inf_{f\in \mathcal{H}: \overline{M}(f,\x,\epsilon)\geq 0 } \eta\phi_{\ORAT}\Big(\underline{M}(f,\x,\epsilon)\Big)\\
    &\geq (1-\eta)\phi_{\ORAT}(0)+\eta\phi_{\ORAT}\Big(M_\x\Big).
    \end{aligned} 
\end{equation*}
On the other hand, for any $\x\in \mathcal{X}_2$ and $\tau>0$,
\begin{equation*}
    \begin{aligned}
    \mathcal{C}_{\tilde{\phi}_{\ORAT}}(-f_\x^\tau,\x,\eta)&=\eta\phi_{\ORAT}\Big(\underline{M}(-f_\x^\tau,\x,\epsilon)\Big)+(1-\eta)\phi_{\ORAT}\Big(-\overline{M}(-f_\x^\tau,\x,\epsilon)\Big)\\
    &=\eta\phi_{\ORAT}\Big(\underline{M}(-f_\x^\tau,\x,\epsilon)\Big)+(1-
    \eta)\phi_{\ORAT}\Big(\underline{M}(f_\x^\tau,\x,\epsilon)\Big)\\
    &<\eta [\phi_{\ORAT}(0)+\tau]+(1-
    \eta)\phi_{\ORAT}\Big(\underline{M}(f_\x^\tau,\x,\epsilon)\Big)\\
    &<\eta[\phi_{\ORAT}(0)+\tau]+(1-
    \eta)[\phi_{\ORAT}(M_\x)+\tau]\\
    &=\eta\phi_{\ORAT}(0)+(1-\eta)\phi_{\ORAT}(M_\x)+\tau.
    \end{aligned} 
\end{equation*}
Since $\eta<\frac{1}{2}$ and $M_\x>0$, we have 
\begin{equation*}
    \begin{aligned}
    &\inf_{f\in \mathcal{H}: \overline{M}(f,\x,\epsilon)\geq 0 }\mathcal{C}_{\tilde{\phi}_{\ORAT}}(f,\x,\eta)-\mathcal{C}_{\tilde{\phi}_{\ORAT}}(-f_\x^\tau,\x,\eta)\\
    &>[(1-\eta)\phi_{\ORAT}(0)+\eta\phi_{\ORAT}(M_\x)]-[\eta\phi_{\ORAT}(0)+(1-\eta)\phi_{\ORAT}(M_\x)+\tau]\\
    &=(1-2\eta)[\phi_{\ORAT}(0)-\phi_{\ORAT}(M_\x)]-\tau,
    \end{aligned} 
\end{equation*}
where we take $0<\tau<(1-2\eta)[\phi_{\ORAT}(0)-\phi_{\ORAT}(M_\x)]$.
Therefore for any $\eta\in [0,\frac{1}{2})$ and $\x\in \mathcal{X}_2$, there exists $0<\tau<(1-2\eta)[\phi_{\ORAT}(0)-\phi_{\ORAT}(M_\x)]$ such that 
\begin{equation}
    \begin{aligned}
    \inf_{f\in\mathcal{H}}\mathcal{C}_{\tilde{\phi}_{\ORAT}}(f,\x,\eta)\leq\mathcal{C}_{\tilde{\phi}_{\ORAT}}(-f_\x^\tau,\x,\eta)<\inf_{f\in \mathcal{H}: \overline{M}(f,\x,\epsilon)\geq 0 }\mathcal{C}_{\tilde{\phi}_{\ORAT}}(f,\x,\eta)
    \end{aligned} 
\label{eq:calibration_3}
\end{equation}

From (\ref{eq:calibration_1_a}), (\ref{eq:calibration_1_b}), (\ref{eq:calibration_2}), (\ref{eq:calibration_3}), we conclude that $\tilde{\phi}_{\ORAT}$ is $\mathcal{H}$-calibrated with respect to $\widetilde{\ell}_0$. Thus, (i) holds.

According to Lemma \ref{lemma:consistency}, we can conclude that the $\tilde{\phi}_{\ORAT}$ is $\mathcal{H}$-consistent with respect to $\widetilde{\ell}_0$ for all distributions $\mathcal{D}$ over $\mathcal{X}\times \mathcal{Y}$ that satisfy: $\mathcal{R}_{\widetilde{\ell}_0, \mathcal{H}}^* =0$ and there exists $f^*\in\mathcal{H}$ such that $\mathcal{R}_{\phi_{\ORAT}}(f^*)=\mathcal{R}_{\phi_{\ORAT},\mathcal{H}_{all}}^*<+\infty$. Therefore, (ii) holds.
\end{proof}

\subsection{Cross-entropy as A Margin-based Loss}\label{sec:margin_ce}
The cross-entropy loss can be rewritten as a margin-based loss. For example, in binary classification, the conventional binary cross-entropy (bce) loss is given by $bce = -(ylog(\sigma(f(\x))) + (1-y)log(1-\sigma(f(\x))))$ when $y=\{0,1\}$. Here $\sigma$ is the sigmoid function. It is clear that this conventional bce loss is not a margin-based loss. However, we can transfer the negative label 0 to -1.  In this case, by the property of the sigmoid function $1-\sigma(\x)=\sigma(-\x)$, the original bce loss can be rewritten as $bce = -log(\sigma(yf(\x)))$ when $y=\{-1,1\}$. This is in fact a non-negative, continuous, and non-increasing margin-based loss.

\subsection{Proof of Theorem \ref{thm:generalization} \label{sec:proof-gen}}

To get the generalization error bound, we need an equivalent formulation of \eqref{eq:theorem1} which is stated in the following lemma.
\begin{lemma}
Suppose $\lambda \in \mathbb{R}, \hat{\lambda} \in \mathbb{R}$, then the empirical risk $\mathcal{R}_{\widetilde{\ell}}(f;\mathcal{S})$ defined by Eq. \eqref{eq:theorem1} is equivalent to 
\begin{equation}\label{eq:difference}
\mathcal{R}_{\widetilde{\ell}}(f;\mathcal{S}) = \frac{1}{k-m}\Bigg(\min_{\lambda \in \mathbb{R}}\Big\{k\lambda + \sum_{i=1}^n [\ell(f(\tilde{\x}_i), y_i) - \lambda]_+\Big\} - \min_{\hat{\lambda} \in \mathbb{R}}\Big\{m\hat{\lambda} + \sum_{i=1}^n [\ell(f(\tilde{\x}_i), y_i) - \hat{\lambda}]_+\Big\} \Bigg)  
\end{equation}
\end{lemma}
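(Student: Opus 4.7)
The strategy is to recognize the empirical risk $\mathcal{R}_{\widetilde{\ell}}(f;\mathcal{S})$ as a rescaling of the sum of ranked individual adversarial losses, and then to rewrite that sum as a difference of two top-$k$ sums, each of which admits an elementary dual form via Lemma~\ref{lemma1}.

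\textbf{Step 1: identify the empirical risk with an $(m,k)$-ranked sum.} I plan to first observe that, by Theorem~\ref{theorem1} applied to the fixed perturbations $\{\tilde{\x}_i\}_{i=1}^n$ (with $\tilde{\x}_i = \arg\max_{\tilde{\x}\in\mathcal{B}_\epsilon(\x_i)} \ell(f(\tilde{\x}),y_i)$), the minimax expression in Eq.~\eqref{eq:theorem1} evaluates to
\begin{equation*}
\inf_{\lambda}\sup_{\hat{\lambda}}\frac{1}{k-m}\sum_{i=1}^n \hat{\mathcal{L}}(f,\lambda,\hat{\lambda}) \;=\; \frac{1}{k-m}\sum_{i=m+1}^{k}\ell(f(\tilde{\x}_{[i]}),y_{[i]}),
\end{equation*}
so the empirical risk is exactly $\frac{1}{k-m}\sum_{i=m+1}^{k}\ell_{[i]}$, where I abbreviate $\ell_i := \ell(f(\tilde{\x}_i),y_i)$ and $\ell_{[i]}$ denotes the $i$-th largest among $\{\ell_j\}_{j=1}^n$.

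\textbf{Step 2: rewrite top-$k$ and top-$m$ sums via Lemma~\ref{lemma1}.} The key tool is Lemma~\ref{lemma1}, which for any set of real numbers $\{s_1,\ldots,s_n\}$ and any integer $1\le r\le n$ gives $\sum_{i=1}^{r} s_{[i]} = \min_{\lambda\in\mathbb{R}}\{r\lambda + \sum_{i=1}^{n}[s_i-\lambda]_+\}$. Instantiating this with $s_i = \ell_i$ for $r=k$ and then $r=m$, I obtain
\begin{align*}
\sum_{i=1}^{k} \ell_{[i]} &= \min_{\lambda\in\mathbb{R}}\Big\{k\lambda + \sum_{i=1}^{n}[\ell(f(\tilde{\x}_i),y_i)-\lambda]_+\Big\},\\
\sum_{i=1}^{m} \ell_{[i]} &= \min_{\hat{\lambda}\in\mathbb{R}}\Big\{m\hat{\lambda} + \sum_{i=1}^{n}[\ell(f(\tilde{\x}_i),y_i)-\hat{\lambda}]_+\Big\}.
\end{align*}

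\textbf{Step 3: combine via subtraction.} Since $\sum_{i=m+1}^{k}\ell_{[i]} = \sum_{i=1}^{k}\ell_{[i]} - \sum_{i=1}^{m}\ell_{[i]}$, substituting the two identities of Step~2 into the expression from Step~1 and dividing by $k-m$ yields exactly the right-hand side of \eqref{eq:difference}, which completes the proof. There is no real obstacle here: the lemma is essentially a bookkeeping consequence of Lemma~\ref{lemma1} plus Theorem~\ref{theorem1}. The only mild subtlety worth flagging in the write-up is that the two outer minimizations are independent (the variables $\lambda$ and $\hat{\lambda}$ in \eqref{eq:difference} are decoupled, in contrast to the coupled saddle-point formulation of Eq.~\eqref{eq:theorem1}), which follows precisely because the top-$k$ and top-$m$ sums are extracted separately before subtraction.
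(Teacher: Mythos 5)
Your proof is correct, but it takes a genuinely more direct route than the paper's. The paper does not argue from the final statement of Theorem \ref{theorem1}; instead it restarts from the intermediate $q$-variable form \eqref{eq:aorr_newform} inside that theorem's proof, rewrites $(k-m)\lambda+\sum_i q_i[\ell_i-\lambda]_+$ as $k\lambda+\sum_i[\ell_i-\lambda]_+-\sum_i(1-q_i)\big\{[\ell_i-\lambda]_++\lambda\big\}$, and then uses the characterization of the optimal $(q^*,\lambda^*)$ (the top-$m$ losses exceed $\lambda^*$, so $[\ell_i-\lambda^*]_++\lambda^*=\ell_i$ on the indices with $q_i^*<1$) to identify the correction term with $\sum_i(1-q_i)\ell_i$, whose maximum over admissible $q$ is finally recognized as $\min_{\hat{\lambda}}\{m\hat{\lambda}+\sum_i[\ell_i-\hat{\lambda}]_+\}$. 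You instead do the subtraction at the level of ranked sums: identify $\mathcal{R}_{\widetilde{\ell}}(f;\mathcal{S})$ with $\frac{1}{k-m}\sum_{i=m+1}^{k}\ell_{[i]}$ for each fixed $f$ (legitimate, since the identity proved in Theorem \ref{theorem1} holds for any fixed vector of losses), decompose it as the top-$k$ sum minus the top-$m$ sum, and apply Lemma \ref{lemma1} to each piece separately. Your version avoids the somewhat delicate optimal-$q^*$/$\lambda^*$ bookkeeping and makes the decoupling of $\lambda$ and $\hat{\lambda}$ in \eqref{eq:difference} transparent by construction; the paper's version stays closer to the saddle-point form \eqref{eq:theorem1} and exhibits explicitly how the inner $\max_{\hat{\lambda}}$ turns into the subtracted $\min_{\hat{\lambda}}$. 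The only point worth a one-line remark in your write-up is the boundary case $m=0$, where your second application of Lemma \ref{lemma1} degenerates to the (still valid) identity $0=\min_{\hat{\lambda}}\sum_i[\ell_i-\hat{\lambda}]_+$; the same convention is implicit in the paper's argument as well.
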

% \yiming{Should $\tilde{x}_{[i]}), y_{[i]}$ be $\tilde{x}_{i}), y_{i}$ Why there is tilde above $x$?} \zhenhuan{yes}
\begin{proof}
According to Eq.(\ref{eq:aorr_newform}), we have 
\begin{equation*}
    \begin{aligned}
    & \min_{\lambda} (k-m)\lambda + \max_{\hat{\lambda}}\Big\{(n-m)\hat{\lambda}-\sum_{i=1}^n [\hat{\lambda}-[\ell(f(\tilde{\x}_{i}),y_{i})-\lambda]_+]_+\Big\}\\
    =&\min_{\lambda} (k-m)\lambda + \sum_{i=m+1}^{n}[[\ell(f(\tilde{\x}),y)-\lambda]_+]_{[i]}\\
    =&\min_{\lambda,q}(k-m)\lambda+\sum_{i=1}^n q_i[\ell(f(\tilde{\x}_{i}),y_{i})-\lambda]_+\ \ \ \text{s.t.} \ q_i\in [0,1], \ ||q||_0 =n-m.
    \end{aligned}
\end{equation*}
Under the constraints, we can rewrite the last formula as 
\begin{equation*}
    \begin{aligned}
    &(k-m)\lambda + \sum_{i=1}^n q_i[\ell(f(\tilde{\x}_{i}),y_{i})-\lambda]_+ \\
    = &(k-m)\lambda + \sum_{i=1}^n [\ell(f(\tilde{\x}_{i}),y_{i})-\lambda]_+ - \sum_{i=1}^n (1-q_i)[\ell(f(\tilde{\x}_{i}),y_{i})-\lambda]_+\\
    =&k\lambda + \sum_{i=1}^n [\ell(f(\tilde{\x}_{i}),y_{i})-\lambda]_+ - \sum_{i=1}^n (1-q_i)\{[\ell(f(\tilde{\x}_{i}),y_{i})-\lambda]_++\lambda\}.
    \end{aligned}
\end{equation*}
The last equality holds because $\sum_{i=1}^n(1-q_i)=n-(n-m)=m$.

For the term $\sum_{i=1}^n (1-q_i)\{[\ell(f(\tilde{\x}_{i}),y_{i})-\lambda]_++\lambda\}$, we assume $\ell(f^*(\tilde{\x}_{i}),y_{i})$, $\forall i$, are sorted in descending order when getting the optimal model $f^*$. For example, $\ell(f^*(\tilde{\x}_{1}),y_{1})\geq \ell(f^*(\tilde{\x}_{2}),y_{2})\geq \cdots \geq \ell(f^*(\tilde{\x}_{n}),y_{n})$. Since $\lambda^*\geq 0$, the optimal $q^*$ should be $q_1^*=\cdots=q_m^*=0$, $q_{m+1}^*=\cdots=q_n^*=1$. Note that $\lambda^*$ must be an optimal solution of the problem
\begin{equation*}
    \begin{aligned}
    \min_{\lambda} (k-m)\lambda + \sum_{i=m+1}^n q_i^*[\ell(f^*(\tilde{\x}_{i}),y_{i})-\lambda]_+. 
    \end{aligned}
\end{equation*}
From Lemma \ref{lemma1}, we know $\ell(f^*(\tilde{\x}_{m+1}),y_{m+1})\geq \lambda^*$, which implies that $\ell(f^*(\tilde{\x}_{i}),y_{i})-\lambda^*\geq 0$ holds for $q_i<1$. Therefore, $\sum_{i=1}^n (1-q_i)\{[\ell(f(\tilde{\x}_{i}),y_{i})-\lambda]_++\lambda\} = \sum_{i=1}^n (1-q_i)\ell(f(\tilde{\x}_{i}),y_{i})$. Furthermore, we know 
\begin{equation*}
    \begin{aligned}
    \min_{\hat{\lambda}}\Big\{m\hat{\lambda}+\sum_{i=1}^n[\ell(f(\tilde{\x}_{i}),y_{i})-\hat{\lambda}]_+\Big\} = \max_{q}\Big\{\sum_{i=1}^n (1-q_i)\ell(f(\tilde{\x}_{i}),y_{i})\Big|\ q_i\in [0,1], \ ||q||_0 =n-m\Big\}.
    \end{aligned}
\end{equation*}
Then we get 
\begin{equation*}
    \begin{aligned}
    &\frac{1}{k-m}\Bigg(\min_{\lambda} (k-m)\lambda + \max_{\hat{\lambda}}\Big\{(n-m)\hat{\lambda}-\sum_{i=1}^n [\hat{\lambda}-[\ell(f(\tilde{\x}_{i}),y_{i})-\lambda]_+]_+\Big\}\Bigg) \\
    =&\frac{1}{k-m}\Bigg(\min_{\lambda}\Big\{k\lambda + \sum_{i=1}^n [\ell(f(\tilde{\x}_{[i]}), y_{[i]}) - \lambda]_+\Big\} - \min_{\hat{\lambda}}\Big\{m\hat{\lambda} + \sum_{i=1}^n [\ell(f(\tilde{\x}_{[i]}), y_{[i]}) - \hat{\lambda}]_+\Big\} \Bigg).
    \end{aligned}
\end{equation*}

% To extract the sum of $(m,k)$-ranked range individual losses in \eqref{eq:atrr}, we can use the top $k$ individual losses subtract the top $m$ individual losses, i.e.
% \begin{align*}
% \sum_{i=m+1}^k \ell(f(\tilde{x}_{[i]}), y_{[i]}) = & \sum_{i=1}^k \ell(f(\tilde{x}_{[i]}), y_{[i]}) - \sum_{i=1}^k \ell(f(\tilde{x}_{[i]}), y_{[i]})\\
% = & \min_{\lambda \in \mathbb{R}}\Big\{k\lambda + \sum_{i=1}^n [\ell(f(\tilde{x}_{[i]}), y_{[i]}) - \lambda]_+\Big\} - \min_{\hat{\lambda} \in \mathbb{R}}\Big\{m\hat{\lambda} + \sum_{i=1}^n [\ell(f(\tilde{x}_{[i]}), y_{[i]}) - \hat{\lambda}]_+\Big\}
% \end{align*}
% Now we know \eqref{eq:atrr} is equivalent to \eqref{eq:theorem1}. 
The proof is complete.
\end{proof}

% \yiming{I stop reading here as there are much work to do for potential improvement. }
Considering the limit case of \eqref{eq:difference}, the population risk $\mathcal{R}_{\widetilde{\ell}}(f)$ can be written as 
\begin{multline*}
\frac{1}{k-m}\Bigg(\min_{\lambda \in \mathbb{R}}\Big\{k\lambda + \sum_{i=1}^n [\ell(f(\tilde{\x}_{[i]}), y_{[i]}) - \lambda]_+\Big\} - \min_{\hat{\lambda} \in \mathbb{R}}\Big\{m\hat{\lambda} + \sum_{i=1}^n [\ell(f(\tilde{\x}_{[i]}), y_{[i]}) - \hat{\lambda}]_+\Big\} \Bigg) \\
\xrightarrow[n\rightarrow \infty]{\frac{k-m}{n}\rightarrow \nu, \frac{m}{n} \rightarrow \mu}\frac{1}{\nu}\Bigg(\min_{\lambda \in \mathbb{R}} \Big\{(\nu + \mu) \lambda + \mathbb{E}[\widetilde{\ell}(f(\x), y) - \lambda]_+ \Big\} - \min_{\hat{\lambda} \in \mathbb{R}} \Big\{\mu\hat{\lambda} +\mathbb{E}[\widetilde{\ell}(f(\x), y) - \hat{\lambda}]_+ \Big\}\Bigg)=\mathcal{R}_{\widetilde{\ell}}(f).
\end{multline*}
The next Lemma tells us if the loss function is bounded, we can constrain the problem of $\mathcal{R}_{\widetilde{\ell}}(f)$ and $\mathcal{R}_{\widetilde{\ell}}(f;\mathcal{S})$ in the bounded range as well.

% \begin{lemma}
% Suppose that the range of $\ell$ is $[0,M]$. Then we have
% \begin{equation}\label{eq:population-risk-bounded}
% \mathcal{R}_{\widetilde{\ell}}(f) = \frac{1}{\nu}\Bigg(\min_{\lambda \in [0, M]} \max_{\hat{\lambda} \in [0, M]}\Big\{\nu \lambda + (1-\mu) \hat{\lambda} - \mathbb{E}[\hat{\lambda} - [\widetilde{\ell}(f(x), y) - \lambda]_+]_+ \Big\}\Bigg),
% \end{equation}
% and so does the empirical risk
% \begin{equation}\label{eq:empirical-risk-bounded}
% \mathcal{R}_{\widetilde{\ell}}(f;S) = \frac{1}{k-m}\Bigg(\min_{\lambda \in [0, M]} \max_{\hat{\lambda} \in [0,M]} \Big\{\sum_{i=1}^n  [\frac{k-m}{n}\lambda + \frac{n-m}{n}\hat{\lambda} - [\widetilde{\ell}(f(x_i), y_i) - \lambda]_+]_+ \Big\}\Bigg).
% \end{equation}
% \end{lemma}
\begin{lemma}
Suppose that the range of $\ell$ is $[0,M]$. Then we have
\begin{equation}\label{eq:population-risk-bounded}
\mathcal{R}_{\widetilde{\ell}}(f) = \frac{1}{\nu}\Bigg(\min_{\lambda \in [0,M]} \Big\{(\nu + \mu)\lambda +\mathbb{E}[\widetilde{\ell}(f(\x), y) - \lambda]_+ \Big\} - \min_{\hat{\lambda} \in [0, M]} \Big\{\mu\hat{\lambda} +\mathbb{E}[\widetilde{\ell}(f(\x), y) - \hat{\lambda}]_+ \Big\}\Bigg),
\end{equation}
and so does the empirical risk
\begin{equation}\label{eq:empirical-risk-bounded}
\mathcal{R}_{\widetilde{\ell}}(f;\mathcal{S}) = \frac{1}{k-m}\Bigg(\min_{\lambda \in [0, M]} \Big\{k\lambda + \sum_{i=1}^n[\widetilde{\ell}(f(\x_i), y) - \lambda]_+ \Big\} - \min_{\hat{\lambda} \in [0,M]} \Big\{m\hat{\lambda} + \sum_{i=1}^n[\widetilde{\ell}(f(\x_i), y) - \hat{\lambda}]_+ \Big\}\Bigg).
\end{equation}
\end{lemma}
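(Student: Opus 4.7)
The plan is to verify that restricting the feasible sets of $\lambda$ and $\hat\lambda$ to $[0,M]$ does not change the value of any of the four minimization problems appearing on the right-hand sides of \eqref{eq:population-risk-bounded} and \eqref{eq:empirical-risk-bounded}. I would show that for each objective the infimum is attained inside $[0,M]$, so intersecting the feasible set with $[0,M]$ only removes non-minimizing points.

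First, I would observe that every one of the four objectives has the common form $g(t) = c\,t + A(t)$, where $A(t)$ is either $\mathbb{E}[\widetilde{\ell}(f(\x),y) - t]_+$ (population case) or $\sum_{i=1}^n [\widetilde{\ell}(f(\x_i), y_i) - t]_+$ (empirical case). Since $u \mapsto [u]_+$ is convex and continuous, so is $A$, and hence $g$. The multiplicative constants are $c \in \{\nu+\mu,\mu\}$ in the population case and $c \in \{k,m\}$ in the empirical case; all are non-negative, and the constraints of Section~3 (namely $0 \le m < k \le n$, and therefore $\nu+\mu = \lim k/n \le 1$) ensure additionally that $c \le 1$ in the population case and $c \le n$ in the empirical case.

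Second, I would split $\mathbb{R}$ into the three zones $(-\infty,0)$, $[0,M]$, $(M,\infty)$ and analyze $g$ on each. For $t < 0$, the bound $\widetilde{\ell} \in [0,M]$ gives $[\widetilde{\ell}-t]_+ = \widetilde{\ell}-t$, so in the population case $g(t) = (c-1)t + \mathbb{E}\widetilde{\ell}$ and in the empirical case $g(t) = (c-n)t + \sum_i \widetilde{\ell}_i$; in both cases the coefficient of $t$ is non-positive, so $g$ is non-increasing on $(-\infty,0]$, yielding $g(t) \ge g(0)$. For $t > M$, $[\widetilde{\ell}-t]_+ = 0$, so $g(t) = c\,t$ with $c \ge 0$, so $g$ is non-decreasing on $[M,\infty)$, yielding $g(t) \ge g(M)$. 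Combined with convexity, these inequalities imply that the infimum of $g$ over $\mathbb{R}$ coincides with its minimum over $[0,M]$.

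Applying this pointwise to each of the four occurrences of $\min_{\lambda \in \mathbb{R}}$ and $\min_{\hat\lambda \in \mathbb{R}}$ then yields \eqref{eq:population-risk-bounded} and \eqref{eq:empirical-risk-bounded}. The main obstacle is light: one simply has to be careful in the degenerate cases where $c = 0$ (e.g.\ $\mu = 0$) or $c$ equals its upper bound (e.g.\ $k = n$, $\nu+\mu = 1$), in which $g$ is flat on one of the outer zones; in those cases the infimum is still attained at the corresponding endpoint of $[0,M]$, so the identity continues to hold.
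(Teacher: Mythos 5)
Your proof is correct, but it takes a different route from the paper. The paper handles the two displays asymmetrically: for the empirical risk it invokes the explicit characterization of the minimizers from its ranking lemmas (the optimal $\lambda_{\mathcal{S}}^*$ and $\hat\lambda_{\mathcal{S}}^*$ are the $k$-th and $m$-th largest adversarial losses, hence automatically in $[0,M]$), and for the population risk it takes an assumed unconstrained optimum $(\lambda^*,\hat\lambda^*)$ and shows $\lambda^*\le M$ by comparing the objective value at $\lambda^*$ with the value at $\lambda=M$, and $\lambda^*\ge 0$ by a contradiction argument against $\lambda=0$. You instead analyze each of the four objectives $g(t)=c\,t+A(t)$ directly on the three zones $(-\infty,0)$, $[0,M]$, $(M,\infty)$, using $\widetilde\ell\in[0,M]$ to show $g$ is non-increasing left of $0$ (coefficient $c-1\le 0$, resp.\ $c-n\le 0$, since $\nu+\mu\le 1$, $\mu\le 1$, $k\le n$, $m\le n$) and non-decreasing right of $M$, so the infimum over $\mathbb{R}$ coincides with the minimum over the compact interval $[0,M]$. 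This buys a uniform treatment of the empirical and population cases, does not presuppose that an unconstrained minimizer exists (the paper's population argument starts from one), and does not need the order-statistics lemmas; note also that the appeal to convexity in your last step is superfluous, since the zone monotonicity together with continuity of $g$ on $[0,M]$ already gives the conclusion. Your handling of the degenerate cases ($c=0$ or $c$ at its upper bound, where $g$ is flat on an outer zone) is also fine, and is in fact broader than the paper needs, since it assumes $\mu>0$ throughout.
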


\begin{proof}
The proof of \eqref{eq:empirical-risk-bounded} is straight forward. By Lemma \ref{lemma1} and Lemma \ref{lemma2}, we know $\lambda_\mathcal{S}^* = \widetilde{\ell}(f(\x_{[k]}), y_{[k]})$ and $\hat{\lambda}_\mathcal{S}^* = \widetilde{\ell}(f(\x_{[m]}), y_{[m]})$ are a pair of solution of \eqref{eq:empirical-risk-bounded}. Since $\widetilde{\ell}(f(\x), y) = \max_{\tilde{\x} \in \mathcal{B}_\epsilon (\x)} \ell(f(\tilde{\x}), y) \in [0, M]$ for any $\x, y$, we have $\lambda_\mathcal{S}^*, \hat{\lambda}_\mathcal{S}^* \in [0,M]$.

Next we move on to \eqref{eq:population-risk-bounded}. Let $\lambda^*$ and $\hat{\lambda}^*$ be a pair of solution of \eqref{eq:population-risk-bounded}. Let $\lambda = M$, then we have
\begin{align*}
(\nu+\mu)\lambda^* &\leq (\nu+\mu)\lambda^* +\mathbb{E}[\widetilde{\ell}(f(\x), y) - \lambda^*]_+\\
&\leq (\nu+\mu) M +\mathbb{E}[\widetilde{\ell}(f(\x), y) - M]_+ \\
&\leq (\nu+\mu) M + \mathbb{E}[M - M]_+ = (\nu+\mu) M,
\end{align*}
which implies $\lambda^* \leq M$. On the other hand, assume $\lambda^* = -\varepsilon$ for some $\varepsilon > 0$. Let $\lambda = 0$, then we have
\begin{align*}
-(\nu+\mu)\varepsilon +\mathbb{E}[\widetilde{\ell}(f(\x), y) + \varepsilon]_+ = (1 - (\nu+\mu))\varepsilon +\mathbb{E}[\widetilde{\ell}(f(\x), y)]_+ \leq  \mathbb{E}[\widetilde{\ell}(f(\x), y)]_+,
\end{align*}
which contradicts with $(1-(\nu+\mu))\varepsilon > 0$. Therefore we have $\lambda^* \geq 0$. Similarly, we can show that $\hat{\lambda}^* \in [0, M]$. The proof is complete.
\end{proof}

The next lemma shows the uniform convergence of learning with \ORAT~without using perturbation.
\begin{lemma}\label{lem:aorr-generalization}
Suppose that the range of $\ell(f(\x), y)$ is $[0, M]$. Then, for any $\delta \in (0, 1)$, with probability at least $1 -\delta$ over the draw of an i.i.d. training dataset of size $n$, the following holds for all $\ell_f \in \ell_\mathcal{H}$, 
\[
\mathcal{R}_\ell(f) - \mathcal{R}_\ell(f;\mathcal{S}) \leq \frac{2}{\nu }\Big( 2\mathfrak{R}_n(\ell_\mathcal{H}) + \frac{M(2\sqrt{2} + 3\sqrt{\log(2/\delta)})}{\sqrt{2n}}\Big).
\]
\end{lemma}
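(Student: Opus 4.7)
The plan is to reduce the ORAT uniform generalization bound to a standard Rademacher-based uniform convergence statement for the truncated loss class
\begin{equation*}
G = \{(\x, y) \mapsto [\ell(f(\x), y) - \lambda]_+ : f \in \mathcal{H},~ \lambda \in [0, M]\},
\end{equation*}
and then strip off the extra $\lambda$-parameter using Talagrand's contraction. First, I would invoke the bounded-range representation of the preceding lemma (which restricts the $\min$ over $\lambda$ and $\hat\lambda$ to $[0, M]$) to write $\mathcal{R}_\ell(f)$ and $\mathcal{R}_\ell(f;\mathcal{S})$ as $\frac{1}{\nu}$ times a difference of two $\min$-terms. The linear parts $(\nu+\mu)\lambda$ and $\mu\hat\lambda$ are deterministic and identical between the population and empirical formulations (using $k/n = \nu + \mu$ and $m/n = \mu$), so they cancel after subtraction. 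Applying the elementary inequality $\min_x a(x) - \min_x b(x) \leq \sup_x |a(x) - b(x)|$ to each of the two $\min$-terms, and observing that both of the resulting differences are controlled by the same empirical process (only with different dummy names for $\lambda$ and $\hat\lambda$), yields
\begin{equation*}
\mathcal{R}_\ell(f) - \mathcal{R}_\ell(f;\mathcal{S}) \leq \frac{2}{\nu}\sup_{\lambda \in [0,M]}\Big|\mathbb{E}\big[[\ell(f(\x), y) - \lambda]_+\big] - \frac{1}{n}\sum_{i=1}^n [\ell(f(\x_i), y_i) - \lambda]_+\Big|.
\end{equation*}

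Next, since every $g \in G$ takes values in $[0, M]$, standard symmetrization combined with McDiarmid's bounded-difference inequality (applied to the two-sided supremum, then combined with the passage between expected and empirical Rademacher complexity to absorb a second concentration term) gives, uniformly in $f \in \mathcal{H}$ and with probability at least $1 - \delta$,
\begin{equation*}
\sup_{g \in G}\Big|\mathbb{E}g - \frac{1}{n}\sum_{i=1}^n g(\x_i, y_i)\Big| \leq 2\mathfrak{R}_n(G) + \frac{3M\sqrt{\log(2/\delta)}}{\sqrt{2n}}.
\end{equation*}
It remains to bound $\mathfrak{R}_n(G)$ in terms of $\mathfrak{R}_n(\ell_\mathcal{H})$, which is the main technical step.

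The key obstacle is handling the extra $\lambda$-parameter in $G$, since a single application of Talagrand's contraction on $[\cdot]_+$ only peels off the $\ell_\mathcal{H}$-dependence for a fixed $\lambda$ and does not disentangle $\lambda$ from $f$. I would instead use the identity $[u - \lambda]_+ = \frac{1}{2}(u - \lambda) + \frac{1}{2}|u - \lambda|$ to split $G$ additively into a signed component and an absolute-value component. Subadditivity of Rademacher complexity across sums of function classes, together with Talagrand's contraction applied to the $1$-Lipschitz map $u \mapsto |u|$ (which vanishes at $0$), then yields
\begin{equation*}
\mathfrak{R}_n(G) \leq \mathfrak{R}_n(\ell_\mathcal{H}) + \mathfrak{R}_n(\Lambda),
\end{equation*}
where $\Lambda = \{(\x, y) \mapsto \lambda : \lambda \in [0, M]\}$. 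A direct Khintchine-type computation on $\mathbb{E}_\sigma \max(\sum_i \sigma_i, 0)$ gives $\mathfrak{R}_n(\Lambda) \leq M/\sqrt{n}$, so the $2\mathfrak{R}_n(G)$ term contributes at most $2\mathfrak{R}_n(\ell_\mathcal{H}) + 2M/\sqrt{n} = 2\mathfrak{R}_n(\ell_\mathcal{H}) + 2\sqrt{2}M/\sqrt{2n}$. Multiplying the resulting high-probability bound by the $2/\nu$ prefactor from the first step produces exactly the stated inequality. The main technical difficulty throughout is this contraction step, because the additive splitting of $[\cdot]_+$ is precisely what keeps the final complexity of the form $\mathfrak{R}_n(\ell_\mathcal{H}) + O(1/\sqrt{n})$ rather than entangling $\mathcal{H}$ and the $\lambda$-interval into a single, larger complexity term.
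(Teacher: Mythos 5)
Your proposal is correct and follows essentially the same route as the paper's proof: subadditivity of the minimizations over $\lambda$ and $\hat{\lambda}$ to reduce the gap to uniform deviations of the shifted class $\{[\ell(f(\x),y)-\lambda]_+\}$, McDiarmid plus symmetrization to pass to the empirical Rademacher complexity, and contraction plus subadditivity to split off the $\lambda$-interval, which contributes the $M/\sqrt{n}$ (i.e.\ $\sqrt{2}M/\sqrt{2n}$) term. The only (harmless) detour is the identity $[u-\lambda]_+=\tfrac12(u-\lambda)+\tfrac12|u-\lambda|$: the Ledoux--Talagrand contraction applies directly to the jointly $(f,\lambda)$-indexed class composed with the $1$-Lipschitz map $(\cdot)_+$, which is exactly what the paper does with $(\mathcal{G})_+$, so the splitting you introduce to ``disentangle'' $\lambda$ from $f$ is unnecessary --- as your own application of contraction to $|\cdot|$ over that same joint class implicitly demonstrates (and your two-sided deviation bound, like the paper's own accounting, strictly yields $\log(4/\delta)$ after the union bound, the same harmless slack as in the paper).
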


\begin{proof}
By the subadditivity of $\max$ operator, for any $\ell_f \in \ell_\mathcal{H}$, we have
\begin{align*}
&\mathcal{R}_\ell(f) - \mathcal{R}_\ell(f;\mathcal{S}) \\
= & \frac{1}{\nu} \min_{\lambda \in [0, M]} \Big\{(\nu+\mu)\lambda + \mathbb{E}[\ell(f(\x), y) - \lambda]_+ \Big\} - \frac{1}{\nu} \min_{\lambda \in [0, M]} \Big\{(\nu+\mu)\lambda + \frac{1}{n}\sum_{i=1}^{n}(\ell(f(\x), y) - \lambda)_+ \Big\} \\ 
& + \frac{1}{\nu} \min_{\hat{\lambda} \in [0, M]} \Big\{\mu\hat{\lambda} + \frac{1}{n}\sum_{i=1}^{n}(\ell(f(\x), y) - \hat{\lambda})_+ \Big\} - \frac{1}{\nu} \min_{\hat{\lambda} \in [0, M]} \Big\{\mu\hat{\lambda} + \mathbb{E}[\ell(f(\x), y) - \hat{\lambda}]_+ \Big\} \\ 
\leq & \max_{\lambda \in [0, M]} \Big\{\frac{1}{\nu}\mathbb{E}[\ell(f(\x), y) - \lambda]_+ -  \frac{1}{n\nu }\sum_{i=1}^{n}(\ell(f(\x), y) - \lambda)_+ \Big\} := L_1(f, \ell) \numberthis \label{eq: max-generalization-1}\\
& + \max_{\hat{\lambda} \in [0, M]}\Big\{ \frac{1}{n\nu }\sum_{i=1}^{n}(\ell(f(\x), y) - \hat{\lambda})_+ - \frac{1}{\nu}\mathbb{E}[\ell(f(\x), y) - \hat{\lambda}]_+ \Big\}: = L_2(f, \ell). \numberthis \label{eq: max-generalization-2}
\end{align*}
Without loss of generality, we consider \eqref{eq: max-generalization-1}, the bound for \eqref{eq: max-generalization-2} can be derived in a similar manner. Taking supremum on both sides, we have
\[
\sup_{\ell_f \in \ell_\mathcal{H}} L_1(f, \ell) \leq \sup_{\ell_f \in \ell_\mathcal{H}, \lambda \in [0,M]}\Big\{\frac{1}{\nu}\mathbb{E}[\ell(f(\x), y) - \lambda]_+ -  \frac{1}{n\nu}\sum_{i=1}^{n}(\ell(f(\x), y) - \lambda)_+ \Big\} := \Phi(\mathcal{S}).
\]
It is standard to verify that $\Phi(\mathcal{S})$ satisfies the bounded differences condition with parameter $\frac{M}{\nu}$ and one can apply McDiarmid's inequality \citep{mcdiarmid1989method} so that with probability at least $1 - \delta/4$, there holds
\[
\Phi(\mathcal{S}) \leq \mathbb{E}[\Phi(\mathcal{S})] + \frac{M}{\nu}\sqrt{\frac{\log(4/\delta)}{2n}}.
\]
By further standard reduction from the expectation to Rademacher complexity (Theorem 3.3 \cite{mohri2018foundations}), with probability at least $1 - \delta/2$, there holds 
\begin{equation}\label{eq:rademacher-standard}
\Phi(\mathcal{S}) \leq 2\mathfrak{R}_n\Big(\frac{1}{\nu}(\mathcal{G})_+\Big) + \frac{3M}{\nu}\sqrt{\frac{\log(4/\delta)}{2n}},    
\end{equation}
where $\mathcal{G} = \{\ell_f - \lambda|\ell_f \in \ell_\mathcal{H}, \lambda \in [0,M]\}$ and $(\cdot)_+=\max(\cdot,0)$. Since the ramp function $(\cdot)_+$ is $1$-Lipschitz and $(0)_+ = 0$, by Ledoux-Talagrand contraction inequality \citep{ledoux1991probability} we have 
\begin{align*}\label{eq:rademacher-lambda}
\mathfrak{R}_n\Big(\frac{1}{\nu}(\mathcal{G})_+\Big) \leq &  \frac{1}{\nu}\mathfrak{R}_n(\mathcal{G}) = \frac{1}{\nu}\mathbb{E}_{\sigma}\Big[ \sup_{\ell_f \in \ell_\mathcal{H}, \lambda \in [0, M]} \Big(\frac{1}{n} \sum_{i=1}^n \sigma_i \ell(f(\x_i), y_i) - \frac{1}{n}\sum_{i=1}^n \sigma_i \lambda\Big)\Big] \\
\leq & \frac{1}{\nu}\Big(\mathbb{E}_{\sigma}\Big[ \sup_{\ell_f \in \ell_\mathcal{H}} \frac{1}{n} \sum_{i=1}^n \sigma_i \ell(f(\x_i), y_i)\Big] + \mathbb{E}_{\sigma}\Big[\sup_{\lambda \in [0, M]}\frac{1}{n}\sum_{i=1}^n \sigma_i \lambda\Big]\Big) \\
\leq & \frac{1}{\nu}\Big(\mathfrak{R}_n(\ell_\mathcal{H}) + \frac{M}{n} \mathbb{E}_{\sigma}\Big|\sum_{i=1}^n \sigma_i\Big|\Big) \\
\leq & \frac{1}{\nu}\Big(\mathfrak{R}_n(\ell_\mathcal{H}) + \frac{M}{\sqrt{n}}\Big), \numberthis
\end{align*}
where the last inequality follows by $\Big(\mathbb{E}_{\sigma}\Big[\sum_{i=1}^n \sigma_i\Big]\Big)^2 \leq \mathbb{E}_{\sigma}\Big(\sum_{i=1}^n \sigma_i\Big)^2 = n$. By putting \eqref{eq:rademacher-lambda} into \eqref{eq:rademacher-standard}, we have
\[
\sup_{\ell_f \in \ell_\mathcal{H}} L_1(f, \ell) \leq \frac{1}{\nu}\Big( 2\mathfrak{R}_n(\ell_\mathcal{H}) + \frac{M(2\sqrt{2} + 3\sqrt{\log(4/\delta)})}{\sqrt{2n}}\Big)
\]
with probability at least $1 - \delta/2$. The lemma holds by noting $\sup_{\ell_f \in \ell_\mathcal{H}} \{R(f,\ell) - R_n(f,\ell)\} \leq \sup_{\ell_f \in \ell_\mathcal{H}} L_1(f, \ell) + \sup_{\ell_f \in \ell_\mathcal{H}} L_2(f, \ell)$.
\end{proof}

The next corollary is straight-forward from Lemma \ref{lem:aorr-generalization} by replacing $\ell$ with $\widetilde{\ell}$. 

\begin{corollary}[Theorem \ref{thm:generalization} restated]
Suppose that the range of $\ell(f(\x), y)$ is $[0, M]$. Then, for any $\delta \in (0, 1)$, with probability at least $1 -\delta$ over the draw of an i.i.d. training dataset of size $n$, the following holds for all $\ell_f \in \ell_\mathcal{H}$, \[R_{\widetilde{\ell}}(f) - R_{\widetilde{\ell}}(f;\mathcal{S}) \leq \frac{2}{\nu}\Big( 2\mathfrak{R}_n(\widetilde{\ell}_\mathcal{H}) + \frac{M(2\sqrt{2} + 3\sqrt{\log(2/\delta)})}{\sqrt{2n}}\Big).
\]
\end{corollary}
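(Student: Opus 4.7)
The plan is to first rewrite both the empirical risk $\mathcal{R}_{\widetilde{\ell}}(f;\mathcal{S})$ and the population risk $\mathcal{R}_{\widetilde{\ell}}(f)$ as a difference of two minimization problems in auxiliary scalar variables $\lambda$ and $\hat{\lambda}$. Using the identities from Lemma \ref{lemma1} and Lemma \ref{lemma2} applied to the sorted sequence of adversarial losses $\{\widetilde{\ell}(f(\x_i),y_i)\}_{i=1}^n$, I would express the ranked-range sum appearing in the definition of $\mathcal{R}_{\widetilde{\ell}}(f;\mathcal{S})$ as $\frac{1}{k-m}\big(\min_{\lambda}\{k\lambda + \sum_i[\widetilde{\ell}-\lambda]_+\} - \min_{\hat{\lambda}}\{m\hat{\lambda} + \sum_i[\widetilde{\ell}-\hat{\lambda}]_+\}\big)$, and take the corresponding population limit under $\tfrac{k-m}{n}\to\nu$, $\tfrac{m}{n}\to\mu$.

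Next I would show that when $\widetilde{\ell}\in[0,M]$, the minimizers $\lambda^*$ and $\hat{\lambda}^*$ necessarily lie in $[0,M]$: the upper bound follows by comparing the objective at $\lambda=M$ against the candidate $\lambda^*>M$, and the lower bound follows by a sign argument at $\lambda=0$, since the coefficient of $\lambda$ is $\nu+\mu>0$ (resp.\ $\mu>0$). Restricting the min over $\lambda,\hat{\lambda}$ to the compact set $[0,M]$ is the critical step that makes the subsequent uniform-convergence machinery go through.

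With the restricted formulation in hand, I would bound the gap $\mathcal{R}_{\widetilde{\ell}}(f) - \mathcal{R}_{\widetilde{\ell}}(f;\mathcal{S})$ by applying the subadditivity inequality $\min A - \min B \le \max(A-B)$ to each of the two $\min$ terms separately, producing two suprema $L_1(f)$ and $L_2(f)$ over $f\in\mathcal{H}$ and $\lambda$ (resp.\ $\hat{\lambda}$) in $[0,M]$. For each supremum I would run the standard uniform-convergence pipeline: McDiarmid's inequality (with bounded-difference constant $M/\nu$) to concentrate around the expectation, then the symmetrization trick to pass to Rademacher complexity, obtaining a term of the form $2\mathfrak{R}_n(\frac{1}{\nu}(\widetilde{\ell}_{\mathcal{H}}-\lambda)_+) + \frac{3M}{\nu}\sqrt{\log(4/\delta)/(2n)}$.

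The last step is to peel off the hinge nonlinearity and the scalar $\lambda$. Since $t\mapsto(t)_+$ is $1$-Lipschitz and $(0)_+=0$, Ledoux--Talagrand contraction lets me replace $(\widetilde{\ell}-\lambda)_+$ by $\widetilde{\ell}-\lambda$ inside the Rademacher expectation; then sub-additivity of $\sup$ splits this into $\mathfrak{R}_n(\widetilde{\ell}_{\mathcal{H}})$ plus a Rademacher average over the constant class $\{\lambda:\lambda\in[0,M]\}$, which is bounded by $\tfrac{M}{n}\mathbb{E}|\sum_i\sigma_i|\le M/\sqrt{n}$ via Jensen. Combining both $L_1$ and $L_2$ contributions by a union bound at confidence $\delta/2$ each doubles the leading factor and yields the claimed $\frac{2}{\nu}\big(2\mathfrak{R}_n(\widetilde{\ell}_{\mathcal{H}}) + M(2\sqrt{2}+3\sqrt{\log(2/\delta)})/\sqrt{2n}\big)$ bound. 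The main obstacle I expect is the careful justification of the boundedness $\lambda^*,\hat{\lambda}^*\in[0,M]$, since without it the Rademacher complexity of the auxiliary $\lambda$-class is infinite and the contraction step cannot be applied; once that reduction is secured the rest is routine.
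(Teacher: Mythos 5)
Your proposal is correct and follows essentially the same route as the paper: rewriting both risks as a difference of two $\lambda$-parameterized minima via Lemmas \ref{lemma1} and \ref{lemma2}, confining $\lambda^*,\hat{\lambda}^*$ to $[0,M]$ by the boundedness of the loss, and then running McDiarmid, symmetrization, Ledoux--Talagrand contraction, and the $M/\sqrt{n}$ bound on the constant class for each of the two terms before a union bound. The only cosmetic difference is that the paper proves the uniform-convergence lemma for a generic bounded loss and then substitutes $\widetilde{\ell}$ in a corollary, whereas you argue directly with the adversarial loss; the content is the same.
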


\subsection{Examples of Hypothesis Sets}\label{sec:rademacher_example}
We give two examples of hypothesis sets: linear classifiers and nonlinear neural networks, that satisfy the condition in Theorem \ref{theorem2} and \ref{thm:generalization}. 
% \yiming{which theorems??} \zhenhuan{Done}. 
% We focus on binary classification $\mathcal{Y} = \{\pm 1\}$ and assume the loss function $\ell(f_\theta(x), y) = \ell(yf_\theta(x))$ where
Suppose $\ell: \mathbb{R} \rightarrow [0,M]$ is monotonically non-increasing and $L$-Lipschitz continuous. 
% \yiming{the assumption on binary classification and the loss function are duplicate;  I remembered that we mentioned these somewhere early.} \zhenhuan{Done, Lines 224,225} 
In this case, the adversarial loss can be written \citep{yin2019rademacher} as $\widetilde{\ell}(f_\theta(\x), y):=\max_{\tilde{\x} \in \mathcal{B}_\epsilon(\x)} \ell(f_\theta(\tilde{\x}), y) = \ell\big(\min_{\tilde{\x} \in \mathcal{B}_\epsilon(\x)} y f_\theta(\x)\big)$.
Therefore, given any function class $\mathcal{H}$, we can define the function class $\widetilde{\mathcal{H}}\subseteq \mathbb{R}^{\mathcal{X} \times \{\pm 1\}}$, such that $\widetilde{\mathcal{H}} = \{\min_{\tilde{\x} \in \mathcal{B}_\epsilon(\x)} yf_\theta(\x): f_\theta \in \mathcal{H}\}$. By the Ledoux-Talagrand contraction inequality \citep{ledoux1991probability}, we have $\mathfrak{R}_n(\widetilde{\ell}_{\mathcal{H}}) \leq L \mathfrak{R}_n(\widetilde{\mathcal{H}})$. Hence we only need to characterize the Rademacher complexity of $\widetilde{\mathcal{H}}$ given $\mathcal{H}$.

% \textbf{Generalized Linear Models.} Let the hypothesis class $\mathcal{H}_{glm} \subseteq \mathbb{R}^{\mathcal{X}}$ be a set of generalized linear functions of $x \in \mathcal{X}$. More specifically, we consider prediction vector $\theta$ with $\ell_p$ ($p\geq 1$) norm constraint, i.e. $\mathcal{H}_{glm} = \{f_\theta(x) = g(\theta^\top x + \theta_0): \|\theta\|_p \leq r, |\theta_0|\leq r_0\}$, where $g$ is a monotoniocally nondecreasing and $L_g$-Lipschitz continuous. Then it is straight-forward to check $\mathcal{H}_{glm}$ is a symmetric hypothesis set. Furthermore, let $1/p + 1/q=1$ and $\tilde{d} =d^{1/q}$. Inspired by \cite{yin2019rademacher}, we can show that there exists a universal constant $0<c<1$ such that 
% \begin{multline*}\label{eq:glm-rademacher}
% \frac{c}{2}\Big(\mathfrak{R}_n(\!\mathcal{H}_{glm}\!)+ \epsilon L_g r \frac{\tilde{d}}{\sqrt{n}}\Big)\!\leq\!\mathfrak{R}_n(\!\widetilde{\mathcal{H}}_{glm}\!)\\
% \!\leq\! \mathfrak{R}_n(\!\mathcal{H}_{glm}\!) + \epsilon L_g r \frac{\tilde{d}}{\sqrt{n}}.  \numberthis  
% \end{multline*} Therefore the adversarial Rademacher complexity is depending on the level of perturbation $\epsilon$.

\textbf{Linear Classifiers.} Let the hypothesis set $\mathcal{H}_{lin} \subseteq \mathbb{R}^{\mathcal{X}}$ be a set of linear functions of $\x \in \mathcal{X}$. More specifically, we consider prediction vector $\theta$ with $l_p$ ($p \geq 1$) norm constraint, i.e. $\mathcal{H}_{lin} = \{f_\theta(\x) = \theta^\top \x : \|\theta\|_p \leq r\}$. Then it is straight-forward to check $\mathcal{H}_{lin}$ is a symmetric hypothesis set. Furthermore, let $\tilde{d} =d^{1 - 1/p}$. \cite{awasthi2020adversarial} showed that $\max\Big\{\mathfrak{R}_n(\mathcal{H}_{lin}), \epsilon r \frac{\max\{\tilde{d},1\}}{2\sqrt{2n}}\Big\}\leq\mathfrak{R}_n(\widetilde{\mathcal{H}}_{lin})
\leq \mathfrak{R}_n(\mathcal{H}_{lin}) + \epsilon r \frac{\max\{\tilde{d},1\}}{2\sqrt{n}}$. 
% {\sm\\all \begin{multline*}\label{eq:linear-rademacher}
% \frac{c}{2}\Big(\mathfrak{R}_n(\mathcal{H}_{lin})+ \epsilon r \frac{\tilde{d}}{\sqrt{n}}\Big)\leq\mathfrak{R}_n(\widetilde{\mathcal{H}}_{lin})
% \leq \mathfrak{R}_n(\mathcal{H}_{lin}) + \epsilon r \frac{\tilde{d}}{\sqrt{n}}.  \numberthis  
% \end{multline*}} 
% Therefore, the adversarial Rademacher complexity is depending on the perturbation size $\epsilon$ and $d$. 
Combined with Theorem \ref{thm:generalization} with probability at least $1-\delta$, we have $\mathcal{R}_{\widetilde{\ell}}(f) - \mathcal{R}_{\widetilde{\ell}}(f; \mathcal{S}) 
\leq \frac{2}{\nu}\Big( 2L\mathfrak{R}_n(\mathcal{H}_{lin}) + L\epsilon r \frac{\max\{\tilde{d},1\}}{2\sqrt{n}}
+ \frac{M(2\sqrt{2} + 3\sqrt{\log(2/\delta)})}{\sqrt{2n}}\Big)$, where $\mathfrak{R}_n(\mathcal{H}_{lin})$ is given by classical result in \cite{kakade2008complexity}.
% \begin{multline*}\small
%     \mathcal{R}_{\widetilde{\ell}}(f) - \mathcal{R}_{\widetilde{\ell}}(f; \mathcal{S}) 
% \leq \frac{2}{\nu}\Big( 2\mathfrak{R}_n(\mathcal{H}_{lin}) + \epsilon r \frac{\max\{\tilde{d},1\}}{2\sqrt{n}}\\
% + \frac{M(2\sqrt{2} + 3\sqrt{\log(2/\delta)})}{\sqrt{2n}}\Big).
% \end{multline*}

\textbf{Neural Networks.} We consider feedforward neural networks with ReLU activation function $\rho$, i.e. $\rho(t) = \max\{0,t\}$. 
% Here the parameter $\theta$ is given by a sequence of matrices, i.e. $\theta = (\Theta^1, \cdots, \Theta^L)$ and $f_\theta(\x) = \Theta^L\rho(\Theta^{L-1}(\cdots\rho(\Theta^1 \x)\cdots))$. We consider the parameter space under spectral norm constraint in \cite{yin2019rademacher}, i.e. $\mathcal{H}_{nn} = \{f_\theta:\prod_{h=1}^L \|\Theta^h\|_\sigma \leq r\}$ where $\|\cdot\|_\sigma$ is the spectral norm.
% % \yiming{what is $\sigma$? } \zhenhuan{Done.} 
% Furthurmore, they showed with a constant $0<c<1$, we have $\mathfrak{R}_n(\widetilde{\mathcal{H}}_{nn}) \geq cr\Big(\frac{1}{n}\|\mathbf{X}\|_F + \epsilon \sqrt{\frac{d}{n}}\Big)$ where $\mathbf{X} = (\mathbf{x}_1, \cdots, \mathbf{x}_n)^\top$ is the data matrix.
% \yiming{our target is to show example the upper bound for Rademacher complexity can be calculated, right?  So, should this be less than?} \zhenhuan{Added upper bound reference for one-layer NN.} 
In particular, if the hypothesis set is one-layer neural networks defined as $\mathcal{H}_{one} = \{f_\theta(\x) = \theta_0^\top\rho(\Theta \x): \|\theta_0\|_1 \leq r_0, \|\Theta_i\|_p \leq r\}$ where $\Theta_i \in \mathbb{R}^d$ is the $i$-th row of $\Theta \in \mathbb{R}^{d' \times d}$. This is a symmetric hypothesis set. Furthermore, the Rademacher complexity can be upper bounded \citep{awasthi2020adversarial} as $\mathfrak{R}_n(\widetilde{\mathcal{H}}_{one}) \leq \frac{rr_0\max\{1, \tilde{d}(\|\mathbf{X}\|_\infty + \epsilon)\}}{\sqrt{n}}(1+\sqrt{d(d'+1)\log(36)})$, where $\mathbf{X} = (\x_1, \cdots, \x_n)^\top$. Combined with Theorem \ref{thm:generalization} with probability at least $1-\delta$, we have $\mathcal{R}_{\widetilde{\ell}}(f) - \mathcal{R}_{\widetilde{\ell}}(f; \mathcal{S}) 
\leq \frac{2}{\nu}\Big( \frac{2Lrr_0\max\{1, \tilde{d}(\|\mathbf{X}\|_\infty + \epsilon)\}}{\sqrt{n}}
\times (1+\sqrt{d(d'+1)\log(36)}) + \frac{M(2\sqrt{2} + 3\sqrt{\log(2/\delta)})}{\sqrt{2n}}\Big).$ 
Such bound implies the generalization error depends on the perturbation size $\epsilon$,
which demonstrates the intrinsic complexity of adversarial training.

\section{Additional Experimental Details}\label{additional_exp_details}

\subsection{Source Code}
For the purpose of review, the source code is accessible in the supplementary file.

% \subsection{Networks Settings}
% For all networks, we training them by using (mini-batch) stochastic gradient descent with momentum 0.9, weight decay 2e-4, batch size 128, epochs 50 (for LeNet) / 100  (for Small-CNN) / 100 (for ResNet-18), and initial learning rate 0.03 (for LeNet) / 0.1 (for Small-CNN) / 0.01 (for ResNet-18) which is divided by the 10 at 20-th and 40-th epoch for LeNet /  30-th and 60-th epoch for Small-CNN and ResNet-18.

\subsection{Settings of Networks and Computing Infrastructure Description}\label{sec:computing_description}

For all networks, we training them by using  (mini-batch) stochastic gradient descent with momentum 0.9, weight decay 2e-4, batch size 128, epochs 50 (for LeNet) / 100  (for Small-CNN) / 100 (for ResNet-18), and initial learning rate 0.03 (for LeNet) / 0.1 (for Small-CNN) / 0.1 (for ResNet-18) which is divided by the 10 at 20-th and 40-th epoch for LeNet /  30-th and 60-th epoch for Small-CNN and ResNet-18. 

All algorithms are implemented in Python 3.6 and trained and tested on an Intel(R) Xeon(R) CPU W5590 @3.33GHz with 48GB of RAM and an NVIDIA Quadro RTX 6000 GPU with 24GB memory.

\subsection{Training Settings on Toy Examples}
In this section, we provide more details about how to generate synthetic datasets in Figure \ref{fig:interpretation}.

We generate two sets of 2D synthetic data (Figure \ref{fig:interpretation}). Each dataset contains 200 samples from Gaussian distributions with different means and variances. We consider both the case of the balanced (Figure \ref{fig:interpretation} left) and the imbalanced (Figure \ref{fig:interpretation} right) data distributions, in the former, the training data for the two classes are approximately equal while in the latter one class has a dominating number of samples in comparison to the other. In the balanced dataset (Figure \ref{fig:interpretation} left), we create two outliers. One is in the blue class (shown as red $\times$), the other is in the red class (shown as blue $\circ$). In the imbalanced dataset, we create one outlier in the blue class (shown as red $\times$). For both datasets, the yellow squares around data samples represent the samples are perturbed within a $\ell_\infty$ ball.

For the balanced dataset (Figure \ref{fig:interpretation} left), we build a simple network contains two linear layers and one ReLU layer \citep{nair2010rectified}. The number of hidden units is three.   For the imbalanced dataset (Figure \ref{fig:interpretation} right), the network contains four linear layers and three ReLU layers. The number of hidden units is 20.   We train these networks using SGD with 0.9 momentum for 3000 (balanced dataset) / 100,000 (imbalanced dataset) iterations with the learning rate of 0.02. We set $k=20$ and $m=2$ for balanced dataset, and $k=20$ and $m=1$ for imbalanced dataset when run our \ORAT~algorithm. In AT and \ORAT~, the training attack is PGD$^{10}$ and we set the perturbation bound $\epsilon=0.01$ and the PGD step size $\epsilon/4$.

\begin{table*}[t]
% \captionsetup{font=footnotesize}
\centering
\setlength\tabcolsep{3pt}
% \tiny{
\begin{tabular}{|cc|cccc|cccc|cccc|}
\hline
\multicolumn{2}{|c|}{\multirow{3}{*}{Noise}}    & \multicolumn{4}{c|}{MNIST}                                                    & \multicolumn{4}{c|}{CIFAR-10}                                                    & \multicolumn{4}{c|}{CIFAR-100}                                                    \\ \cline{3-14} 
\multicolumn{2}{|c|}{}                     & \multicolumn{2}{c|}{$\epsilon$=0.1}                         & \multicolumn{2}{c|}{$\epsilon$=0.2}    & \multicolumn{2}{c|}{$\epsilon$=2/255}                         & \multicolumn{2}{c|}{$\epsilon$=8/255}    & \multicolumn{2}{c|}{$\epsilon$=2/255}                         & \multicolumn{2}{c|}{$\epsilon$=8/255}    \\ \cline{3-14} 
\multicolumn{2}{|c|}{}                     & \multicolumn{1}{c|}{$k$} & \multicolumn{1}{c|}{$m$} & \multicolumn{1}{c|}{$k$} & $m$ & \multicolumn{1}{c|}{$k$} & \multicolumn{1}{c|}{$m$} & \multicolumn{1}{c|}{$k$} & $m$ & \multicolumn{1}{c|}{$k$} & \multicolumn{1}{c|}{$m$} & \multicolumn{1}{c|}{$k$} & $m$ \\ \hline
\multicolumn{2}{|c|}{0}                     & \multicolumn{1}{c|}{60000} & \multicolumn{1}{c|}{1} & \multicolumn{1}{c|}{60000} & 1 & \multicolumn{1}{c|}{45000} & \multicolumn{1}{c|}{1} & \multicolumn{1}{c|}{45000} & 1 & \multicolumn{1}{c|}{49950} & \multicolumn{1}{c|}{1} & \multicolumn{1}{c|}{49950} & 1 \\ \hline
\multicolumn{1}{|c|}{\multirow{4}{*}{\rotatebox{90}{\small \makecell{Symmetric \\ Noise}}}} & 10\% & \multicolumn{1}{c|}{59950} & \multicolumn{1}{c|}{2000} & \multicolumn{1}{c|}{60000} & 2000 & \multicolumn{1}{c|}{50000} & \multicolumn{1}{c|}{300} & \multicolumn{1}{c|}{50000} & 500 & \multicolumn{1}{c|}{50000} & \multicolumn{1}{c|}{300} & \multicolumn{1}{c|}{50000} & 500 \\ \cline{2-14} 
\multicolumn{1}{|l|}{}                  & 20\% & \multicolumn{1}{c|}{59950} & \multicolumn{1}{c|}{6000} & \multicolumn{1}{c|}{60000} & 3000 & \multicolumn{1}{c|}{50000} & \multicolumn{1}{c|}{300} & \multicolumn{1}{c|}{50000} & 300 & \multicolumn{1}{c|}{50000} & \multicolumn{1}{c|}{500} & \multicolumn{1}{c|}{49800} & 500 \\ \cline{2-14} 
\multicolumn{1}{|l|}{}                  & 30\% & \multicolumn{1}{c|}{59950} & \multicolumn{1}{c|}{5000} & \multicolumn{1}{c|}{60000} & 5000 & \multicolumn{1}{c|}{50000} & \multicolumn{1}{c|}{10} & \multicolumn{1}{c|}{50000} & 200 & \multicolumn{1}{c|}{49800} & \multicolumn{1}{c|}{100} & \multicolumn{1}{c|}{50000} &  500\\ \cline{2-14} 
\multicolumn{1}{|l|}{}                  & 40\% & \multicolumn{1}{c|}{59950} & \multicolumn{1}{c|}{11000} & \multicolumn{1}{c|}{60000} & 11000 & \multicolumn{1}{c|}{50000} & \multicolumn{1}{c|}{100} & \multicolumn{1}{c|}{50000} & 50 & \multicolumn{1}{c|}{49950} & \multicolumn{1}{c|}{100} & \multicolumn{1}{c|}{49950} & 500 \\ \hline
\multicolumn{1}{|l|}{\multirow{4}{*}{\rotatebox{90}{\small \makecell{Asymmetric \\ Noise}}}} & 10\% & \multicolumn{1}{c|}{60000} & \multicolumn{1}{c|}{100} & \multicolumn{1}{c|}{60000} & 10 & \multicolumn{1}{c|}{49950} & \multicolumn{1}{c|}{300} & \multicolumn{1}{c|}{50000} & 500 & \multicolumn{1}{c|}{49900} & \multicolumn{1}{c|}{100} & \multicolumn{1}{c|}{49950} & 500 \\ \cline{2-14} 
\multicolumn{1}{|l|}{}                  & 20\% & \multicolumn{1}{c|}{59950} & \multicolumn{1}{c|}{100} & \multicolumn{1}{c|}{59950} & 100 & \multicolumn{1}{c|}{49950} & \multicolumn{1}{c|}{500} & \multicolumn{1}{c|}{50000} & 300 & \multicolumn{1}{c|}{50000} & \multicolumn{1}{c|}{100} & \multicolumn{1}{c|}{50000} & 500 \\ \cline{2-14} 
\multicolumn{1}{|l|}{}                  & 30\% & \multicolumn{1}{c|}{59950} & \multicolumn{1}{c|}{10} & \multicolumn{1}{c|}{60000} & 100 & \multicolumn{1}{c|}{49950} & \multicolumn{1}{c|}{200} & \multicolumn{1}{c|}{50000} & 300 & \multicolumn{1}{c|}{50000} & \multicolumn{1}{c|}{100} & \multicolumn{1}{c|}{50000} & 500 \\ \cline{2-14} 
\multicolumn{1}{|l|}{}                  & 40\% & \multicolumn{1}{c|}{59950} & \multicolumn{1}{c|}{10} & \multicolumn{1}{c|}{60000} & 10 & \multicolumn{1}{c|}{50000} & \multicolumn{1}{c|}{450} & \multicolumn{1}{c|}{50000} & 500 & \multicolumn{1}{c|}{50000} & \multicolumn{1}{c|}{100} & \multicolumn{1}{c|}{49800} & 500 \\ \hline
\end{tabular}
% }
\vspace{-0.4em}
\caption{\it The $k$ and $m$ settings of \ORAT~on real datasets in different noise.}
\label{tab:general_performance_settings}
\vspace{-0.4em}
\end{table*}

\subsection{Details of Outliers Generation by Using Asymmetric Noise} \label{sec:outlier_generation}
In asymmetric noise generation procedure, for MNIST,  flipping
2$\rightarrow$7, 3$\rightarrow$8, 5$\leftrightarrow$6 and 7$\rightarrow$1; for CIFAR-10, flipping TRUCK$\rightarrow$AUTOMOBILE, BIRD$\rightarrow$AIRPLANE,
DEER$\rightarrow$HORSE, CAT$\leftrightarrow$DOG; for CIFAR-100, the 100 classes are grouped into 20 super-classes with each having 5 sub-classes, then flipping between two randomly selected sub-classes within each super-class. 

\subsection{$k$ and $m$ Settings on Real Datasets}
We provide a reference for setting $k$ and $m$ to reproduce our \ORAT~experimental results (Table \ref{tab:general_performance_1}) on real datasets in Table \ref{tab:general_performance_settings}.

\section{Additional Experimental Results}\label{additional_exp_results}

\subsection{More Experiments on Toy Example}
\begin{figure}[t]
\begin{subfigure}[t]{0.33\linewidth}
    \includegraphics[width=\linewidth]{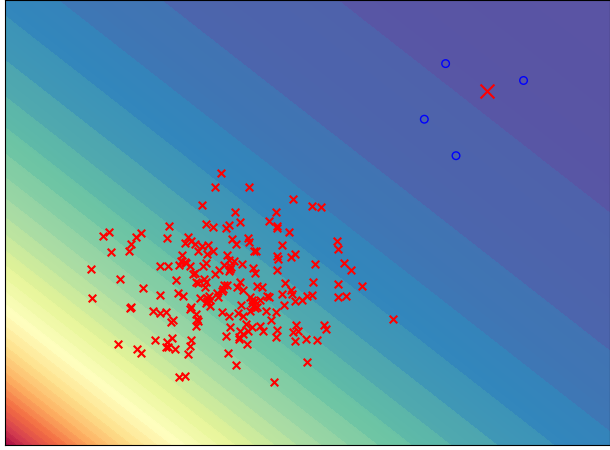}
    \caption{Standard Training (ST)}
\end{subfigure}%
\hfill%
\begin{subfigure}[t]{0.33\linewidth}
    \includegraphics[width=\linewidth]{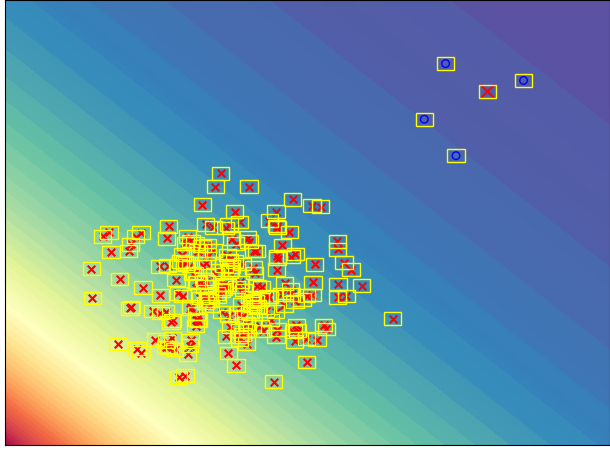}
    \caption{Adversarial Training (AT)}
\end{subfigure}%
\hfill%
\begin{subfigure}[t]{0.33\linewidth}
    \includegraphics[width=\linewidth]{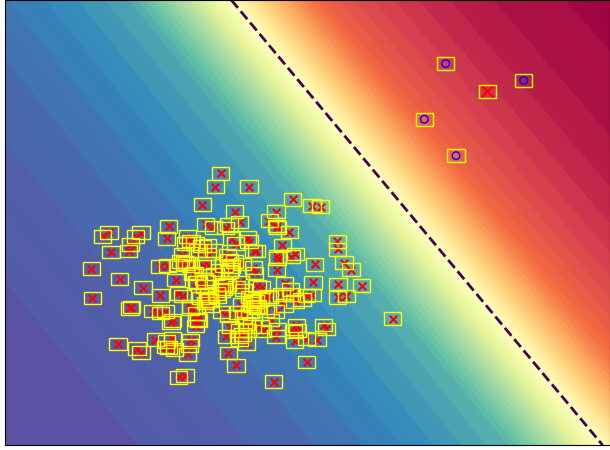}
    \caption{\ORAT}
\end{subfigure}%
\vspace{-0.4em}
\caption{\small \it An additional illustrative example of standard training (ST), adversarial training (AT), and ATRR for binary classification on an imbalanced synthetic dataset with one outlier (shown as red $\times$) in the blue class. The yellow squares around data samples represent the samples are perturbed within a $\ell_\infty$ ball. The dashed line is the decision boundary. The figure is better viewed in color.}
% \vspace{-2em}
\label{fig:interpretation_2}
\end{figure}
We generate additional 2D synthetic data as shown in Figure \ref{fig:interpretation_2} to demonstrate the performance of our \ORAT~method. This imbalanced dataset contains 200 samples from Gaussian distribution with different means and variances. For this dataset, we create one outlier in the blue class (shown as red $\times$). In order to train this dataset, we build a network, which contains two linear layers and one ReLU layer. The number of hidden units is 64. We train this network using SGD with 0.9 momentum for 100,000 iterations with a learning rate of 0.1. We set $k=5$ and $m=1$ for \ORAT~. Similarly, in AT and \ORAT~, the training attack is PGD$^{10}$, the perturbation bound $\epsilon=0.01$, and the PGD step size is $\epsilon/4$.

From Figure \ref{fig:interpretation_2}, we can find the classifiers are trained from ST (a) and AT (b) cannot separate two classes in the training data.  However, we find the classifier is training by using our proposed \ORAT~can separate these two classes. The results demonstrate that our \ORAT~can eliminate the influence of the outliers when doing the adversarial training. 

\begin{figure}[t]
\begin{subfigure}[t]{0.33\linewidth}
    \includegraphics[width=\linewidth]{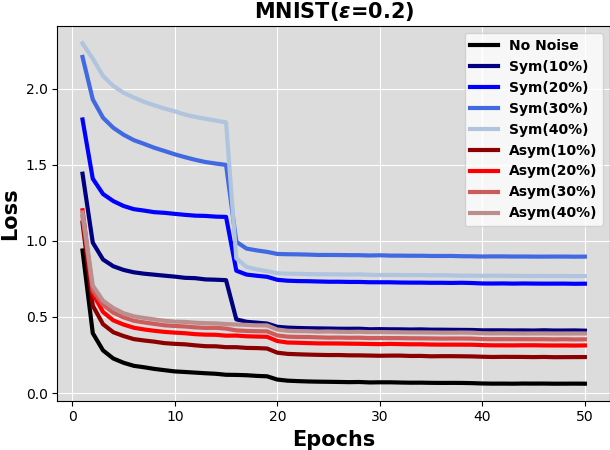}
    % \caption{}
\end{subfigure}%
\hfill%
\begin{subfigure}[t]{0.33\linewidth}
    \includegraphics[width=\linewidth]{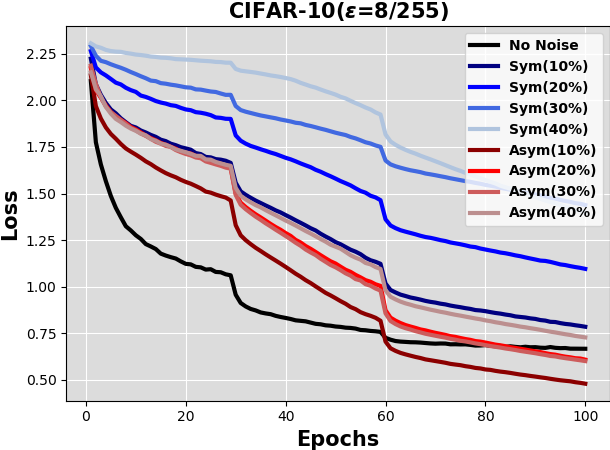}
    % \caption{}
\end{subfigure}
\hfill%
\begin{subfigure}[t]{0.33\linewidth}
    \includegraphics[width=\linewidth]{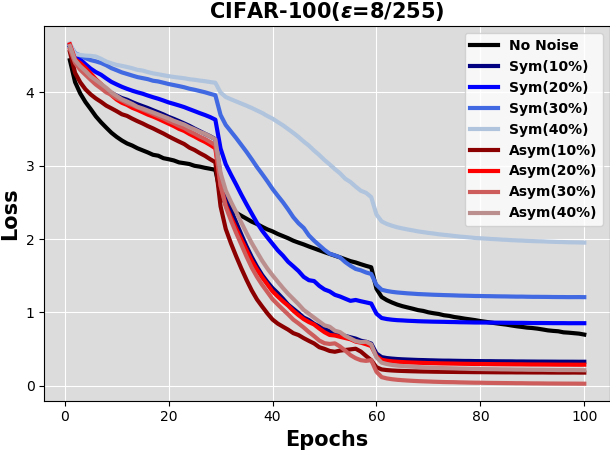}
    % \caption{}
\end{subfigure}

\begin{subfigure}[t]{0.33\linewidth}
    \includegraphics[width=\linewidth]{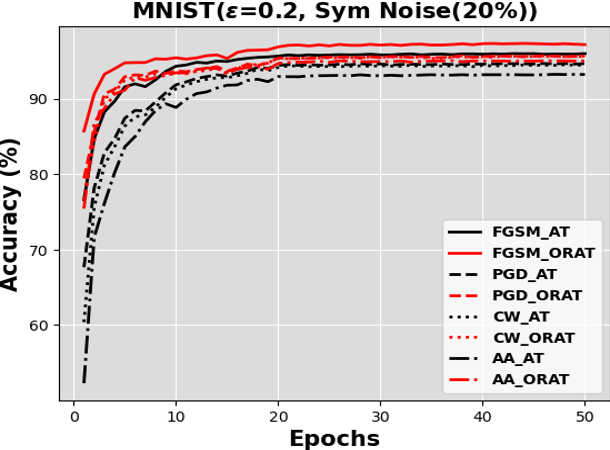}
    % \caption{}
\end{subfigure}%
\hfill%
\begin{subfigure}[t]{0.33\linewidth}
    \includegraphics[width=\linewidth]{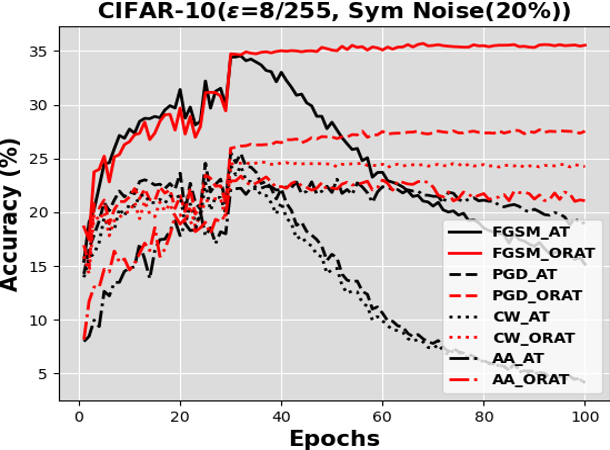}
    % \caption{}
\end{subfigure}
\hfill%
\begin{subfigure}[t]{0.33\linewidth}
    \includegraphics[width=\linewidth]{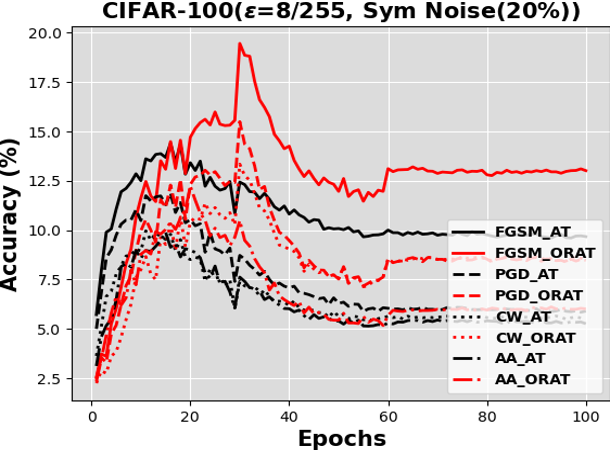}
    % \caption{}
\end{subfigure}

\vspace{-0.4em}
\caption{\em  The tendency curves of  training adversarial loss and test accuracy on three datasets. The sharp drops
in the curves correspond to decreases in training learning
rate.}
\vspace{-1em}
\label{fig:loss_acc_2}
\end{figure}

\subsection{More Experiments on Real Datasets}
% \begin{wrapfigure}{R}{0.45\textwidth}
%     \centering
% % 		\begin{minipage}{0.58\linewidth}
% % 		\centering
%             \vspace{-0.8cm}
%             \setlength\tabcolsep{3.5pt}
%             \footnotesize{
%             % \tiny{
%             \scalebox{0.78}{
%             \begin{tabular}{|c|c|c|c|}
%                 \hline
%                 \diagbox{Defense}{Data} & \makecell{MNIST\\($\epsilon=0.2$)} & \makecell{CIFAR-10\\($\epsilon=8/255$)} & \makecell{CIFAR-100\\($\epsilon=8/255$)} \\ \hline
%                 ST & 23.78 & 10.41 & 1.72 \\ 
%                 AT & \1 93.23 &  \1 27.06 & \3 9.89\\ 
%                 GAIRAT & \1 93.57 & \6 20.57 & \\ 
%                 MAIL & \1 93.80 & \6 18.95 & \\ 
%                 WMMR & \1 93.60 & \6 20.68 & \\ 
%                 RAA & \1 93.62 & \6 22.29 & \\ 
%                 \rowcolor{lightgray}
%                 \textbf{Ours} & \textbf{95.05} & \textbf{27.35} & \textbf{12.22} \\ \hline
%                 \end{tabular}
%             }
%             }
%             \vspace{-0.2cm}
%             \captionof{table}{\small \em Testing accuracy (\%) on AutoAttack.} 
%             \vspace{-0.5cm}
%             % }
%             \label{tab:more_aa_performance}
% 	    %\end{minipage}
% \end{wrapfigure}
In the main paper, we only show the tendency curves for MNIST when $\epsilon$=0.1 and CIFAR-10 and CIFAR-100 when $\epsilon$=2/255. In this section, we show more results on three datasets with 20\% symmetric noise by setting a big value of $\epsilon$ in Figure \ref{fig:loss_acc_2}. Similar to the observations in Figure \ref{fig:loss_acc_1}, we can find the losses are dramatically decreased in the first row of Figure \ref{fig:loss_acc_2}, which means Algorithm \ref{alg:atrr} can be successfully applied to solve \ORAT~optimization problem. From the second row of Figure \ref{fig:loss_acc_2}, it is obvious that the performance of our method is higher than the original AT approach on all attacks. 
% We also evaluate our \ORAT~method and baselines for AutoAttack (AA) \cite{croce2020reliable} on all datasets with 20\% symmetric noise with $\epsilon=0.2$ (for MNIST), $\epsilon=8/255$ (for CIFAR-10), and $\epsilon=8/255$ (for CIFAR-100) and report performance in Table \ref{tab:more_aa_performance}. From these results, we can find our method outperforms all compared methods even if using a strong attack method.

\subsection{More Experiments on the Effect of $k$ and $m$}
\begin{figure}[t]
\begin{subfigure}[t]{0.33\linewidth}
    \includegraphics[width=\linewidth]{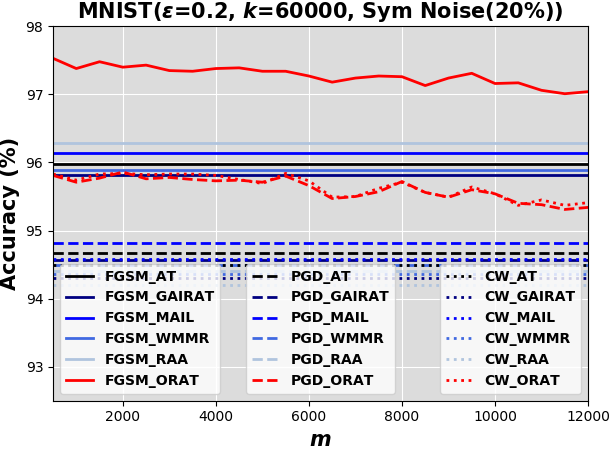}
    % \caption{}
\end{subfigure}%
\hfill%
\begin{subfigure}[t]{0.33\linewidth}
    \includegraphics[width=\linewidth]{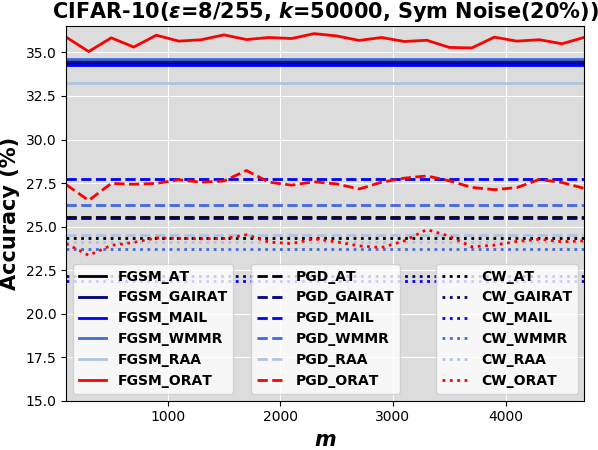}
    % \caption{}
\end{subfigure}%
\hfill%
\begin{subfigure}[t]{0.33\linewidth}
    \includegraphics[width=\linewidth]{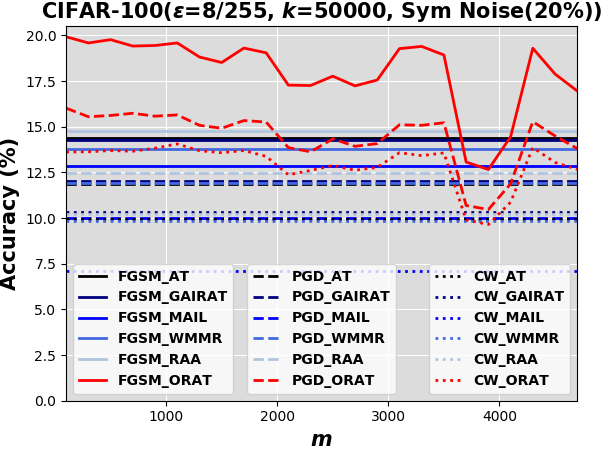}
    % \caption{}
\end{subfigure}%

\begin{subfigure}[t]{0.33\linewidth}
    \includegraphics[width=\linewidth]{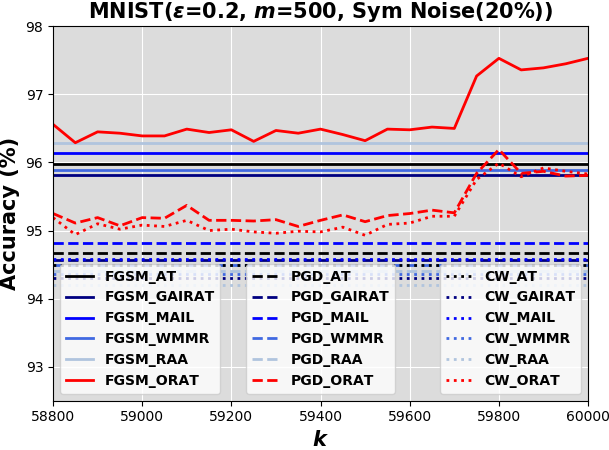}
    % \caption{}
\end{subfigure}%
\hfill%
\begin{subfigure}[t]{0.33\linewidth}
    \includegraphics[width=\linewidth]{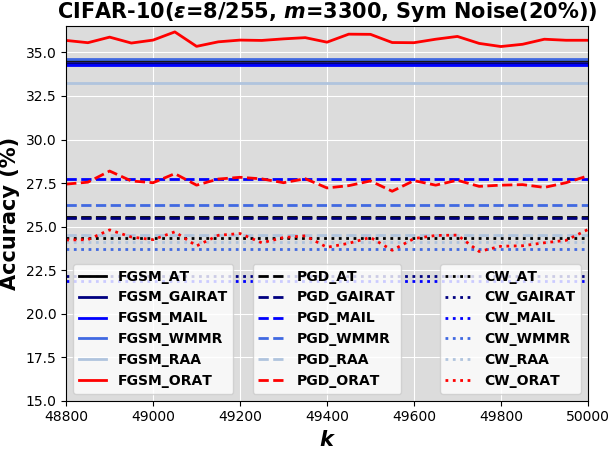}
    % \caption{}
\end{subfigure}%
\hfill%
\begin{subfigure}[t]{0.33\linewidth}
    \includegraphics[width=\linewidth]{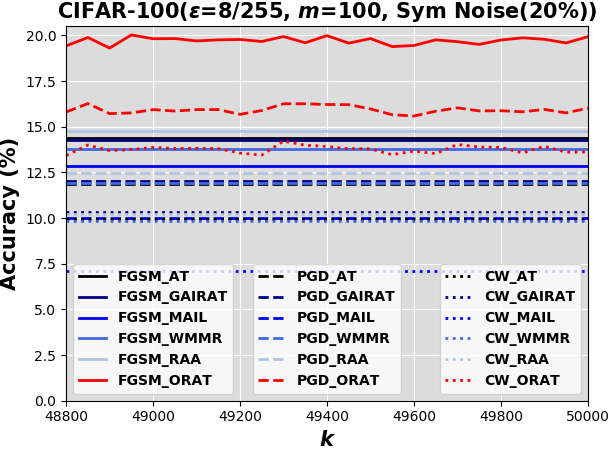}
    % \caption{}
\end{subfigure}%
\vspace{-0.4em}
\caption{\em  Effect of $k$ and $m$ on the test accuracy of \ORAT~on three datasets.}
\vspace{-2em}
\label{fig:k_m_more}
\end{figure}
We conduct more experiments to study the effect of hyperparameters $k$ and $m$ with using 20\% symmetric noise on all datasets by setting a big value of $\epsilon$. The results are shown in Figure \ref{fig:k_m_more}. Similar to the results that we get in Figure \ref{fig:k_m}, we can see that there is a clear range of $m$ with better  performance than all compared methods. Fix $m$ and test various $k$, we can find the performance can be improved by using some specific $k$ values. 

\subsection{Connection with Adversarial Training on Out-of-Distribution Problems}

Out-of-Distribution (OOD) problem exists due to the training and test data distributions mismatching \citep{hendrycks2021many}. Although the OOD problem setting is different from our outliers problem setting, some similarities exist between OOD data and outliers. For example, both of them are not from the data generating distribution. Therefore, whether the OOD methods can directly apply to solving our outlier problem in adversarial training is a question. Some works such as \cite{zeng2021adversariala,  varshney2022investigating, yi2021improved} connect adversarial robustness to out-of-distribution (OOD) problems.

Specifically, \cite{zeng2021adversariala} focuses on OOD detection. The problem in \cite{zeng2021adversariala} is that not enough labeled OOD samples can be used for training the OOD detection model. To improve the diversity of the unlabeled data augmentation, they apply an adversarial attack technique on unlabeled data to generate pseudo-positive samples. Then use these pseudo-positive samples with labeled data to improve the performance of the OOD detection model. However, their approach cannot directly apply to our setting since we only focus on supervised learning. All training data points are labeled in our setting, and the adversarial training works on labeled data.  
The authors in \cite{varshney2022investigating}  test different selective prediction approaches for Natural Language Processing systems in in-domain, OOD, and adversarial settings. They regard several existing datasets as adversarial datasets for testing. However, no adversarial training approach is proposed and involved in \cite{varshney2022investigating}. For \cite{yi2021improved}, the authors theoretically and experimentally show that a model (original AT \citep{madry2018towards} or pre-trained AT \citep{salman2020adversarially}) robust to input perturbation generalizes well on OOD data.

Therefore, we test whether the pre-trained AT method \citep{salman2020adversarially} can solve outlier problems in adversarial training. Following the experimental setting from \cite{yi2021improved}, we download the ImageNet-based adversarially pre-trained robust ResNet-18 model in the setting of $L_\infty$ and $\epsilon=2/255$ from the public repository \footnote{\url{https://github.com/microsoft/robust-models-transfer}}.  
Then fine-tune it on our noisy training datasets. We report pre-trained AT testing accuracy (\%) on CIFAR-100 in Table \ref{tab:OOD_experiments}. To make the comparison explicit, we also attach our method performance. From Table \ref{tab:OOD_experiments}, we can find our method outperforms pre-trained AT under all settings. Most of the performance gaps between pre-trained AT and our method in Table \ref{tab:OOD_experiments} are more than 5\%. One reason is that the pre-trained AT is not designed to handle outliers. According to these results, it is clear that pre-trained AT cannot directly apply to solving our problem even if it has a good performance on OOD data.

\begin{table*}[t]
% \captionsetup{font=footnotesize}
\centering
\setlength\tabcolsep{3pt}
% \tiny{
\begin{tabular}{|cc|c|cccc|}
\hline
\multicolumn{2}{|c|}{\multirow{2}{*}{Noise}}                     & \multirow{2}{*}{Defense} & \multicolumn{4}{c|}{CIFAR-100($\epsilon=2/255$)}                                                    \\ \cline{4-7} 
\multicolumn{2}{|l|}{}                                      &                   & \multicolumn{1}{l|}{Na} &  \multicolumn{1}{l|}{FG} & \multicolumn{1}{l|}{PGD} & CW  \\ \hline
\multicolumn{1}{|l|}{\multirow{8}{*}{\rotatebox{90}{\small \makecell{Symmetric Noise}}}} & \multirow{2}{*}{10\%} &        pre-trained AT           &  \multicolumn{1}{l|}{24.20} & \multicolumn{1}{l|}{17.09} & \multicolumn{1}{l|}{14.83} & 14.17 \\ \cline{3-7} 
\multicolumn{1}{|l|}{}                  &                   &     Ours              & \multicolumn{1}{l|}{\textbf{35.76}} & \multicolumn{1}{l|}{\textbf{25.72}} & \multicolumn{1}{l|}{\textbf{22.27}} & \textbf{21.28} \\ \cline{2-7} 
\multicolumn{1}{|l|}{}                  & \multirow{2}{*}{20\%} &   pre-trained AT                & \multicolumn{1}{l|}{19.53} & \multicolumn{1}{l|}{16.64} & \multicolumn{1}{l|}{14.01} & 13.26 \\ \cline{3-7} 
\multicolumn{1}{|l|}{}                  &                   &         Ours          & \multicolumn{1}{l|}{\textbf{34.45}} & \multicolumn{1}{l|}{\textbf{25.07}} & \multicolumn{1}{l|}{\textbf{22.21}} & \textbf{20.92} \\ \cline{2-7} 
\multicolumn{1}{|l|}{}                  & \multirow{2}{*}{30\%} &    pre-trained AT               & \multicolumn{1}{l|}{19.41} & \multicolumn{1}{l|}{16.22} & \multicolumn{1}{l|}{13.06} & 13.04 \\ \cline{3-7} 
\multicolumn{1}{|l|}{}                  &                   &          Ours         & \multicolumn{1}{l|}{\textbf{31.27}} & \multicolumn{1}{l|}{\textbf{23.81}} & \multicolumn{1}{l|}{\textbf{21.35}} & \textbf{19.59} \\ \cline{2-7} 
\multicolumn{1}{|l|}{}                  & \multirow{2}{*}{40\%} &      pre-trained AT             & \multicolumn{1}{l|}{18.86} & \multicolumn{1}{l|}{15.63} & \multicolumn{1}{l|}{12.78} & 11.91 \\ \cline{3-7} 
\multicolumn{1}{|l|}{}                  &                   &         Ours          & \multicolumn{1}{l|}{\textbf{29.38}} & \multicolumn{1}{l|}{\textbf{22.99}} & \multicolumn{1}{l|}{\textbf{20.85}} & \textbf{19.20} \\ \hline
\multicolumn{1}{|l|}{\multirow{8}{*}{\rotatebox{90}{\small \makecell{Asymmetric Noise}}}} & \multirow{2}{*}{10\%} &          pre-trained AT         & \multicolumn{1}{l|}{20.15} & \multicolumn{1}{l|}{19.66} & \multicolumn{1}{l|}{17.78} & 16.9 \\ \cline{3-7} 
\multicolumn{1}{|l|}{}                  &                   &        Ours           & \multicolumn{1}{l|}{\textbf{37.09}} & \multicolumn{1}{l|}{\textbf{27.07}} & \multicolumn{1}{l|}{\textbf{23.65}} & \textbf{22.59} \\ \cline{2-7} 
\multicolumn{1}{|l|}{}                  & \multirow{2}{*}{20\%} &   pre-trained AT                & \multicolumn{1}{l|}{24.60} & \multicolumn{1}{l|}{17.43} & \multicolumn{1}{l|}{16.67} & 15.15 \\ \cline{3-7} 
\multicolumn{1}{|l|}{}                  &                   &       Ours            & \multicolumn{1}{l|}{\textbf{36.05}} & \multicolumn{1}{l|}{\textbf{25.76}} & \multicolumn{1}{l|}{\textbf{22.83}} & \textbf{21.47} \\ \cline{2-7} 
\multicolumn{1}{|l|}{}                  & \multirow{2}{*}{30\%} &      pre-trained AT             & \multicolumn{1}{l|}{22.86} & \multicolumn{1}{l|}{16.92} & \multicolumn{1}{l|}{15.94} & 14.89 \\ \cline{3-7} 
\multicolumn{1}{|l|}{}                  &                   &          Ours         & \multicolumn{1}{l|}{\textbf{34.58}} & \multicolumn{1}{l|}{\textbf{24.18}} & \multicolumn{1}{l|}{\textbf{21.05}} & \textbf{20.11} \\ \cline{2-7} 
\multicolumn{1}{|l|}{}                  & \multirow{2}{*}{40\%} &     pre-trained AT              & \multicolumn{1}{l|}{21.58} & \multicolumn{1}{l|}{16.18} & \multicolumn{1}{l|}{15.32} & 14.48 \\ \cline{3-7} 
\multicolumn{1}{|l|}{}                  &                   &        Ours           & \multicolumn{1}{l|}{\textbf{33.65}} & \multicolumn{1}{l|}{\textbf{23.35}} & \multicolumn{1}{l|}{\textbf{20.76}} & \textbf{19.46 }\\ \hline
\end{tabular}
% }
\vspace{-0.4em}
\caption{\it Testing accuracy (\%) of pre-trained AT and our method (ORAT) on CIFAR-100 ($\epsilon=2/255$) with different levels of symmetric and asymmetric  noise. The best results are shown in bold.}
\label{tab:OOD_experiments}
\vspace{-2em}
\end{table*}

\subsection{Extension of Table \ref{tab:partial_ATw/o}}\label{sec:AT_w/o}
Self-learning \citep{han2019deep} is a useful strategy for learning model on noise data. For example, we can use AoRR to filter examples with larger loss (potential outliers), then conducting adversarial training on the cleaner set. We call this method AT w/o. However, it is not an end-to-end training approach. In contrast, our method is an end-to-end method, which means it is very easy to be conducted. To compare the effectiveness of \ORAT~and this AT w/o approach, we conduct experiments on MNIST with symmetric noise and CIFAR-100 with symmetric noise as follows.  

In the first stage, for each dataset, we apply a grid search to select the values of $k$ and $m$ for training the model using the AoRR approach that can return a good testing accuracy. Then we use the trained model to test the loss for each sample from the training set. Therefore, we can obtain a training sample loss list. Next, we delete data points for the $m$ largest losses in the training set to construct a clean set. This is because the AoRR uses $m$ to determine how many examples (potential outliers) with the largest losses are ignored during each training epoch.

In the second stage, after we get a clean set, we use the conventional AT approach to train the model on the clean set and test the trained model on the testing set.

We report the testing accuracy (\%) of the AT w/o approach on MNIST (symmetric noise, $\epsilon=0.1$) and CIFAR-100 (symmetric noise, $\epsilon=2/255$) in Table \ref{tab:self-learning}. To make the comparison explicit, we also attach our method performance. From Table \ref{tab:self-learning}, we can find our method outperforms the AT w/o approach under all settings. For example, the performance gap between the AT w/o approach and our method (\ORAT) on MNIST can achieve more than 2\% under the 40\% symmetric noise setting. Most of the performance gaps on CIFAR-100 can achieve more than 4\%.

\begin{table*}[t]
% \captionsetup{font=footnotesize}
\centering
\setlength\tabcolsep{3pt}
% \tiny{
\begin{tabular}{|cc|c|cccc|cccc|}
\hline
\multicolumn{2}{|c|}{\multirow{2}{*}{Noise}}                     & \multirow{2}{*}{Defense} & \multicolumn{4}{c|}{MNIST ($\epsilon=0.1$)}                                                    & \multicolumn{4}{c|}{CIFAR-100 ($\epsilon=2/255$)}                                                    \\ \cline{4-11} 
\multicolumn{2}{|c|}{}                                      &                   & \multicolumn{1}{c|}{Na} & \multicolumn{1}{c|}{FG} & \multicolumn{1}{c|}{PGD} & CW & \multicolumn{1}{c|}{Na} & \multicolumn{1}{c|}{FG} & \multicolumn{1}{c|}{PGD} & CW  \\ \hline
\multicolumn{1}{|c|}{\multirow{8}{*}{\rotatebox{90}{\small \makecell{Symmetric Noise}}}} & \multirow{2}{*}{10\%} &  AT w/o                 & \multicolumn{1}{c|}{98.91} & \multicolumn{1}{c|}{98.08} & \multicolumn{1}{c|}{97.61} & 97.55 & \multicolumn{1}{c|}{29.81} & \multicolumn{1}{c|}{21.09} & \multicolumn{1}{c|}{19.59} & 18.23 \\ \cline{3-11} 
\multicolumn{1}{|c|}{}                  &                   &            Ours       & \multicolumn{1}{c|}{\textbf{99.52}} & \multicolumn{1}{c|}{\textbf{98.45}} & \multicolumn{1}{c|}{\textbf{97.78}} & \textbf{97.79} & \multicolumn{1}{c|}{\textbf{35.76}} & \multicolumn{1}{c|}{\textbf{25.72}} & \multicolumn{1}{c|}{\textbf{22.27}} & \textbf{21.28} \\ \cline{2-11} 
\multicolumn{1}{|c|}{}                  & \multirow{2}{*}{20\%} &         AT w/o          & \multicolumn{1}{c|}{98.77} & \multicolumn{1}{c|}{97.76} & \multicolumn{1}{c|}{97.20} & 97.12 & \multicolumn{1}{c|}{27.81} & \multicolumn{1}{c|}{20.92} & \multicolumn{1}{c|}{19.15} & 17.95 \\ \cline{3-11} 
\multicolumn{1}{|c|}{}                  &                   &            Ours       & \multicolumn{1}{c|}{\textbf{99.56}} & \multicolumn{1}{c|}{\textbf{98.37}} & \multicolumn{1}{c|}{\textbf{97.65}} & \textbf{97.64} & \multicolumn{1}{c|}{\textbf{34.45}} & \multicolumn{1}{c|}{\textbf{25.07}} & \multicolumn{1}{c|}{\textbf{22.21}} & \textbf{20.92} \\ \cline{2-11} 
\multicolumn{1}{|c|}{}                  & \multirow{2}{*}{30\%} &         AT w/o         & \multicolumn{1}{c|}{97.82} & \multicolumn{1}{c|}{96.97} & \multicolumn{1}{c|}{96.47} & 96.35 & \multicolumn{1}{c|}{24.18} & \multicolumn{1}{c|}{19.17} & \multicolumn{1}{c|}{17.31} & 16.71 \\ \cline{3-11} 
\multicolumn{1}{|c|}{}                  &                   &            Ours       & \multicolumn{1}{c|}{\textbf{99.55}} & \multicolumn{1}{c|}{\textbf{98.30}} & \multicolumn{1}{c|}{\textbf{97.51}} & \textbf{97.53} & \multicolumn{1}{c|}{\textbf{31.27}} & \multicolumn{1}{c|}{\textbf{23.81}} & \multicolumn{1}{c|}{\textbf{21.35}} & \textbf{19.59} \\ \cline{2-11} 
\multicolumn{1}{|c|}{}                  & \multirow{2}{*}{40\%} &          AT w/o         & \multicolumn{1}{c|}{97.03} & \multicolumn{1}{c|}{95.85} & \multicolumn{1}{c|}{95.22} & 95.05 & \multicolumn{1}{c|}{21.17} & \multicolumn{1}{c|}{17.81} & \multicolumn{1}{c|}{16.85} & 15.48 \\ \cline{3-11} 
\multicolumn{1}{|c|}{}                  &                   &            Ours       & \multicolumn{1}{c|}{\textbf{99.36}} & \multicolumn{1}{c|}{\textbf{98.00}} & \multicolumn{1}{c|}{\textbf{97.22}} & \textbf{97.20} & \multicolumn{1}{c|}{\textbf{29.38}} & \multicolumn{1}{c|}{\textbf{22.99}} & \multicolumn{1}{c|}{\textbf{20.85}} & \textbf{19.20} \\ \hline
\end{tabular}
% }
\vspace{-0.6em}
\caption{\it Testing accuracy (\%) of self-learning based method (Self-learning) and our method (ORAT) on MNIST ($\epsilon=0.1$) and CIFAR-100 ($\epsilon=2/255$) with different levels of symmetric  noise. The best results are shown in bold.}
\label{tab:self-learning}
\vspace{-1em}
\end{table*}

One reason for low performance from the self-learning approach is that the training data points ignored by AoRR may contain clean data points. In this case, the constructed clean set is smaller than the original dataset. This may hurt the final model performance. Moreover, removing the examples with the largest losses before the adversarial training may lose the important feature information from the original training dataset. In other words, this compromises the richness and representational power of the data. In contrast, our ORAT method considers all examples during adversarial training. According to these results, it is clear that our approach (\ORAT) gives a better solution than the self-learning approach for solving outlier problems in adversarial training either in the algorithm efficiency or effectiveness.

\subsection{More Analysis on Stability of \ORAT}\label{sec: Stability_Evaluation}
\vspace{-1mm}
To evaluate the stability of each method, we report the the mean and standard deviation of testing accuracy (\%) of all methods on MNIST (40\% symmetric noise, $\epsilon=0.1$) and CIFAR-100 (40\% symmetric noise, $\epsilon=2/255$) in Table \ref{tab:Stability}. For each method, the reported performance is obtained by averaging the testing accuracy according to 10 random seeds. From Table \ref{tab:Stability}, we can find our method still outperforms the compared methods in both datasets. For MNIST, our method can even outperform AT by more than 2\%. Most importantly, we can find that the standard deviation in our method is less than or equal to that of other compared methods. For CIFAR-100, we can find the mean value of our method ORAT even higher than the reported performance in our submission. The standard deviation of the performance of our method differs from the comparison methods by at most 0.26\% (compared to ST on FGSM attack). Comparing Table \ref{tab:Stability} and Table \ref{tab:general_performance_1}, it is clear that the performance gap becomes larger when we report scores by using mean and standard deviation, and our method shows a stable and stronger ability in handling outliers and adversarial attacks.

\subsection{Evaluation on Wide ResNet}
\begin{table}[t]
% \captionsetup{font=footnotesize}
\centering
\setlength\tabcolsep{3pt}
{\scalebox{0.9}{
% \begin{tabular}{lll}
% \multicolumn{1}{c}{} & a & a \\
%                      & a & a \\
%                      & a & a
% \end{tabular}

            \begin{tabular}{|c|c|cccc|cccc|}
\hline
\multirow{2}{*}{Noise} & \multirow{2}{*}{Defense} & \multicolumn{4}{c|}{CIFAR-10 ($\epsilon=2/255$)}                                                    & \multicolumn{4}{c|}{CIFAR-10 ($\epsilon=8/255$)}                                                    \\ \cline{3-10} 
                  &                   & \multicolumn{1}{c}{Na} & \multicolumn{1}{c}{FG} & \multicolumn{1}{c}{PGD} & CW & \multicolumn{1}{c}{Na} & \multicolumn{1}{c}{FG} & \multicolumn{1}{c}{PGD} & CW \\ \hline\hline
\multirow{7}{*}{\rotatebox{90}{ \footnotesize {20\%  Sym Noise }} } &         ST          & \multicolumn{1}{c}{91.31} & \multicolumn{1}{c}{53.25} & \multicolumn{1}{c}{27.66} & 26.47 & \multicolumn{1}{c}{91.09} & \multicolumn{1}{c}{35.36} & \multicolumn{1}{c}{7.11} & 7.34 \\ \cline{2-10} 
                  &         AT          & \multicolumn{1}{c}{90.74} & \multicolumn{1}{c}{\3 82.86} & \multicolumn{1}{c}{\3 78.97} & \1 78.96 & \multicolumn{1}{c}{81.80} & \multicolumn{1}{c}{\g 62.43} & \multicolumn{1}{c}{\ggg 50.84} & \3 50.92 \\ \cline{2-10} 
                  &          GAIRAT	         & \multicolumn{1}{c}{88.17} & \multicolumn{1}{c}{\1 84.00} & \multicolumn{1}{c}{\1 81.72} & \3 76.05 & \multicolumn{1}{c}{78.55} & \multicolumn{1}{c}{\1 67.95} & \multicolumn{1}{c}{\1 61.94} & \g 47.81 \\ \cline{2-10} 
                  &         MAIL          & \multicolumn{1}{c}{73.31} & \multicolumn{1}{c}{\ggg 65.94} & \multicolumn{1}{c}{\ggg 62.85} & \ggg 58.05 & \multicolumn{1}{c}{69.01} & \multicolumn{1}{c}{\ggg 52.09} & \multicolumn{1}{c}{\ggg 44.33} & \ggg 38.85 \\ \cline{2-10} 
                  &         WMMR          & \multicolumn{1}{c}{87.75} & \multicolumn{1}{c}{\g 79.83} & \multicolumn{1}{c}{\g 76.21} & \3 75.48 & \multicolumn{1}{c}{80.97} & \multicolumn{1}{c}{\g 61.70} & \multicolumn{1}{c}{\ggg 51.17} & \3 49.82 \\ \cline{2-10} 
                  &         RAA          & \multicolumn{1}{c}{90.55} & \multicolumn{1}{c}{\3 82.44} & \multicolumn{1}{c}{\3 78.38} & \1 78.56 & \multicolumn{1}{c}{77.62} & \multicolumn{1}{c}{\g 60.05} & \multicolumn{1}{c}{\ggg 48.87} & \3 48.99 \\ \cline{2-10} 
                 &          \cellcolor{lightgray} \textbf{Ours}         & \multicolumn{1}{c}{\cellcolor{lightgray} {90.72}} & \multicolumn{1}{c}{\cellcolor{lightgray} \textbf{85.47}} & \multicolumn{1}{c}{\cellcolor{lightgray} \textbf{82.30}} & \cellcolor{lightgray} \textbf{80.34} & \multicolumn{1}{c}{\cellcolor{lightgray} {81.99}} & \multicolumn{1}{c}{\cellcolor{lightgray} \textbf{69.08}} & \multicolumn{1}{c}{\cellcolor{lightgray} \textbf{63.87}} & \cellcolor{lightgray} \textbf{53.53} \\ \hline
\end{tabular}
            }
            
}
\vspace{-0.6em}
\caption{\small \it Testing accuracy (\%) using Wide ResNet.}
\label{tab:wide-resnet}
% \vspace{-1.5em}
\vspace{-1em}
\end{table}
We evaluate all methods using Wide ResNet 
on CIFAR-10 dataset with 20\% symmetric noise. The Wide ResNet framework is WRN-32-10, which is the same as \cite{madry2018towards}. 
Results in Table \ref{tab:wide-resnet} show our approach outperforms others when using a large model.

\subsection{Experiments on Clothing1M}\label{sec: Clothing1M}
% \vspace{-2mm}

\begin{wrapfigure}{r}{.5\textwidth}
% \vspace{-4em}
\includegraphics[width=.5\textwidth]{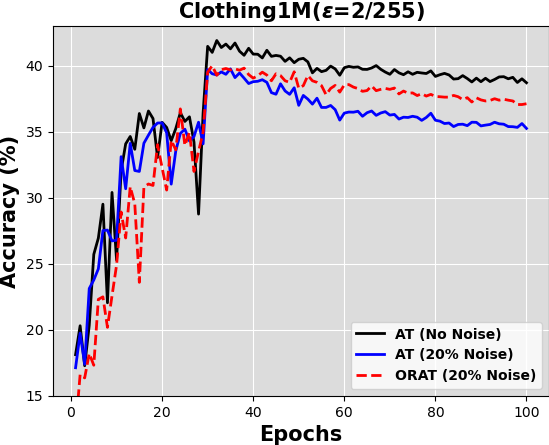}
% \vspace{-2em}
\caption{\small \em The tendency curves of testing accuracy on the Clothing1M dataset.}
\label{fig:Clothing1M}
% ~\vspace{-3em}
\end{wrapfigure}
To demonstrate the effectiveness of our method \ORAT~on a more real scenario, we conduct experiments on the Clothing1M dataset \cite{xiao2015learning}. This dataset contains roughly one million clothing images crawled from the Internet. Most of them have noisy labels extracted from their surrounding texts. A few of them have clean labels, which are manually annotated by Xiao et al. \cite{xiao2015learning}. Specifically, we extract 30000 clean labeled images as the clean training set and 10000 clean labeled images as the test set. To create a noise training set, we select 80\% images from the clean training set and extract 30000$\times $20\%=6000 images from the original noise labeled images. Therefore, we can obtain a noise training set with the same sample size as the clean training set. Then we use AT to train the Small-CNN model on the clean training set (named AT (No noise)) and noise training set (named AT (20\% noise)), respectively. For our method \ORAT, we use it to train the same model on the noise training set, named \ORAT~(20\% noise). We show the tendency curves of the test accuracy in Figure \ref{fig:Clothing1M}. From the table, comparing AT (No noise) and AT (20\% noise), we can see that AT performance is decreased if the data contains noise, which means outliers affect the performance of AT. In addition, our \ORAT~method outperforms AT on the noise data, which means our method can reduce the influence of outliers on adversarial training.

\end{document}